%% file: arxiv.tex
\documentclass[11pt]{article}

\usepackage{etoolbox}
\newtoggle{colt}
\togglefalse{colt}

\input{header.tex}

\input{macros.tex}

\input{characters.tex}

\usepackage[suppress]{color-edits}
\addauthor{ab}{red}
\addauthor{vs}{olive}
\addauthor{dh}{orange}
\addauthor{pl}{brown}

\Crefname{assumption}{Assumption}{Assumptions}

\title{Behavior Cloning is Not All You Need: The Optimality of On-Policy Distillation for Noisy Expert Feedback}

\author[1]{Ved Sriraman}
\author[1]{Peihan Liu}
\author[1]{Daniel Hsu}
\author[1,2]{Adam Block}

\affil[1]{Department of Computer Science, Columbia University}
\affil[2]{Department of Electrical Engineering, Columbia University}

\date{}

\begin{document}

\maketitle

\begin{abstract}
\input{body_clean/abstract.tex}

\end{abstract}

\input{body_clean/intro.tex}

\input{body_clean/prelims.tex}

\input{body_clean/offline.tex}

\input{body_clean/online.tex}

\input{body_clean/generalizations.tex}

\input{body_clean/experiments.tex}

\input{body_clean/app_add_related_work}

\bibliographystyle{plainnat}
\bibliography{refs}

\tableofcontents


\appendix
\crefalias{section}{appendix}
\crefalias{subsection}{appendix}
\crefalias{subsubsection}{appendix}

\input{body_clean/app_additional_experimental_results}

\input{body_clean/app_technical_prelims.tex}

\input{body_clean/app_offline_proofs.tex}

\input{body_clean/app_online_proofs.tex}

\input{body_clean/app_greedy_online_proofs.tex}

\input{body_clean/app_testing_online_proofs.tex}

\end{document}

%% file: header.tex
\iftoggle{colt}{}{
\usepackage[sort&compress,numbers]{natbib}
}

\usepackage{boxedminipage}
\usepackage{multirow,nicefrac}
\usepackage{makecell,upgreek}
\usepackage{footnote}
\usepackage{longtable}
\usepackage{tablefootnote}
\usepackage[T1]{fontenc}
\usepackage{verbatim} 
\usepackage[utf8]{inputenc}

\usepackage{booktabs}   
\usepackage{adjustbox}
\usepackage{tabularx}

\iftoggle{colt}{}{
\usepackage[margin=0.75in]{geometry}
}
\usepackage{algorithm}
\usepackage{algpseudocode}

\usepackage[dvipsnames]{xcolor}

\usepackage{colortbl}
\definecolor{lightgray}{gray}{0.9}  

\usepackage{amsmath}
\usepackage{amsthm}
\iftoggle{colt}{}{

\usepackage[breaklinks=true]{hyperref}

}
\hypersetup{
	colorlinks=true,
	linkcolor=blue,
	citecolor=blue,
	urlcolor=blue
}

\usepackage[capitalise,nameinlink]{cleveref}

\usepackage{amsfonts,amssymb,mathrsfs}
\usepackage{mathtools}

\usepackage{appendix}
\newcommand{\defeq}{:=}

\theoremstyle{plain}
	\newtheorem{theorem}{Theorem}

	\newtheorem{lemma}{Lemma}

	\newtheorem{corollary}{Corollary}
	\newtheorem{proposition}{Proposition}

	\theoremstyle{definition}

	\newtheorem{definition}{Definition}

	\newtheorem{remark}{Remark}

\Crefname{appendix}{Appendix}{Appendices}

\usepackage{autonum}

\usepackage[T1]{fontenc}
\usepackage[scaled]{beramono}
\usepackage{listings}
\definecolor{varorange}{RGB}{230,126,34}     
\definecolor{opblue}{RGB}{52,152,219}        
\definecolor{ifred}{RGB}{192,57,43}          
\definecolor{commentgray}{RGB}{150,150,150}  
\definecolor{codebg}{rgb}{0.98,0.98,0.98}    

\newsavebox\codebox

\usepackage{authblk}
\usepackage{xspace}

%% file: macros.tex
\newcommand{\abs}[1]{\left|#1\right|}

\newcommand{\rr}{\mathbb{R}}
\newcommand{\ee}{\mathbb{E}}
\newcommand{\pp}{\mathbb{P}}

\newcommand{\kld}[2]{\mathrm{D}_{\mathrm{KL}}\left( #1 \middle\| #2 \right)}
\newcommand{\kldinline}[2]{\mathrm{D}_{\mathrm{KL}}( #1 \| #2)}
\newcommand{\tvd}[2]{\mathrm{TV}\left( #1 , #2 \right)}
\newcommand{\dhel}[2]{\mathrm{D}_{\mathrm{H}^2}\left( #1 , #2 \right)}
\newcommand{\dhelinline}[2]{\mathrm{D}_{\mathrm{H}^2}( #1 , #2 )}

\DeclareMathOperator*{\argmin}{argmin}
\DeclareMathOperator*{\argmax}{argmax}

\DeclareMathOperator{\reg}{Reg}

\newcommand{\pihat}{\hat\pi}
\newcommand{\pistar}{\pi^\star}

\newcommand{\Unif}{\mathrm{Unif}}
\newcommand{\Bernoulli}{\mathrm{Bernoulli}}

\newcommand{\astar}{a^\star}

\newcommand{\ind}[1]{\mathbb{I}\left\{#1\right\}}

\newcommand{\mubar}{\overline{\mu}}

\newcommand{\abar}{\overline{a}}

\newcommand{\nail}{\textsf{NAIL}\xspace}
\newcommand{\neil}{\textsf{NEIL}\xspace}
\newcommand{\gnail}{\textsf{NAILGUN}\xspace}
\newcommand{\algdagger}{\textsf{DAgger}\xspace}
\newcommand{\algsmile}{\textsf{SMILe}\xspace}
\newcommand{\algaggrevate}{\textsf{AggreVate}\xspace}
\newcommand{\alggail}{\textsf{GAIL}\xspace}
\newcommand{\algdart}{\textsf{DART}\xspace}
\newcommand{\algbc}{\textsf{BC}\xspace}
\newcommand{\algllbc}{\textsf{LogLossBC}\xspace}
\newcommand{\nailf}{\textsf{NAIL-F}\xspace}
\newcommand{\nailr}{\textsf{NAIL-R}\xspace}
\newcommand{\opdf}{\textsf{OPD-F}\xspace}
\newcommand{\opdr}{\textsf{OPD-R}\xspace}

\DeclareMathOperator{\supp}{supp}
\newcommand{\gsmk}{\textsc{GSM-8K}}
\newcommand{\gemmab}{\textsc{Gemma3-1B-IT}}
\newcommand{\gemmam}{\textsc{Gemma3-270M-IT}}
\newcommand{\sg}{\mathsf{sg}}
\newcommand{\softmax}{\mathsf{softmax}}

%% file: characters.tex
\renewcommand{\epsilon}{\varepsilon}

\def\ddefloop#1{\ifx\ddefloop#1\else\ddef{#1}\expandafter\ddefloop\fi}
\def\ddef#1{\expandafter\def\csname bb#1\endcsname{\ensuremath{\mathbb{#1}}}}
\ddefloop ABCDEFGHIJKLMNOPQRSTUVWXYZ\ddefloop

\def\ddefloop#1{\ifx\ddefloop#1\else\ddef{#1}\expandafter\ddefloop\fi}
\def\ddef#1{\expandafter\def\csname frak#1\endcsname{\ensuremath{\mathfrak{#1}}}}
\ddefloop ABCDEFGHIJKLMNOPQRSTUVWXYZ\ddefloop

\def\ddefloop#1{\ifx\ddefloop#1\else\ddef{#1}\expandafter\ddefloop\fi}
\def\ddef#1{\expandafter\def\csname fr#1\endcsname{\ensuremath{\mathfrak{#1}}}}
\ddefloop ABCDEFGHIJKLMNOPQRSTUVWXYZ\ddefloop

\def\ddefloop#1{\ifx\ddefloop#1\else\ddef{#1}\expandafter\ddefloop\fi}
\def\ddef#1{\expandafter\def\csname eul#1\endcsname{\ensuremath{\EuScript{#1}}}}
\ddefloop ABCDEFGHIJKLMNOPQRSTUVWXYZ\ddefloop

\def\ddefloop#1{\ifx\ddefloop#1\else\ddef{#1}\expandafter\ddefloop\fi}
\def\ddef#1{\expandafter\def\csname scr#1\endcsname{\ensuremath{\mathscr{#1}}}}
\ddefloop ABCDEFGHIJKLMNOPQRSTUVWXYZ\ddefloop

\def\ddefloop#1{\ifx\ddefloop#1\else\ddef{#1}\expandafter\ddefloop\fi}
\def\ddef#1{\expandafter\def\csname b#1\endcsname{\ensuremath{\mathbf{#1}}}}
\ddefloop ABCDEGHIJKLMNOPQRSTUVWXYZ\ddefloop

\def\ddefloop#1{\ifx\ddefloop#1\else\ddef{#1}\expandafter\ddefloop\fi}
\def\ddef#1{\expandafter\def\csname bhat#1\endcsname{\ensuremath{\hat{\mathbf{#1}}}}}
\ddefloop ABCDEFGHIJKLMNOPQRSTUVWXYZ\ddefloop

\def\ddefloop#1{\ifx\ddefloop#1\else\ddef{#1}\expandafter\ddefloop\fi}
\def\ddef#1{\expandafter\def\csname btil#1\endcsname{\ensuremath{\tilde{\mathbf{#1}}}}}
\ddefloop ABCDEFGHIJKLMNOPQRSTUVWXYZ\ddefloop

\def\ddefloop#1{\ifx\ddefloop#1\else\ddef{#1}\expandafter\ddefloop\fi}
\def\ddef#1{\expandafter\def\csname bst#1\endcsname{\ensuremath{\mathbf{#1}^\star}}}
\ddefloop ABCDEFGHIJKLMNOPQRSTUVWXYZ\ddefloop

\def\ddefloop#1{\ifx\ddefloop#1\else\ddef{#1}\expandafter\ddefloop\fi}
\def\ddef#1{\expandafter\def\csname bst#1\endcsname{\ensuremath{\mathbf{#1}^\star}}}
\ddefloop abcdeghijklmnopqrstuvwxyz\ddefloop

\def\ddefloop#1{\ifx\ddefloop#1\else\ddef{#1}\expandafter\ddefloop\fi}
\def\ddef#1{\expandafter\def\csname bhat#1\endcsname{\ensuremath{\hat{\mathbf{#1}}}}}
\ddefloop abcdefghijklmnopqrstuvwxyz\ddefloop

\def\ddefloop#1{\ifx\ddefloop#1\else\ddef{#1}\expandafter\ddefloop\fi}
\def\ddef#1{\expandafter\def\csname b#1\endcsname{\ensuremath{\mathbf{#1}}}}
\ddefloop abcdeghijklnopqrstuvwxyz\ddefloop

\def\ddefloop#1{\ifx\ddefloop#1\else\ddef{#1}\expandafter\ddefloop\fi}
\def\ddef#1{\expandafter\def\csname barb#1\endcsname{\ensuremath{\bar{\mathbf{#1}}}}}
\ddefloop abcdefghijklmnopqrstuvwxyz\ddefloop

\def\ddef#1{\expandafter\def\csname c#1\endcsname{\ensuremath{\mathcal{#1}}}}
\ddefloop ABCDEFGHIJKLMNOPQRSTUVWXYZ\ddefloop
\def\ddef#1{\expandafter\def\csname h#1\endcsname{\ensuremath{\widehat{#1}}}}
\ddefloop ABCDEFGHIJKLMNOPQRSTUVWXYZ\ddefloop
\def\ddef#1{\expandafter\def\csname hc#1\endcsname{\ensuremath{\widehat{\mathcal{#1}}}}}
\ddefloop ABCDEFGHIJKLMNOPQRSTUVWXYZ\ddefloop
\def\ddef#1{\expandafter\def\csname t#1\endcsname{\ensuremath{\widetilde{#1}}}}
\ddefloop ABCDEFGHIJKLMNOPQRSTUVWXYZ\ddefloop
\def\ddef#1{\expandafter\def\csname tc#1\endcsname{\ensuremath{\widetilde{\mathcal{#1}}}}}
\ddefloop ABCDEFGHIJKLMNOPQRSTUVWXYZ\ddefloop

%% file: body_clean/abstract.tex
Imitation Learning (IL) is a natural framework for learning in sequential decision-making systems and has emerged as the dominant paradigm through which we understand language model training.  A central puzzle is that, while in theory offline IL can be horizon-free and optimal, in practice online methods such as on-policy distillation (OPD) often outperform offline methods such as supervised fine-tuning (SFT).  We propose a noisy expert model to explain this gap, in which the learner only has access to a noisy version of the expert's policy, but wishes to compete against the reward achieved by a clean expert, motivated by the fact that in many applications, e.g. training language models to perform long chains of thought, the expert is often imperfect.  In this setting, we show a sharp separation between offline and online IL.  Offline learning from noisy trajectories is fundamentally hard: to compete with the clean expert, the sample complexity must grow exponentially, in contradistinction to the clean expert setting where no explicit horizon dependence exists.  In contrast, we prove that online interaction with the noisy expert via a novel variant of OPD enables polynomial dependence on the horizon in general.  We further show that, under a natural domination condition on the expert noise distribution, which we show to be necessary for any horizon-free sample complexity, one can obtain such a guarantee, although our proposed algorithm sacrifices statistical efficiency in its dependence on the size of the policy class.  Our analysis leads to an alternative loss function that is commonly considered empirically for LM training. We further provide algorithms and lower bounds, and extend our results to the more realistic setting of unknown corruption when the clean expert is deterministic, thereby providing a theoretical foundation for why on-policy distillation can outperform standard supervised fine-tuning when training language models from imperfect teachers. We complement our theoretical results with experiments on synthetic and natural-language tasks, showing that the OPD variant suggested by our theory outperforms both offline BC and existing OPD objectives under noisy expert feedback.

%% file: body_clean/intro.tex
\section{Introduction}\label{sec:intro}

Imitation learning (IL) is a powerful framework for training agents to perform complex tasks in challenging environments by learning from expert demonstrations \citep{pomerleau1988alvinn,ross2010efficient,ross2011reduction,block2024butterfly}.  In recent years, IL has gained significant attention due to its success in a wide range of applications, including robotics \citep{chi2025diffusion,barreiros2026careful}, autonomous driving \citep{chen2019deep}, and natural language processing \citep{chang2023learning,block2024butterfly,agarwal2024policy,lu2025onpolicydistillation}.  In particular, IL has been proposed as a natural framework through which to understand Language Models (LMs), where the training corpus is thought to consist of `expert demonstrations' of human behavior.  Due to the increasing importance and cost of LMs, this raises the question of how to best leverage expert access to train LMs, and more generally, how to best leverage expert access in IL.

Prior work has distinguished between two paradigms for IL: \emph{offline} IL, where the learner only has access to a fixed dataset of expert demonstrations, and \emph{online} IL, where the learner can interact with the expert and query for demonstrations in states visited by the learner \citep{ross2010efficient,ross2011reduction,foster2024behavior,rohatgi2025computational}.  The canonical algorithm in offline IL is Behavioral Cloning (\algbc), where the learner simply performs supervised learning on the expert demonstrations in an effort to predict the action from the observation.  While it was classically thought that \algbc\ suffers from compounding errors and thus has poor performance in long-horizon tasks, with the suboptimality of the learned policy growing quadratically with the horizon \citep{ross2011reduction,ross2010efficient}, more recent work has shown that \algbc, when instantiated with the correct loss function, is both horizon-free and optimal in a minimax sense, even when comparing to online IL algorithms \citep{foster2024behavior}.  While this theoretical result may seem to suggest that \algbc\ is the best way to leverage expert access, in practice, many works have found that online IL algorithms, such as DAgger \citep{ross2011reduction} and On-Policy Distillation (OPD) \citep{agarwal2024policy,lu2025onpolicydistillation}, can significantly outperform offline IL in a wide range of settings.  
Recent work has suggested that this discrepancy between theory and practice may be due to the fact that the expert is often misspecified, meaning that the expert's policy is not realizable by the learner's policy class \citep{rohatgi2025computational}; such misspecification can occur in robotics and autonomous driving problems when the expert and learner have different observation spaces (e.g. a human demonstrator has more sensory input than a robot learning) \citep{barreiros2026careful} and in LMs when we are trying to distill a large teacher model into a smaller student model \cite{agarwal2024policy,hinton2015distilling}.  While misspecified experts are a natural model in many settings, they do not apply to one of the chief successes of IL for LMs, which is to use OPD to fine-tune a student LM from a teacher LM that has been trained with reinforcement learning (RL) but where both student and teacher share the same architecture \citep{agarwal2024policy,lu2025onpolicydistillation}.  In this setting, the expert is not misspecified, but online IL (OPD) still outperforms offline IL (SFT) empirically.

\begin{figure}[t]
    \centering
    \includegraphics[width=\linewidth]{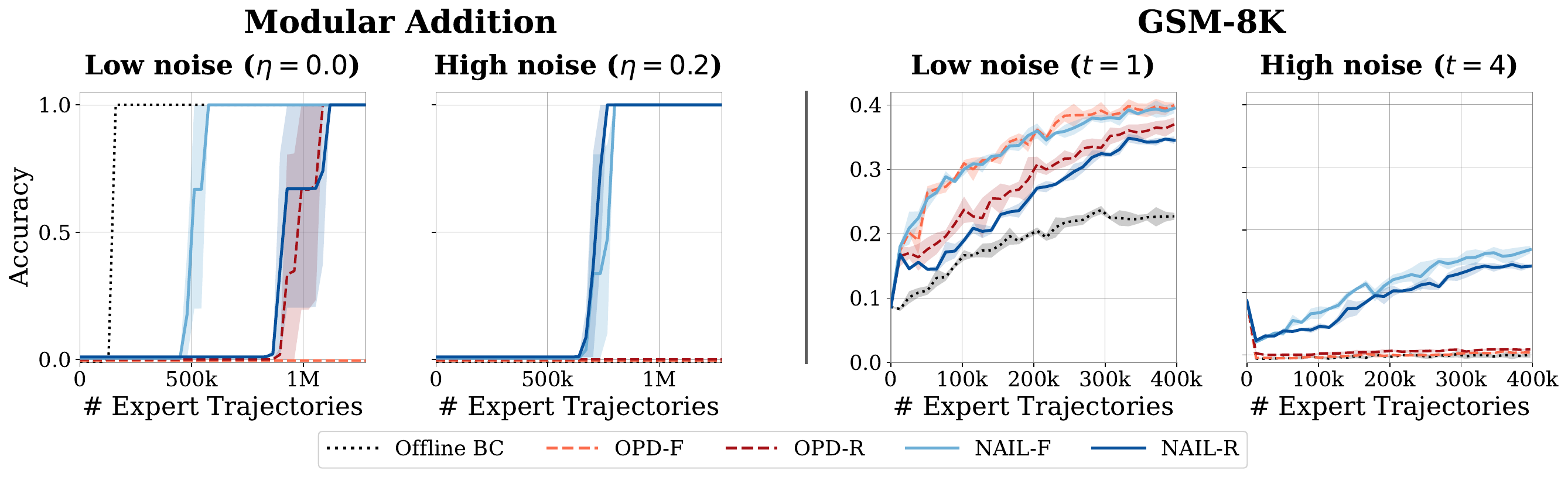}
    \caption{Comparison of offline BC, standard OPD from \citep{agarwal2024policy,lu2025onpolicydistillation}, and our proposed \nail for forward (F) and reverse (R) KL losses with clean and noisy experts. \textbf{Left:} Modular addition task: adding 31 numbers mod 7, where the expert is corrupted with probability $\eta \in \left\{ 0, 0.2 \right\}$. \textbf{Right:} GSM-8K: math reasoning where the expert generates with temperature $t \in \left\{ 1, 4 \right\}$.  In both cases, \nail\  is competitive with offline methods and standard OPD in the low noise regime and strongly outperforms these baselines in the high noise regime. 
    }
    \label{fig:main_modadd_gsm8k}
\end{figure}

In this work, we propose a \emph{noisy expert} model to explain the discrepancy between theory and practice in IL.  In particular, we suppose that there is a true expert policy $\pistar$ that achieves good expected reward when rolled out in an environment, but we only have access to a noisy version of this expert, $\pistar_\eta = (1 - \eta) \cdot \pistar + \eta \cdot \nu$, where $0 \leq \eta < 1$ is a noise level and $\nu$ is some noise distribution.  This model captures the fact that in many real-world settings, such as math, code, and reasoning, the true expert against which we wish to compete could be approximately deterministic, but for computational reasons we are training a stochastic expert.  Moreover, in the LM setting, the teacher model that we are distilling from is often a large model that is not fully converged and thus can be thought of as a noisy expert that occasionally makes mistakes \citep{team2024gemma,yang2025qwen3}.  Such mistakes can also arise from human text that introduces minor errors, typos, and inconsistencies into datasets, which is known to be harmful for training LMs \citep{olmo20242,li2024datacomp,weber2024redpajama}.  We thus ask: \emph{Given access to a noisy expert, how should we leverage this expert to train a learner policy that performs well when rolled out in the environment?}

We provide an answer to this question in both the offline and online settings.  For the sake of simplicity, in the introduction we discuss a special case of the noisy expert model (\Cref{def:kappa_domination}), which is always satisfied when $\pistar$ is deterministic.  We now summarize our main theoretical contributions.  We first consider the setting where both $\eta$ and $\nu$ are known to the learner; while unrealistic in practice, this setting allows us to cleanly characterize the fundamental limits of learning from a noisy expert without adding identifiability concerns (cf. \Cref{sec:generalizations}).  Our first result is that in offline settings, unlike the clean expert setting where horizon-free guarantees are possible, we must pay \emph{exponentially} in horizon $H$ in order to guarantee learning whenever $\eta \gtrsim \nicefrac 1H$.
\begin{theorem}[Informal version of \Cref{thm:offline_bc,prop:offline_lower_bound}]
    Suppose an expert $\pistar$ is contained in a policy class $\Pi$ and a learner has access to $n$ trajectories rolled out from a noisy expert $\pistar_\eta$.  In order to learn a policy $\hat{\pi}$ whose expected reward when rolled out is within $\epsilon$ of that of $\pistar$, \algbc\ requires $n \gtrsim \nicefrac{(1 -\eta)^{-(H+2)} \cdot \log(\abs{\Pi})}{\epsilon}$ and this is optimal.
\end{theorem}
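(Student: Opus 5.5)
The plan has two halves: an upper bound showing that a suitably instantiated \algbc\ achieves the stated rate, and a matching lower bound.

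\textbf{Upper bound.} Since $\eta$ and $\nu$ are known, every $\pi \in \Pi$ induces a noisy counterpart $\pi_\eta = (1-\eta)\pi + \eta\nu$. I would run log-loss \algbc\ over the induced class $\Pi_\eta = \{\pi_\eta : \pi \in \Pi\}$ on the $n$ noisy trajectories and output the clean policy $\hat\pi$ underlying the empirical maximizer. Because $\pistar_\eta \in \Pi_\eta$, the horizon-free maximum-likelihood analysis of \citet{foster2024behavior} gives, with high probability,
\[
\dhel{\bbP^{\hat\pi_\eta}}{\bbP^{\pistar_\eta}} \lesssim \frac{\log(\abs{\Pi}/\delta)}{n},
\]
where $\bbP^{\pi}$ denotes the law of a full trajectory. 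The remaining task is to move from this noisy-trajectory guarantee back to the clean rollouts. The key observation is that, under the domination condition (always satisfied when $\pistar$ is deterministic), the event that no noise is drawn in any of the $H$ steps has probability $(1-\eta)^H$. Conditioned on this event, $\bbP^{\pi_\eta}$ is exactly $\bbP^{\pi}$. Hence $\bbP^{\pi_\eta} \ge (1-\eta)^H\,\bbP^{\pi}$ pointwise. A change-of-measure argument then bounds $\tvd{\bbP^{\hat\pi}}{\bbP^{\pistar}}$ by $(1-\eta)^{-O(H)}$ times a divergence between the noisy laws. I would aim for a linear rather than square-root dependence on $\epsilon$, for example by working with a density-ratio bound or a $\chi^2$-type comparison. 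This yields $J(\pistar)-J(\hat\pi) \lesssim (1-\eta)^{-(H+2)}\log\abs{\Pi}/n$, where the extra $+2$ comes from how the mixture interacts with the initial and final steps.

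\textbf{Main obstacle.} The hardest step is this transfer from noisy to clean laws. Mixture components cannot simply be "unmixed" inside the Hellinger distance. The difficulty is that the ratio $d\bbP^{\hat\pi}/d\bbP^{\hat\pi_\eta}$ is bounded by $(1-\eta)^{-H}$ only on the support that the clean policy shares. My first attempt would be to write $\bbP^{\pi_\eta} = (1-\eta)^H \bbP^{\pi} + (1-(1-\eta)^H)Q_\pi$, where $Q_\pi$ is the law conditioned on at least one noise draw. The difficulty is then controlling how $Q_{\hat\pi}$ and $Q_{\pistar}$ differ. If that direct decomposition fails to close, I would fall back to a per-step decomposition combined with a telescoping argument.

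\textbf{Lower bound.} I would build a two-policy class $\{\pi^{(0)}, \pi^{(1)}\}$ on a chain of length $H$, which can be tensorized over $\log\abs{\Pi}$ copies. The policies agree everywhere except at a single state reached only if every previous action was "correct". The noise $\nu$ places its mass on a wrong action that leads to an absorbing state from which the two policies are indistinguishable. The distinguishing state is therefore visited under $\pistar_\eta$ with probability about $(1-\eta)^{H}$. There the two experts differ by roughly $\epsilon$ in reward-relevant behavior, while the clean expert visits that state with probability $1$. Le Cam's two-point method then requires $n(1-\eta)^H\cdot(\text{information per visit}) \gtrsim 1$. Tuning the per-visit gap so that the suboptimality equals $\epsilon$ gives $n \gtrsim (1-\eta)^{-(H+2)}\log\abs{\Pi}/\epsilon$. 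The extra factor $(1-\eta)^{-2}$ comes from noise at the distinguishing step itself, which also muddies the observation.
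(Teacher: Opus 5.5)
Your lower-bound construction is essentially the paper's. The upper bound, however, has a genuine gap exactly at the step you flag as the ``main obstacle'': nothing in the proposal transfers $\dhel{\pp^{\pihat_\eta}}{\pp^{\pistar_\eta}}$ to $\dhel{\pp^{\pihat}}{\pp^{\pistar}}$, and the mechanism you lead with cannot do so by itself.

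\textbf{Why pointwise domination fails.} The bound $\pp^{\pi_\eta} \ge (1-\eta)^H\pp^{\pi}$ holds for every $\nu$; domination plays no role in it. A one-sided density ratio controls $\pp^{\pi}(A)$ but not $\pp^{\pi}(A)-\pp^{\pi'}(A)$. In $\pp^{\pi_\eta} = (1-\eta)^H\pp^{\pi} + (1-(1-\eta)^H)Q_\pi$, the residual $Q_\pi$ depends on $\pi$ and can cancel the clean difference. Abstractly, $\tfrac12 P + \tfrac12 Q$ equals $\tfrac12\delta_1+\tfrac12\delta_2$ for both $(P,Q)=(\delta_1,\delta_2)$ and $(P,Q)=(\delta_2,\delta_1)$. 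Routing through total variation would also cost exactly the square root you want to avoid, since the MLE only controls the noisy Hellinger distance.

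\textbf{The missing ingredients.} What rules out cancellation is the step-by-step structure. The paper exploits it by backward induction on suffix distances, using the exact recursion of \Cref{lem:dh_recursion}, $D_h(s) = \dhel{\pi_h(\cdot\mid s)}{\pi'_h(\cdot\mid s)} + \sum_a \sqrt{\pi_h(a\mid s)\pi'_h(a\mid s)}\int D_{h+1}\,dP_h(\cdot\mid s,a)$. Two one-step facts close the induction:
\begin{enumerate}
\item The affinity bound $\sqrt{\pi_{\eta,h}\pi'_{\eta,h}} \ge (1-\eta)\sqrt{\pi_h\pi'_h}$, which produces the exponential factor.
\item \Cref{lem:dhel_contraction_kappa}, $\dhel{\pi_h}{\pi'_h} \le 2(1+\eta(2\kappa-1))(1-\eta)^{-2}\dhel{\pi_{\eta,h}}{\pi'_{\eta,h}}$. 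It follows from writing $\dhel{p}{q}=\tfrac12\sum_a (p(a)-q(a))^2/(\sqrt{p(a)}+\sqrt{q(a)})^2$, using $\pi_\eta-\pi'_\eta=(1-\eta)(\pi-\pi')$, and bounding the noisy denominators via domination.
\end{enumerate}
The second fact is the only use of $\kappa$-domination. It is the source of both the linear dependence and the $(1-\eta)^{-2}$, not ``the initial and final steps.'' Without domination one only has $\dhel{\pi}{\pi'} \le \sqrt2(1-\eta)^{-1}\sqrt{\dhel{\pi_\eta}{\pi'_\eta}}$, and the resulting square-root loss is unavoidable (\Cref{prop:offline_lb_no_kappa}). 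Your fallback ``per-step decomposition plus telescoping'' points in this direction but supplies none of these pieces.

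\textbf{Lower bound bookkeeping.} Your count double-counts step $H$. The distinguishing state $s_H$ is reached by the noisy expert with probability $(1-\eta)^{H-1}$, not $(1-\eta)^{H}$. For noise at step $H$ to mask a difference of size $u$, $\nu$ must put mass on the differing actions, and domination caps that mass at $\kappa u$. The per-visit divergence is then $\lesssim u(1-\eta)^2/(\eta\kappa)$. Together these give $n \gtrsim \eta\kappa/((1-\eta)^{H+1}\epsilon)$ with $\abs{\Pi}=2$, as in \Cref{prop:offline_il_eps_lb}.

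The exponent $H+2$ cannot in fact be matched. Start the induction at $h=H$ with the constant of \Cref{lem:dhel_contraction_kappa} itself, rather than the extra factor $(1-\eta)$ used in \Cref{thm:offline_bc_tight}. This already yields an upper bound with $(1-\eta)^{-(H+1)}$. So ``optimal'' in the informal statement holds only up to a $(1-\eta)^{-1}$ factor and the $\eta\kappa$ versus $1+\eta\kappa$ prefactors.

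Two smaller points:
\begin{enumerate}
\item Deterministic candidates that differ at a state visited with probability one are at clean distance $1$. ``Differ by $\epsilon$'' therefore needs either stochastic candidates at $s_H$ (which is where $\kappa$ enters) or reaching $s_H$ with probability $\asymp\epsilon$.
\item The $\log\abs{\Pi}$ factor via tensorization is not proved in the paper. It would need an Assouad-type argument over disjoint chains.
\end{enumerate}
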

This result stands in marked contrast to the clean expert setting, where \algbc\ can achieve horizon-free guarantees, and suggests that offline IL is fundamentally intractable when the expert is noisy.  In the \emph{online} setting, however, we show that we can circumvent this exponential dependence on the horizon via \emph{On-Policy Distillation}.  Our second main result shows that a natural analogue of OPD, where the learner queries the noisy expert for demonstrations in states visited by the learner, can achieve horizon-free guarantees even when the expert is noisy.
\begin{theorem}[Informal version of \Cref{thm:augmented_hellinger} and \Cref{cor:kl_augmented}]
    Let $\pp^{\pi, \pi'}$ denote the law of an augmented trajectory of states and actions generated by rolling out $\pi$ in an environment, but at each step querying $\pi'$ for an auxiliary action. Then, the suboptimality in expected reward of a policy $\pihat$ can be bounded by $(1 - \eta)^{-2} \cdot (\kldinline{\pp^{\pihat, \pistar_\eta}}{\pp^{\pihat, \pihat_\eta}} \wedge \kldinline{\pp^{\pihat, \pihat_\eta}}{\pp^{\pihat, \pistar_\eta}})$. 
\end{theorem}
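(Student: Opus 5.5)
We bound the value gap through the clean policies: first by the Hellinger distance between the clean augmented laws, then transfer to the noisy augmented laws (costing $(1-\eta)^{-2}$), then pass to KL in either direction. Throughout, $\pistar_\eta = (1-\eta)\pistar + \eta\nu$ and $\pihat_\eta = (1-\eta)\pihat + \eta\nu$ with the same known $\nu$.

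\emph{Step 1 (value gap via clean augmented laws).} The augmented law $\pp^{\pihat,\pi'}$ records at each step $h$ a state $x_h$, the executed action $a_h\sim\pihat(\cdot\mid x_h)$, and an auxiliary action $b_h\sim\pi'(\cdot\mid x_h)$. We compare $\pp^{\pihat,\pistar}$ with $\pp^{\pihat,\pihat}$. A simulation/performance-difference argument along the roll-in of $\pihat$ bounds $J(\pistar)-J(\pihat)$ by the per-step discrepancies $\ee^{\pihat}\big[\dhelinline{\pistar(\cdot\mid x_h)}{\pihat(\cdot\mid x_h)}\big]$. Because the roll-in is $\pihat$ in both laws, the chain rule for Hellinger (the subadditivity lemma of \citet{foster2024behavior}, assumed from the technical preliminaries) lets us aggregate these into a single trajectory-level term $\dhelinline{\pp^{\pihat,\pistar}}{\pp^{\pihat,\pihat}}$, up to constants and the reward normalization. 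The point is that the learner's own state distribution is used throughout, so no distribution shift arises.

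\emph{Step 2 (denoising transfer, the main obstacle).} We must bound $\dhelinline{\pistar}{\pihat}$ pointwise by $(1-\eta)^{-2}\dhelinline{\pistar_\eta}{\pihat_\eta}$. Mixing contracts divergences by data processing, so this is the reverse, and delicate, direction. For total variation it is immediate, since $\pistar_\eta-\pihat_\eta=(1-\eta)(\pistar-\pihat)$ and hence $\tvd{\pistar}{\pihat}=(1-\eta)^{-1}\tvd{\pistar_\eta}{\pihat_\eta}$. For Hellinger, write $p=\pistar$, $q=\pihat$, $p_\eta=(1-\eta)p+\eta\nu$, $q_\eta=(1-\eta)q+\eta\nu$, so that $(\sqrt{p}-\sqrt{q})^2=(p-q)^2/(\sqrt p+\sqrt q)^2$. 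Then $(\sqrt{p_\eta}-\sqrt{q_\eta})^2=(1-\eta)^2(p-q)^2/(\sqrt{p_\eta}+\sqrt{q_\eta})^2$. The remaining issue is that $p_\eta\ge(1-\eta)p$ gives $\sqrt{p_\eta}\ge\sqrt{1-\eta}\sqrt p$, which is the wrong direction for comparing denominators: $\nu$ inflates them. If Step 1 can instead be run with a TV-type or $\chi^2$-type bound, the squared TV identity costs exactly $(1-\eta)^{-2}$. I would therefore first try to prove Step 1 with $\tvd{\cdot}{\cdot}^2$ on augmented laws, or use the inequality $\mathrm{TV}\le\sqrt2\,\mathrm{D}_{\mathrm{H}}$ at the noisy level together with the exact TV identity at the clean level, and track constants. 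If a genuine Hellinger comparison is needed, I would handle it per action via $(\sqrt{p_\eta}+\sqrt{q_\eta})^2\le 2(p_\eta+q_\eta)$ and split the sum over actions where $\nu$ is small relative to $p+q$ (clean comparison) and where it is large (absorbed using the TV identity).

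\emph{Step 3 (tensorize and pass to KL).} Since $\pp^{\pihat,\pistar_\eta}$ and $\pp^{\pihat,\pihat_\eta}$ share the roll-in, the noisy per-step terms aggregate, by the same chain rule, into $\dhelinline{\pp^{\pihat,\pistar_\eta}}{\pp^{\pihat,\pihat_\eta}}$. Finally, $\mathrm{D}_{\mathrm{H}^2}\le \mathrm{D}_{\mathrm{KL}}$ holds in both directions since Hellinger is symmetric, which gives the minimum of the two KLs and yields \Cref{cor:kl_augmented}.
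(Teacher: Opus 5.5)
Your Step 2 cannot be completed as posed. The pointwise inequality $\dhel{p}{q}\lesssim(1-\eta)^{-2}\dhel{p_\eta}{q_\eta}$ is false for arbitrary $\nu$, and so is the statement itself. Take $H=1$, $p=\pistar=\delta_0$, $q=\pihat=\Bernoulli(\epsilon)$ and $\nu=\Bernoulli(\nicefrac12)$. Then $\dhel{p}{q}\asymp\epsilon$, and the regret is $\epsilon$ for the reward $\ind{a=0}$. Both noisy KLs, however, are of order $\epsilon^2$ for fixed $\eta$ (cf.\ \Cref{rmk:kappa_domination_necessity}).

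The TV identity only yields $\dhel{p}{q}\le\frac{\sqrt2}{1-\eta}\sqrt{\dhel{p_\eta}{q_\eta}}$. This is \Cref{lem:hellinger_contraction_arbitrary}, and it is what produces the $\sqrt{H}$ bound of \Cref{prop:kl_hellinger_slow}. The ``large $\nu$'' actions in your proposed split are exactly where no linear absorption is possible, so neither a TV-type nor a $\chi^2$-type repair helps.

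The missing idea is the $\kappa$-domination hypothesis, which the informal statement inherits from \Cref{thm:augmented_hellinger,cor:kl_augmented}. On $\supp p\cup\supp q$ it gives $p_\eta+q_\eta\le(1+\eta(2\kappa-1))(p+q)\le(1+\eta(2\kappa-1))(\sqrt p+\sqrt q)^2$. Off that support, $p_\eta=q_\eta=\eta\nu$ contributes nothing. Your own bound $(\sqrt{p_\eta}+\sqrt{q_\eta})^2\le2(p_\eta+q_\eta)$ then closes the argument with constant $\frac{2(1+\eta(2\kappa-1))}{(1-\eta)^2}$; this is \Cref{lem:dhel_contraction_kappa}.

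Second, Steps 1 and 3 run the subadditivity lemma backwards. \Cref{prop:hellinger_subadditive} says the trajectory-level Hellinger distance is \emph{at most} the sum of expected per-step Hellinger distances. You need the reverse, $\sum_h\ee^{\pihat}[\cdot]\lesssim$ (augmented trajectory Hellinger), and that fails because Hellinger has no chain rule. For example, with action-independent transitions and per-step distance $e$ at every step, the augmented Hellinger is $1-(1-e)^H\le1$ while the sum is $He$.

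For the KL statement you should never form a trajectory-level Hellinger. Instead:
\begin{itemize}
\item reduce regret to $\dhel{\pp^{\pistar}}{\pp^{\pihat}}$ via \Cref{thm:regret_hellinger} (a performance-difference argument gives per-step TV, not squared Hellinger);
\item apply subadditivity in its correct direction under the $\pihat$ roll-in;
\item denoise each per-step term;
\item use Pinsker \emph{per step}, in either direction since per-step Hellinger is symmetric;
\item conclude with the exact KL chain rule $\sum_h\ee^{\pihat}[\kldinline{\pistar_{\eta,h}(\cdot\mid s_h)}{\pihat_{\eta,h}(\cdot\mid s_h)}]=\kldinline{\pp^{\pihat,\pistar_\eta}}{\pp^{\pihat,\pihat_\eta}}$.
\end{itemize}
This is precisely the paper's \Cref{prop:kl_augmented_tight}.

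The bound you assert at the end of Step 1, $\dhel{\pp^{\pistar}}{\pp^{\pihat}}\lesssim\dhel{\pp^{\pihat,\pistar}}{\pp^{\pihat,\pihat}}$, is in fact true. However, it is the paper's hardest lemma, \Cref{lem:sharp_clean_augmented_hellinger_comparison}. Its proof is a backward induction on suffix affinities through the envelope $(3x-2)_+^{3/2}$, needed because a linear induction fails under anticorrelation. Transferring it to the noisy augmented law then needs \Cref{lem:product_comparison} rather than per-step summation. None of this is required for the KL form.
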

In fact, our results are stronger, in that we control suboptimality by the smaller \emph{Hellinger distance} (\Cref{def:hellinger_distance}) between the augmented trajectory distributions.
Note that $\pp^{\pihat, \pistar_\eta}$ is only available through online feedback: we roll out policy $\pihat$ and, at each step, ask for what the (noisy) expert $\pistar_\eta$ would do in that state.  \emph{This result suggests that rolling out the student as will be done at deployment (e.g. greedily) and then using the noisy expert to score the actions taken by the student is the right way to leverage online access to a noisy expert, providing a principled justification for OPD.}
We emphasize that the suggested loss function is \emph{different} from that considered in \citet{lu2025onpolicydistillation,agarwal2024policy}, which  do not distinguish between rollout distribution (e.g. greedy) and student policy (e.g. temperature 1) during training. 

Our final main theoretical result concerns our ability to control this forward KL divergence.
\begin{theorem}[Informal version of \Cref{thm:nail,thm:gnail}]
    Suppose we have access to a policy class $\Pi$ that contains $\pistar$.  If $\eta$ and $\nu$ are known, there is an OPD-like algorithm that can find a policy $\pihat$ with suboptimality in expected reward at most $\epsilon$ with $n \gtrsim \nicefrac{H^2 \cdot \log(\abs{\Pi})}{\epsilon^2(1 - \eta)^2}$ rounds of interaction with the noisy expert $\pistar_\eta$ and this is optimal up to a factor of $H$.  Moreover, if $\eta$ and $\nu$ are unknown and $\pistar$ is deterministic, then as long as $\eta$ is not too large and $\nu$ puts mass at most $\rho$ on any action, $n \gtrsim \nicefrac{\log(\abs{\Pi})}{\epsilon (1 - \eta(1 + \rho))^2}$ rounds of interaction suffice and this is optimal. 
\end{theorem}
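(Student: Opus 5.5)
Because this informal theorem bundles four claims, I will prove each part separately and then combine them. The two upper bounds come from the reduction in \Cref{thm:augmented_hellinger} plus an online estimation argument; the two lower bounds come from two-point (Le Cam) constructions.

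\textbf{Known $\eta,\nu$.} The algorithm (\nail) runs $n$ rounds of online learning over the finite class $\{\pi_\eta = (1-\eta)\pi + \eta\nu : \pi\in\Pi\}$. In each round it rolls out the current student, queries the noisy expert $\pistar_\eta$ at every visited state, and suffers the log loss $-\log \pi_\eta(a^\star_h \mid x_h)$ summed over $h$. This loss is well defined for every candidate, because the known $\nu$ can be mixed into each one.

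\emph{Step 1 (estimation).} Since $\pistar\in\Pi$, exponential weights on the log loss gives a realizable online log-loss regret of $\log|\Pi|$. By the standard online-to-batch Hellinger conversion, the averaged per-step Hellinger error between $\pi_\eta$ and $\pistar_\eta$, on states visited by the learner, is at most $H\log|\Pi|/n$ summed over the horizon.

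\emph{Step 2 (reduction).} I invoke \Cref{thm:augmented_hellinger}, which bounds suboptimality by $(1-\eta)^{-2}$ times the Hellinger distance between the augmented laws $\pp^{\pihat,\pihat_\eta}$ and $\pp^{\pihat,\pistar_\eta}$. By subadditivity of Hellinger over the Markov chain, this trajectory-level distance is controlled by the sum of per-step conditional squared Hellinger distances.

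\emph{Obstacle.} The learner's rollouts use a changing sequence of policies, while the bound must hold for a single output $\pihat$. My plan is to output a uniform mixture over rounds and take $\pihat$ to be it. If $\sqrt{\cdot}$ conversions are needed, I expect this to produce a rate of $H^2\log|\Pi|/(\epsilon^2(1-\eta)^2)$ rather than $1/\epsilon$.

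\emph{Lower bound.} Two policies differ at a single step with gap $\epsilon$. Querying through the noise shrinks the per-query information by a factor of $(1-\eta)^2$, which gives $\log|\Pi|/(\epsilon^2(1-\eta)^2)$. This is where the factor-$H$ gap comes from.

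\textbf{Unknown corruption, deterministic $\pistar$.} Here the algorithm uses the plurality (mode) action of the noisy expert as the label. If $\eta(1+\rho)<1$, then $\pistar(x)$ receives mass at least $1-\eta$, while any other action receives at most $\eta\rho$. The margin $1-\eta(1+\rho)$ therefore separates the correct action from the rest.

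\emph{Estimation.} For each visited state, I will test candidate policies by comparing the empirical frequency of $\pi(x)$ against the margin. A policy that disagrees with $\pistar$ loses in expectation by $1-\eta(1+\rho)$ per disagreement. Applying Bernstein concentration with a union bound over $\Pi$, in a DAgger-style online elimination with a deterministic class, yields $\log|\Pi|/(\epsilon(1-\eta(1+\rho))^2)$ rounds.

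\emph{Horizon-free guarantee.} Since $\pistar$ is deterministic, suboptimality is at most the probability that the learner ever deviates along its own trajectory. Elimination on trajectory-level disagreement events then gives a $1/\epsilon$ rate with no $H$.

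\emph{Lower bound.} Take two deterministic experts, with noise chosen adversarially with mass $\rho$ on the competing action. This makes the distributions differ only through the margin, and a $\chi^2$ computation yields the matching rate.

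I expect the main difficulty to be the unknown-noise testing step. The reason is that without knowing $\nu$ we cannot compute likelihoods, so robust, margin-based comparisons must replace the log loss.
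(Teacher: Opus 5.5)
Your overall architecture (augmented-trajectory reduction plus online estimation; deviation probability for a deterministic expert) is the paper's. However, the known-corruption part breaks at Step 2, and the same misunderstanding undermines your lower bound.

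The $(1-\eta)^{-2}$ comparison of \Cref{thm:augmented_hellinger} requires $\pistar$ and $\pihat$ to $\kappa$-dominate $\nu$, and it carries a factor $1+\eta\kappa$. This claim is for arbitrary $\nu$: \Cref{thm:nail} assumes nothing, and its matching lower bound (\Cref{prop:onlin_lb_formal}) uses a $\nu$ that is dominated only with $\kappa\asymp H/\epsilon$. Under bounded $\kappa$ your chain would give a $1/\epsilon$ rate, which cannot coexist with the $1/\epsilon^2$ lower bound you are meant to match. So the $H^2/\epsilon^2$ rate is not the by-product of an unspecified ``$\sqrt{\cdot}$ conversion''; it is the price of missing domination.

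The paper (\Cref{prop:kl_hellinger_slow}) argues as follows. Subadditivity bounds $\dhel{\pp^{\pistar}}{\pp^{\pihat}}$ by the $\pp^{\pihat}$-expected sum of per-step clean Hellinger distances. For arbitrary $\nu$, each of these is at most $\frac{\sqrt2}{1-\eta}$ times the \emph{square root} of its noisy counterpart (\Cref{lem:hellinger_contraction_arbitrary}, via total variation). Cauchy--Schwarz over the $H$ steps, Pinsker and the chain rule then give $\dhel{\pp^{\pistar}}{\pp^{\pihat}}\le\frac{1}{1-\eta}\sqrt{2H\cdot\kld{\pp^{\pihat,\pistar_\eta}}{\pp^{\pihat,\pihat_\eta}}}$. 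Plugging in $\ee[\mathrm{KL}]\le H\log\abs{\Pi}/n$ with Jensen yields $\ee[\dhel{\pp^{\pistar}}{\pp^{\pihat}}]\lesssim\frac{H}{1-\eta}\sqrt{\log\abs{\Pi}/n}$.

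Two smaller fixes. With the deployable per-state mixture $\mu_t$, the trajectory log loss is only $1/H$-mixable (by H\"older), so the regret is $H\log\abs{\Pi}$ rather than $\log\abs{\Pi}$; this is where the $H$ in your $H\log\abs{\Pi}/n$ really comes from. Your ``obstacle'' also disappears: since rollouts are on-policy, the expected excess log loss of $\mu_t$ over $\pistar$ in round $t$ is exactly $\kld{\pp^{\mu_t,\pistar_\eta}}{\pp^{\mu_t,(\mu_t)_\eta}}$, so returning $\mu_T$ for uniform $T$ suffices.

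In the lower bound, ``noise shrinks per-query information by $(1-\eta)^2$'' is the dominated picture and gives only $1/\epsilon$ (cf.\ \Cref{prop:kappa_domination_necessary_online}). The $1/\epsilon^2$ appears because non-dominated noise turns a Hellinger gap $u$ into KL of order $(1-\eta)^2u^2/\eta$. Moreover, a single differing step produces no factor of $H$, so your bound would leave a gap of $H^2$, not $H$. The paper lets $\pi_\pm$ deviate, onto disjoint actions, with probability $u\asymp\epsilon/H$ at each of $H$ steps whose transitions ignore the action. The clean distance is then $\gtrsim Hu$, while each round carries KL $\lesssim H(1-\eta)^2u^2/\eta$, giving $n\gtrsim\eta H/((1-\eta)^2\epsilon^2)$.

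For unknown corruption, the step that carries the theorem is asserted, not proved. First, the plurality label is not observable: each visited state yields one noisy action per round, so only agreement counts aggregated over states and rounds exist. More importantly, you never specify the deployed policy, which is exactly what makes the bound horizon-free and logarithmic in $\abs{\Pi}$. If a mistake round implicates only the deployed policy, eliminating after $\asymp\log(\abs{\Pi}/\delta)/(1-\eta(1+\rho))^2$ disagreements can permit $\abs{\Pi}$ times that many mistake rounds.

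The paper's \gnail\ multiplies a policy's weight by $r=\sqrt{(1-\alpha)/(\alpha\rho)}$ for each agreement with a noisy label, and rolls out the \emph{greedy} policy $\mubar_t$ of the weighted mixture. Two facts then combine (\Cref{lem:gnail}):
\begin{itemize}
\item For any $\pi\neq\pistar$ disagreeing with $\pistar$ at $N\ge1$ visited states, $w_t(\pi)/w_t(\pistar)$ shrinks in conditional expectation by $(1-(\sqrt{1-\alpha}-\sqrt{\alpha\rho})^2)^N\le1-(\sqrt{1-\alpha}-\sqrt{\alpha\rho})^2$. So a single disagreement anywhere in the episode already buys constant progress.
\item Greediness ensures that at the first deviation, the disagreeing policies hold at least half the weight.
\end{itemize}
Hence $W_t=1/w_t(\pistar)$ satisfies $\ee[W_{t+1}\mid\cF_{t-1}]\le W_t\bigl(1-\tfrac12(\sqrt{1-\alpha}-\sqrt{\alpha\rho})^2\,\pp(M_t\mid\cF_{t-1})\bigr)$. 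Telescoping $\ee[\log W_t]$ from $W_1=\abs{\Pi}$ bounds the expected number of deviation rounds by $O\bigl(\log\abs{\Pi}/(1-\alpha(1+\rho))^2\bigr)$, after which your deviation-probability reduction finishes.

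Your two-point lower bound with hypothesis-dependent noise is essentially the paper's, with two provisos. The distinguishing state must be reached with probability $\asymp\epsilon$, which is the source of $1/\epsilon$. There must also be $\gtrsim1/\rho$ actions so that $\nu_i$ (with $\nu_i(i)=0$) is $\rho$-smooth. It yields $\alpha\rho/(\epsilon(1-\alpha(1+\rho))^2)$, which matches the upper bound only when $\alpha\rho$ is treated as a constant.
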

In essence, we show that horizon-free guarantees on suboptimality with respect to the \emph{clean} expert are possible when given online access to a noisy expert, when the corruption is unknown but the expert is deterministic, while we pay only polynomially in the horizon in general when given online access to a noisy expert. 

In \Cref{app:horizon_free_testing}, we extend our above results by removing a horizon factor from the above guarantees, even for stochastic policies (when the corruption is known) through a sequential elimination approach, although we pay for this optimal horizon dependence with a linear factor of $|\Pi|$, which is exponentially worse than the logarithmic dependence of the above results. In particular, in the special case on which we focus in the introduction (\Cref{def:kappa_domination}), we achieve \emph{horizon-free guarantees}.

We empirically validate our theory in \Cref{fig:main_modadd_gsm8k} on both a synthetic task of modular addition \citep{li2024chain} and a natural language task of math reasoning (GSM-8K) \citep{cobbe2021training} by minimizing a loss function motivated by \Cref{cor:kl_augmented}. Across both settings, online distillation substantially outperforms \algllbc under noisy expert feedback, while remaining competitive under clean expert feedback, in line with our theoretical findings. Moreover, our \nail\ variants consistently improve over standard OPD in the noisy expert setting, suggesting that the loss function suggested by our theory is indeed more effective for leveraging online access to a noisy expert\footnote{Code is available at \url{https://github.com/plau666/NAIL}.}.  We defer further experimental details to \Cref{sec:experiments}.

In \Cref{sec:prelims} we formally introduce the problem, as well as some prerequisite notions.  In \Cref{sec:offline}, we analyze the \emph{offline} setting before discussing the \emph{online} setting in \Cref{sec:online}.  In \Cref{sec:generalizations}, we extend to the setting where $\eta$ and $\nu$ are unknown and the expert is deterministic. Finally, in \Cref{sec:experiments} we present some preliminary empirical results validating our theoretical findings.  We conclude with a discussion of related work and future directions in \Cref{app:add_related_work} and \Cref{tab:general_results} summarizes our results.

%% file: body_clean/prelims.tex
\section{Formal Problem Setup and Preliminaries}\label{sec:prelims}

We consider Imitation Learning in the standard episodic Markov Decision Process (MDP) setting \citep{ross2010efficient,ross2011reduction,foster2024behavior}.  In particular, we have an MDP $M$ with horizon $H$ consisting of a state space $\cS$, an action space $\cA$, transition kernels $P_h: \cS \times \cA \to \Delta(\cS)$ for each step $h \in [H]$, and a reward function $r: \cS \times \cA \to [0,1]$.  We assume that the initial state distribution is given by some $\rho \in \Delta(\cS)$.  We will consider learning \emph{policies} $\pi: \cS \times [H] \to \Delta(\cA)$ that can be non-stationary across the horizon, mapping states to distributions over actions.  Given a policy $\pi$ we denote by $\pp^\pi$ the distribution over trajectories induced by rolling out $\pi$ in the MDP $M$, i.e., $\tau = (s_1, a_1, \ldots, s_H, a_H)$ with $s_1 \sim \rho$, $a_h \sim \pi_h(\cdot | s_h)$ and $s_{h+1} \sim P_h(\cdot | s_h, a_h)$.  We will also use $\ee^\pi$ to denote the expectation over trajectories induced by $\pi$.  We are primarily concerned with finding policies $\pi$ that have small \emph{regret} with respect to the expert policy $\pistar$:
\iftoggle{colt}{$\reg(\pi) = J(\pistar) - J(\pi)$, where $J(\pi) = \ee^\pi\left[\sum_{h=1}^H r(s_h, a_h)\right]$.}
{
\begin{align}
    \reg(\pi) = J(\pistar) - J(\pi), \quad \text{where} \quad J(\pi) = \ee^\pi\left[\sum_{h=1}^H r(s_h, a_h)\right].
\end{align}
}
As in \citet{foster2024behavior}, we will decouple the reward range from the horizon by assuming that $0 \leq \sum_{h = 1}^H r(s_h, a_h) \leq R$ for some $R$, for any trajectory $\tau$.  While na{\"i}vely, $R$ can scale with the horizon $H$, in many settings of interest we can have $R$ be a constant independent of $H$, which allows for horizon-free regret bounds, e.g. for LMs learning to solve math problems where the trajectory reward is binary \citep{shao2024deepseekmath,guo2025deepseek}.

Unlike in reinforcement learning, imitation learning assumes access only to trajectories $\tau$ without any reward signal.  While in classical IL, one assumes access to expert actions, in this work we consider instead a \emph{noisy expert}, where there exists a corruption policy $\nu : \cS \times [H] \to \Delta(\cA)$ and a corruption level $\eta \in [0,1)$ such that the learner has access to $\pistar_\eta$ defined as
\begin{align}\label{eq:noisy_expert}
    \pistar_{\eta,h}(a \mid s) = (1 - \eta) \cdot \pistar_h(a \mid s) + \eta \cdot \nu_h(a \mid s).
\end{align} 
As in more traditional IL, we distinguish between two different ways in which to interact with the (noisy) expert.  In the \emph{offline} setting \citep{pomerleau1988alvinn,ross2010efficient,foster2024behavior}, we assume access only to expert trajectories and attempt to learn without additional interaction, formalized as follows.
\begin{definition}[Offline Imitation Learning]\label{def:offline_il}
    In the \emph{offline} setting, a learner is given $n$ trajectories $\tau^1, \dots, \tau^n \sim \pp^{\pistar_\eta}$ with $\pistar_\eta$ as in \eqref{eq:noisy_expert} and must output a policy $\pihat$ without further interaction.
\end{definition}
Meanwhile, in \emph{online} imitation learning \citep{ross2011reduction}, the learner is instead allowed to query the expert on trajectories rolled out by the learner itself, formalized as follows.
\begin{definition}\label{def:online_il}
    In the \emph{online IL} setting, a learner interacts with the environment in rounds.  In each round $t$, the learner deploys a policy $\pi_t$ and observes a trajectory $\tau^{(t)}$ drawn from $\pp^{\pi_t}$.  The learner can then query the (noisy) expert $\pistar_\eta$ at each of the states visited in $\tau^{(t)}$ to get the expert's action at those states, forming $\tau^{(t)'} = (s_1, a_1', \dots, s_H, a_H')$, where $a_h' \sim \pistar_{\eta,h}(\cdot | s_h)$.
\end{definition}
When $\eta = 0$, this recovers the online IL setting studied in \citet{ross2011reduction}.  A priori, the online setting is more powerful than the offline setting and  classically, when $\eta = 0$ (i.e., we are in the clean expert setting) this additional power was reflected in an improved sample complexity of online algorithms like \algdagger\ \citep{ross2011reduction} compared to offline algorithms like \algbc. More recently, however, \citet{foster2024behavior} showed that in the realizable setting, \algbc is actually optimal and achieves horizon-free regret bounds.  In order to state these bounds, we recall the definition of the Hellinger distance between two distributions $P$ and $Q$ over the same space $\cX$:
\begin{definition}[Hellinger Distance]\label{def:hellinger_distance}
    The Hellinger distance between two distributions $P$ and $Q$ over the same space $\cX$ is defined as
    \iftoggle{colt}{$\dhel{P}{Q} = \nicefrac{1}{2} \cdot \int_{\cX} ( \sqrt{\nicefrac{d P}{d \mu}(x)} - \sqrt{\nicefrac{d Q}{d \mu}(x)} )^2 d\, \mu(x)$,}
    {
    \begin{align}
        \dhel{P}{Q} = \nicefrac 12 \cdot \int_{\cX} \left( \sqrt{\nicefrac{d P}{d \mu}(x)} - \sqrt{\nicefrac{d Q}{d \mu}(x)} \right)^2 d\, \mu(x),
    \end{align}
    }
    where $\mu$ is any distribution that dominates both $P$ and $Q$.
\end{definition}
While we defer to \citet{polyanskiy2025information} for a detailed discussion of the properties of the Hellinger distance, we provide a brief exposition in \Cref{app:info_theory}.  In particular, the Hellinger distance is intimately related to the more commonly used total variation distance up to a quadratic factor, and it is upper bounded by the KL divergence via Pinsker's inequality.  The following result shows that regret is intimately related to the Hellinger distance between the trajectory distributions of the expert and learned policies.
\begin{theorem}[Theorem 2.1 \& 3.1 from \citep{foster2024behavior}]\label{thm:regret_hellinger}
    Let $\pistar$ and $\pihat$ be any two policies in an MDP with reward range $R$.  Then it holds that
    \begin{align}
        J(\pistar) - J(\pihat) \lesssim R \cdot\sqrt{ \dhel{\pp^{\pistar}}{\pp^{\pihat}}} + R \cdot \dhel{\pp^{\pistar}}{\pp^{\pihat}}.
    \end{align}
    Moreover, if $\pistar$ is deterministic, then it holds that $J(\pistar) - J(\pihat) \lesssim R \cdot \dhel{\pp^{\pistar}}{\pp^{\pihat}}$.
\end{theorem}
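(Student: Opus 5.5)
We plan to prove both parts of \Cref{thm:regret_hellinger} by viewing the trajectory return $G(\tau) = \sum_{h=1}^H r(s_h, a_h) \in [0,R]$ as a bounded function on trajectory space and comparing its expectations under $\pp^{\pistar}$ and $\pp^{\pihat}$.

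For the general bound, write $P = \pp^{\pistar}$ and $Q = \pp^{\pihat}$. We will use the standard change-of-measure inequality for Hellinger distance: for any $f \ge 0$,
\begin{align}
\ee_P[f] - \ee_Q[f] \le \sqrt{2\,(\ee_P[f] + \ee_Q[f]) \cdot \|f\|_\infty \cdot \dhel{P}{Q}}.
\end{align}
To derive it, write $\ee_P[f] - \ee_Q[f] = \int f(\sqrt{p} - \sqrt{q})(\sqrt{p} + \sqrt{q})\,d\mu$ and apply Cauchy--Schwarz, bounding $\int f^2(\sqrt p + \sqrt q)^2\,d\mu \le 2\|f\|_\infty(\ee_P f + \ee_Q f)$. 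With $f = G$ and $\|G\|_\infty \le R$, this gives $J(\pistar) - J(\pihat) \lesssim \sqrt{R\,(J(\pistar)+J(\pihat))\,\dhel{P}{Q}}$. Using $J \le R$ then yields $R\sqrt{\dhel{P}{Q}}$. The additive $R\cdot\dhel{P}{Q}$ term is harmless and covers the sharper variance-based form, obtained by solving the quadratic inequality in $J(\pistar)-J(\pihat)$ after replacing $J(\pihat)$ with $J(\pistar) - \Delta$. Note that this step is purely distributional; no MDP structure is needed.

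The deterministic case is the main obstacle, since a linear dependence on $\dhel{\cdot}{\cdot}$ cannot follow from generic inequalities. The plan is as follows.

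First, let $E$ be the event that $\pihat$ agrees with $\pistar$ at every step, i.e.\ $a_h = \pistar_h(s_h)$ for all $h$. On $E$, trajectories under $\pihat$ have the same law, restricted to $E$, as under $\pistar$. Moreover $\pp^{\pistar}(E) = 1$.

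Second, this gives $J(\pistar) - J(\pihat) \le R \cdot \pp^{\pihat}(E^c)$, because both processes coincide on $E$ and rewards are nonnegative and bounded by $R$.

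Third, we must show $\pp^{\pihat}(E^c) \lesssim \dhel{\pp^{\pistar}}{\pp^{\pihat}}$. Let $q = \pp^{\pihat}(E)$, so the restriction of $\pp^{\pihat}$ to $E$ has mass $q$. Because the densities coincide on $E$ up to that mass, the Bhattacharyya coefficient is $\int\sqrt{pq}\,d\mu = \sqrt{q}$. Hence $\dhel{\cdot}{\cdot} = 1 - \sqrt{q} \ge (1-q)/2$, which gives $\pp^{\pihat}(E^c) \le 2\,\dhel{\pp^{\pistar}}{\pp^{\pihat}}$.

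The delicate point is the second and third steps: confirming that on $E$ the density of $\pp^{\pihat}$ equals $\pp^{\pistar}$ times the product $\prod_h \pihat_h(\pistar_h(s_h)\mid s_h)$. Since $\pp^{\pistar}$ is supported entirely on $E$, the overlap computation then reduces to a single scalar. Once that is in place, combining the two steps gives $J(\pistar) - J(\pihat) \le 2R\,\dhel{\pp^{\pistar}}{\pp^{\pihat}}$.
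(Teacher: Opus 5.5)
The paper gives no proof of this statement; it is quoted from \citet{foster2024behavior}, and the appendix restates the deterministic case with constant $4R$. So your argument has to stand on its own.

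\textbf{General case.} This part is fine. It already follows from $J(\pistar)-J(\pihat)\le R\cdot\tvd{\pp^{\pistar}}{\pp^{\pihat}}\le R\sqrt{2\,\dhelinline{\pp^{\pistar}}{\pp^{\pihat}}}$. Your Cauchy--Schwarz route also works, except that with the paper's normalization it yields a factor $4$ rather than $2$ under the square root. That is irrelevant under $\lesssim$.

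\textbf{Deterministic case: the error.} Your architecture is the right one: the deviation event $E$, the bound $J(\pistar)-J(\pihat)\le R\,\pp^{\pihat}(E^c)$, and control of $\pp^{\pihat}(E^c)$ via $\pp^{\pistar}(E^c)=0$. But the Bhattacharyya step is wrong as written.
\begin{itemize}
\item On $E$ one has $d\pp^{\pihat}=g\,d\pp^{\pistar}$ with $g(\tau)=\prod_{h=1}^H\pihat_h(\pistar_h(s_h)\mid s_h)$, and $g$ varies with the visited states.
\item Hence the restriction of $\pp^{\pihat}$ to $E$ is not $q\cdot\pp^{\pistar}$, and the affinity equals $\ee^{\pistar}[\sqrt g]$, not $\sqrt q=\sqrt{\ee^{\pistar}[g]}$.
\item Counterexample: take $H=1$, two equally likely initial states, and $\pihat$ copying $\pistar$ at one state and never at the other. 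Then the affinity is $1/2$, while $\sqrt q=1/\sqrt2$.
\end{itemize}
For the same reason, ``the two laws coincide on $E$'' is false as stated. What is true, and all your second step needs, is $\pp^{\pihat}|_E\le\pp^{\pistar}$, or a shared-randomness coupling under which the two trajectories agree on $E$. This gives $J(\pistar)-J(\pihat)\le\ee^{\pistar}[(1-g)G]\le R(1-q)$.

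\textbf{The repair.} It is one line, and the inequality runs in the direction you need. By concavity of the square root, $\ee^{\pistar}[\sqrt g]\le\sqrt q$, hence $\dhelinline{\pp^{\pistar}}{\pp^{\pihat}}\ge 1-\sqrt q\ge(1-q)/2$.

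Simpler still, you can skip the density on $E$ entirely. Since $\pp^{\pistar}$ puts no mass on $E^c$, the Hellinger integrand there is just the density of $\pp^{\pihat}$. So $\dhelinline{\pp^{\pistar}}{\pp^{\pihat}}\ge\frac12\,\pp^{\pihat}(E^c)$, and this holds for any $\pp^{\pistar}$-null event.

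Either way you get $J(\pistar)-J(\pihat)\le 2R\,\dhelinline{\pp^{\pistar}}{\pp^{\pihat}}$. This proves the claim with a slightly better constant than the $4R$ quoted in the appendix. The ``delicate point'' you flag is then only needed in the weak form $g\le 1$.
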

In addition, as we recall in \Cref{app:il_prelims}, \citet{foster2024behavior} showed that the above reduction is tight, in the sense that for any two policies $\pistar$ and $\pihat$, there exists a reward function such that the Hellinger distance lower bounds the regret up to constant factors.  \Cref{thm:regret_hellinger} thus motivates us to focus on controlling the Hellinger distance between the trajectory distributions of the expert and learned policies in order to get good regret guarantees; because of this, we will often abuse nomenclature and refer to $\dhel{\pp^{\pistar}}{\pp^{\pihat}}$ as \emph{regret}.  Thus, the question we answer in this work is the following:
\begin{quotation}
  \noindent
    \emph{How much interaction with a noisy expert $\pistar_\eta$ is required in order to find a policy $\pihat$ such that $\dhel{\pp^{\pistar}}{\pp^{\pihat}} \leq \epsilon$ and how does online access to the expert affect this sample complexity?}
\end{quotation}

We will at times focus on the case where the corruption distribution $\nu$ satisfies a certain \emph{domination} condition with respect to the policy class $\Pi$, defined as follows.
\begin{definition}[$\kappa$-Domination]\label{def:kappa_domination}
    We say that a corruption distribution $\nu$ is $\kappa$-dominated by policies $\pi, \pi'$ if for any time step $h \in [H]$, state $s \in \cS$, and action $a \in \supp(\pi_h(\cdot \mid s)) \cup \supp(\pi_h'(\cdot \mid s))$, it holds that $\nu_h(a | s) \leq \kappa \cdot \left( \pi_h(a | s) + \pi_h'(a | s) \right)$.   
    We will say that $\nu$ is $\kappa$-dominated by a policy class $\Pi$ if for fixed $\pistar \in \Pi$, $\nu$ is $\kappa$-dominated by $\pistar$ and $\pi$ for any $\pi \in \Pi$.
\end{definition}
While this condition may appear somewhat unmotivated,\footnote{This assumption arises naturally due to the singularity of the map $\eta \mapsto \sqrt{\eta}$ near $\eta = 0$ (cf. \Cref{rmk:kappa_domination_necessity}).} we will show that it is necessary in order to provide horizon-free guarantees in the noisy expert setting. The condition is weaker than it may first appear, as the likelihood domination only needs to hold on the support of the policies $\pi$ and $\pi'$, which can be much smaller than the entire action space.  Indeed, note that it is always satisfied with $\kappa = 1$ for deterministic policies $\pi$, which we will study in detail in \Cref{sec:generalizations}.

%% file: body_clean/offline.tex
\section{Offline Imitation Learning with a Noisy Expert}\label{sec:offline}

We begin by considering the offline setting, where the learner only has access to a dataset of trajectories generated by the noisy expert $\pistar_\eta$, i.e. $\tau^{1}, \dots \tau^n \sim \pp^{\pistar_\eta}$.  In this and the next sections, we will focus on the \emph{known} corruption setting, where both $\eta$ and $\nu$ are known to the learner and generalize to \emph{unknown} corruptions in \Cref{sec:generalizations}. While the known corruption setting is often not realistic in practice, it allows the main ideas to be presented with significantly greater clarity, as well as providing a useful benchmark for the more difficult unknown corruption setting.  
\iftoggle{colt}{}{

}
In the classical regime, wherein the learner receives clean expert feedback, \citet{foster2024behavior} showed\iftoggle{colt}{:}{ the following result.}
\begin{theorem}[Theorem 2.1 \citet{foster2024behavior}]\label{thm:il_hellinger}
    Let $\pistar \in \Pi$ and suppose $\tau^1, \dots, \tau^n \sim \pp^{\pistar}$.  If
    \begin{align}\label{eq:bc}
        \pihat \in \argmin_{\pi \in \Pi} \sum_{i = 1}^n \sum_{h  =1}^H - \log \pi(a_h^{(i)} \mid s_h^{(i)}),
    \end{align}
    then with probability at least $1 - \delta$, it holds that $\dhel{\pp^{\pihat}}{\pp^{\pistar}} \lesssim \nicefrac{\log\left( \nicefrac{\abs{\Pi}}{\delta} \right)}{n}$.
\end{theorem}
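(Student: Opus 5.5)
The plan is to treat log-loss behavior cloning as maximum likelihood estimation over the class of trajectory distributions $\{\pp^\pi : \pi \in \Pi\}$, and then apply the standard Hellinger-distance guarantee for MLE with a finite class.

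The first step is a factorization. For any policy $\pi$, the density of a trajectory $\tau$ with respect to a common dominating measure is
\begin{align}
\pp^\pi(\tau) = \rho(s_1)\prod_{h=1}^H \pi_h(a_h \mid s_h)\, P_h(s_{h+1}\mid s_h,a_h).
\end{align}
The initial-state and transition factors do not depend on $\pi$. Hence the objective in \eqref{eq:bc} equals $-\sum_i \log \pp^{\pi}(\tau^{(i)})$ up to an additive constant, so $\pihat$ is exactly the MLE over the finite family $\{\pp^\pi\}_{\pi\in\Pi}$. This family contains the data-generating law $\pp^{\pistar}$, because $\pistar\in\Pi$.

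The second step is the finite-class MLE bound in Hellinger distance, which I would prove directly via a Chernoff/union-bound argument.
\begin{itemize}
\item For a fixed $\pi$, set $Z_i = \tfrac12 \log\bigl(\pp^{\pi}(\tau^{(i)})/\pp^{\pistar}(\tau^{(i)})\bigr)$.
\item Then $\ee[e^{Z_i}] = \int \sqrt{\pp^\pi \pp^{\pistar}} = 1-\dhel{\pp^\pi}{\pp^{\pistar}}$.
\item By independence and Markov's inequality, with probability at least $1-\delta/\abs{\Pi}$,
\begin{align}
\sum_i Z_i \le n\log\bigl(1-\dhel{\pp^\pi}{\pp^{\pistar}}\bigr) + \log(\abs{\Pi}/\delta) \le -n\,\dhel{\pp^\pi}{\pp^{\pistar}} + \log(\abs{\Pi}/\delta).
\end{align}
\item A union bound over $\Pi$ makes this hold simultaneously for all $\pi$.
\end{itemize}

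The third step evaluates this at $\pi=\pihat$. Because $\pihat$ maximizes likelihood and $\pistar\in\Pi$, we have $\sum_i Z_i \ge 0$. Rearranging gives $\dhel{\pp^{\pihat}}{\pp^{\pistar}} \le \log(\abs{\Pi}/\delta)/n$, as claimed.

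The only delicate point is the factorization step. It requires that the transition factors cancel exactly and that the densities be taken with respect to a common dominating measure, including at the terminal state, where $P_H$ can simply be ignored. The concentration step itself is routine.
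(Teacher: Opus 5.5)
Your proposal is correct and follows the paper's route. The paper (see the remark after \Cref{cor:il_hellinger}) observes that the transition factors cancel, so \eqref{eq:bc} is the trajectory-level MLE over $\{\pp^\pi\}_{\pi\in\Pi}$, and then invokes the finite-class MLE Hellinger bound \Cref{thm:mle}; the only difference is that you prove that bound inline via the standard Chernoff and union-bound argument (getting constant $1$ instead of $12$), whereas the paper cites it from \citet{foster2024behavior}.
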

This analysis is based on the key observation that \eqref{eq:bc} is the \emph{Maximum Likelihood Estimator} (MLE) of the trajectory distribution of the expert $\pistar$ based on the observed trajectories, and thus we can apply classical results on the convergence of MLEs in Hellinger distance \citep{geer2000empirical,zhang2006epsilon}. While this is a powerful result when given clean expert feedback, in the noisy expert setting considered here, we only have access to trajectories from $\pistar_\eta$ rather than $\pistar$, and thus, even when $\eta$ and $\nu$ are known, we can guarantee closeness in Hellinger distance only to $\pp^{\pistar_\eta}$ rather than $\pp^{\pistar}$. Indeed, it would thus be natural to replace \eqref{eq:bc} with the following estimator:
\begin{align}\label{eq:bc_noisy}
    \pihat \in \argmin_{\pi \in \Pi} \sum_{i = 1}^n \sum_{h  =1}^H - \log \pi_\eta(a_h^{(i)} \mid s_h^{(i)}),
\end{align} 
where $\pi_\eta = (1 - \eta) \cdot \pi + \eta \cdot \nu$ is the noisy version of $\pi$.  Critically, while the identical analysis as in \Cref{thm:il_hellinger} shows that $\dhel{\pp^{\pistar_\eta}}{\pp^{\pihat_\eta}}$ is guaranteed to be small, our ultimate goal is to compete with the \emph{clean} expert $\pistar$ rather than the noisy expert $\pistar_\eta$, and thus we need to understand how this noisy Hellinger distance controls the clean Hellinger distance $\dhel{\pp^{\pistar}}{\pp^{\pihat}}$\iftoggle{colt}{.}{, which is the content of the following result.} 
\begin{theorem}\label{thm:offline_bc}
    Let $\pi, \pi'$ be policies that $\kappa$-dominate  $\nu$.  Then,
    \iftoggle{colt}{$\dhelinline{\pp^{\pi}}{\pp^{\pi'}} \lesssim \left( 1 + \eta \cdot \kappa \right)\left( 1 - \eta \right)^{- H - 2} \cdot \dhelinline{\pp^{\pi_\eta}}{\pp^{\pi'_\eta}}$.}
    {
    \begin{align}
        \dhel{\pp^{\pi}}{\pp^{\pi'}} \lesssim \left( 1 + \eta \cdot \kappa \right)\left( 1 - \eta \right)^{- H - 2} \cdot \dhel{\pp^{\pi_\eta}}{\pp^{\pi'_\eta}}.
    \end{align}
    }
    In particular, if $\pihat$ is as in \eqref{eq:bc_noisy} then with probability at least $1 - \delta$, it holds that
    \iftoggle{colt}{$\dhel{\pp^{\pihat}}{\pp^{\pistar}}  \lesssim \left( 1 + \eta \cdot \kappa \right)\left( 1 - \eta \right)^{- H - 2} \cdot \nicefrac{\log(\nicefrac{\abs{\Pi}}{\delta})}{n}$.}
    {
    \begin{align}
         \dhel{\pp^{\pihat}}{\pp^{\pistar}}  \lesssim \left( 1 + \eta \cdot \kappa \right)\left( 1 - \eta \right)^{- H - 2} \cdot \frac{\log(\nicefrac{\abs{\Pi}}{\delta})}{n}.
    \end{align}
    }
    In general, absent $\kappa$-domination it holds that
    \iftoggle{colt}{
        $\dhel{\pp^{\pistar}}{\pp^{\pihat}} \lesssim \sqrt{\nicefrac{((1 - \eta)^{-H} - 1)}{\eta ( 1 - \eta)} \cdot \dhel{\pp^{\pihat_\eta}}{\pp^{\pistar_\eta}}}$.
    }
    {
        \begin{align}
            \dhel{\pp^{\pistar}}{\pp^{\pihat}} \lesssim \sqrt{\frac{(1 - \eta)^{-H} - 1}{\eta ( 1 - \eta)} \cdot \dhel{\pp^{\pihat_\eta}}{\pp^{\pistar_\eta}}}.
        \end{align}
    }
\end{theorem}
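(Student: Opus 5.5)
The plan is to compare trajectory-level distances through a backward recursion over the horizon. For the $\kappa$-dominated bound I would work with the Bhattacharyya affinity $\mathrm{BC}(P,Q) = \int \sqrt{pq} = 1 - \dhelinline{P}{Q}$, because it factorizes along the Markov structure. For the general bound I would work with total variation, where a direct coupling argument is available. In both cases the second claim of the theorem (the high-probability bound for $\pihat$ from \eqref{eq:bc_noisy}) follows by combining the comparison inequality with the MLE guarantee. That guarantee is the analogue of \Cref{thm:il_hellinger} for the realizable class $\{\pi_\eta : \pi \in \Pi\}$, which contains $\pistar_\eta$, so $\dhelinline{\pp^{\pihat_\eta}}{\pp^{\pistar_\eta}} \lesssim \log(\abs{\Pi}/\delta)/n$. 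This part is routine.

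\textbf{Step 1 (affinity recursion).} For a pair of policies $(\pi,\pi')$ define
\begin{align}
V_h(s) = \sum_a \sqrt{\pi_h(a\mid s)\,\pi'_h(a\mid s)}\;\ee_{s'\sim P_h(\cdot\mid s,a)}[V_{h+1}(s')], \qquad V_{H+1}\equiv 1,
\end{align}
so that $\mathrm{BC}(\pp^\pi,\pp^{\pi'}) = \ee_{s\sim\rho}[V_1(s)]$. Define $\tilde V_h$ in the same way for $(\pi_\eta,\pi'_\eta)$. Let $D_h = 1-V_h$ and $\tilde D_h = 1 - \tilde V_h$. The goal is an inductive inequality of the form
\begin{align}
D_h(s) \le \frac{C(1+\eta\kappa)}{(1-\eta)^{H-h+3}}\,\tilde D_h(s),
\end{align}
which at $h=1$ gives the claim.

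\textbf{Step 2 (the per-step comparison, the main obstacle).} For a single step we have the identity $\sqrt{x}-\sqrt{y} = (x-y)/(\sqrt x+\sqrt y)$. The noisy difference is $\pi_\eta-\pi'_\eta = (1-\eta)(\pi-\pi')$. On the supports of $\pi$ and $\pi'$, $\kappa$-domination gives
\begin{align}
\sqrt{\pi_\eta}+\sqrt{\pi'_\eta} \lesssim \sqrt{1+\eta\kappa}\,\bigl(\sqrt{\pi}+\sqrt{\pi'}\bigr).
\end{align}
Hence
\begin{align}
\bigl(\sqrt{\pi_\eta}-\sqrt{\pi'_\eta}\bigr)^2 \gtrsim \frac{(1-\eta)^2}{1+\eta\kappa}\bigl(\sqrt\pi-\sqrt{\pi'}\bigr)^2
\end{align}
actionwise. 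This accounts for the prefactor $(1+\eta\kappa)(1-\eta)^{-2}$.

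The difficulty is propagating this through the recursion. Expanding $1-V_h$ yields two contributions. The first is a per-step Hellinger term, handled by the bound above. The second is a continuation term $\sum_a\sqrt{\pi\pi'}\,\ee[D_{h+1}]$. In the noisy recursion the continuation weights are $\sqrt{\pi_\eta\pi'_\eta} \ge (1-\eta)\sqrt{\pi\pi'}$, and this is the only lower bound available. So each step can lose a factor $(1-\eta)^{-1}$, which produces the $(1-\eta)^{-H}$.

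I would first try to make the induction close by writing
\begin{align}
1-V_h = \Bigl(1-\sum_a\sqrt{\pi\pi'}\Bigr) + \sum_a\sqrt{\pi\pi'}\,\ee[D_{h+1}]
\end{align}
and bounding each piece by its noisy counterpart. If the constants compound, I would instead change measure on the whole trajectory: the uncorrupted-mask event has probability $(1-\eta)^H$, so $p^{\pi}(\tau)\le(1-\eta)^{-H}p^{\pi_\eta}(\tau)$ pointwise, and I would use this to transfer $\ee^{\pi}$-weighted per-step Hellinger sums to $\ee^{\pi_\eta}$-weighted ones.

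\textbf{Step 3 (general case without domination).} Here I would bound
\begin{align}
\dhelinline{\pp^\pi}{\pp^{\pi'}} \le \tvd{\pp^\pi}{\pp^{\pi'}}.
\end{align}
I would then run the same backward recursion on TV. By the performance-difference (telescoping) argument, $\mathrm{TV}$ between the clean laws is at most a sum over steps of per-step TV. At step $h$, each per-step TV is at most $(1-\eta)^{-1}$ times the noisy per-step TV, and the state distribution at step $h$ is related to the noisy one by a factor of at most $(1-\eta)^{-(h-1)}$. Summing the geometric series gives
\begin{align}
\sum_{h=1}^H (1-\eta)^{-h} = \frac{(1-\eta)^{-H}-1}{\eta},
\end{align}
up to the extra $(1-\eta)^{-1}$. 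Finally, Cauchy–Schwarz together with the bound of noisy TV by $\sqrt{\dhelinline{\pp^{\pi_\eta}}{\pp^{\pi'_\eta}}}$ produces the stated square-root form. The delicate point is to keep per-step TV controlled by the \emph{trajectory} noisy Hellinger without losing extra horizon factors. I would handle this by weighting the sum with the geometric change-of-measure factors and applying Cauchy–Schwarz once at the end.
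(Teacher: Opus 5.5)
Your Steps 1--2 and the MLE consequence match the paper. Your first attempt already closes, since the constant at step $h$ is $\min\{c,(1-\eta)c_{h+1}\}$, so the trajectory-level change of measure is never needed.

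The gap is in Step 3. The forward telescoping with a change of measure leaves you with $\sum_h (1-\eta)^{-h}\,\ee^{\pi_\eta}[t_h(s_h)]$, where $t_h(s)=\tvd{\pi_{\eta,h}(\cdot\mid s)}{\pi'_{\eta,h}(\cdot\mid s)}$. This is a sum of per-step discrepancies under a single rollout, and it cannot be reassembled into $\dhel{\pp^{\pi_\eta}}{\pp^{\pi'_\eta}}$ at the claimed rate.

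\emph{Option 1.} Bound each $\ee^{\pi_\eta}[t_h]$ by twice the trajectory-level noisy TV. This is valid, but it puts the whole factor $\frac{(1-\eta)^{-H}-1}{\eta}$ outside the square root. You get $H\sqrt{\cdot}$ instead of $\sqrt{H\,\cdot}$ for small $\eta$, which is strictly weaker than the claim.

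\emph{Option 2.} Use the weighted Cauchy--Schwarz you allude to, $\ee^{\pi}[\tvd{\pi_h(\cdot\mid s_h)}{\pi'_h(\cdot\mid s_h)}]\le(1-\eta)^{-(h+1)/2}\sqrt{\ee^{\pi_\eta}[t_h^2]}$. This does produce exactly the prefactor $\frac{(1-\eta)^{-H}-1}{\eta(1-\eta)}$. But it then requires $\sum_h\ee^{\pi_\eta}[\dhel{\pi_{\eta,h}(\cdot\mid s_h)}{\pi'_{\eta,h}(\cdot\mid s_h)}]\lesssim\dhel{\pp^{\pi_\eta}}{\pp^{\pi'_\eta}}$. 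That is a reverse subadditivity, and it fails by a factor of $H$. Here is a counterexample:
\begin{itemize}
\item[(a)] At $s_1$ let $\nu=\delta_a$, and let $\pi,\pi'$ play $a$ with probability $1-p$ and otherwise $b$, resp.\ $c$.
\item[(b)] After $a$ the two policies agree. Actions $b$ and $c$ lead into private branches with action-independent transitions, on which $\pi=\delta_x$, $\pi'=\delta_y$, $\nu=\delta_z$.
\end{itemize}
Then $\dhel{\pp^{\pi_\eta}}{\pp^{\pi'_\eta}}=(1-\eta)p$. However, the expected sum of per-step noisy Hellinger distances under either rollout is $(1-\eta)p\,(1+(H-1)(1-\eta))$. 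So Step 3 as proposed only yields the statement up to an extra factor of order $\sqrt H$.

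The missing idea is to stay per-state and backward, exactly as in your Step 2, and to propagate a \emph{quadratic} comparison. Write $D_h, D_h^\eta$ for the clean and noisy suffix Hellinger distances from $s$ (your $D_h,\tilde D_h$). The paper proves $D^\eta_h(s)\ge C_h D_h(s)^2$ by induction, as follows.
\begin{itemize}
\item[(1)] In the noisy recursion of \Cref{lem:dh_recursion}, lower-bound the local term by $\frac{(1-\eta)^2}{2}\dhel{\pi_h(\cdot\mid s)}{\pi'_h(\cdot\mid s)}^2$ (\Cref{lem:hellinger_contraction_arbitrary}).
\item[(2)] Lower-bound the continuation weights by $(1-\eta)\sqrt{\pi_h\pi'_h}$.
\item[(3)] Apply Jensen to $C_{h+1}D_{h+1}^2$; this is allowed because the weights $\sqrt{\pi_h\pi'_h}$ sum to at most $1$.
\item[(4)] Combine via $ax^2+by^2\ge(x+y)^2/(a^{-1}+b^{-1})$ and the clean recursion.
\end{itemize}
This gives $C_h^{-1}=2(1-\eta)^{-2}+(1-\eta)^{-1}C_{h+1}^{-1}$, hence $C_1^{-1}=\frac{2((1-\eta)^{-H}-1)}{\eta(1-\eta)}$. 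The continuation is weighted by the affinity $\sqrt{\pi_h\pi'_h}$ rather than by one policy's visitation probability. As a result, a branch entered by only one policy is charged once rather than once per step, which is exactly what your forward sum cannot see.

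A genuinely per-state backward recursion on TV would also work. For the clean suffix TV, use the coupling bound $T_h(s)\le\tvd{\pi_h(\cdot\mid s)}{\pi'_h(\cdot\mid s)}+\sum_a\min\{\pi_h(a\mid s),\pi'_h(a\mid s)\}\,\ee_{s'\sim P_h(\cdot\mid s,a)}[T_{h+1}(s')]$. Then relate it to the noisy side via $\min\{x,y\}\le\sqrt{xy}$.
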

The proof is deferred to \Cref{app:offline_ub}, where the key idea is to apply backward induction by peeling off the trajectory distributions one step at a time, and then to use the $\kappa$-domination condition to control the error at each step.  Note that in the absence of $\kappa$-domination, as $\eta \downarrow 0$, we do not recover a tight dependence as a polynomial in $H$ factor appears, which is tight.

While \Cref{thm:offline_bc} shows that as long as $\eta$ and $\nu$ are known, a natural analogue of \algbc\ is consistent, the sample complexity is exponential in the horizon $H$ whenever $\eta \gg \nicefrac 1H$, even when the clean expert $\pistar$ is deterministic, in contradistinction to the clean setting where no explicit horizon dependence exists.  Thus in the long horizon regime, e.g. LMs generating long chains of thought in order to solve a hard math problem, this upper bound becomes vacuous.  Unfortunately, our next result shows that this exponential dependence is necessary.
\begin{proposition}\label{prop:offline_lower_bound}
    For any $H \geq 2$, $\kappa \geq 1$, and $\eta < 1$ there exists a horizon $H$ MDP with $3$ actions, policies $\pistar, \pihat$, and $\kappa$-dominated corruption distribution $\nu$ such that
    \iftoggle{colt}{$\dhel{\pp^{\pihat}}{\pp^{\pistar}} \gtrsim \eta \cdot \kappa \cdot (1 - \eta)^{- H - 1} \cdot \dhel{\pp^{\pihat_\eta}}{\pp^{\pistar_\eta}}$.}
    {
    \begin{align}
        \dhel{\pp^{\pihat}}{\pp^{\pistar}} \gtrsim \eta \cdot \kappa \cdot (1 - \eta)^{- H - 1} \cdot \dhel{\pp^{\pihat_\eta}}{\pp^{\pistar_\eta}}.
    \end{align}
    }
    Moreover, if $\kappa = 1$, we can take $\pistar$ to be deterministic. In the absence of $\kappa$-domination, we may even force
    \iftoggle{colt}{
        $ \dhel{\pp^{\pistar}}{\pp^{\pihat}} \gtrsim \sqrt{\nicefrac{((1-\eta)^{-H}  - 1)}{1 - \eta} \cdot \dhel{\pp^{\pihat_\eta}}{\pp^{\pistar_\eta}}}$.
    }
    {
        \begin{align}
            \dhel{\pp^{\pistar}}{\pp^{\pihat}} \gtrsim \sqrt{\frac{(1-\eta)^{-H}  - 1}{1 - \eta} \cdot \dhel{\pp^{\pihat_\eta}}{\pp^{\pistar_\eta}}}.
        \end{align}
    }
\end{proposition}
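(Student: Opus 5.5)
The plan is a single explicit ``gate'' construction, computed exactly and tuned three ways. Take actions $\{a,b,c\}$ and a state that records whether the trajectory is still \emph{clean}, meaning every previous action was $a$. Any other action moves the trajectory into an absorbing state where $\pistar$, $\pihat$ and $\nu$ all coincide, so that state contributes nothing to either Hellinger distance. At clean states at steps $h<H$, set $\pistar_h=\pihat_h=\delta_a$ and $\nu_h=\delta_c$. Since $c$ lies outside the supports of both policies, $\kappa$-domination is vacuous there, and each step the noisy rollout leaves the clean chain with probability exactly $\eta$. The two policies differ only at the clean state at step $H$, which $\pp^{\pi}$ reaches with probability $1$ and $\pp^{\pi_\eta}$ reaches with probability $(1-\eta)^{H-1}$.

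To evaluate both sides, I will use the chain rule for squared Hellinger distance (via the Bhattacharyya coefficient, \Cref{app:info_theory}) on distributions that agree until step $H$. This gives $\dhel{\pp^{\pi}}{\pp^{\pi'}}=\Pr^{\pi}[\text{clean at } H]\cdot\dhel{\pi_H}{\pi'_H}$, and the same identity for the noisy pair. So $\dhel{\pp^{\pihat}}{\pp^{\pistar}}=\dhel{\pistar_H}{\pihat_H}$, while $\dhel{\pp^{\pihat_\eta}}{\pp^{\pistar_\eta}}=(1-\eta)^{H-1}\dhel{\pistar_{\eta,H}}{\pihat_{\eta,H}}$. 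This already contributes the factor $(1-\eta)^{-(H-1)}$.

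\textbf{Dominated case.} The remaining factor $(1-\eta)^{-2}(1+\eta\kappa)\gtrsim \eta\kappa(1-\eta)^{-2}$ comes from the last step. At step $H$ put $\pistar_H(b)=q$ and $\pihat_H(b)=2q$, with the remaining mass on $a$, and $\nu_H(b)=\kappa q$, with the rest of $\nu_H$ on $c$. Then $\nu_H(b)\le\kappa(q+2q)$. Here $q\le 1/(2\kappa)$ is a small parameter that I will send to $0$.
\begin{itemize}
\item Clean: the distance is $\asymp q$.
\item Noisy: the $b$-masses are $(1-\eta)q+\eta\kappa q$ and $2(1-\eta)q+\eta\kappa q$, which differ by $(1-\eta)q$. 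The $a$- and $c$-coordinates contribute lower order. Hence the noisy distance is $\asymp (1-\eta)^2q^2/((1-\eta+\eta\kappa)q)$.
\end{itemize}
The ratio is $\asymp (1-\eta+\eta\kappa)(1-\eta)^{-2}\ge\eta\kappa(1-\eta)^{-2}$. Multiplying by the gate factor gives the claim.

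\textbf{Deterministic expert ($\kappa=1$).} Take $\pistar_H=\delta_a$, $\pihat_H(b)=p$ and $\nu_H(b)=p$, with the rest of $\nu_H$ on $c$. Then $\nu_H(b)\le 1\cdot(\pistar_H(b)+\pihat_H(b))$. The $b$-masses are $\eta p$ versus $p$, giving noisy distance $\asymp (1-\eta)^2p$ against clean distance $\asymp p$, the same ratio.

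\textbf{No domination.} Take $\nu_H=\delta_b$, $\pistar_H=\delta_a$ and $\pihat_H(b)=p$. Then $\nu_H$ is not dominated on $b$, and this produces the square-root behaviour. The clean distance is $\asymp p$. The noisy $b$-masses are $\eta$ versus $\eta+(1-\eta)p$, so the noisy distance is $\asymp (1-\eta)^{H-1}(1-\eta)^2p^2/\eta$. This yields $\dhel{\pp^{\pistar}}{\pp^{\pihat}}\gtrsim\sqrt{\eta(1-\eta)^{-H-1}\,\dhel{\pp^{\pihat_\eta}}{\pp^{\pistar_\eta}}}$. That matches $\sqrt{((1-\eta)^{-H}-1)/(1-\eta)\cdot\dhel{\pp^{\pihat_\eta}}{\pp^{\pistar_\eta}}}$ when $\eta\gtrsim 1/H$, but not in the regime $\eta\ll1/H$, where the target numerator is $\approx H\eta$.

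The main obstacle is that small-$\eta$ regime of the non-dominated bound. My first attempt is to spread the deviation over all steps: $\pihat$ plays $b$ with probability $p$ at every clean state, and $\nu_h(b)=1$. Because $b$ is then both a deviation and a noise action, the chain has to record \emph{who} produced $b$; I would make $b$ lead to an absorbing state that depends on $h$ but not on the policy. The clean distance is then $\asymp Hp$ for $Hp\lesssim 1$. The noisy distance is $\asymp\sum_h(1-\eta)^{h-1}\cdot(1-\eta)^2p^2/\eta$, which needs a separate check. Summing should produce the factor $((1-\eta)^{-H}-1)/\eta$, up to the $(1-\eta)$ powers.

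Finally I would take whichever of the two non-dominated constructions is better in each regime of $\eta$. The delicate part is the bookkeeping in the exact per-step Hellinger formulas. It must be verified that the cross terms on the absorbing states really vanish. It must also be verified that the second-order expansions $1-\sqrt{1-x}\asymp x$ and $(\sqrt{x+\delta}-\sqrt{x})^2\asymp\delta^2/x$ hold uniformly, once $p$ and $q$ are taken small enough relative to $\eta$ and $1/H$.
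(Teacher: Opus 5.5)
Your constructions for the $\kappa$-dominated bound and for the deterministic expert are correct. The single gate with a vanishing last-step deviation is a tidy alternative to the paper's two separate constructions, and it even gives the stronger factor $(1+\eta(\kappa-1))(1-\eta)^{-H-1}$. The gap is in the non-dominated bound, and it has three parts.

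\textbf{The last-step construction covers less than you claim.} It gives $\dhel{\pp^{\pistar}}{\pp^{\pihat}}^2\gtrsim\eta(1-\eta)^{-H-1}\cdot\dhel{\pp^{\pihat_\eta}}{\pp^{\pistar_\eta}}$. The ratio of $\eta(1-\eta)^{-H-1}$ to the target factor $((1-\eta)^{-H}-1)/(1-\eta)$ equals $\eta/(1-(1-\eta)^H)$, which is $\asymp\eta$ for \emph{every} $\eta\ge 1/H$. So this construction matches the target only when $\eta$ is bounded below by a constant, not whenever $\eta\gtrsim 1/H$.

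\textbf{The uniform spreading does not fill the gap.} The geometric sum is $\sum_{h=1}^H(1-\eta)^{h-1}=(1-(1-\eta)^H)/\eta\le\min\{H,1/\eta\}$, not $((1-\eta)^{-H}-1)/\eta$. So the ratio of squared clean distance to noisy distance is only polynomial in $H$: of order $H^2\eta^2/(1-(1-\eta)^H)$, up to powers of $1-\eta$. Take $1/H\ll\eta\ll 1$, e.g. $\eta=H^{-1/2}$, where the target factor is about $e^{\sqrt H}$. There the better of your two constructions is still short by a factor $\sqrt H$ under the square root, so picking the better one in each regime cannot close the gap.

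\textbf{A smaller loss near $\eta=1$.} With $\pistar_H=\delta_a$ and $\pihat_H(a)=1-p$, the continuation coordinate adds $\asymp(1-\eta)p^2$ to the noisy per-step distance. This is not lower order once $\eta>1/2$, and it costs a further factor $1-\eta$ as $\eta\to 1$.

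The missing idea is a deviation profile that exactly compensates the noisy survival probability. Suppose $\pihat$ deviates with probability $u_h$ at step $h$, and the noise is designed so the per-step noisy distance is $\lesssim\frac{(1-\eta)^2}{\eta}u_h^2$. Then the clean distance is $\asymp\sum_h u_h$, while the noisy one is $\lesssim\frac{(1-\eta)^2}{\eta}\sum_h(1-\eta)^{h-1}u_h^2$. By Cauchy--Schwarz,
\[
\frac{\left(\sum_{h=1}^H u_h\right)^2}{\sum_{h=1}^H(1-\eta)^{h-1}u_h^2}\le\sum_{h=0}^{H-1}(1-\eta)^{-h}=\frac{(1-\eta)\left((1-\eta)^{-H}-1\right)}{\eta},
\]
with equality iff $u_h\propto(1-\eta)^{-(h-1)}$. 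Multiplying by the per-step factor $\eta/(1-\eta)^2$ gives exactly the target $((1-\eta)^{-H}-1)/(1-\eta)$.

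This is the paper's choice: $u_h=\lambda(1-\eta)^{-h+1}$ with $\lambda=t/S$ and $t<1/4$, so that $\sum_h u_h=t$ and every $u_h\le 1/4$. It interpolates your two attempts. When $\eta H\lesssim 1$ the profile is nearly uniform. When $\eta H\gg 1$ it concentrates on the last $\approx 1/\eta$ steps, and that window is the source of the missing factor $1/\eta$.

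The paper also uses a symmetric design that handles the $\eta\to 1$ loss:
\[
\pistar_h=u_h\delta_{a_1}+(1-u_h)\delta_{a_2},\qquad \pihat_h=u_h\delta_{a_3}+(1-u_h)\delta_{a_2},\qquad \nu_h=\tfrac12(\delta_{a_1}+\delta_{a_3}).
\]
Here the continuation action carries identical mass under both noisy policies, so the per-step noisy distance is at most $\frac{(1-\eta)^2}{2\eta}u_h^2$ uniformly in $\eta$.
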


The constructions witnessing the above result can be found in \Cref{app:offline_lb}. This result demonstrates that even in the easiest possible setting of noisy IL, where the contamination is known and the expert is deterministic, any offline IL algorithm must have sample complexity that depends exponentially on the horizon $H$ in the worst case, presenting a fundamental barrier in the noisy expert setting.  Moreover, the result demonstrates the fundamental necessity of $\kappa$-domination.  Even worse, as we show in \Cref{prop:offline_il_eps_lb}, a corollary of this result demonstrates that \emph{any} offline IL algorithm must suffer from this exponential dependence in sample complexity, making offline IL fundamentally intractable in the noisy expert setting.  In the next section, we show that online IL can reduce this exponential dependence on the horizon to at most polynomial\iftoggle{colt}{}{ even in the presence of expert noise}.

%% file: body_clean/online.tex
\section{On-Policy Distillation Allows for Improved Guarantees}\label{sec:online}

In the previous section, we showed that in the noisy expert setting, offline IL algorithms such as \algbc\ must suffer from an exponential dependence on the horizon $H$, which is unacceptable in the kinds of long-horizon tasks that are ubiquitous in modern AI applications \citep{guo2025deepseek,shao2024deepseekmath,olmo20242}.  In this section, we demonstrate that \emph{online} access to the expert can circumvent this exponential dependence, in direct contradistinction to the classical setting where \citet{foster2024behavior} showed that when given clean expert feedback, \algbc\ is optimal even in the presence of \emph{online expert interaction}.

In order to motivate our main result of the section showing that online access can greatly improve the dependence on the horizon in the noisy expert setting, we first emphasize that the primary obstacle to offline IL in the noisy expert regime is precisely the tightness of \Cref{thm:offline_bc}: in order to recover the clean expert's trajectory distribution in Hellinger distance, we must have exponentially better control on the Hellinger distance between the trajectory distributions of the noisy expert and noisy learned policy.  While this relationship is unfortunately tight, it is natural to wonder if a stronger guarantee on the closeness of the noisy policy trajectories can lead to one paying a smaller constant factor in the relationship with the clean policy trajectories.  In order to show that this is indeed the case, we introduce the notation $\pp^{\pi, \pi'}$ for augmented trajectory distributions.  More precisely, for policies $\pi, \pi'$ we say that 
\iftoggle{colt}{$\tau' = (s_1, a_1',  \dots, s_H, a_H') \sim \pp^{\pi, \pi'}$ if $\tau = (s_1, a_1, \dots, s_H, a_H) \sim \pp^{\pi}$ and $a_h' \sim \pi'(\cdot | s_h)$.}
{
\begin{align}
    \tau' = (s_1, a_1',  \dots, s_H, a_H') \sim \pp^{\pi, \pi'} \text{ if } \tau = (s_1, a_1, \dots, s_H, a_H) \sim \pp^{\pi} \text{ and } a_h' \sim \pi'(\cdot | s_h).
\end{align}
}
In other words, $\pp^{\pi, \pi'}$ is the distribution over trajectories obtained by rolling out $\pi$ in the MDP but then taking actions according to $\pi'$ instead of $\pi$; note that $\pp^{\pi, \pi}$ is in general not the same as $\pp^{\pi}$ if $\pi$ is stochastic due to the resampling of actions, but they do have the same marginal distributions. The following result shows that augmented Hellinger distance on learner-visited states controls the clean trajectory Hellinger distance. 

\begin{theorem}
\label{thm:augmented_hellinger} 
Let $\pistar, \pihat$ \iftoggle{colt}{}{be two policies that} $\kappa$-dominate the corruption $\nu$.  Then for any $0 \leq \eta < 1$ it holds that
\begin{align} 
    \dhel{\pp^{\pistar}}{\pp^{\pihat}} \lesssim \frac{1+\eta \cdot \kappa}{(1-\eta)^2} \cdot \dhel{\pp^{\pihat,\pistar_\eta}}{\pp^{\pihat,\pihat_\eta}}. 
    \label{eq:augmented_hellinger_kappa} 
\end{align} 
Absent the \(\kappa\)-domination condition, it holds that 
\begin{align} 
    \dhel{\pp^{\pistar}}{\pp^{\pihat}} \lesssim \frac{\sqrt H}{1-\eta} \cdot \sqrt{ \dhel{\pp^{\pihat,\pistar_\eta}}{\pp^{\pihat,\pihat_\eta}} }. 
    \label{eq:augmented_hellinger_no_kappa} 
\end{align} 
\end{theorem}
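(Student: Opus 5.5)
The plan is to pass through per-state Hellinger distances evaluated along the \emph{learner's} state distribution, where the two sides of \eqref{eq:augmented_hellinger_kappa} meet. For a state $s$ and step $h$, write $x_h(s) = \dhelinline{\pistar_h(\cdot\mid s)}{\pihat_h(\cdot\mid s)}$ and $x^\eta_h(s) = \dhelinline{\pistar_{\eta,h}(\cdot\mid s)}{\pihat_{\eta,h}(\cdot\mid s)}$. The argument has three steps:
\begin{enumerate}
\item[(i)] upper bound the clean trajectory Hellinger distance by $\ee^{\pihat}[\sum_h x_h(s_h)]$;
\item[(ii)] compare $x_h$ with $x_h^\eta$ pointwise;
\item[(iii)] lower bound the augmented Hellinger distance by the same expected sum of $x^\eta_h$.
\end{enumerate}

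\textbf{Step (iii).} Under $\pp^{\pihat,\pistar_\eta}$ and $\pp^{\pihat,\pihat_\eta}$ the state process is identical (both roll out $\pihat$). The auxiliary actions are conditionally independent given the states, and they do not feed back into the dynamics. Therefore the Bhattacharyya coefficient factorizes as
\begin{align}
1-\dhel{\pp^{\pihat,\pistar_\eta}}{\pp^{\pihat,\pihat_\eta}} = \ee^{\pihat}\Big[\prod_{h=1}^H \big(1-x_h^\eta(s_h)\big)\Big].
\end{align}
Using $1-\prod_h(1-x_h)\ge 1-e^{-\sum_h x_h}\ge (1-e^{-1})\min\{1,\sum_h x_h\}$, the augmented distance is $\gtrsim \ee^{\pihat}[\min\{1,\sum_h x^\eta_h(s_h)\}]$. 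Since the left-hand side of the theorem is at most $1$, we may assume the augmented distance is small. The truncation is then harmless, up to handling the event where the sum exceeds $1$, which costs at most the augmented distance itself.

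\textbf{Step (ii).} This is a single-step computation. For the $\kappa$-dominated case, write $\sqrt{\pistar_\eta}-\sqrt{\pihat_\eta} = (\pistar_\eta-\pihat_\eta)/(\sqrt{\pistar_\eta}+\sqrt{\pihat_\eta})$, with numerator $(1-\eta)(\pistar-\pihat)$. On the union of supports, domination gives $\pistar_\eta+\pihat_\eta\le (1+2\eta\kappa)(\pistar+\pihat)$. Hence
\begin{align}
x^\eta_h \gtrsim \frac{(1-\eta)^2}{1+\eta\kappa}\, x_h .
\end{align}
This matches the constant in \eqref{eq:augmented_hellinger_kappa}. Without domination, I would instead use $x^\eta_h\ge \mathrm{TV}^2(\pistar_\eta,\pihat_\eta) = (1-\eta)^2\,\mathrm{TV}^2(\pistar,\pihat)\ge (1-\eta)^2 x_h^2$. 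Cauchy--Schwarz over the $H$ steps then gives $\ee^{\pihat}\sum_h x_h\le \sqrt{H\,\ee^{\pihat}\sum_h x_h^2}$, which yields the $\sqrt H/(1-\eta)$ rate in \eqref{eq:augmented_hellinger_no_kappa}.

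\textbf{Step (i): main obstacle.} We need
\begin{align}
\dhel{\pp^{\pistar}}{\pp^{\pihat}}\lesssim \ee^{\pihat}\Big[\sum_h x_h(s_h)\Big],
\end{align}
with states drawn from the learner and with no horizon factor. This is the step where the asymmetry matters, since the clean distance is symmetric but the expectation is under $\pihat$ only.

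My first attempt is to write the Bhattacharyya coefficient as $\ee^{\pihat}[\prod_h \sqrt{\pistar_h/\pihat_h}(a_h\mid s_h)]$ and peel backward. Conditioning on $s_H$ replaces the last factor by $1-x_H(s_H)$. The earlier factors act as a tilting that changes the law of later states, so I would use the (sub-)martingale structure of the partial products together with $\sqrt{uv}\ge$ the lower bounds coming from $1-\prod(1-x_h)$. The aim is to bound $1-\mathrm{BC}$ by a constant multiple of $\ee^{\pihat}[\min\{1,\sum_h x_h\}]$; this is the step I am least sure closes without extra work.

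If the direct peeling fails, the fallback is to invoke the Hellinger chain-rule/subadditivity lemma from \citet{foster2024behavior}. That lemma bounds the trajectory Hellinger distance by $O(1)$ times the expected sum of per-step conditional Hellinger distances under either policy's rollout, and $\pihat$'s rollout is the one we need. Combining (i)--(iii) then gives both displayed bounds.
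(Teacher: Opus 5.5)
Your Steps (ii) and (iii) are sound and essentially match the paper's per-step contraction lemmas and the product form of the augmented affinity. The gap is Step (i).

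For the horizon-free bound \eqref{eq:augmented_hellinger_kappa} you need Step (i) in \emph{saturated} form, $\dhelinline{\pp^{\pistar}}{\pp^{\pihat}}\lesssim\ee^{\pihat}[\min\{1,\sum_h x_h(s_h)\}]$. Up to constants this is the statement $\dhelinline{\pp^{\pistar}}{\pp^{\pihat}}\lesssim\dhelinline{\pp^{\pihat,\pistar}}{\pp^{\pihat,\pihat}}$, which is the paper's key clean-versus-augmented lemma (constant $9/2$). Your first attempt names this target but gives no argument.

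Your fallback does not reach it either. Subadditivity only yields the unsaturated quantity $\ee^{\pihat}[\sum_h x_h(s_h)]$. Your remark that the event $\{\sum_h x^\eta_h>1\}$ ``costs at most the augmented distance'' confuses two things. Its probability is indeed $\lesssim D_{\mathrm{aug}}:=\dhelinline{\pp^{\pihat,\pistar_\eta}}{\pp^{\pihat,\pihat_\eta}}$, but its contribution to $\ee^{\pihat}[\sum_h x_h]$ can be $H$ times larger. For instance, suppose that with probability $p$ the initial state leads to a branch on which $\pistar,\pihat$ are deterministic and disagree at every step, and that they coincide elsewhere. Then $\dhelinline{\pp^{\pistar}}{\pp^{\pihat}}=\ee^{\pihat}[\min\{1,\sum_h x_h\}]=p$, but $\ee^{\pihat}[\sum_h x_h]=pH$. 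So the fallback proves \eqref{eq:augmented_hellinger_kappa} only with an extra factor of $H$.

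The fallback does suffice for \eqref{eq:augmented_hellinger_no_kappa} if you split on $\{\sum_h x^\eta_h\le 1\}$. The complementary event contributes $\lesssim H D_{\mathrm{aug}}$, which is at most $\sqrt{H D_{\mathrm{aug}}}$ whenever the claimed bound is below $1$.

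Closing the saturated Step (i) is the real work, and backward peeling with martingale bookkeeping does not close it by itself. The tilt $R_h=\sqrt{\pistar_h(a_h\mid s_h)/\pihat_h(a_h\mid s_h)}$ can load exactly on actions leading to states of low future affinity. Consequently no inequality of the form $\ee[R_hX]\gtrsim\ee[R_h]\ee[X]$ holds uniformly, even with an additive $O(1-\ee[R_h])$ defect.

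The paper's route is as follows:
\begin{itemize}
\item It inducts on $A_h(s)\ge F(B_h(s))$, where $A_h,B_h$ are the clean and clean-augmented suffix affinities and $F(x)=(3x-2)_+^{3/2}$ is a thresholded convex envelope.
\item The induction closes via the scalar inequality $(\ee[RW^3])^{2/3}\ge\ee[R]\ee[W^2]-2(1-\ee[R])$, valid for $R\ge0$, $\ee[R^2]\le1$, $W\in[0,1]$.
\item It then passes to the noisy augmented distance by a product comparison, as your Steps (ii)--(iii) do.
\end{itemize}

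Alternatively, if you grant horizon-free subadditivity as you already do, you can localize it:
\begin{itemize}
\item Let $\sigma=\min\{h:\sum_{j\le h}x_j(s_j)>1\}$, and let $\tilde\pi$ play $\pistar$ at steps $h<\sigma$ and $\pihat$ afterwards.
\item Subadditivity under the $\pihat$-rollout gives $\dhelinline{\pp^{\tilde\pi}}{\pp^{\pihat}}\lesssim\ee^{\pihat}[\sum_{h<\sigma}x_h]\le\ee^{\pihat}[\min\{1,\sum_h x_h\}]$.
\item A coupling argument gives $\dhelinline{\pp^{\pistar}}{\pp^{\tilde\pi}}\le\pp^{\tilde\pi}(\sigma\le H)\le 3\,\pp^{\pihat}(\sigma\le H)+4\,\dhelinline{\pp^{\tilde\pi}}{\pp^{\pihat}}$.
\item Finally $\pp^{\pihat}(\sigma\le H)\le\ee^{\pihat}[\min\{1,\sum_h x_h\}]$, and the triangle inequality for $\sqrt{\dhelinline{\cdot}{\cdot}}$ gives the saturated bound.
\end{itemize}
Some such ingredient is needed. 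As written, the proposal does not establish the horizon-free inequality.
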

We defer the proof of this result to \Cref{app:augmented_comparisons}. The main technical point is a comparison inequality that uses induction to show that for any policies $\pi, \pi'$, we can control $\dhel{\pp^\pi}{\pp^{\pi'}}$ in terms of the Hellinger distance of the \emph{augmented} trajectory distributions $\pp^{\pi, \pi}$ and $\pp^{\pi, \pi'}$; critically, because the state transitions are shared, we can use the same rollout policy and compare only the distributions of the augmented labels.  We can then use simple information theoretic arguments to control the clean policy distance by the noisy policy distance, which is where the $\kappa$-domination condition becomes relevant. 

In \Cref{prop:no_kappa_augmented_hellinger_lower_bound}, we  further show that \eqref{eq:augmented_hellinger_no_kappa} is tight up to a polynomial dependence on $(1 - \eta)$, and thus the improvement from \Cref{thm:augmented_hellinger} is indeed significant in the presence of $\kappa$-domination.  
Through the application of Pinsker's inequality (\Cref{prop:pinsker}), we achieve the following corollary. 
\begin{corollary}
\label{cor:kl_augmented}
    Let $\pistar, \pihat$ \iftoggle{colt}{}{be two policies that} $\kappa$-dominate the corruption $\nu$.  Then for any $0 \leq \eta < 1$ it holds that
    \begin{align}\label{eq:kl_augmented}
        \dhel{\pp^{\pistar}}{\pp^{\pihat}} \lesssim \frac{1 + \eta \cdot \kappa}{(1 - \eta)^2} \cdot \left(\kld{\pp^{\pihat, \pistar_\eta}}{\pp^{\pihat, \pihat_\eta}} \wedge  \kld{\pp^{\pihat,\pihat_\eta}}{\pp^{\pihat,\pistar_\eta}}\right).
    \end{align}
    Absent the $\kappa$-domination condition, it holds that
        \begin{align}
        \label{eq:kl_augmented_no_domination}
        \dhel{\pp^{\pistar}}{\pp^{\pihat}} \lesssim (1 - \eta)^{-1} \cdot \sqrt{H \cdot \left(\kld{\pp^{\pihat, \pistar_\eta}}{\pp^{\pihat, \pihat_\eta}} \wedge  \kld{\pp^{\pihat,\pihat_\eta}}{\pp^{\pihat,\pistar_\eta}}\right)}.
    \end{align}
\end{corollary}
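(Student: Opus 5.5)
The plan is to derive \Cref{cor:kl_augmented} directly from \Cref{thm:augmented_hellinger}, with one information-theoretic comparison between Hellinger distance and KL divergence. The fact I need is that for any two distributions $P, Q$ on a common space,
\begin{align}
    \dhel{P}{Q} \leq \kld{P}{Q},
\end{align}
with the same bound holding for $\kld{Q}{P}$ because Hellinger distance is symmetric. I will take this as the version of Pinsker's inequality in \Cref{prop:pinsker}. If that proposition is stated only for total variation, I will instead prove the bound from Jensen's inequality: writing $\ee_P[\sqrt{dQ/dP}] = 1 - \dhel{P}{Q}$ (with the $\nicefrac12$ normalization of \Cref{def:hellinger_distance}), we get
\begin{align}
    -\log\left(1 - \dhel{P}{Q}\right) = -\log \ee_P\left[\sqrt{dQ/dP}\right] \leq \tfrac{1}{2}\kld{P}{Q}.
\end{align}
Combining this with $-\log(1-x) \geq x$ gives $\dhel{P}{Q} \leq \tfrac12 \kld{P}{Q}$. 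If $P \not\ll Q$, the KL divergence is infinite and the bound is trivial.

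For the $\kappa$-dominated case I apply this inequality with $P = \pp^{\pihat, \pistar_\eta}$ and $Q = \pp^{\pihat, \pihat_\eta}$. Symmetry of Hellinger distance lets me bound $\dhel{P}{Q}$ by each of $\kld{P}{Q}$ and $\kld{Q}{P}$, and hence by their minimum. Substituting this into \eqref{eq:augmented_hellinger_kappa} yields \eqref{eq:kl_augmented}, with the factor $\frac{1+\eta\kappa}{(1-\eta)^2}$ unchanged.

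For the case without domination, I substitute the same bound inside the square root of \eqref{eq:augmented_hellinger_no_kappa}, using that $x \mapsto \sqrt{x}$ is monotone. This gives $\frac{\sqrt H}{1-\eta}\sqrt{\min(\cdot,\cdot)} = (1-\eta)^{-1}\sqrt{H \cdot \min(\cdot,\cdot)}$, which is \eqref{eq:kl_augmented_no_domination}.

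There is no substantive obstacle in this argument. The only points to check are that the constant in the Hellinger–KL comparison matches the normalization in \Cref{def:hellinger_distance}, which the Jensen argument above confirms and which is in any case absorbed by $\lesssim$, and that the minimum over the two KL directions is justified by symmetry of Hellinger distance rather than any property of KL.
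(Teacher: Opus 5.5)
Your argument is correct. It is exactly the derivation the paper gives where the corollary is stated: \Cref{thm:augmented_hellinger} followed by the Hellinger--KL comparison of \Cref{prop:pinsker}. Symmetry of the Hellinger distance gives the minimum over the two KL directions, and monotonicity of $\sqrt{\cdot}$ handles the undominated bound.

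The appendix also records a second, ``warm-up'' proof (\Cref{prop:kl_augmented_tight,prop:kl_hellinger_slow}) that never forms the augmented Hellinger distance. It proceeds in four steps:
\begin{itemize}
\item bound $\dhel{\pp^{\pistar}}{\pp^{\pihat}}$ by $\ee^{\pihat}\bigl[\sum_{h} \dhel{\pistar_h(\cdot\mid s_h)}{\pihat_h(\cdot\mid s_h)}\bigr]$ via \Cref{prop:hellinger_subadditive};
\item apply the per-step contraction \Cref{lem:dhel_contraction_kappa}, or \Cref{lem:hellinger_contraction_arbitrary} plus Cauchy--Schwarz without domination;
\item apply Pinsker at each step;
\item reassemble the sum with the KL chain rule.
\end{itemize}
That route is more elementary, and it shows why KL is the natural surrogate: it is the divergence that has a chain rule.

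However, its first step is used with constant $1$, which fails in general. Take $H=2$ with $s_2$ recording $a_1$, $\pihat_1(\cdot\mid s_1)=(1-\epsilon,\epsilon)$ and $\pistar_1(\cdot\mid s_1)=(\nicefrac12,\nicefrac12)$. Let the two policies agree at step $2$ after $a_1=1$ but disagree deterministically after $a_1=2$. Then the left side exceeds the right side by $\sqrt{\epsilon/2}-\epsilon>0$ for $\epsilon<\nicefrac12$.

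A horizon-free constant for that subadditivity step is precisely what \Cref{lem:sharp_clean_augmented_hellinger_comparison} provides, combined with $1-\prod_h(1-d_h)\le\sum_h d_h$. Your route inherits that lemma through \Cref{thm:augmented_hellinger}, so it is the cleaner of the two and does not depend on the problematic step.
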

We emphasize that, unlike the offline setting, \Cref{thm:augmented_hellinger,cor:kl_augmented} are fully \emph{independent of horizon} in the presence of $\kappa$-domination and scales only polynomially with $H$ in general in contradistinction to standard offline IL, where online and offline methods achieve the same statistical rates in the worst case \citep{foster2024behavior}. 

The KL upper bounds in \Cref{cor:kl_augmented} are closely related to the popular \emph{On-Policy Distillation} (OPD) algorithm \cite{agarwal2024policy,lu2025onpolicydistillation}, which is used to finetune a student policy to match the behavior of a teacher policy by minimizing a loss closely related to the right hand side of \eqref{eq:kl_augmented}.  Indeed, a common approach is to roll out a policy $\pi_t$ and then use a target $\pistar_\eta$ to score the actions taken by $\pi_t$, in particular attempting to minimize $\kldinline{\pp^{\pi_t}}{\pp^{\pi_t, \pistar_\eta}}$.  While our analysis suggests instead minimizing divergence on the \emph{augmented trajectory distributions}, this provides some explanation for why OPD can be so much more successful than the offline \algbc\ (i.e., SFT) in practice.

While variants of OPD remain the recommended empirical approach (\iftoggle{colt}{}{as we shall see in }\Cref{sec:experiments}) and \Cref{cor:kl_augmented} provides theoretical backing for the success thereof, we continue in this section by introducing a simple new algorithm, \nail\ (\Cref{alg:nail}).  At its core, much like the classical \algdagger\ algorithm \citep{ross2011reduction}, \nail\ uses online learning to aggregate policies that are trained on the noisy expert's behavior on trajectories rolled out by the learner.  Unlike some instantiations of \algdagger, however, \nail\ is \emph{always on-policy}, in the sense that it rolls out the current policy mixture $\mu_t$ at each round \citep{ross2011reduction}.  Moreover, \nail\ learns the policy at a \emph{trajectory level}, thereby circumventing the horizon dependence that \algdagger\ necessarily incurs by learning a different policy at each time step $h$ \citep{ross2011reduction,foster2024behavior}.  More precisely, \nail\ begins with a uniform distribution $w_1$ over policies in $\Pi$ and at each round $t$ rolls out the mixture policy $\mu_t = \sum_{\pi \in \Pi} w_t(\pi) \cdot \pi$ to get a trajectory $\tau^{(t)}$, queries the noisy expert $\pistar_\eta$ on $\tau^{(t)}$ to get an augmented trajectory $\tau'^{(t)}$, and then updates $w_{t+1}$ using the standard exponential weights update \citep{cesa2006prediction} (cf. \Cref{app:online_learning}) with the loss given by the negative log-likelihood of $\tau'^{(t)}$ under each policy $\pi$.  Finally, \nail\ returns a random policy across the time steps.  We show that \nail\ achieves the following regret guarantee in the noisy expert setting.

\begin{algorithm}[t]
\caption{\nail: Noise-robust Aggregation for Imitation Learning}
\label{alg:nail}
\begin{algorithmic}[1]

\Require Number of rounds $n$, policy class $\Pi$, noisy expert $\pistar_\eta$, corruption level $\eta$, corruption distribution $\nu$.
\State Initialize $w_1 = \Unif(\Pi)$.
\For{$t = 1$ to $n$}
    \State Define $\mu_t = \sum_{\pi \in \Pi} w_t(\pi) \cdot \pi$ and deploy $\mu_t$ to get trajectory $\tau^{(t)} \sim \pp^{\mu_t}$.
    \State Query noisy expert $\pistar_\eta$ on $\tau^{(t)}$ to obtain augmented trajectory $\tau^{(t)'}$.
    \State Update $w_{t+1}(\pi) \propto w_t(\pi) \cdot \left(\prod_{h = 1}^H \pi_\eta(a_h^{(t)'} \mid s_h^{(t)})\right)^{\nicefrac 1H}$.
\EndFor

\State \Return $\pihat = \mu_T$ where $T \sim \Unif([n])$.

\end{algorithmic}
\end{algorithm}

\begin{theorem}\label{thm:nail}
    Let $\pistar \in \Pi$ be an arbitrary policy, $\nu$ be an arbitrary corruption distribution, and $0 \leq \eta < 1$ be a corruption level.  If $\pihat$ is the policy returned by \nail\ (\Cref{alg:nail}) after $n$ rounds with feedback from $\pistar_\eta$, then
    \iftoggle{colt}{
        $\ee\left[ \dhel{\pp^{\pistar}}{\pp^{\pihat}} \right]\lesssim \nicefrac{H}{1 - \eta} \cdot \sqrt{\nicefrac{\log(\abs{\Pi})}{n}}$.
    }{
    \begin{align}
        \ee\left[ \dhel{\pp^{\pistar}}{\pp^{\pihat}} \right] \lesssim \nicefrac{H}{1 - \eta} \cdot \sqrt{\nicefrac{\log(\abs{\Pi})}{n}}.
    \end{align}
    }
\end{theorem}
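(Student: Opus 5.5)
The plan is to combine the no-domination comparison \eqref{eq:augmented_hellinger_no_kappa} of \Cref{thm:augmented_hellinger} with an online-learning regret bound for the exponential weights update in \nail. \Cref{thm:nail} assumes no $\kappa$-domination, so I apply \eqref{eq:augmented_hellinger_no_kappa} with $\pihat = \mu_T$. Here the mixture $\mu_t$ is viewed as the Markov policy $\mu_{t,h}(\cdot\mid s) = \sum_\pi w_t(\pi)\pi_h(\cdot\mid s)$, whose noisy version is $\mu_{t,\eta} = \sum_\pi w_t(\pi)\pi_\eta$. Taking expectations and using Jensen on the square root gives
\begin{align}
\ee\left[\dhel{\pp^{\pistar}}{\pp^{\pihat}}\right] \lesssim \frac{\sqrt H}{1-\eta}\sqrt{\frac1n\sum_{t=1}^n \ee\left[\dhel{\pp^{\mu_t,\pistar_\eta}}{\pp^{\mu_t,\mu_{t,\eta}}}\right]}.
\end{align}
It therefore suffices to show $\sum_{t}\ee[\dhel{\pp^{\mu_t,\pistar_\eta}}{\pp^{\mu_t,\mu_{t,\eta}}}] \lesssim H\log\abs{\Pi}$, which yields the claimed rate $\frac{H}{1-\eta}\sqrt{\log\abs{\Pi}/n}$.

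\textbf{Per-step decomposition.} The two augmented distributions share the state process, which is driven by the rollout of $\mu_t$; only the auxiliary labels differ. By the subadditivity (chain rule) of Hellinger distance for such product-given-states structures, or alternatively by bounding through the per-step quantities directly, the augmented distance is controlled by
\begin{align}
\ee^{\mu_t}\left[\sum_{h=1}^H \dhel{\pistar_{\eta,h}(\cdot\mid s_h)}{\mu_{t,\eta,h}(\cdot\mid s_h)}\right].
\end{align}
Joint convexity of $\mathrm{D}_{\mathrm{H}^2}$ in its arguments lets me further upper bound $\mu_{t,\eta}$ by the $w_t$-average over $\pi\sim w_t$ of $\dhel{\pistar_{\eta,h}}{\pi_{\eta,h}}$ where convenient, although working directly with the mixture is also acceptable.

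\textbf{Online learning step.} Write $\ell_t(\pi) = -\frac1H\sum_h \log \pi_\eta(a_h^{(t)'}\mid s_h^{(t)})$, so that \nail\ is exponential weights with unit rate on $\ell_t$. The standard potential argument gives
\begin{align}
\sum_t -\log \ee_{\pi\sim w_t}\left[e^{-(\ell_t(\pi)-\ell_t(\pistar))}\right] \le \log\abs{\Pi},
\end{align}
using only $\pistar\in\Pi$. Taking conditional expectation over $a'_h\sim\pistar_\eta$ given the states, and using $-\log x\ge 1-x$, each round is lower bounded by
\begin{align}
1-\ee_{\pi\sim w_t}\,\ee\left[\prod_h r_h^{1/H}\right], \qquad r_h = \pi_\eta(a'_h\mid s_h)/\pistar_\eta(a'_h\mid s_h).
\end{align}
The subtle point is that the exponent is $1/H$ rather than the $1/2$ that naturally produces Hellinger affinities. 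To bridge this, I would apply AM--GM: $\prod_h r_h^{1/H} = \prod_h (r_h^{1/2})^{2/H}$. Combined with a H\"older/AM--GM argument (valid since $2/H\le1$ for $H\ge2$), this bounds the expectation by an average over $h$ of per-step Bhattacharyya-type terms $\ee[r_h^{1/2}] = 1-\dhel{\pistar_{\eta,h}}{\pi_{\eta,h}}$, up to constants. The resulting per-round lower bound is of order $\frac1H\sum_h \dhel{\pistar_{\eta,h}(s_h)}{\pi_{\eta,h}(s_h)}$. The states are generated by $\mu_t$, which is $\cF_{t-1}$-measurable, so summing and taking expectations (a martingale argument, so that the bound holds in expectation) gives $\sum_t \ee\,\ee^{\mu_t}\sum_h \dhel{\cdot}{\cdot}\lesssim H\log\abs{\Pi}$, which is exactly what the reduction requires.

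\textbf{Main obstacle.} I expect the hardest step to be the exponent-$1/H$ conversion just described: extracting per-step Hellinger terms from the tempered trajectory likelihood ratio without losing more than a factor $H$. A naive bound could give only a $\frac1H$ power or an exponential loss. If AM--GM on the $r_h^{1/2}$ does not cleanly deliver this, my fallback is the identity $\ee[\prod_h r_h^{1/H}]\le \frac1H\sum_h\ee[r_h]^{\cdots}$ combined with the concavity of $x\mapsto x^{2/H}$ applied conditionally step by step, peeling the steps backward in $h$.
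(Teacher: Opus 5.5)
Your proposal is correct for $H\ge 2$, but the online-learning step takes a different route from the paper. The paper's route is as follows.
\begin{itemize}
\item It treats the trajectory log-loss $-\sum_h\log\pi_{\eta,h}(a_h'\mid s_h)$ as $\nicefrac1H$-mixable, via H\"older applied to the mixture $\mu_{t,\eta}$.
\item Exponential weights then gives the pathwise bound $\sum_t\sum_h\log\bigl(\pistar_{\eta,h}(a_h'\mid s_h)/\mu_{t,\eta,h}(a_h'\mid s_h)\bigr)\le H\log\abs{\Pi}$.
\item By the chain rule, the conditional expectation of each round's log-ratio is \emph{exactly} $\kld{\pp^{\mu_t,\pistar_\eta}}{\pp^{\mu_t,\mu_{t,\eta}}}$ (\Cref{lem:nail_kl}).
\item The no-domination half of \Cref{cor:kl_augmented} finishes the proof.
\end{itemize}
You instead keep the raw potential bound on the tempered ratios and convert it directly into per-step Hellinger terms. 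The conversion you were unsure about does close, with constant one:
\[
\ee\Bigl[\textstyle\prod_h r_h^{1/H}\,\Big|\,s_{1:H}\Bigr]=\prod_h\ee\bigl[(r_h^{1/2})^{2/H}\mid s_h\bigr]\le\prod_h(1-e_h)^{2/H}\le\Bigl(\tfrac1H\textstyle\sum_h(1-e_h)\Bigr)^2\le 1-\tfrac1H\textstyle\sum_h e_h,
\]
where $e_h=\dhel{\pistar_{\eta,h}(\cdot\mid s_h)}{\pi_{\eta,h}(\cdot\mid s_h)}$. The steps are:
\begin{itemize}
\item The equality uses that the labels are conditionally independent across $h$ given the states, since they never affect the dynamics.
\item The first inequality is Jensen for the concave map $x\mapsto x^{2/H}$, using $\ee[r_h^{1/2}\mid s_h]=1-e_h$.
\item The next inequality is AM--GM, and the last uses $\tfrac1H\sum_h(1-e_h)\le 1$.
\end{itemize}
For $H=1$ this step is vacuous, since $1-\ee[r]\ge 0$ is all you get. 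There, however, the expected round term is itself $\kld{\pistar_\eta}{\mu_{t,\eta}}\ge\dhel{\pistar_\eta}{\mu_{t,\eta}}$, so you should treat that case separately.

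As for what each route buys:
\begin{itemize}
\item \emph{Paper's route.} It obtains the stronger augmented-KL control essentially for free, because the expected log-regret is an exact KL. This is what makes it a generic reduction to any online learner with small trajectory log-loss regret, and what ties it to the OPD objective.
\item \emph{Your route.} It only ever controls Hellinger and needs the more delicate tempered-affinity conversion. In exchange, it works with individual policies: it bounds $\ee_{\pi\sim w_t}\sum_h e_h(\pi)$, which dominates the mixture quantity by joint convexity. It also needs no mixability step.
\end{itemize}
Finally, invoking the Hellinger form of \Cref{thm:augmented_hellinger} is heavier than necessary. You end up bounding the per-step sum $\ee^{\mu_t}[\sum_h\dhel{\pistar_{\eta,h}}{\mu_{t,\eta,h}}]$ anyway. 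The elementary chain inside the proof of \Cref{prop:kl_hellinger_slow} (subadditivity, \Cref{lem:hellinger_contraction_arbitrary}, Cauchy--Schwarz) then gives the same $\frac{\sqrt{2H}}{1-\eta}\sqrt{\cdot}$ bound directly, without relying on \Cref{lem:sharp_clean_augmented_hellinger_comparison}.
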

We defer a proof of this result to \Cref{app:nail_proof}, which proceeds by using the \emph{mixability} (cf. \Cref{app:online_learning}) of the trajectory-level log-loss to control the sum of on-policy trajectory-level KL divergences across rounds $1 \leq t \leq n$, before applying \Cref{cor:kl_augmented} and the convexity of the Hellinger distance to get the desired bound.  We reiterate that \nail\ should be thought of as a theoretical algorithm that is designed to be optimal in the worst case and that directly minimizing the right hand side of \eqref{eq:kl_augmented} is a more practical approach to OPD. We further remark that, while \nail\ uses exponential weights, really any online learning algorithm with small regret for the trajectory-level log-loss could be used in its place, as \Cref{cor:kl_augmented} allows for a generic reduction to online learning in the spirit of \citet{ross2011reduction}.   
\iftoggle{colt}{}
{

}
We now show that \nail\ is optimal up to factors in $H$ and $\eta$. 
\begin{proposition}\label{prop:onlin_lb}
    For any $H \geq 2$, $0 < \eta < 1$, and small $\epsilon$ there exists a horizon $H$ MDP with $3$ actions, a policy class $\Pi$ of size $\abs{\Pi} = 2$ containing $\pistar$, and known corruption $\nu$ such that any algorithm with online access to the noisy expert $\pistar_\eta$ requires $n \gtrsim \nicefrac{\eta \cdot H}{(1 - \eta)^2 \cdot \epsilon^2}$ in order to obtain regret $\epsilon$.
\end{proposition}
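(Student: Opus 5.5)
The plan is a two-point lower bound via Le Cam's method. The construction mixes each clean action with noise so that the two candidate experts become nearly indistinguishable under $\pistar_\eta$, while their clean trajectory laws stay $\epsilon$-separated in Hellinger distance.

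\emph{Construction.} The MDP has horizon $H$, a single non-absorbing state at each step (states never branch), and actions $\{a_0,a_1,a_2\}$. Let $\Pi=\{\pi^+,\pi^-\}$. At every step, set
\[
\pi^\pm_h(\cdot\mid s)=\tfrac12(1\pm\gamma)\,\delta_{a_1}+\tfrac12(1\mp\gamma)\,\delta_{a_2},
\qquad \nu_h=\delta_{a_0}.
\]
The third action $a_0$ is pure noise: it appears only through $\nu$. Transitions are state-independent, so the observations the learner gets from querying the noisy expert are i.i.d. across all steps and rounds, whatever policy the learner deploys. Each query returns $a_0$ with probability $\eta$ (uninformative) and otherwise a draw from $\pistar_h$.

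\emph{Clean separation.} Under $\pp^{\pi^\pm}$ the trajectory is $H$ i.i.d. draws from $\pi^\pm_h$. Tensorization of Hellinger affinity gives
\[
\dhel{\pp^{\pi^+}}{\pp^{\pi^-}} = 1-\bigl(1-\dhelinline{\pi^+_h}{\pi^-_h}\bigr)^H \asymp \min(1,H\gamma^2).
\]
To make regret $\epsilon$ correspond to this separation, use a binary terminal reward that tests the likelihood ratio, i.e.\ the tightness of \Cref{thm:regret_hellinger} recalled in \Cref{app:il_prelims}. Any policy with regret below $c\epsilon$ under both hypotheses then identifies the true expert. Choosing $\gamma^2\asymp\epsilon^2/H$ makes the Hellinger separation of order $\epsilon^2$, hence TV of order $\epsilon$. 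The exact calibration between $\epsilon$, $\gamma$ and the reward (Hellinger versus TV scaling) is the step I am least sure of and would settle when writing the proof.

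\emph{Information bound.} After $n$ rounds the learner holds $nH$ i.i.d. noisy labels. The per-label divergence is
\[
\kld{(\pi^+_\eta)_h}{(\pi^-_\eta)_h}\lesssim (1-\eta)\gamma^2 .
\]
Reaching the mixture weight $(1-\eta)^2$ in the claimed bound needs more than this crude estimate. I would tune the construction so the informative signal scales with $(1-\eta)$ in the chi-square sense. One option is to put the $\gamma$ perturbation on the mixture with $a_0$: take $\nu=\tfrac12(\delta_{a_1}+\delta_{a_2})$, so that $\pi^\pm_\eta$ has bias $(1-\eta)\gamma$ and the per-label KL becomes $\asymp(1-\eta)^2\gamma^2$. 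The factor $\eta$ in the numerator must then come from the comparison side, namely the clean-versus-noisy Hellinger ratio, as in \Cref{prop:offline_lower_bound}. I expect this joint calibration of $\eta$, $(1-\eta)^2$ and $H$ to be the main obstacle. I would first try the choice above and aim for total information
\[
nH(1-\eta)^2\gamma^2 \lesssim nH(1-\eta)^2\cdot\frac{\epsilon^2}{\eta H^2}
\]
after adjusting $\gamma$, and check that this matches the stated rate.

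\emph{Conclusion.} Le Cam's method, applied with the chain rule for KL across adaptively chosen rounds, shows that identification fails unless the total information is $\Omega(1)$. This yields $n\gtrsim \eta H/((1-\eta)^2\epsilon^2)$.
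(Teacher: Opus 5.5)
\paragraph{There is a genuine gap.} Your instances cannot deliver the claimed rate, and the step ``after adjusting $\gamma$'' is exactly where the missing idea would have to enter.

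In both of your constructions the two clean policies differ by a \emph{smooth} perturbation on a common support. So the per-step clean Hellinger distance and the per-step noisy KL are both $\Theta(\gamma^2)$, and they differ only by a factor $(1-\eta)$ or $(1-\eta)^2$. The paper calibrates regret $\epsilon$ to trajectory Hellinger distance $\epsilon$, via the second part of \Cref{thm:il_hellinger_lower_bound}. Under that calibration, separation forces $H\gamma^2\asymp\epsilon$. The total information is then $\asymp n(1-\eta)^2\epsilon$, and Le Cam gives at best $n\gtrsim 1/((1-\eta)^2\epsilon)$. Under your TV-scale calibration you get an $\epsilon^{-2}$ rate, but still no factor $\eta H$. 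Taking $\gamma^2\asymp\epsilon^2/(\eta H^2)$ as in your display does not help: whenever $\eta H>1$ it shrinks the clean separation below $\epsilon$, so the two hypotheses are no longer $\epsilon$-separated.

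This is not a tuning issue. Both of your instances are $\kappa$-dominated with $\kappa=O(1)$:
\begin{itemize}
\item $\nu=\delta_{a_0}$ puts no mass on the policies' supports;
\item $\nu=\tfrac12(\delta_{a_1}+\delta_{a_2})$ satisfies $\nu(a)\le\tfrac12(\pi^+(a)+\pi^-(a))$.
\end{itemize}
\Cref{thm:kappa_candidate_testing} with $\abs{\Pi}=2$ then reaches Hellinger accuracy $\epsilon$ in $O((1-\eta)^{-2}\epsilon^{-1})$ rounds. That is a factor $\eta H/\epsilon$ below the target. Any witness must therefore have $\kappa\to\infty$; the paper's has $\kappa\asymp H/\epsilon$.

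\paragraph{What is missing: a \emph{singular} perturbation that the noise smooths out.} Keep your action-independent transitions, but set
\[
\pi_{+,h}=u\,\delta_{a_1}+(1-u)\,\delta_{a_2},\qquad
\pi_{-,h}=u\,\delta_{a_3}+(1-u)\,\delta_{a_2},\qquad
\nu_h=\tfrac14(\delta_{a_1}+\delta_{a_3})+\tfrac12\delta_{a_2}.
\]
Because the rare actions are disjoint, the per-step clean Hellinger distance is exactly $u$, linear rather than quadratic. Hence $\dhel{\pp^{\pi_+}}{\pp^{\pi_-}}=1-(1-u)^H\ge Hu/2$, and one takes $u\asymp\epsilon/H$.

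Because the noise puts mass $\eta/4$ on both rare actions, the per-label noisy KL is $(1-\eta)u\log(1+4(1-\eta)u/\eta)\le 4(1-\eta)^2u^2/\eta$, which is quadratic in $u$. The factor $\eta$ in the final bound comes from this chi-square denominator. It does not come from a clean-versus-noisy trajectory comparison as in \Cref{prop:offline_lower_bound}.

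The chain rule over $n$ adaptive rounds bounds the total KL by $nH\cdot 4(1-\eta)^2u^2/\eta\asymp n(1-\eta)^2\epsilon^2/(\eta H)$. Le Cam then gives $n\gtrsim \eta H/((1-\eta)^2\epsilon^2)$. The rest of your skeleton matches the paper: labels that are i.i.d.\ regardless of the learner's rollouts, the chain rule across rounds, and the two-point method.
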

We defer a proof to \Cref{app:online_lb_pf}. We emphasize that this lower bound does not preclude a horizon-free guarantee under $\kappa$-domination, and we show in \Cref{app:horizon_free_testing} that such a guarantee is indeed possible, albeit with a much worse dependence on $\abs{\Pi}$. We leave it as an open question whether there exists a computationally efficient procedure that achieves horizon-free guarantees under $\kappa$-domination or without $\kappa$-domination. 

To summarize: there is a marked distinction between the clean and noisy expert settings: in contradistinction to standard IL, when the expert is noisy, online interaction can exponentially improve over offline access to the expert.

%% file: body_clean/generalizations.tex
\section{Imitating Deterministic Experts with Unknown Corruptions}\label{sec:generalizations}

In the previous sections, we focused on the setting of noisy IL with a \emph{known corruption}.  While this setting is simple, it is somewhat unrealistic; we now relax this assumption.  
The first problem we run into is one of \emph{identifiability}: without additional assumptions on the corruption, it may well be the case that it is impossible to identify the expert policy $\pistar$ from the noisy expert $\pistar_\eta$, even absent stochasticity in $\pistar$; a concrete example of this phenomenon is provided in \Cref{app:identifiability}. 
While this does not present an issue in standard IL, where closeness to the observed actions is the key desideratum, in the noisy expert setting, we care about learning a policy that is close to the \emph{clean} expert and thus identifiability is critical.  This issue does not arise when $\nu$ and $\eta$ are known, as clean experts can be recovered from their noisy analogues, but recovery is not possible absent known corruption.  

While many identifiability assumptions are possible, we focus on a natural one, satisfied in many of our motivating applications. We first restrict our focus to \emph{deterministic experts}, motivated by applications to reasoning, math, or coding in LMs, where we think of the true expert as producing a fixed `correct' output \citep{shao2024deepseekmath,guo2025deepseek}.  We further assume that the corruption distribution $\nu$ is `smooth.'
\begin{definition}\label{def:rho_smooth}
    We say that a corruption distribution $\nu$ is $\rho$-smooth if for all time steps $h \in [H]$, states $s \in \cS$, and actions $a \in \cA$, it holds that $\nu_h(a | s) \leq \rho$.
\end{definition} 
Smoothness is a natural assumption when the action space is large, for example when we think of corruption as arising from typos or other minor errors in the output of an LM or human demonstrator.  In particular, if $\nu$ is a uniform distribution over a large action space, then $\nu$ is $\rho$-smooth with $\rho = \abs{\cA}^{-1}$.  Finally, we assume a bound on the noise level such that $\eta \leq \alpha < 1$ to ensure that a \emph{margin} exists between the expert and the noise, which is necessary for identifiability.\footnote{We expect that we could replace determinism and smoothness with a more general margin condition and recover the same results, but we leave this for future work.}

We propose an algorithm, \gnail\ (\Cref{alg:gnail}), similar to \nail, but that is capable of achieving \emph{horizon-free} regret guarantees in the noisy expert setting with unknown corruption under the above assumptions.  While we defer a detailed description of the algorithm to \Cref{app:gnail_description}, we note that it is very similar to \nail, except that we replace the mixture policy $\mu_t$ by its greedy analogue $\mubar_t$, which plays the action with the highest probability under $\mu_t$ at each state and time step; moreover, the precise form of the update is different in order to account for the fact that we do not know $\eta$ and $\nu$.  \iftoggle{colt}{}{We have the following guarantee.}

\begin{theorem}\label{thm:gnail}
    Suppose that $\pistar \in \Pi$ is deterministic, $\nu$ is $\rho$-smooth, and $\eta \leq \alpha$ for some $\alpha$ satisfying $\alpha (1 + \rho) < 1$. There exists an algorithm (\Cref{alg:gnail}) without knowledge of $\eta$ or $\nu$ that returns a policy $\pihat$ after $n$ rounds of online interaction
    such that
    \iftoggle{colt}{$\ee\left[ \dhel{\pp^{\pistar}}{\pp^{\pihat}} \right] \lesssim \nicefrac{\log(\abs{\Pi})}{(1 - \alpha(1 + \rho))^2 \cdot n}$.}
    {
    \begin{align}
        \ee\left[ \dhel{\pp^{\pistar}}{\pp^{\pihat}} \right] \lesssim \frac{\log(\abs{\Pi})}{(1 - \alpha(1 + \rho))^2 \cdot n}.
    \end{align}
    }
\end{theorem}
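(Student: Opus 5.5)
Since \gnail\ is only sketched in the text, we fix a concrete instantiation and prove the bound for it. At round $t$ we roll out the greedy policy $\mubar_t$, where $\mubar_{t,h}(s) = \argmax_a \mu_{t,h}(a\mid s)$ and $\mu_t = \sum_\pi w_t(\pi)\,\pi$. We query the noisy expert at every visited state to obtain labels $a_h'$. We then update the weights by a halving/exponential-weights rule on a \emph{disagreement} loss
\[
\ell_t(\pi) = \sum_{h=1}^H \ind{\pi_h(s_h^{(t)}) \neq a_h'}.
\]
This loss needs no knowledge of $\eta$ or $\nu$. The output is $\pihat = \mubar_T$ with $T \sim \Unif([n])$.

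\textbf{Step 1: reduce regret to greedy disagreements.} Because $\pistar$ is deterministic and every $\mubar_t$ is deterministic, the two trajectory laws couple until the first step where the actions differ. Hence
\[
\dhel{\pp^{\pistar}}{\pp^{\mubar_t}} \le \tvd{\pp^{\pistar}}{\pp^{\mubar_t}} \le \pp^{\mubar_t}\bigl(\exists h:\ \mubar_{t,h}(s_h)\neq \pistar_h(s_h)\bigr) =: p_t .
\]
This quantity is on-policy: it is a probability under the learner's own rollout, which is what online access lets us estimate. The claimed bound therefore follows once we show
\[
\sum_t \ee[p_t] \lesssim \frac{\log|\Pi|}{(1-\alpha(1+\rho))^2}.
\]

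\textbf{Step 2: a margin (identifiability) inequality.} At any state, the expert label equals $\pistar_h(s)$ with probability at least $1-\eta$. Any other fixed action $a$ is labeled with probability at most $\eta\rho$, by $\rho$-smoothness. So for each $\pi$, the excess expected loss
\[
\ee[\ell_t(\pi) - \ell_t(\pistar)] \ \ge\ (1-\eta-\eta\rho)\cdot \ee^{\mubar_t}\Bigl[\textstyle\sum_h \ind{\pi_h(s_h)\ne\pistar_h(s_h)}\Bigr],
\]
where the margin factor $1-\eta(1+\rho)$ is at least $\gamma := 1-\alpha(1+\rho)$. Moreover, the variance of $\ell_t(\pi)-\ell_t(\pistar)$ is bounded by the expected number of disagreements. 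This Bernstein-type condition is what yields fast $1/n$ rates instead of $1/\sqrt n$.

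\textbf{Step 3: link the greedy policy's error to weight mass.} If $\mubar_t$ errs at state $s_h$, then the greedy action was not $\pistar_h(s_h)$. So the policies agreeing with $\pistar$ there carry at most half of the weight: $w_t(\{\pi:\pi_h(s_h)=\pistar_h(s_h)\}) \le \nicefrac12$, which means at least half the mass disagrees with $\pistar$ at that state. Consequently
\[
p_t \le 2\,\ee_{\pi\sim w_t}\ee^{\mubar_t}\Bigl[\textstyle\sum_h\ind{\pi_h(s_h)\neq\pistar_h(s_h)}\Bigr] \le \frac{2}{\gamma}\,\ee_{\pi\sim w_t}\bigl[\text{excess loss of }\pi\bigr],
\]
a halving-style argument applied on-policy.

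\textbf{Step 4: online learning bound.} Summing over rounds, we need the cumulative $w_t$-averaged excess loss relative to $\pistar$ to be $O(\log|\Pi|/\gamma)$. This is where the main obstacle lies. The losses are unbounded in $H$, and plain exponential weights gives $\sqrt{n}$ rates or factors of $H$. My first attempt is exponential weights with learning rate $\beta \asymp \gamma$ applied to the per-round loss differences. Under the Bernstein condition of Step 2, the mixability-type inequality $\ee[e^{-\beta(\ell(\pi)-\ell(\pistar))}] \le e^{-c\beta\gamma\,\ee[\#\text{disagreements}]}$ holds. This follows from computing the moment generating function step by step along the trajectory via a martingale/tower argument; the steps are conditionally independent given the states, and each step contributes a factor of at most $1-\Omega(\gamma)\Pr[\text{disagree}]$ when $\beta\asymp\gamma$. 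The standard potential analysis then gives
\[
\sum_t \ee_{w_t}[\text{excess}] \lesssim \frac{\log|\Pi|}{\beta\gamma} = \frac{\log|\Pi|}{\gamma^2}.
\]
Because this is per-step multiplicative, no $H$ appears. Combining this with Steps 1 and 3 and averaging over $T$ yields $\ee[\dhel{\pp^{\pistar}}{\pp^{\pihat}}] \lesssim \log|\Pi|/(\gamma^2 n)$, up to constants. Any loss of a $\gamma$ factor from Step 3 is absorbed by choosing $\beta$ appropriately.
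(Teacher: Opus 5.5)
There is a genuine gap in Steps 3--4. Your algorithm is \Cref{alg:gnail} with $r=e^{\beta}$, and Step 1 and the halving observation are correct. The problem is that Step 3 bounds $p_t$ by the $w_t$-average of the expected \emph{number} of disagreements $N_t(\pi)=\sum_h\ind{\pi_h(s_h)\neq\pistar_h(s_h)}$, which equals the excess loss up to $\gamma$. Step 4 then claims $\sum_t\ee_{\pi\sim w_t}[\text{excess}]\lesssim\log\abs{\Pi}/(\beta\gamma)$, and that claim is false.

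For a counterexample, take $\Pi=\{\pistar,\pi\}$ with $\pi$ disagreeing with $\pistar$ at every state and $w_1$ uniform. Each step adds at least $\pp(a_h'=\pistar_h(s_h))-\pp(a_h'=\pi_h(s_h))\ge 1-\eta(1+\rho)\ge\gamma$ to $\pi$'s excess. So round $1$ alone gives $\ee_{w_1}[\text{excess}]\ge\gamma H/2$, which is not $O(\log 2/\gamma^2)$ once $H$ is large.

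The potential argument cannot produce anything linear in $N_t(\pi)$. With $W_t=1/w_t(\pistar)$ you have $W_{t+1}/W_t=\ee_{\pi\sim w_t}[e^{-\beta(\ell_t(\pi)-\ell_t(\pistar))}]$. Conditionally on the realized states, each disagreement step contributes a factor of at most $1-c\gamma^2$. For $\beta=\gamma/2$ this factor is at most $1-(1-\alpha)(1-e^{-\beta})+\alpha\rho(e^{\beta}-1)\le 1-\beta\gamma+\beta^2=1-\gamma^2/4$. So the drop in $\log W_t$ is governed by $1-(1-c\gamma^2)^{N_t(\pi)}$, which saturates at $1$ rather than growing like $\gamma^2 N_t(\pi)$.

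Two smaller points. Your displayed MGF bound, with $\ee[\#\text{disagreements}]$ in the exponent, is not valid unconditionally, since Jensen goes the other way. It must be stated conditionally on the states, using the realized count. The $\gamma$ bookkeeping is also off. Even granting Step 4, dividing by $\gamma$ in Step 3 gives $\gamma^{-3}$, and raising $\beta$ cannot fix this: the per-disagreement factor reaches $1$ at $e^{\beta}=(1-\alpha)/(\alpha\rho)$, which is $e^{O(\gamma)}$ when $\alpha\rho$ is bounded below.

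The missing idea is to never pass to the linear count. Use $(1-c\gamma^2)^{N}\le 1-c\gamma^2\ind{N\ge 1}$, so that
\begin{align}
\ee\left[W_{t+1}\mid\cF_{t-1}\right]\le W_t-c\gamma^2\,\ee\Bigl[\textstyle\sum_{\pi:\,N_t(\pi)\ge1}\frac{w_t(\pi)}{w_t(\pistar)}\,\Big|\,\cF_{t-1}\Bigr].
\end{align}
Then feed in your halving observation in indicator form. On the event $M_t$ that $\mubar_t$ errs, the policies with $N_t(\pi)\ge1$ carry at least half of $w_t$, so the sum is at least $W_t/2$. This gives $\ee[W_{t+1}\mid\cF_{t-1}]\le W_t\bigl(1-\tfrac{c\gamma^2}{2}\pp(M_t\mid\cF_{t-1})\bigr)$.

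From there, apply Jensen to $\log$, use $\log(1-x)\le -x$, and telescope with $W_1=\abs{\Pi}$ and $W_t\ge 1$. This yields $\sum_t\pp(M_t)\lesssim\log\abs{\Pi}/\gamma^2$, which together with your Step 1 is the theorem. This is exactly the paper's \Cref{lem:gnail}. The paper's choice $e^{\beta}=\sqrt{(1-\alpha)/(\alpha\rho)}$ optimizes the per-disagreement factor to $1-(\sqrt{1-\alpha}-\sqrt{\alpha\rho})^2$, while your $\beta=\gamma/2$ loses only a constant. The margin enters only through this multiplicative factor, so the Bernstein/excess-loss detour in Steps 2--4 is unnecessary.
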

We defer a proof of \Cref{thm:gnail} to \Cref{app:gnail_proof}, which proceeds through an intricate analysis of the update rule and how the margin condition that $\alpha (1 + \rho) < 1$ allows us to control the error without knowledge of $\eta$ or $\nu$.  Note that unlike in the previous section, the particular form of the update rule is critical to achieving this guarantee, and thus we do not prove a generic reduction to online learning.
\iftoggle{colt}{}{

}
We now conclude this section with a matching lower bound\iftoggle{colt}{.}{, demonstrating the optimality of \gnail\ up to constant factors.}
\begin{proposition}\label{prop:gnail_lb}
    For any $0 < \alpha  < 1$ and any $\rho$ satisfying $\alpha (1 + \rho) < 1$, there exists a horizon $H = 2$ MDP and a \emph{deterministic} policy class $\Pi$ of size $\abs{\Pi} \lesssim \nicefrac 1\rho$ such that in order to achieve expected regret at most $\epsilon$, the learner requires
    \iftoggle{colt}{$n \gtrsim \nicefrac{\alpha \rho }{\epsilon (1 - \alpha(1 + \rho))^2}$}
    {
    \begin{align}
        n \gtrsim \frac{\alpha \rho }{\epsilon (1 - \alpha(1 + \rho))^2}
    \end{align}
    }
    rounds of interaction with an online noisy expert satisfying $\eta \leq \alpha$ and $\nu$ is $\rho$-smooth and \emph{unknown}.
\end{proposition}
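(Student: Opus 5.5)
The plan is a two-point (Le Cam) argument on a horizon-$2$ MDP. In this MDP the only informative state is reached with probability $\epsilon$, and the learner has no control over reaching it. At that state, two candidate deterministic experts generate noisy label distributions whose separation is governed by the margin $\gamma \defeq 1-\alpha(1+\rho)$.

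\textbf{Construction.} Take a start state $s_0$ from which the environment moves, independently of the action, to a critical state $s_c$ with probability $\epsilon$ and to a dummy state $s_d$ otherwise. At $s_c$ there are two distinguished actions $a_1,a_2$ together with $m\asymp 1/\rho$ filler actions. Let $\Pi=\{\pi_1,\pi_2\}$ be deterministic policies that agree everywhere except at $s_c$, where $\pi_i$ plays $a_i$. This gives $\abs{\Pi}=2\lesssim 1/\rho$; if the class must instead be enlarged to have size $\asymp 1/\rho$, one can add policies that play the filler actions. Define the hypotheses:
\begin{itemize}
\item Under hypothesis $i$, the expert is $\pistar=\pi_i$ and $\eta=\alpha$.
\item At $s_c$, the corruption $\nu^{(i)}$ places mass $\rho$ on the other distinguished action $a_{3-i}$, mass $0$ on $a_i$, and spreads the remaining $1-\rho$ over the filler actions with each filler receiving at most $\rho$.
\item Elsewhere, $\nu^{(i)}$ is fixed and shared by both hypotheses.
\end{itemize}
Both corruptions are then $\rho$-smooth, and both hypotheses satisfy $\eta\le\alpha$. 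At $s_c$ the noisy expert under hypothesis $1$ outputs $(a_1,a_2)$ with probabilities $(1-\alpha,\alpha\rho)$. Under hypothesis $2$ these probabilities are swapped, and the filler marginals coincide. The two hypotheses therefore differ only on $\{a_1,a_2\}$, by $\pm(1-\alpha-\alpha\rho)=\pm\gamma$, where $\gamma>0$ is exactly the assumption $\alpha(1+\rho)<1$.

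\textbf{Information bound.} Let $P_i$ denote the law of the full online interaction transcript (rollouts and expert queries) under hypothesis $i$. By the chain rule for KL, the learner's adaptivity contributes nothing, because the learner's own randomness and the transitions are identical under both hypotheses. Only expert queries at $s_c$ contribute, so
\begin{align}
\kld{P_1}{P_2} \le \ee[N_c]\cdot \kld{\pistar_{\eta}^{(1)}(\cdot\mid s_c)}{\pistar_\eta^{(2)}(\cdot\mid s_c)},
\end{align}
where $N_c$ counts visits to $s_c$. Since visits to $s_c$ do not depend on the learner's policy, $\ee[N_c]=n\epsilon$. I would bound the per-query KL by the $\chi^2$ divergence:
\begin{align}
\chi^2 = \gamma^2\left(\frac{1}{\alpha\rho}+\frac{1}{1-\alpha}\right)\lesssim \frac{\gamma^2}{\alpha\rho},
\end{align}
where the last step uses $\alpha\rho<1-\alpha$. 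Hence $\kld{P_1}{P_2}\lesssim n\epsilon\gamma^2/(\alpha\rho)$.

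\textbf{Regret and conclusion.} Because the experts are deterministic, for any output $\pihat$ the regret $\dhel{\pp^{\pi_i}}{\pp^{\pihat}}$ is of order $\epsilon\cdot(1-\pihat(a_i\mid s_c))$, up to constants. Choosing a reward that pays only for the correct action at $s_c$ makes the same statement hold for $J$, via the tightness remark after \Cref{thm:regret_hellinger}. The two regrets therefore sum to at least order $\epsilon$. Suppose the learner achieves expected regret at most $c\epsilon$ under both hypotheses for a small constant $c$. Then the event $\{\pihat(a_1\mid s_c)\ge 1/2\}$ yields a test whose error is small under both hypotheses, which forces $\tvd{P_1}{P_2}\ge 1/2$. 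Pinsker's inequality then gives $n\epsilon\gamma^2/(\alpha\rho)\gtrsim 1$, i.e.\ $n\gtrsim \alpha\rho/(\epsilon(1-\alpha(1+\rho))^2)$; rescaling $\epsilon$ by a constant removes $c$.

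The step needing the most care is making the KL bound hold uniformly over adaptive online learners: the chain-rule decomposition must charge only the queries at $s_c$, and the expected number of such visits must equal $n\epsilon$ regardless of the learner's policies. Choosing $\chi^2$ rather than the raw KL is what produces the $\alpha\rho$ in the numerator.
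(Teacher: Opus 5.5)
Your proposal is correct and follows essentially the same route as the paper. The shared structure is a horizon-$2$ MDP whose informative state is reached with probability $\Theta(\epsilon)$ regardless of the learner, noise level $\eta=\alpha$, hypothesis-dependent $\rho$-smooth corruptions putting mass $\rho$ on the rival's action and $0$ on the expert's own, and Le Cam plus the KL chain rule giving $\kld{P_1}{P_2}\lesssim n\epsilon(1-\alpha(1+\rho))^2/(\alpha\rho)$. The differences are minor: the paper uses $\lceil 1/\rho\rceil$ policies and computes the per-query KL exactly as $(1-\alpha(1+\rho))\log\bigl(1+\tfrac{1-\alpha(1+\rho)}{\alpha\rho}\bigr)$ before applying $\log(1+x)\le x$, whereas you use two policies and a $\chi^2$ bound, and your explicit reward at $s_c$ makes the regret-to-testing reduction slightly cleaner.
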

The proof of this lower bound can be found in \Cref{app:gnail_lb_proof}.
In particular, in the noisy expert regime where $\alpha$ and $\rho$ are treated as constants, \gnail\ is optimal\iftoggle{colt}{}{ up to constant factors}, even for horizon $H = 2$.

%% file: body_clean/experiments.tex
\section{Experiments}\label{sec:experiments}

We complement our theory with a small suite of experiments on synthetic and natural language tasks, showing that the OPD variant suggested by our theory (\nail) can outperform both offline BC and existing OPD objectives under noisy expert feedback while remaining competitive under clean expert feedback.  In particular, given an expert $\pistar_\eta$, we compare five algorithms: (i) \algllbc (SFT), which minimizes the negative log-likelihood of the noisy expert trajectories; (ii) \opdf, the standard OPD algorithm minimizing the forward KL divergence between student and teacher trajectory distributions \citep{agarwal2024policy}; (iii) \opdr, the OPD algorithm that minimizes the reverse KL divergence between student and teacher trajectory distributions \citep{lu2025onpolicydistillation,agarwal2024policy}; (iv) \nailf, the variant of \nail\ minimizing forward KL divergence between the augmented trajectory distributions; and (v) \nailr, the variant of \nail\ minimizing reverse KL divergence.  In particular, the primary difference between OPD and \nail is that \nail always generates \emph{greedily} from the student policy, whereas OPD samples trajectories from the student policy; we emphasize that we do not instantiate the exponential weights update in \Cref{alg:nail} but instead use gradient descent to minimize the upper bound in \Cref{cor:kl_augmented}. We consider two tasks: (i) modular addition\iftoggle{colt}{}{, a synthetic task in which the model is trained to report an explicit chain of intermediate computations,} and (ii) GSM-8K \citep{cobbe2021training}\iftoggle{colt}{}{, a natural language mathematical reasoning benchmark, both outlined below}.  Additional details are in \Cref{app:add_experimental_details}.

\subsection{Modular Addition}

We begin with a synthetic task from \citet{li2024chain}, that of modular addition.  Namely, for fixed $p$ and $m$, given a sequence of $m$ integers from $[p]$, the goal is to compute the sum of these integers modulo $p$.  In the cited work, the authors demonstrated that low-depth transformers empirically benefit from a chain of thought (CoT) reasoning process in this task, making it a suitable testbed for studying the effect of horizon.  All models are nanoGPT \citep{Karpathy2022}.  We train our expert $\pistar$ on a large corpus of data with $p = 7$ and $m = 31$ until its accuracy (with CoT) is approximately perfect.  We then construct a noisy expert $\pistar_\eta$ by letting $\nu$ be uniform over tokens and corrupting the expert with probability $\eta$ at each token if we have not yet seen an error, and corrupting all remaining tokens with probability $1$ if we have already seen an error.  We then train the student models using the five algorithms described above and evaluate the accuracy of the final learned policy $\pihat$ on a held-out test set of examples.  We defer further details to \Cref{app:synthetic_task_details}.
\Cref{fig:main_modadd_gsm8k}(left) summarizes the results of this experiment.  For the clean expert ($\eta = 0$) regime, we see that most methods work, with offline BC (i.e., SFT) learning the fastest, which is consistent with the fact that offline BC is optimal in the clean expert setting \citep{foster2024behavior}.  For the noisy expert ($\eta = 0.2$) regime, we see a sharp separation between our proposed methods (\nailf, \nailr) and the baselines (\algllbc, \opdf, \opdr), with the former achieving perfect accuracy and the latter failing to learn anything, even after over 1M expert interactions. We defer further comments and ablations to \Cref{app:add_experimental_discussion} and \Cref{app:add_experiments_and_ablations}.

\vscomment{TODO: mention comments and ablations here. Add ablation plots for mixed KL in the main body?}

\subsection{Mathematical Reasoning}

We also consider a standard mathematical reasoning task, \gsmk\ \citep{cobbe2021training}.  Here, we move beyond the realizable setting of our theory and use \gemmab\ as the teacher and \gemmam\ as the student \citep{team2024gemma}.  To instantiate clean and noisy expert feedback with the same underlying model, we consider temperature 1 sampling as the `clean' expert, which achieves 59.89\% accuracy on GSM-8K test, and temperature 4 sampling as the `noisy' expert, which achieves 0\% accuracy.  All training prompts are drawn from TinyGSM \citep{liu2023tinygsm} and we evaluate zero-shot, exact-match accuracy on the \gsmk\ test set with greedy decoding, reporting results over three random seeds.  We defer further details to \Cref{app:math_task_details}.
We report results in \Cref{fig:main_modadd_gsm8k}(right), where we see that all online methods substantially outperform \algllbc\ even in the clean expert setting, consistent with results from \citet{agarwal2024policy,lu2025onpolicydistillation}.  In the noisy expert setting, only \nailf\ and \nailr\ remain effective, achieving nontrivial performance while \algllbc, \opdf, and \opdr\ all fail to learn anything, maintaining 0 accuracy, like the noisy expert, throughout training.  

\vscomment{TODO: mention comments and ablations here. Add ablation plots for mixed KL in the main body?}

%% file: body_clean/app_add_related_work.tex
\section{Discussion} \label{app:add_related_work}
In this work we investigated the problem of imitation learning with noisy experts, where the learner only has access to a corrupted version of the expert's policy.  We showed theoretically that while offline imitation learning can be consistent in this setting, it suffers from an exponential dependence on the horizon, making it fundamentally intractable in the long horizon regime.  On the other hand, we showed that online imitation learning can circumvent this exponential dependence and achieve horizon-free guarantees in some cases and polynomial in horizon guarantees in general, in direct contradistinction to the more classical IL setting with clean experts.  Moreover, we provided formal theoretical justification for the benefits of the commonly used \emph{On-Policy Distillation} (OPD) algorithm and validated our theory with a small suite of experiments on synthetic and natural language tasks.  A summary of our theoretical results is in \Cref{tab:general_results}.  In this section, we discuss related work and future directions for research.
\input{body_clean/table_summary.tex}

\subsection{Limitations}

While our work presents a clean theoretical picture, there are several limitations in immediately translating our results to practice.  First, consistent with the literature in RL and IL, our results are all in the finite $\Pi$ setting \citep{ross2011reduction,ross2010efficient,foster2024behavior,rohatgi2025computational}; standard arguments can be used to extend our results to infinite $\Pi$ under covering conditions.  Second, we adopt an idealized corruption model, which may not capture all the nuances of real-world expert noise; we defer to future work the task of developing more sophisticated corruption models that may better capture the types of noise observed in practice. Third, many of our results are in the known corruption setting, and we only provide results in the unknown corruption setting when the expert is deterministic; while this setting is relevant for many of our motivating applications, e.g. LMs learning to perform long chains of thought, it would be interesting to understand the unknown corruption setting more generally. Fourth, there remains a linear in horizon gap between our upper bound for arbitrary experts and our lower bound; closing this gap would be an interesting direction for future work. Similarly, whether or not fully horizon-free rates with logarithmic dependence on $|\Pi|$ are possible for online IL in the presence of $\kappa$-domination is an interesting question. Finally, due to resource constraints, our experiments are limited in scope and scale; it would be interesting to conduct a more comprehensive empirical investigation of the phenomena we identify here, especially in the context of training larger language models for longer.

\subsection{Related Work}

\paragraph{Imitation Learning.} Behavior cloning was first introduced empirically in \citet{pomerleau1988alvinn} and has since engendered the field of Imitation Learning, with a large body of work considering both theoretical and empirical aspects of the problem \citep{ross2010efficient,ho2016generative,block2023provable,chi2025diffusion,barreiros2026careful}.  Of particular note is \citet{ross2011reduction}, which introduced the online IL algorithm \algdagger{}; while \nail\ is conceptually similar (asking for expert feedback to correct potential student mistakes), the analysis and guarantees are quite different, with \algdagger\ attempting to reduce horizon dependence in a clean expert setting. 

Other IL algorithms study different feedback and data-collection models. \algsmile\ \citep{ross2010efficient} works in a similar fashion as \algdagger\ to collect learner-induced states under clean expert feedback, but instead learns a sequence of policies and mixes them to control distribution shift. \algaggrevate\ \citep{ross2014reinforcement} strengthens the feedback model by using expert
cost-to-go or advantage information, and \alggail\ \citep{ho2016generative} instead casts IL as adversarial occupancy-measure matching from fixed expert demonstrations. \algdart\ \citep{laskey2017dart}
collects off-policy demonstrations from a noise-injected expert so that the
demonstrations include recovery behavior. 

More recently, \citet{foster2024behavior} provided a comprehensive analysis of behavior cloning in the realizable, clean expert setting and \citet{rohatgi2025computational} extended this to the agnostic setting.  In the online setting, \citet{li2024interactive} considered a variant of DAgger where the learner pays per-state instead of per-trajectory in expert feedback, while \citet{li2023agnostic} also considers the agnostic setting but with strong assumptions on the MDP structure and a horizon dependence in their guarantees.  In contrast to these works, we consider a more challenging setting with noisy experts and show that online IL can achieve horizon-free guarantees in some settings and polynomial dependence on horizon in general even in this more difficult setting, while offline IL suffers from an exponential dependence on the horizon.  On the empirical side, OPD has seen recent success in training large language models (LLMs) to perform long chains of thought \citep{bai2023qwen,agarwal2024policy,lu2025onpolicydistillation}, and our theoretical results provide formal justification for the benefits of OPD in this setting.

The connections between IL and LMs have been explored in a number of recent works \citep{block2024butterfly,chang2023learning,rohatgi2025computational,foster2024behavior}, with the deterministic expert setting being particularly relevant to LMs learning long chains of thought \citep{joshi2025theory,altabaa2025cot}.  Our results generalize beyond this setting, allowing for stochastic experts and corrupted feedback.

\paragraph{Noisy Expert Feedback.} Most prior work in IL assumes access to a clean expert, with the notable exceptions of \citet{sekhari2023selective,sekhari2023contextual}.  While a promising start, and the former work provides an exponential in horizon lower bound for offline learning in a particular setting, these works make strong assumptions on the noise and MDP structure and only apply to specific policy classes; moreover, they only consider a very specific type of corruption arising from preference-based feedback, in contradistinction to our bounds which apply to arbitrary policy classes and MDPs. 
Another related work that concerns a noisy-expert model is that of \citet{swamy2022causal}, which focuses (under very strong assumptions) on causal IL under temporally correlated noise, where action noise changes future states and creates spurious state-action correlations that BC can learn. Our corruption model has the same rolled-in character, but our focus is different: rather than debiasing offline demonstrations via causal assumptions, we show that online local queries can avoid learning from fully contaminated trajectories.

\paragraph{Distillation.} Knowledge Distillation is typically the process of training a student model to match a stronger teacher, emphasizing compression and efficient deployment. Early work emphasized transferring softened teacher predictions from cumbersome ensembles to smaller students \cite{hinton2015distilling}, and the same idea was later extended to reinforcement-learning policies \cite{rusu2015policy} and to autoregressive sequence models through sequence-level distillation \cite{kim2016sequence}. Critically, this continued as an off-policy paradigm: the student is trained on a fixed transfer set, teacher-generated sequences, or teacher logits, as in \cite{sanh2019distilbert, jiao2020tinybert, wang2020minilm}; see also the general survey of \cite{gou2021knowledge}. While highly effective for compression, these methods do not train on prefixes induced by the student's own generations, and are therefore susceptible to the train--test mismatch inherent in autoregressive generation, a well known problem in imitation learning \cite{pomerleau1988alvinn, ross2010efficient}.

On-policy distillation \cite{agarwal2024policy} addresses this mismatch by training the student on self-generated sequences while receiving teacher feedback on those sequences using varying discrepancy measures between teacher and student. One objective in distillation is the forward KL from the teacher to the student, which corresponds in our terminology to \opdf. Prior work of \citet{agarwal2024policy} has noted that this objective can be problematic when the student is not expressive enough to fit the teacher distribution: because forward KL is mode-covering, the resulting student may place probability on generations that are unlikely under the teacher.  Subsequent work \cite{guminillm, lu2025onpolicydistillation} has therefore explored alternative on-policy objectives, advocating for sequence/trajectory-level reverse KL to avoid the mode-covering behavior of forward KL in generative settings, while \cite{ko2024distillm} smooths the loss and reduces rollout cost with an adaptive off-policy reuse scheme. 

Our results offer a more unifying perspective. In the noisy-expert setting, the difficulty is not only which KL direction is used, but also which prefix distribution the teacher is queried on and which feedback is informative. Sampling from the student can itself induce noisy or off-distribution prefixes, reducing the usefulness of local teacher feedback. This helps explain empirical findings that sampled-token OPD can be brittle, and that stability can improve by restricting both the rollout distribution and the teacher signal used for local updates \citep{fu2026revisiting}. It is also consistent with recent work that makes OPD updates more selective, for example by down-weighting teacher KL on failed trajectories where the teacher itself is uncertain \citep{zheng2026scope}. More generally, work on pivotal or ``forking'' tokens suggests that a small number of CoT tokens can determine which reasoning branch the model follows, and hence whether downstream feedback remains informative \citep{abdin2024phi,wang2025beyond}. Our theory gives a principled prescription: matching teacher feedback along the learner's deployed prefix distribution; in our experiments with deterministic experts, using greedy student rollouts together with augmented trajectory-level KL objectives helps suppress sampling-induced noise and learn more effectively from imperfect teachers. We leave a more detailed study of when forward versus reverse KL is preferable to future work. More recently,  self-play and self-distillation methods have shown that on-policy data generation can itself be a source of improvement even without a separate external teacher \cite{chen2024self, zhao2026self, hubotter2026reinforcement, yang2026self, zhang2026embarrassinglysimpleselfdistillationimproves}. For a more detailed overview, we refer to the recent survey of \citet{song2026survey}.

\subsection{Future Directions}
While we provided a near-tight theoretical analysis of the problems studied, there remain many interesting open questions for future work.  First, in the unknown corruption setting, we focused on the case of deterministic experts and smooth corruptions with margin, but it would be interesting to understand the extent to which these assumptions can be relaxed while still achieving horizon-free guarantees, or even consistency. Second, while we considered the natural linear corruption setting, motivated by classical statistical theory \citep{ronchetti2009robust} and applications to LMs, alternative corruption models are also interesting to study, for example trajectory-level, multiplicative, or token-adaptive corruptions that target semantically pivotal ``fork'' tokens in long CoT reasoning \cite{abdin2024phi, chen2025coverage, wang2025beyond, zheng2026scope}.
Third, our online IL guarantees are \emph{in expectation} due to the inherited guarantees from online learning with log loss, whereas the offline guarantees are \emph{with high probability} and resolving this discrepancy could provide additional insight.  Finally, scaling up the experimental results to more complex environments and real-world tasks remains an important direction for future work.

\iftoggle{colt}{
\subsection{Broader Impacts} 
\label{app:broader_impacts}

This work is primarily theoretical and aims to improve our understanding of imitation learning from imperfect expert feedback. We hope that showing fundamental separation between offline and online methods (such as on-policy distillation) will lead to more reliable and sample-efficient post-training methods, especially in long-horizon settings. 

While our methods could be seen as enabling effective learning from imperfect teachers, they could also make it easier to train models for harmful applications if deployed without appropriate safeguards. Our experiments are limited to synthetic tasks and standard mathematical reasoning benchmarks, and we do not release deployed systems or high-risk models. We therefore view the main impact of this work as methodological, with downstream risks depending on how future systems built using these ideas are trained, evaluated, and released.
}{}

%% file: body_clean/table_summary.tex
\newcommand{\KLaug}{{\mathsf{KL}_{\mathrm{aug}}}}
\newcommand{\Haug}{{\mathsf{H}_{\mathrm{aug}}}}

\begin{table}[!p]
\centering
\caption{Summary of the main theoretical results in the general noisy-expert setting. Here
\(\Haug(\widehat\pi)\) denotes
\(\dhel{\pp^{\widehat\pi,\pistar_\eta}}{\pp^{\widehat\pi,\widehat\pi_\eta}}\), and
\(\KLaug(\widehat\pi)\) denotes the smaller of the two corresponding augmented-trajectory KL divergences appearing in
\Cref{cor:kl_augmented}. Bounds control \(\dhel{\pp^{\pistar}}{\pp^{\pihat}}\) and are stated up to universal constants and logarithmic factors when indicated.}
\label{tab:general_results}

\begin{adjustbox}{max totalsize={\textwidth}{0.78\textheight},center}
\begin{minipage}{\textwidth}
\small
\setlength{\tabcolsep}{3pt}
\renewcommand{\arraystretch}{1.08}
\begin{tabularx}{\textwidth}{p{0.1\textwidth} p{0.26\textwidth} p{0.31\textwidth} p{0.21\textwidth}}
\toprule
\textbf{Regime} & \textbf{Setting} & \textbf{Bound} & \textbf{Reference} \\
\midrule
\multicolumn{4}{l}{\textbf{Offline}} \\
\midrule

&
No \(\kappa\)-domination.
&
\[
\sqrt{
\frac{(1-\eta)^{-H}-1}{\eta(1-\eta)}
\cdot
\frac{\log(|\Pi|/\delta)}{n}
}.
\]
&
\Cref{thm:offline_bc,prop:offline_lower_bound}; this is tight.
\\[2.5em]

&
$\kappa$-domination.
&
\[
(1 + \eta \cdot \kappa) (1 - \eta)^{-H-2} \cdot \frac{\log(|\Pi|/\delta)}{n}.
\]
&
\Cref{prop:offline_lb_kappa}; this is tight.
\\

\midrule
\multicolumn{4}{l}{\textbf{Online}} \\
\midrule

&
Augmented-Hellinger comparison without $\kappa$-domination.
&
\[
\frac{1}{1-\eta}
\sqrt{H \cdot \Haug(\widehat{\pi})}.
\]
&
\Cref{thm:augmented_hellinger}; \Cref{prop:no_kappa_augmented_hellinger_lower_bound} shows the \(\sqrt H\) dependence is necessary.
\\[2.5em]

&
Augmented-Hellinger comparison with $\kappa$-domination. 
&
\[
\frac{1 + \eta \cdot \kappa}{(1 - \eta)^2} \cdot \Haug(\widehat{\pi}).
\]
&
\Cref{thm:augmented_hellinger}; applying Pinsker gives the KL form in \Cref{cor:kl_augmented}.
\\[2.5em]

&
NAIL / trajectory-level online aggregation
&
\[
\frac{H}{1-\eta}
\sqrt{\frac{\log|\Pi|}{n}}.
\]
&
\Cref{thm:nail}. The lower bound in \Cref{prop:onlin_lb} is a factor of \(H\) and \(\eta\) away from this upper bound.
\\[3em]

&
Deterministic \(\pi^\star\), \(\rho\)-smooth \(\nu\), and \(\eta\leq \alpha\) with \(\alpha(1+\rho)<1\)
&
\[
\frac{\log|\Pi|}{(1-\alpha(1+\rho))^2 n}.
\]
&
\Cref{thm:gnail,prop:gnail_lb}; this is tight up to a log factor.
\\

&
Candidate testing with \(\kappa\)-domination
&
\(\displaystyle
\frac{1+\eta\kappa}{(1-\eta)^2}
\cdot
\frac{|\Pi|\log(|\Pi|/\delta)}{n}
\)
&
\Cref{thm:kappa_candidate_testing}; horizon-free but linear in \(|\Pi|\).
\\

&
Candidate testing without \(\kappa\)-domination
&
\(\displaystyle
\frac{1}{1-\eta}
\sqrt{\frac{H \cdot |\Pi|\log(|\Pi|/\delta)}{n}}
\)
&
\Cref{thm:candidate_testing_no_domination}; optimal horizon dependence but linear in $\abs{\Pi}$.
\\

\bottomrule
\end{tabularx}
\end{minipage}
\end{adjustbox}
\end{table}

%% file: body_clean/app_additional_experimental_results.tex
\section{Further Empirical Results}
\label{app:add_experimental_results}

This section provides additional details and discussion for the experiments in \Cref{sec:experiments}. 
\Cref{app:add_experimental_details} describes the implementation of the methods in the autoregressive LM setting, as well as the synthetic and reasoning tasks setup. \Cref{app:add_experimental_discussion} gives a more detailed interpretation of the main empirical results, including the differences between clean and noisy feedback and between greedy and sampled rollouts. 
Finally, \Cref{app:add_experiments_and_ablations} reports additional experiments and ablations.

\subsection{Further Experimental Details}\label{app:add_experimental_details}

In this section, we provide implementation details for the empirical results in \Cref{sec:experiments}. 
\Cref{app:empirical_methods} defines the offline and online learning objectives that we compare, including behavior cloning, standard OPD, and our $\nail$-inspired variants; in \Cref{app:method_gradient_estimators} we spell out their gradient estimators. \Cref{app:synthetic_task_details} describes the setup of the synthetic CoT tasks, including the data-generation procedure, corruption model, architectures, and training details. 
Finally, \Cref{app:math_task_details} gives the corresponding details for the GSM8K experiments, including the teacher and student models, sampling procedure, training setup, and evaluation protocol.

We used NVIDIA RTX 6000 Pro GPUs for the modular addition experiments, with each run taking 1 hour. In total, the modular addition experiments used 50 GPU hours. For the GSM8K experiments, we used NVIDIA A100 40GB GPUs. Each online-method run took approximately 24 hours, while each offline-method run took approximately 3 hours; this disparity in runtime is an artifact of our highly suboptimal hardware setup and a na{\"i}ve implementation and we do not believe such a discrepancy would exist in a more optimal setup. In total, the GSM8K experiments used roughly 1300 GPU hours.

\subsubsection{Methods and Implementations} \label{app:empirical_methods}

We compare offline behavior cloning against several on-policy distillation objectives. The
key distinction among the online methods is that there are two separate choices: the
policy used to generate prefixes, and the divergence used to compare the student and
teacher distributions on those prefixes. This distinction is motivated by \Cref{cor:kl_augmented},
which controls the clean-expert trajectory error through KL divergences between
\emph{augmented} trajectory laws. In particular, the rollout distribution determines which
prefixes are visited, while the per-prefix student distribution determines the next-token
distribution being matched to the noisy expert.

Let $\pi_\theta$ denote the trainable student policy and let $\bar{\pi}_\theta$ denote the
greedy rollout policy induced by $\pi_\theta$. Thus $y_{1:T} \sim \bar{\pi}_\theta(\cdot \mid x)$
denotes the trajectory obtained by greedy decoding from the student, while
$\pi_\theta(\cdot \mid x,y_{<t})$ denotes the student's full next-token distribution on the
resulting prefix. We use $\pi^\star_\eta$ to denote the noisy teacher. In the synthetic
experiments, $\pi^\star_\eta$ is obtained by corrupting the clean expert token with the
specified corruption law. In the language-model experiments, it is implemented by
sampling from the teacher at the specified temperature.

\textbf{\nailf.} The forward-KL version of our method (which minimizes the first term in  \eqref{eq:kl_augmented}) uses greedy student prefixes and trains the
student to match noisy-teacher next-token feedback on those prefixes. Ideally, this
corresponds to minimizing
\begin{align}
\mathcal{L}_{\mathrm{\nailf}}(\theta)
=
\ee_{x \sim \cX}
\ee_{y_{1:T} \sim \bar\pi_\theta(\cdot \mid x)}
\left[
\sum_{t=1}^T
\kldinline{
\pi^\star_\eta(\cdot \mid x, y_{<t})
}{
\pi_\theta(\cdot \mid x, y_{<t})
}
\right],
\end{align}
Equivalently, up to the entropy of the noisy teacher, this is a soft-target
cross-entropy objective:
\begin{align}
-
\ee_{x \sim \cX}
\ee_{y_{1:T} \sim \bar\pi_\theta(\cdot \mid x)}
\left[
\sum_{t=1}^T \sum_{a \in \Sigma}
\pi^\star_\eta(a \mid x, y_{<t})
\log \pi_\theta(a \mid x, y_{<t})
\right].
\end{align}
Since our implementation queries the teacher by sampling, we
estimate this loss by drawing an independent teacher token
$\tilde{y}_t \sim \pi^\star_\eta(\cdot \mid x,y_{<t})$ at each greedy student prefix and
minimizing
\begin{align}
\widehat{\cL}_{\mathrm{\nailf}}(\theta) =
\ee_{x \sim \cX}
\ee_{y_{1:T} \sim \bar\pi_\theta(\cdot \mid x)}
\ee_{\tilde y_t \sim \pi^\star_\eta(\cdot \mid x,y_{<t})}
\left[
\sum_{t=1}^T
-\log \pi_\theta(\tilde y_t \mid x,y_{<t})
\right].
\end{align}
Thus, \nailf can be thought of as ``local behavior cloning'' on the learner's own greedy
prefixes: rather than imitating complete noisy teacher trajectories, it repeatedly asks
what the noisy teacher would do at states actually reached by greedily rolling out the current student, and then trains the student to predict the noisy teacher's sample via cross-entropy. This
is the empirical analogue of the forward KL augmented-trajectory objective suggested by
our theory. When $\eta$ and $\nu$ are known, the literal theoretical objective replaces
$\pi_\theta$ above by the noisy student policy
$\pi_{\theta,\eta}=(1-\eta)\pi_\theta+\eta\nu$; in our experiments we use the simpler
student-matching surrogate above.

\textbf{\nailr.}
We also consider the reverse KL analogue suggested by minimizing the second term in \eqref{eq:kl_augmented}. As in \nailf, prefixes are generated by the greedy student rollout, but the KL
direction is reversed:
\begin{align}
\mathcal{L}_{\mathrm{\nailr}}(\theta)
=\ee_{x \sim \cX}
\ee_{y_{1:T} \sim \bar\pi_\theta(\cdot \mid x)}
\left[
\sum_{t=1}^T
\kldinline{
\pi_\theta(\cdot \mid x,y_{<t})
}{
\pi^\star_\eta(\cdot \mid x,y_{<t})
}
\right].
\end{align}
We estimate the student expectation in the reverse KL by drawing an auxiliary token
$\hat{y}_t \sim \pi_\theta(\cdot \mid x,y_{<t})$ at each greedy prefix:
\begin{align}
\widehat{\cL}_{\mathrm{\nailr}}(\theta)
=
\ee_{x \sim \cX}
\ee_{y_{1:T} \sim \bar\pi_\theta(\cdot \mid x)}
\ee_{\hat y_t \sim \pi_\theta(\cdot \mid x,y_{<t})}
\left[
\sum_{t=1}^T
\log \pi_\theta(\hat y_t \mid x,y_{<t})
-
\log \pi^\star_\eta(\hat y_t \mid x,y_{<t})
\right].
\end{align}
We note that the auxiliary token $\hat{y}_t$ is not necessarily the greedy rollout token $y_t$. The
greedy rollout tokens determine the prefix distribution, while the auxiliary samples estimate the
student expectation in the reverse KL. This separation is needed to match the
augmented-trajectory viewpoint: the rollout policy and the next-token distribution being
compared need not be the same object.

\textbf{$\opdf$.}
As an ablation, we also consider the forward-KL objective with sampled student
rollouts rather than greedy rollouts:
\begin{align}
\mathcal{L}_{\mathrm{\opdf}}(\theta)
=
\ee_{x \sim \cX}
\ee_{y_{1:T} \sim \pi_\theta(\cdot \mid x)}
\left[
\sum_{t=1}^T
\kldinline{
\pi^\star_\eta(\cdot \mid x, y_{<t})
}{
\pi_\theta(\cdot \mid x, y_{<t})
}
\right],
\end{align}
This isolates the effect of the rollout distribution: comparing $\opdf$ to \nailf tests
whether greedy learner prefixes are important, while comparing $\opdf$ to $\opdr$ tests
the effect of KL direction under the same sampled-prefix distribution.

\textbf{$\opdr$.} Finally, we include the standard on-policy distillation baseline using reverse KL \cite{lu2025onpolicydistillation}.
This method samples trajectories from the student, queries the teacher on those same
student-generated prefixes, and updates the student using the teacher log-probability of
the sampled student tokens:
\begin{align}
\cL_{\mathrm{\opdr}}(\theta)
&\defeq
\ee_{x \sim \cX}
\ee_{y \sim \pi_\theta(\cdot \mid x)}
\left[
\sum_{t=1}^{T}
\kldinline{
\pi_\theta(\cdot \mid x,y_{<t})
}{
\pi^\star_\eta(\cdot \mid x,y_{<t})
}
\right].
\end{align}
Using the sampled rollout token $y_t \sim \pi_\theta$, this becomes:
\begin{align}
\widehat{\cL}_{\mathrm{\opdr}}(\theta)
=
\ee_{x \sim \cX}
\ee_{y_{1:T} \sim \pi_\theta(\cdot \mid x)}
\left[
\sum_{t=1}^{T}
\Big(
\log \pi_\theta(y_t \mid x,y_{<t})
-
\log \pi^\star_\eta(y_t \mid x,y_{<t})
\Big)
\right].
\end{align}

\textbf{\textsf{LogLossBC}.} The offline baseline is log-loss behavior cloning on a fixed dataset of noisy expert
rollouts \cite{foster2024behavior}. Let $\mathcal{D}_\eta$ denote trajectories generated by rolling out the noisy
expert $\pi^\star_\eta$. We train
\begin{align}
\cL_{\algllbc}(\theta)
=
\ee_{(x,y) \sim \cD_\eta}
\left[
-\sum_{t=1}^{|y|}
\log \pi_\theta(y_t \mid x, y_{<t})
\right].
\end{align}
Unlike the OPD variants, behavior cloning never queries the teacher on prefixes induced by the
current student. Consequently, if an early token in an offline teacher trajectory is
corrupted, the learner is also trained on all downstream prefixes induced by that
corruption.

\subsubsection{Gradient estimators.}
\label{app:method_gradient_estimators}

We now spell out the stochastic gradient estimators used to optimize the empirical
losses defined above. Let $s_t=(x,y_{<t})$ denote a visited prefix, and write
$p_{\theta,t}(\cdot)=\pi_\theta(\cdot\mid s_t)$ and
$q_t(\cdot)=\pi^\star_\eta(\cdot\mid s_t)$. Throughout,
$p_{\theta,t}$ denotes the temperature-one student distribution whose KL is
optimized. Rollout-temperature ablations only change the distribution used to
collect prefixes. In particular, for rollout temperature $\tau$, let
$\rho_{\theta,\tau}$ denote the stopped prefix-collection policy, with
\begin{align}
\rho_{\theta,0}=\bar\pi_\theta,
\qquad
\rho_{\theta,\tau}(\cdot\mid s_t)
=
\softmax(z_\theta(s_t)/\tau)
\quad \text{for } \tau>0 .
\end{align}
All gradients below treat prefixes sampled from $\rho_{\theta,\tau}$ as fixed;
the gradient update is taken only through the next-token distribution
$p_{\theta,t}$ at those visited prefixes. The default settings are
$\tau=0$ for \nailf and \nailr and $\tau=1$ for \opdf and \opdr, while the
rollout-temperature ablations vary this prefix-collection temperature.

For $\widehat{\cL}_{\mathrm{\nailf}}(\theta)$, after drawing prefixes
$y_{1:T}\sim \rho_{\theta,\tau}(\cdot\mid x)$ and independent teacher tokens
$\tilde y_t\sim q_t$, we use
\begin{align}
\widehat g^{\mathrm F}_{\tau}
=
-\sum_{t=1}^T
\nabla_\theta \log p_{\theta,t}(\tilde y_t).
\end{align}
This is an unbiased stochastic gradient estimator for the stopped-prefix
forward-KL objective because the entropy term of $q_t$ is independent of
$\theta$. The methods differ only in how the prefixes are collected:
$\tau=0$ gives the default \nailf estimator with greedy prefixes, while
$\tau=1$ gives the default sampled-prefix \opdf estimator. Other rollout
temperatures use the same next-token estimator on prefixes sampled from
$\rho_{\theta,\tau}$. The offline \algllbc baseline is ordinary
cross-entropy on realized noisy-expert trajectories; it has the same token-level
gradient form, but its prefixes and labels come from an offline dataset
rather than from student rollouts.

For $\widehat{\cL}_{\mathrm{\nailr}}(\theta)$, prefix collection and
reverse-KL token sampling are separate. After drawing prefixes
$y_{1:T}\sim \rho_{\theta,\tau}(\cdot\mid x)$, the implementation draws an
auxiliary token from a stopped copy of the current student distribution at the
visited prefix: 
$\hat y_t\sim \sg(p_{\theta,t})$,
where $\sg(\cdot)$ denotes stop-gradient. Thus, the rollout temperature $\tau$ affects which prefixes are visited, but
the auxiliary token for the reverse-KL estimator is sampled from
$p_{\theta,t}$, not from the rollout distribution. The stopped-prefix
score-function estimator is
\begin{align}
\widehat g^{\mathrm R}_{\tau}
=
\sum_{t=1}^T
\left(
\sg(\log p_{\theta,t}(\hat y_t))
-
\log q_t(\hat y_t)
\right)
\nabla_\theta \log p_{\theta,t}(\hat y_t),
\qquad
\hat y_t\sim \sg(p_{\theta,t}).
\end{align}
Equivalently, this is implemented by backpropagating through the surrogate
\begin{align}
\widehat \ell^{\mathrm R}_t(\theta)
=
-
\exp\!\left(
\log p_{\theta,t}(\hat y_t)
-
\sg(\log p_{\theta,t}(\hat y_t))
\right)
\left(
\log q_t(\hat y_t)
-
\sg(\log p_{\theta,t}(\hat y_t))
\right).
\end{align}
The exponential factor is numerically equal to one at the sampled parameter
value, but its numerator still carries gradient, so differentiating gives exactly
the estimator above.

For $\widehat{\cL}_{\mathrm{\opdr}}(\theta)$, as in standard OPD, the sampled
rollout token itself is used as the reverse-KL sample. Therefore, if the rollout
token is sampled from the current temperature-one student distribution, this coincides
with the on-policy reverse-KL estimator above. If one instead reuses rollout
tokens sampled at a different temperature, then the token distribution no longer
matches $p_{\theta,t}$, and the resulting update should be viewed as a
temperature-mismatched surrogate. Note that this mirrors the recipe described by \citet{lu2025onpolicydistillation}.

For the interpolated objective in \Cref{app:add_experiments_and_ablations}, where
\begin{align}
\widehat{\cL}_{\beta}(\theta)
=
(1-\beta)\widehat{\cL}_{\mathrm{\nailf}}(\theta)
+
\beta \widehat{\cL}_{\mathrm{\nailr}}(\theta),
\end{align}
we use the corresponding convex combination of the two stopped-prefix estimators,
\begin{align}
\widehat g_{\beta}
=
(1-\beta)\widehat g^{\mathrm F}_{0}
+
\beta \widehat g^{\mathrm R}_{0}.
\end{align}

\subsubsection{Synthetic Task: Modular Addition} \label{app:synthetic_task_details}

 We use the modular-addition task of \citet{li2024chain}, denoted $C_p$. Given $p \in \mathbb{N}$, the vocabulary is $\{0,\ldots,p-1,=\}$. An input has the form $x=(x_1,\ldots,x_m,=)$, where each $x_i$ is sampled independently from $\{0,\ldots,p-1\}$. The final answer is
$f^*(x)=\sum_{i=1}^{m} x_i \bmod p$
and the chain of thought consists of the running partial sums
$(\sum_{i=1}^t x_i \bmod p)_{t=1}^{m}$. Modular addition is not intended to be an inherently serial hardness instance; indeed, \citet{li2024chain} note that it is parallelizable and, in principle, can be solved by constant-depth transformers with sufficient precision. Nevertheless, their experiments show that low-depth transformers without CoT can perform near chance on $C_7$, while CoT substantially improves performance, especially at longer sequence lengths. We include it as a simple controlled setting in which the target CoT has a transparent step-by-step structure, allowing us to study how noisy intermediate feedback affects offline and online imitation-learning methods.

\textbf{Training and Evaluation Details.}
We use the nanoGPT codebase \citep{Karpathy2022} for all modular-addition experiments. For the tokenizer, we use task-specific integer token IDs from the aforementioned symbolic vocabulary. Unless otherwise stated, all runs use a depth-one transformer with $8$ attention heads, embedding dimension $512$, dropout $0$, no bias terms, and block size set to the CoT sequence length for the corresponding $m$. We train with batch size $64$ using \textsf{Adam} \citep{kingma-2015} with learning rate $10^{-5}$, warmup for $2000$ iterations, weight decay $0$, $\beta_1=0.9$, $\beta_2=0.95$, gradient clipping at $1.0$, and evaluation every $500$ iterations.

We fix $p=7$ and $m=31$. All methods use the same prompt bank, containing $15$ million training prompts and $5000$ validation prompts. We train on a fixed $3$ million-prompt subset in a fixed order. The clean expert is fixed across all runs and is synthetically modified using the corruption law described below. It was trained for $10{,}000$ optimizer steps with batch size $64$. The model reached perfect validation accuracy after roughly $2500$ optimizer steps and stayed at this level, but we use the final checkpoint for all experiments.

We compare the five methods for two noise levels, $\eta \in \{0,0.2\}$, and across three seeds. For each run seed, we use the same seed for student optimization and for the stochastic components of the noisy teacher law. For \textsf{LogLossBC}, the seed also determines the independently rendered noisy dataset. For the online methods, there is no pre-rendered trajectory; the same seed controls the online teacher sampling.

The noisy teacher is an absorbing instantiation of the state-dependent noise model in \eqref{eq:noisy_expert}. The state is augmented with a binary flag indicating whether the prefix has already made a semantic error. In the unpoisoned state, at each target token the corruption distribution is
\begin{align}
\nu(\cdot \mid x,y_{<t}) = \mathrm{Unif}(\{0,\ldots,p-1\}),
\end{align}
so that
\begin{align}
\pi^\star_\eta(\cdot \mid x,y_{<t})
=
(1-\eta)\pi^\star(\cdot \mid x,y_{<t})
+
\eta \nu(\cdot \mid x,y_{<t}).
\end{align}
Here the uniform distribution includes the clean token. Since every modular-addition target token is a digit representing a running sum, every target token is treated as semantic. Once the sampled token differs from the clean running-sum token, the trajectory enters the poisoned state. In the poisoned state, both $\pi^\star$ and $\nu$ are defined to be uniform over $\{0,\ldots,p-1\}$, so all subsequent semantic feedback is independent of the clean computation while remaining syntactically valid.

This absorbing corruption law models a worst-case form of semantic error propagation in long CoT reasoning: after an early pivotal mistake, the suffix may remain syntactically plausible while providing minimal information about the clean continuation. This is motivated by recent work on pivotal or forking tokens in reasoning models, where a small number of tokens can strongly affect the success of the final completion \citep{abdin2024phi,wang2025beyond}. In our synthetic task, the running-sum tokens play this role, so corrupting one makes downstream semantic feedback uninformative about the correct computation. This pessimistic noise model provides a starting point for a
controlled setting in which offline training on corrupted rollouts can suffer from the
trajectory-level contamination predicted by our theory, and we leave the consideration of more nuanced noise models to future work.

For \textsf{LogLossBC}, training trajectories are rendered autoregressively from this noisy teacher law. Thus, in the offline setting, an example is fully informative only if no visible semantic corruption occurs. Under the idealized independent-trigger calculation, the probability of this event is
$\left(1-\eta+\nicefrac{\eta}{p}\right)^m \approx 2.9 \times 10^{-3}$ in our setting. 

For the online methods, the poisoned flag is inferred from the student-generated prefix: if a previous running-sum token in the student prefix differs from the clean target, then later teacher feedback is uniform over residues. Consequently, $\eta=0$ has different semantics for offline and online runs under this law: offline rendering is clean at $\eta=0$, while online feedback can still become uninformative after a student prefix error.

Online methods are trained for one pass over the fixed $3$ million-prompt subset, giving $3{,}000{,}000/64=46{,}875$ iterations. \textsf{LogLossBC} is also trained for one pass over the corresponding rendered $3$ million-example dataset. All methods are evaluated on the same clean validation prompt bank.

\subsubsection{Mathematical Reasoning: GSM-8K} \label{app:math_task_details}

We evaluate on the GSM-8K test set \citep{cobbe2021training} using zero-shot greedy decoding with max generation length 1024 and seed 42. Each problem is placed in the \textsc{Gemma3} chat template with the following prefix.
\begin{quote}
\small
\begin{verbatim}
Please reason step by step, and put your final answer within \boxed{}.
\end{verbatim}
\end{quote}

For example, an evaluation prompt takes the form
\begin{quote}
\small
\begin{verbatim}
<bos><start_of_turn>user
Please reason step by step, and put your final answer within \boxed{}.

Janet's ducks lay 16 eggs per day. She eats three for breakfast every morning
and bakes muffins for her friends every day with four. She sells the remainder
at the farmers' market daily for $2 per fresh duck egg. How much in dollars
does she make every day at the farmers' market?<end_of_turn>
<start_of_turn>model
\end{verbatim}
\end{quote}
We extract the final answer in the \verb|\boxed{}| from responses and compute exact-match accuracy after standard answer normalization. The student model, \textsc{Gemma3-270M-IT}, achieves 9.09\% accuracy.

\paragraph{Training data.}
We construct the training prompt set from TinyGSM \cite{liu2023tinygsm}. Specifically, we first filter out examples whose code answers contain more than 1024 characters, and then remove examples whose code answers are not executable. This filtering yields approximately 11M examples. We then sample a 400K-example subset that is used for all training runs.

\paragraph{Training details.}
For the \textsf{LogLossBC} baseline, we generate fixed teacher rollouts on the 400K TinyGSM prompts using \gemmab. We generate clean rollouts with temperature $1$ in the low-noise setting and noisy rollouts with temperature $4$ in the high-noise setting, both using random seed $42$ and a maximum generation length of $1024$ new tokens. We measure the accuracy of these clean and noisy teacher rollouts on the 400K TinyGSM subset by executing the reference code answers and treating the execution outputs as ground truth. The clean teacher obtains 53.64\% accuracy on the training subset, while the noisy teacher obtains 0\% accuracy. We then supervised fine-tune the \gemmam\ student separately on the clean and noisy teacher rollouts.

For the online reverse-KL variants, \opdr and \nailr, we follow the recipe of \citet{lu2025onpolicydistillation}. We generate the student's next token using temperature-$1$ sampling for \opdr and greedy decoding for \nailr. Along these student-generated trajectories, we compute the student and teacher log probabilities conditioned on the same prefixes, using either the clean temperature-$1$ teacher or the noisy temperature-$4$ teacher. The student is then updated using the corresponding importance-sampling loss.

For the forward-KL variants, \opdf and \nailf, we instead query the teacher locally on learner-visited prefixes. At each such prefix, we sample the teacher's next token, using the temperature-$1$ teacher in the low-noise setting and the temperature-$4$ teacher in the high-noise setting, and treat the sampled token as a hard next-token label. The student is then updated using the standard next-token log-loss. Subsequent learner prefixes are generated with temperature-$1$ sampling for \opdf and greedy decoding for \nailf.

All methods are trained on the same 400K TinyGSM prompts with a maximum generation length of 1024 new tokens. For the online methods, we follow the recommended recipe from \citet{lu2025onpolicydistillation}. For the offline method, we tune the learning rate slightly in preliminary experiments, while keeping all other hyperparameters fixed.
Specifically, unless otherwise stated, all methods are trained for one epoch using \textsf{AdamW} with default parameters, learning rate 1e-4, linear warmup followed by cosine decay, batch size 64, gradient clipping at 1, and bf16 precision. We use LoRA with rank 128 and $\alpha=256$, applied to all modules. For each method, we run three random seeds, 42, 43, and 44.

\subsection{Further Discussion of Empirical Results in \Cref{sec:experiments}}
\label{app:add_experimental_discussion}
All configurations are run with three random seeds. In the plots, each curve shows the mean across seeds, and the shaded region corresponds to one standard deviation. We note that some shaded regions are not visible as the runs are nearly identical across seeds. In particular, for Modular Addition, the maximum standard deviation is less than $10^{-2}$ for \textsf{LogLossBC} and \opdf in the low-noise setting, and for \opdf, \opdr, and \textsf{LogLossBC} in the high-noise setting (as the latter all fail to learn).

For clarity of presentation, the main text shows accuracy only over the first $1$M expert trajectories for Modular Addition, where the relevant separations between methods are most visible. In \Cref{fig:modadd_val_loss}, we provide the corresponding validation-loss curves over the full $3$M-trajectory training horizon. These longer-horizon curves confirm the same qualitative picture: in the low noise setting, the NAIL variants rapidly drive validation loss close to zero, while \opdf plateaus early; in the high noise setting, the gap becomes more pronounced, with
\nailf and \nailr remaining substantially more stable than the offline and OPD baselines.

\begin{figure}[t]
    \centering
    \includegraphics[width=1\linewidth]{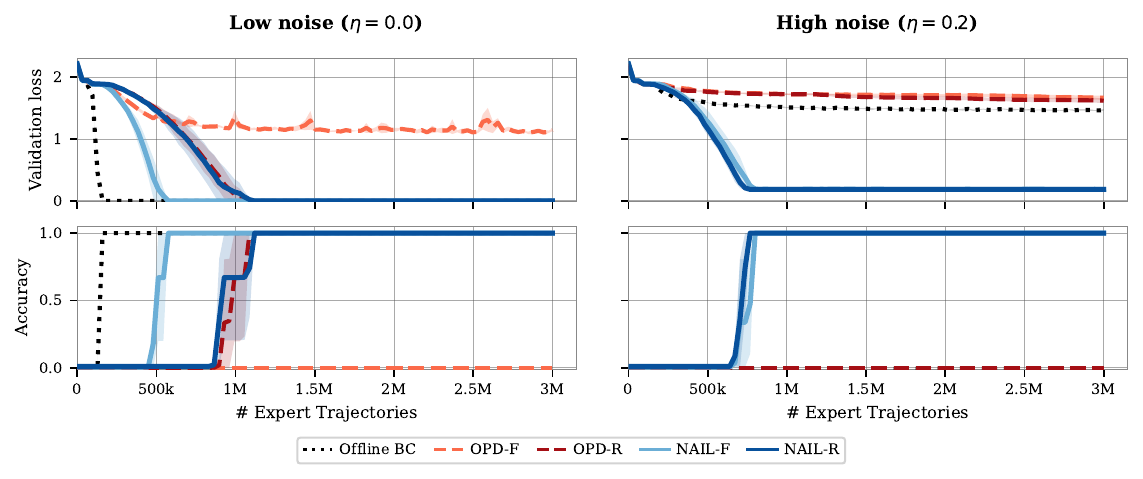}
    \caption{Full modular-addition results over $3$M expert trajectories. \textbf{Top}: validation loss; \textbf{Bottom}: accuracy. Curves show the mean over three random seeds, with shaded regions indicating one standard deviation. In the low-noise setting $(\eta=0)$, the NAIL variants drive validation loss to zero and reach perfect accuracy, while \opdf plateaus. In the high-noise setting $(\eta=0.2)$, the separation is more pronounced: \nailf and \nailr remain stable and reach perfect accuracy, whereas offline \textsf{LogLossBC} and the OPD baselines fail to solve the task.}
    \label{fig:modadd_val_loss}
\end{figure}

To expand on the discussion in \Cref{sec:experiments}, \Cref{fig:modadd_val_loss} shows a sharp contrast between the clean and noisy regimes for the modular addition task. When $\eta=0.2$, \algllbc\ fails because an early corrupted CoT token makes the remaining suffix random, so most offline trajectories contain little usable signal. In contrast, our \nailf\ and \nailr\ reach perfect accuracy, consistent with the idea that querying on student-induced prefixes avoids imitating fully corrupted trajectories. The failure of sampled-rollout OPD further suggests that controlling the rollout distribution, here via greedy prefixes, is important for keeping teacher feedback informative. 

When $\eta = 0.0$, since offline traces are clean, \algllbc{} learns efficiently, as predicted by \citet{foster2024behavior}. Among online methods, $\nailf$ learns much faster than $\opdf$, suggesting that greedy rollouts help suppress sampling-induced noise that hinders access to uncorrupted teacher feedback. The gap between $\nailf$ and the reverse-KL methods here points to the fact that teacher-sampled cross-entropy gives a direct positive signal for the correct next token, whereas reverse KL only scores student-sampled tokens and is therefore less directly corrective. Finally, when there is no expert noise, greedy rollouts offer little benefit for \nailr\ over \opdr\ because both methods update using student-sampled tokens. Critically, \opdf\ flatlines near chance, as it asks for expert feedback on student prefixes corrupted by the student's own sampling, while \opdr\ scores the student's own sampled tokens, thus providing a more direct measure of the student's performance even on noisy trajectories.

For GSM-8K, noisy expert ($t=4$) rollouts often become globally uninformative, containing degenerate text such as random Unicode tokens, and therefore provide little usable supervision. As a result, offline \algllbc{} performs even worse than the vanilla student baseline of 9.09\%. The failure of sampled-rollout OPD suggests that adaptivity alone is not sufficient under this level of expert noise. Specifically, when rollouts are sampled from a weak student, the resulting prefixes can be noisy or off-distribution, and querying a temperature-$4$ expert on these prefixes often yields feedback that remains uninformative. In contrast, \nailf{} and \nailr{} use greedy student prefixes, which better control the query prefix distribution and make the expert feedback more reliable. Although the noisy expert limits final performance, both \nailf{} and \nailr{} continue to improve with more expert trajectories and surpass the vanilla student baseline, showing that they can still extract useful learning signal from high-temperature expert.

In the low-noise regime ($t=1$), by contrast, all online methods substantially outperform offline \algllbc{}, indicating that online interaction is especially beneficial in this likely misspecified setting. Among the online methods, the forward-KL variants perform best: both \opdf{} and \nailf{} reach roughly 40\% accuracy, while the reverse-KL variants improve more gradually and plateau slightly lower. This gap reflects an objective-level effect also seen in Modular Addition. Unlike in clean Modular Addition, however, \opdf{} remains competitive on GSM-8K, suggesting that prefixes sampled from the temperature-$1$ \textsc{Gemma3-270M-IT} student are sufficiently reliable in this setting.

\subsection{Further Experiments}
\vscomment{@Peihan: could we run further ablations for GSM8K?}
\label{app:add_experiments_and_ablations}

We include two additional ablations on the modular addition task to better understand which parts of the online objective matter. 
In \Cref{app:beta_interpolation}, we fix greedy student rollouts and interpolate between the forward- and reverse-KL augmented-trajectory losses, testing whether performance depends on a single KL direction. 

\subsubsection{Ablating student rollout temperature for Modular Addition}
\label{app:student_temp}
We next ablate the temperature used when sampling student rollouts. In the main experiments, the student prefixes are generated \emph{greedily}, i.e. temperature $t=0$. Here, we instead sample from the student policy with various temperatures $t \in \{0.1, 0.3, 0.5\}$. We evaluate both \nailf and \nailr on Modular Addition under the same low- and high-noise settings as in the main body. All variants are trained using the same recipe as in \Cref{app:add_experimental_details}; for readability, \Cref{fig:temp_ablation_modadd} shows only the first $1.25$M expert-query trajectories.

\begin{figure}[t]
    \centering
    \includegraphics[width=1\linewidth]{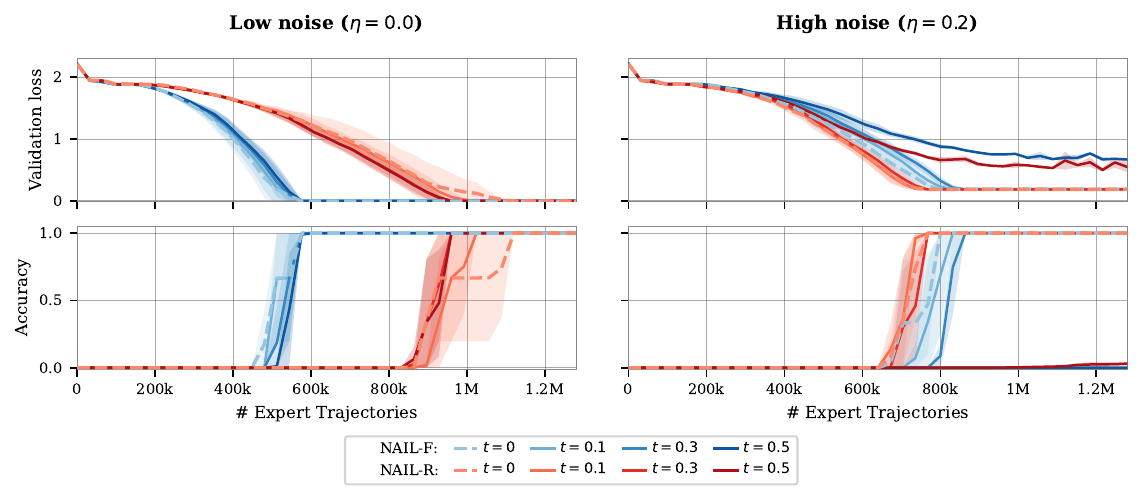}
    \caption{Ablation of student rollout temperature for \nailf and \nailr on Modular Addition. The parameter $t$ controls the student sampling temperature used during training rollout. Curves show the mean over three random seeds, with shaded regions indicating one standard deviation. \textbf{Left:} low-noise setting, \nailf learns faster and is relatively insensitive to temperature, while \nailr is substantially slower. \textbf{Right:} high-noise setting, moderate student temperatures improve robustness for both objectives, with NAIL-F solving the task across temperatures and NAIL-R degrading at larger $t$.}
    \label{fig:temp_ablation_modadd}
\end{figure}

In the low-noise setting $(\eta=0)$, \nailf solves the task at roughly the same sample complexity across rollout temperatures, indicating that forward-KL training is fairly robust to this choice. In contrast, \nailr remains slower across temperatures, consistent with the main results in \Cref{fig:main_modadd_gsm8k}. In the high-noise setting $(\eta=0.2)$, small to moderate rollout temperatures can still solve the task, but large temperature substantially hurts both \nailf and \nailr. This suggests that some stochasticity in student rollouts is tolerable, but excessive exploration can produce prefixes that are too noisy or off-distribution for effective expert querying.

\subsubsection{Ablating Horizon and Noise Level for Modular Addition}
\label{app:modadd_horizon_noise_ablation}

We further ablate the dependence of the modular-addition results on the chain-of-thought length
and corruption rate. We consider
\begin{align}
m \in \{3,9,15,23,31\},
\qquad
\eta \in \{0,0.05,0.1,0.15,0.2\},
\end{align}
with $p=7$, using the same absorbing random-suffix corruption law as in
\Cref{app:synthetic_task_details}. Recall that once a sampled intermediate token differs from the clean
running-sum token, the teacher enters a poisoned state in which subsequent semantic feedback is
uninformative about the clean computation. Thus the effective difficulty is governed by both $m$
and $\eta$: for offline rollouts, the probability that a trajectory remains unpoisoned decreases
approximately as $(1-\nicefrac{6\eta}{7})^m$.

\begin{figure}[t]
    \centering
    \includegraphics[width=1\linewidth]{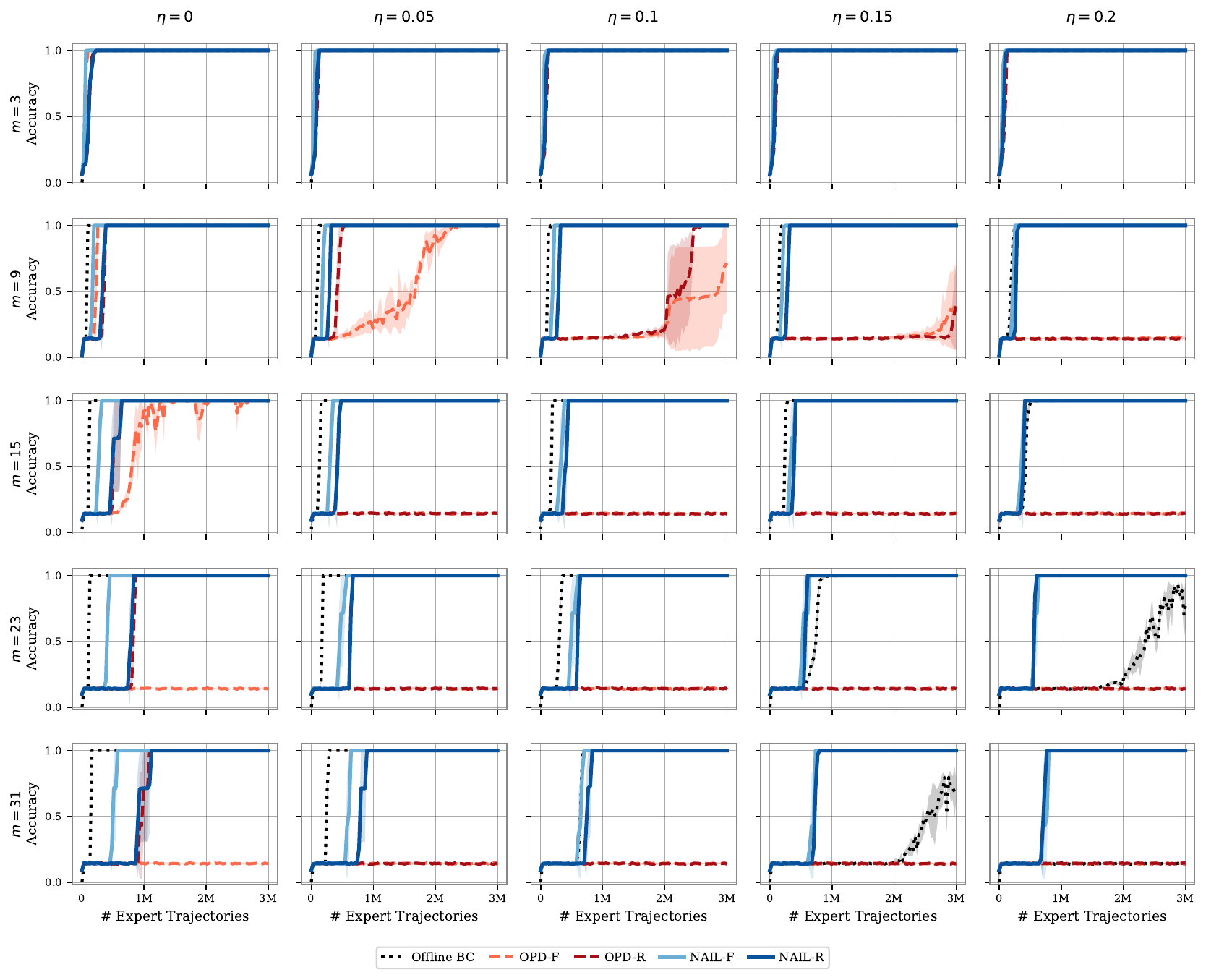}
    \caption{Ablation of CoT length ($m$) and corruption rate ($\eta$) on Modular Addition. Curves show the mean over three random seeds, with shaded regions indicating one standard deviation. 
    }
    \label{fig:modadd_horizon_noise_ablation}
\end{figure}
The results are shown in \Cref{fig:modadd_horizon_noise_ablation}. Across essentially all horizons and noise levels, the greedy-prefix NAIL variants are stable. By contrast, the sampled-prefix OPD variants are substantially less robust. Our ablations interpolate between the two extremes previously presented in \Cref{fig:modadd_val_loss}, and our findings adhere to the intuition detailed in \Cref{app:add_experimental_discussion}.

Offline behavior cloning exhibits a more nuanced pattern. As expected, it performs well in the clean
setting and degrades when noisy trajectories are long enough that most offline demonstrations become
poisoned. However, in some intermediate regimes, adding a small amount of noise can improve or
accelerate offline BC relative to the clean run. This is not entirely surprising: prior work has observed
that injecting noise into behavior-cloning data can improve robustness by exposing the learner to
off-nominal states \citep{laskey2017dart}. In our setting this effect is incidental rather than algorithmically imposed, since the same corruption can also destroy the semantic content of the trajectory. The ablation therefore highlights two competing effects: mild noise can act as a form of data augmentation, while larger $\eta, m$ leads to semantic poisoning and loss of clean supervision.

\subsubsection{Interpolating between \nailf and \nailr}
\label{app:beta_interpolation}
To further probe the role of the KL direction, we fix greedy student rollouts and interpolate between the forward- and reverse-KL losses. For $\beta \in [0,1]$, we minimize
\begin{align}
    J_\beta(\pi_\theta) = (1 - \beta)\cdot \kld{\pp^{\bar\pi_\theta, \pistar_\eta}}{\pp^{\bar\pi_\theta, \pi_{\theta}}} + \beta  \cdot\kld{\pp^{\bar\pi_\theta, \pi_{\theta}}}{\pp^{\bar\pi_\theta, \pistar_\eta}}.
\end{align}
Thus $\beta=0$ recovers \nailf{}, while $\beta=1$ recovers \nailr{}.

\paragraph{Modular Addition.}
We train each variant for $3$M expert-query trajectories using the same recipe as in \Cref{app:add_experimental_details}; for readability, \Cref{fig:beta_interpolation_modadd} shows only the first $1.25$M trajectories. All interpolated variants eventually reach perfect accuracy in both the low-noise and high-noise regimes, but their learning dynamics differ.

\begin{figure}[t]
    \centering
    \includegraphics[width=1\linewidth]{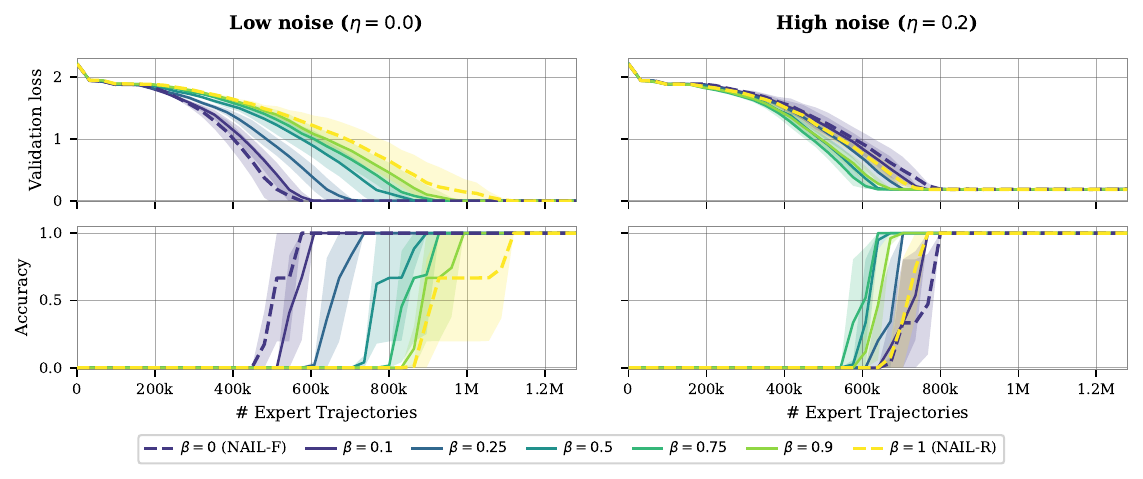}
    \caption{Interpolation between \nailf and \nailr on Modular Addition. The parameter $\beta$ interpolates between the forward-KL $(\beta=0)$ and the reverse-KL $(\beta=1)$ losses. Curves show the mean over three random seeds, with shaded regions indicating one standard deviation. All interpolated variants eventually solve the task in both noise regimes, but the learning speed depends strongly on $\beta$. 
    \textbf{Left:} in the low-noise setting, forward-KL-heavy objectives learn fastest. \textbf{Right:} in the high-noise setting, intermediate and reverse-KL-heavy objectives are competitive, suggesting that (i) the robustness of NAIL is primarily driven by querying the expert on learner-induced prefixes rather than by a single KL direction, and (ii) the best KL direction is task- and noise-dependent.}
    \label{fig:beta_interpolation_modadd}
\end{figure}

When $\eta=0$, performance changes smoothly from that of \nailf{} to \nailr{} as $\beta$ increases, matching the behavior in \Cref{fig:main_modadd_gsm8k}: forward-KL-heavy objectives learn fastest. When $\eta=0.2$, however, intermediate and reverse-KL-heavy mixtures learn slightly faster than either endpoint, with larger values of $\beta$ reaching perfect accuracy roughly $100$K trajectories earlier. These results suggest that the optimal KL mixture is task- and noise-dependent. A more systematic study of when to prefer forward KL, reverse KL, or mixtures of the two is an interesting direction for future work.

\paragraph{GSM-8K.} We train each variant on TinyGSM using the same recipe as in \Cref{app:add_experimental_details}. \Cref{fig:beta_interpolation_gsm8k} shows that the effect of $\beta$ is smoother on GSM-8K than on Modular Addition. In the low-noise regime ($t=1$), forward-KL-heavy objectives achieve the best performance: \nailf{} and small values of $\beta$ reach the highest final accuracy, while reverse-KL-heavy mixtures improve more slowly and plateau lower. In the high-noise regime ($t=4$), the curves are closer together, and no single endpoint dominates. Intermediate mixtures are competitive with, and in some cases slightly better than the two endpoints \nailf{} and \nailr{}, consistent with the high-noise trend observed in Modular Addition.

\begin{figure}[t]
    \centering
    \includegraphics[width=1\linewidth]{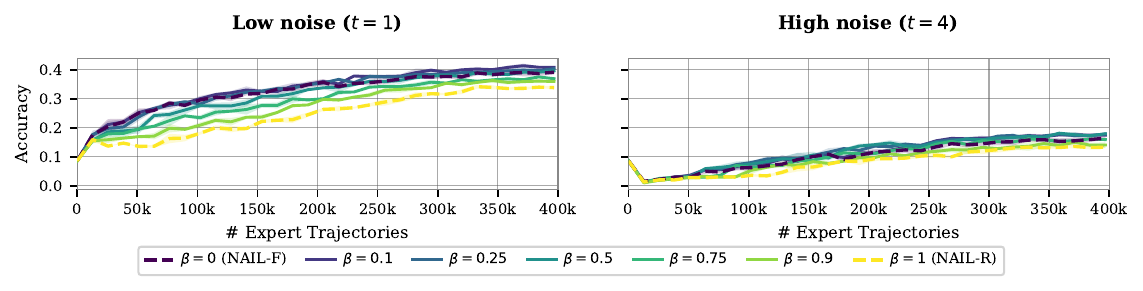}
    \caption{Interpolation between \nailf and \nailr on GSM-8K. The parameter $\beta$ interpolates between the forward-KL $(\beta=0)$ and the reverse-KL $(\beta=1)$ losses. Curves show the mean over three random seeds, with shaded regions indicating one standard deviation. 
    \textbf{Left:} in the low-noise setting $(t=1)$, forward-KL-heavy objectives learn fastest and reach the highest accuracy. 
    \textbf{Right:} in the high-noise setting $(t=4)$, all mixtures improve gradually and remain relatively close, with intermediate values of $\beta$ slightly outperforming the endpoints.
    }
    \label{fig:beta_interpolation_gsm8k}
\end{figure}

%% file: body_clean/app_technical_prelims.tex
\section{Technical Tools}\label{app:prelims}

In this section, we recall some technical tools that are used throughout the proofs. We begin in \Cref{app:info_theory}, where we recall some basic definitions and properties of KL divergence and Hellinger distance. We proceed in \Cref{app:mle} by stating some classical results on the performance of maximum likelihood estimators. In \Cref{app:il_prelims}, we recall some key results from the theory of imitation learning that relate regret to Hellinger distance between trajectory distributions.  Finally, in \Cref{app:online_learning} we recall some standard results from online learning that are used in the analysis of our online algorithms.

\subsection{Information Theory}\label{app:info_theory}

In this section we recall some basic results from information theory that are used throughout the paper.  For a more complete introduction to the topic, see \citet{polyanskiy2025information}.

We first recall the definitions of KL divergence.
\begin{definition}\label{def:kl_divergence_app}
    Let $P, Q$ be two distributions over the same space $\cX$.  The KL divergence between $P$ and $Q$ is defined as
    \begin{align}
        \kld{P}{Q} = \ee_{X \sim P} \left[ \log \frac{dP}{dQ}(X) \right]
    \end{align}
    with $\kld{P}{Q} = \infty$ if $P$ is not absolutely continuous with respect to $Q$.
\end{definition}
We use the following classical properties of KL divergence repeatedly throughout the paper; see \citet{polyanskiy2025information} for details.
\begin{proposition}\label{prop:kl_properties}
    Let $P, Q$ be distributions over the same space $\cX$.  Then it holds that $\kld{P}{Q} \geq 0$ with equality if and only if $P = Q$.  Moreover, $(P, Q) \mapsto \kld{P}{Q}$ is jointly convex in its arguments, i.e. for any $\lambda \in [0,1]$ and any distributions $P_1, P_2, Q_1, Q_2$,
    \begin{align}
        \kld{\lambda P_1 + (1 - \lambda) P_2}{\lambda Q_1 + (1 - \lambda) Q_2} \leq \lambda \cdot \kld{P_1}{Q_1} + (1 - \lambda) \cdot \kld{P_2}{Q_2}.
    \end{align}
    Furthermore, if $P, Q \in \Delta(\cX_1 \times \cdots \cX_n)$, then the KL divergence satisfies a chain rule: if $P_{X_1, \ldots, X_n}$ and $Q_{X_1, \ldots, X_n}$ are the distributions of $(X_1, \ldots, X_n)$ under $P$ and $Q$ respectively, then
    \begin{align}
        \kld{P_{X_1, \ldots, X_n}}{Q_{X_1, \ldots, X_n}} = \sum_{i = 1}^n \ee_{X_1, \ldots, X_{i - 1} \sim P}\left[ \kld{P_{X_i | X_1, \ldots, X_{i - 1}}}{Q_{X_i | X_1, \ldots, X_{i - 1}}} \right].
    \end{align}
\end{proposition}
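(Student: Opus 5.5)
We prove the three claims separately. Throughout, we fix a $\sigma$-finite measure $\mu$ dominating both $P$ and $Q$ (e.g. $\mu = P + Q$) and write $p = \nicefrac{dP}{d\mu}$ and $q = \nicefrac{dQ}{d\mu}$. If $P$ is not absolutely continuous with respect to $Q$, then $\kld{P}{Q} = \infty$. Nonnegativity is then trivial and $P \neq Q$. Joint convexity also holds whenever the right-hand side is infinite, so we may assume throughout that it is finite; in that case $P_i \ll Q_i$ for each $i$ with $\lambda_i > 0$, and hence the mixtures satisfy absolute continuity as well.

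\emph{Nonnegativity.} Assume $P \ll Q$. Since $-\log$ is strictly convex, Jensen's inequality gives
\begin{align}
    \kld{P}{Q} = \ee_{X \sim P}\left[ -\log \tfrac{q}{p}(X) \right] \geq -\log \ee_{X \sim P}\left[ \tfrac{q}{p}(X) \right] = -\log Q(\{p > 0\}) \geq 0.
\end{align}
Equality in the first step forces $\nicefrac{q}{p}$ to be $P$-a.s. constant. Equality in the last step forces $Q(\{p>0\}) = 1$. Together these give $q = p$ $\mu$-a.e., i.e. $P = Q$. The converse direction is immediate.

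\emph{Joint convexity.} The function $f(a,b) = a \log(\nicefrac{a}{b})$ on $[0,\infty)^2$ is the perspective of the convex function $-\log$, with the conventions $0\log(0/b)=0$ and $a\log(a/0)=\infty$ for $a>0$. It is therefore jointly convex; equivalently, this is the log-sum inequality. Applying this pointwise to $(p_1(x), q_1(x))$ and $(p_2(x), q_2(x))$ with weights $\lambda$ and $1-\lambda$, and then integrating against $\mu$, yields the claim, since $\kld{P}{Q} = \int f(p,q)\, d\mu$. To justify integrating, we check that the negative parts are integrable. Using $\log t \geq 1 - \nicefrac{1}{t}$ we get $f(a,b) \geq a - b$, and $a - b$ is integrable. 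Hence the integrals are well defined in $(-\infty,\infty]$ and linearity applies.

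\emph{Chain rule.} We work on standard Borel spaces, so regular conditional distributions exist. Assuming $P \ll Q$, the density ratio factorizes as
\begin{align}
    \frac{dP}{dQ}(x_{1:n}) = \prod_{i=1}^n \frac{dP_{X_i \mid X_{<i} = x_{<i}}}{dQ_{X_i \mid X_{<i} = x_{<i}}}(x_i)
\end{align}
$P$-a.s. (including the case $i=1$ with the marginals). This is established by induction on $n$, by disintegrating the joint density. Taking logarithms turns the product into a sum. Applying the tower property to the $i$-th term, conditioning on $X_{<i}$, produces $\ee\left[\kldinline{P_{X_i|X_{<i}}}{Q_{X_i|X_{<i}}}\right]$.

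In the reverse direction, if $P \not\ll Q$, then some conditional fails absolute continuity on a set of positive $P$-probability, so both sides equal $\infty$.

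The main obstacle is the integrability bookkeeping in the chain rule: splitting $\ee[\sum_i \log(\cdot)]$ into $\sum_i \ee[\log(\cdot)]$ requires that no $\infty - \infty$ arises. We would handle this exactly as in the convexity step. Each summand is bounded below by an integrable quantity via $\log t \geq 1 - \nicefrac{1}{t}$ applied to the conditional ratio, whose conditional expectation under $P$ is at most $0$. This makes all terms quasi-integrable, so the monotone convergence and tower-property manipulations are valid.
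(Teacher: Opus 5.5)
Your proposal is correct. The paper does not prove this proposition: it calls the three facts classical and defers to \citet{polyanskiy2025information}. So the only comparison is with the standard textbook arguments. Your nonnegativity step reproduces them: Jensen applied to $-\log$, with the equality case obtained from strict convexity together with $Q(\{p>0\})=1$. So does your chain-rule step: factorize $\nicefrac{dP}{dQ}$ into conditional density ratios, then apply the tower property.

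For joint convexity you take the pointwise route: the log-sum inequality, i.e.\ convexity of the perspective $(a,b)\mapsto a\log(\nicefrac{a}{b})$, integrated against a common dominating measure. Another standard route reuses the other two parts of the proposition. Adjoin a label $\theta\sim\Bernoulli(\lambda)$ with the same law under both joint measures, expand $\kldinline{P_{\theta X}}{Q_{\theta X}}$ by the chain rule in both orders, and discard the nonnegative term $\ee[\kldinline{P_{\theta\mid X}}{Q_{\theta\mid X}}]$. Your route is independent of the chain rule and reduces everything to a scalar inequality plus an integrability check. The label route needs no new inequality, but it makes convexity depend on the chain rule, including its infinite cases.

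There are two cosmetic fixes. First, in the convexity step $\mu$ must dominate all four of $P_1,P_2,Q_1,Q_2$ (e.g.\ their sum), not just one pair. Second, in your last paragraph, write $r_i$ for the $i$-th conditional density ratio evaluated at $X_i$. The lower bound $1-\nicefrac{1}{r_i}$ then has conditional expectation $1-Q_{X_i\mid X_{<i}}(\{r_i>0\})\geq 0$, not $\leq 0$. What you actually need, and have, is $\ee_P[\nicefrac{1}{r_i}]\leq 1$, which makes the negative part of $\log r_i$ integrable.
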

While KL divergence is a fundamental notion of distance between distributions, it is infinite when the two distributions are not absolutely continuous with respect to each other, which can be problematic in many settings in IL.  We thus also consider the \emph{Hellinger distance}, defined as follows.
\begin{definition}\label{def:hellinger_distance_app}
    Let $P, Q$ be two distributions over the same space $\cX$.  The Hellinger distance between $P$ and $Q$ is defined as
    \begin{align}
        \dhel{P}{Q} &= 1 - \ee_{X \sim P} \left[ \sqrt{\frac{dQ}{dP}(X)} \right] \\
        &= 1 - \ee_{X \sim Q} \left[ \sqrt{\frac{dP}{dQ}(X)} \right] \\
        &= \frac{1}{2} \cdot \int_{\cX} \left( \sqrt{\frac{dP}{d\mu}(x)} - \sqrt{\frac{dQ}{d\mu}(x)} \right)^2 d\mu(x),
    \end{align}
    where $\mu$ is any measure such that $P$ and $Q$ are absolutely continuous with respect to $\mu$, e.g. $\mu = \nicefrac{(P+Q)}{2}$.
\end{definition}
While Hellinger distance is also nonnegative and equal to zero if and only if the two distributions are equal, it is a weaker notion of distance than KL divergence, as it is always bounded by $1$.  In addition, it satisfies a Pinsker-type inequality that relates it to KL divergence.
\begin{proposition}[Pinsker's inequality]\label{prop:pinsker}
    Let $P, Q$ be two distributions over the same space $\cX$.  Then,
    \begin{align}
        \dhel{P}{Q} \leq \kld{P}{Q}.
    \end{align}
\end{proposition}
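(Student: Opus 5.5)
The plan is a single Jensen step applied to the Bhattacharyya coefficient $\int \sqrt{pq}\,d\mu$, followed by the elementary inequality $-\log(1-x) \geq x$. The infinite case is disposed of first. If $P$ is not absolutely continuous with respect to $Q$, then $\kld{P}{Q} = \infty$ by \Cref{def:kl_divergence_app}, while $\dhel{P}{Q} \leq 1$, so the inequality holds trivially. We may therefore assume $P \ll Q$.

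Fix a dominating measure $\mu$, for instance $\mu = (P+Q)/2$, with densities $p = dP/d\mu$ and $q = dQ/d\mu$. Expanding the square in \Cref{def:hellinger_distance_app} gives
\begin{align}
\dhel{P}{Q} = 1 - \int \sqrt{pq}\, d\mu = 1 - \ee_{X\sim P}\left[\sqrt{\tfrac{q}{p}(X)}\right],
\end{align}
where the last expectation only involves the set $\{p>0\}$, on which the ratio is well defined. Writing $\log\frac{p}{q} = -2\log\sqrt{q/p}$ and applying Jensen's inequality to the concave function $\log$ yields
\begin{align}
\kld{P}{Q} = -2\,\ee_{P}\left[\log\sqrt{q/p}\right] \geq -2\log \ee_P\left[\sqrt{q/p}\right] = -2\log\bigl(1-\dhel{P}{Q}\bigr).
\end{align}

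Since $-\log(1-x) \geq x$ for $x\in[0,1)$, this gives $\kld{P}{Q} \geq 2\,\dhel{P}{Q} \geq \dhel{P}{Q}$. If $\dhel{P}{Q} = 1$, the right-hand side of the Jensen bound is $+\infty$, so $\kld{P}{Q}=\infty$ and the claim again holds.

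The only delicate point is the measure-theoretic bookkeeping. Jensen's inequality applies only where the integrand is integrable. Here $\log(q/p)$ is bounded above in $P$-expectation by $\log$ of a finite mean, so $\ee_P[\log(q/p)]$ is well defined in $[-\infty,\infty)$. Consequently $\kld{P}{Q}$ is well defined in $(-\infty,\infty]$, and the inequality holds in the extended reals.
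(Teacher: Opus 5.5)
Your proof is correct. The paper gives no proof of its own and simply quotes this as a classical fact from \citet{polyanskiy2025information}; the standard argument there is exactly your Jensen step on the Bhattacharyya coefficient $\int\sqrt{pq}\,d\mu = 1-\dhel{P}{Q}$, and it actually yields the stronger bound $\kld{P}{Q}\ge -2\log(1-\dhel{P}{Q})\ge 2\,\dhel{P}{Q}$.

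One small fix: the well-definedness remark in your last paragraph is slightly circular as phrased, since it uses Jensen to justify the expectation that Jensen needs. It follows directly instead from $\log^+ x\le x$ together with $\ee_P[q/p]=\int_{\{p>0\}} q\,d\mu\le 1$.
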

The Hellinger distance is also intimately related to the total variation distance $\tvd{P}{Q} = \sup_{A \subseteq \cX} |P(A) - Q(A)|$ up to a quadratic factor.
\begin{proposition}\label{prop:hellinger_tv_relation}
    Let $P, Q$ be two distributions over the same space $\cX$.  Then,
    \begin{align}
        \dhel{P}{Q} \leq \tvd{P}{Q} \leq \sqrt{2 \cdot \dhel{P}{Q}}.
    \end{align}
\end{proposition}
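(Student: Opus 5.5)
We prove the two inequalities separately, using the pointwise characterizations of Hellinger distance in \Cref{def:hellinger_distance_app} and total variation via densities. Fix a dominating measure $\mu$ (e.g.\ $\mu = \nicefrac{(P+Q)}{2}$) and write $p = \nicefrac{dP}{d\mu}$, $q = \nicefrac{dQ}{d\mu}$. Recall the standard identity $\tvd{P}{Q} = \nicefrac12 \int |p - q|\, d\mu$, which follows by taking $A = \{p > q\}$ as the maximizing set.

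\emph{Lower bound.} We use the elementary inequality $(\sqrt{p} - \sqrt{q})^2 \le |\sqrt p - \sqrt q|\,(\sqrt p + \sqrt q) = |p - q|$, valid pointwise since both square roots are nonnegative. Integrating against $\mu$ and multiplying by $\nicefrac12$ gives
\begin{align}
\dhel{P}{Q} = \tfrac12 \int (\sqrt p - \sqrt q)^2 \, d\mu \le \tfrac12 \int |p-q|\, d\mu = \tvd{P}{Q}.
\end{align}

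\emph{Upper bound.} We factor $|p - q| = |\sqrt p - \sqrt q|\,(\sqrt p + \sqrt q)$ and apply Cauchy--Schwarz:
\begin{align}
\int |p - q|\, d\mu \le \left(\int (\sqrt p - \sqrt q)^2 d\mu\right)^{1/2} \left(\int (\sqrt p + \sqrt q)^2 d\mu\right)^{1/2}.
\end{align}
The first factor equals $\sqrt{2\,\dhel{P}{Q}}$. For the second, we expand $\int (\sqrt p + \sqrt q)^2 d\mu = 2 + 2\int \sqrt{pq}\, d\mu = 2 + 2(1 - \dhel{P}{Q}) \le 4$, using the first form in \Cref{def:hellinger_distance_app}. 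Hence $\int|p-q|\,d\mu \le 2\sqrt{2\,\dhel{P}{Q}}$, and dividing by two gives $\tvd{P}{Q} \le \sqrt{2\,\dhel{P}{Q}}$.

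The only points requiring care are the normalization conventions, namely the factor $\nicefrac12$ in both the Hellinger distance and the total variation, and the identity $\int \sqrt{pq}\,d\mu = 1 - \dhel{P}{Q}$. Once these are fixed, each step is a one-line pointwise or Cauchy--Schwarz estimate. In fact the argument gives the slightly sharper bound $\tvd{P}{Q} \le \sqrt{\dhel{P}{Q}(2 - \dhel{P}{Q})}$, which we do not need.
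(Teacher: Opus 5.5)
Your proof is correct. The pointwise bound $(\sqrt p - \sqrt q)^2 \le |p-q|$ gives the left inequality, and Cauchy--Schwarz on $|p-q| = |\sqrt p - \sqrt q|(\sqrt p + \sqrt q)$, together with $\int(\sqrt p + \sqrt q)^2\,d\mu = 4 - 2\dhel{P}{Q} \le 4$, gives the right one; both factor-$\nicefrac12$ normalizations are handled correctly.

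The paper gives no proof of this statement. It quotes it as a standard fact with a pointer to \citet{polyanskiy2025information}, and your argument is the textbook proof it implicitly relies on. One cosmetic remark: you get $\int\sqrt{pq}\,d\mu = 1 - \dhel{P}{Q}$ most cleanly by expanding the square in the density form of the definition, rather than invoking the likelihood-ratio form. That avoids any absolute-continuity bookkeeping.
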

Much like KL divergence, Hellinger is jointly convex in its arguments.  In contradistinction to KL divergence, however, Hellinger distance does not satisfy a chain rule, but it does satisfy two weaker properties that will be sufficient for our purposes.
\begin{proposition}\label{prop:hellinger_decomposition}
    Let $P, Q$ be two distributions over $\cX_1 \times \cdots \times \cX_n$.  Let $B_{n+1}(x_{1:n}) = 1$ and for all $i \leq n$, let for some common dominating measure $\mu$,
    \begin{align}
        B_i(x_{1:i-1}) = \int_{\cX_i} \sqrt{\frac{dP_{X_i | X_{1:i-1} = x_{1:i-1}}}{d\mu}(x_i) \cdot \frac{dQ_{X_i | X_{1:i-1} = x_{1:i-1}}}{d\mu}(x_i)} \cdot B_{i+1}(x_{1:i}) d \mu(x_i).
    \end{align}
    Then,
    \begin{align}
        \dhel{P}{Q} = 1 - B_1.
    \end{align}
    In particular, if $P, Q$ are product distributions, i.e. $P = P_1 \times \cdots P_n$ and $Q = Q_1 \times \cdots Q_n$, then
    \begin{align}
        \dhel{P}{Q} = 1 - \prod_{i = 1}^n (1 - \dhel{P_i}{Q_i}) \leq \sum_{i = 1}^n \dhel{P_i}{Q_i}.
    \end{align}
\end{proposition}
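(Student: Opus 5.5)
The plan is to work with the Bhattacharyya coefficient $\mathrm{BC}(P,Q) = \int \sqrt{\frac{dP}{d\mu}\frac{dQ}{d\mu}}\, d\mu$. By \Cref{def:hellinger_distance_app}, expanding the square gives $\dhel{P}{Q} = \frac12\left(1 + 1 - 2\,\mathrm{BC}(P,Q)\right) = 1 - \mathrm{BC}(P,Q)$. So it suffices to show that $\mathrm{BC}(P,Q) = B_1$.

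\textbf{Factorizing the densities.} I will take the dominating measure to be a product measure $\mu = \mu_1 \times \cdots \times \mu_n$, where each $\mu_i$ dominates the conditionals of both $P$ and $Q$ on $\cX_i$; for instance, $\mu_i$ can be built from the marginals of $(P+Q)/2$. I will assume the standard Borel setting, so that regular conditional distributions exist. This is automatic for the discrete state/action spaces we care about, and in general I would simply assume it. Disintegration then gives
\begin{align}
\frac{dP}{d\mu}(x_{1:n}) = \prod_{i=1}^n p_i(x_i \mid x_{1:i-1}), \qquad p_i(\cdot \mid x_{1:i-1}) = \frac{dP_{X_i \mid X_{1:i-1} = x_{1:i-1}}}{d\mu_i},
\end{align}
and the analogous factorization with $q_i$ holds for $Q$.

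\textbf{Peeling off coordinates.} The square root of a product is the product of square roots, so
\begin{align}
\mathrm{BC}(P,Q) = \int \prod_{i=1}^n \sqrt{p_i(x_i \mid x_{1:i-1})\, q_i(x_i \mid x_{1:i-1})}\; d\mu(x_{1:n}).
\end{align}
All integrands are nonnegative, so Tonelli lets me integrate iteratively from the last coordinate inward.
\begin{itemize}
\item Integrating out $x_n$ with $x_{1:n-1}$ fixed produces exactly $B_n(x_{1:n-1})$, using $B_{n+1} \equiv 1$.
\item By a downward induction on $i$, after integrating out $x_n, \dots, x_{i}$ the remaining integrand is $\prod_{j<i}\sqrt{p_j q_j}\cdot B_i(x_{1:i-1})$. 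This is precisely the recursive definition of $B_i$.
\end{itemize}
At $i = 1$ the induction yields $\mathrm{BC}(P,Q) = B_1$, and hence $\dhel{P}{Q} = 1 - B_1$.

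\textbf{Product case.} If $P$ and $Q$ are product distributions, the conditionals $p_i, q_i$ do not depend on $x_{1:i-1}$. The same downward induction then shows that each $B_i$ is a constant, with
\begin{align}
B_i = \mathrm{BC}(P_i,Q_i)\, B_{i+1} = \left(1 - \dhel{P_i}{Q_i}\right) B_{i+1}.
\end{align}
Unrolling gives $\dhel{P}{Q} = 1 - \prod_i \left(1 - \dhel{P_i}{Q_i}\right)$.

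For the final inequality, write $d_i = \dhel{P_i}{Q_i} \in [0,1]$ and induct on $n$. The case $n = 1$ is an equality. For the inductive step,
\begin{align}
1 - (1-d_n)\prod_{i<n}(1-d_i) = \left(1 - \prod_{i<n}(1-d_i)\right) + d_n \prod_{i<n}(1-d_i) \le \sum_{i<n} d_i + d_n,
\end{align}
where the bound uses $\prod_{i<n}(1-d_i) \le 1$ together with the inductive hypothesis.

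\textbf{Main obstacle.} The only nontrivial step is the measure-theoretic justification that a common product dominating measure exists and that the joint densities factor into conditional densities. In the discrete setting relevant to the paper this is immediate, with counting measure and densities given by probabilities. In general I would invoke disintegration on standard Borel spaces.
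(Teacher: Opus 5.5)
Your argument is correct and is the standard one. The paper states this proposition in its technical preliminaries without proof, so there is no competing route to compare. You write $\dhel{P}{Q} = 1 - \mathrm{BC}(P,Q)$, factor the joint density into conditional densities, and integrate out $x_n, \dots, x_1$ by Tonelli; this reproduces the recursion for $B_i$ exactly. Your handling of the product case and of $1 - \prod_i (1-d_i) \le \sum_i d_i$ is also fine.

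One side claim is false and should be removed. The marginals of $(P+Q)/2$ need not dominate the conditionals, and disintegration on standard Borel spaces does not by itself produce a product dominating measure. For example, take $X_1 \sim \Unif([0,1])$ and $X_2 = X_1$. The conditional law $\delta_{x_1}$ is singular with respect to the Lebesgue marginal of $X_2$. Moreover, the joint law lives on the diagonal and is not absolutely continuous with respect to $\mu_1 \times \mu_2$ for any $\sigma$-finite $\mu_1, \mu_2$.

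Your proof survives whenever the conditionals admit a dominating measure that does not depend on the prefix. This is one reading of the statement's hypothesis, and it is automatic in the discrete case. Use that measure as each $\mu_i$; the sequential construction of $P$ and $Q$ from their kernels then gives the product-of-conditional-densities factorization, once you choose jointly measurable versions of the $p_i, q_i$.

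If the dominating measure must depend on the prefix, drop the global product density. This is the form \Cref{lem:dh_recursion} needs on general state spaces, since it integrates $s_{h+1}$ against the prefix-dependent kernel $P_h(\cdot \mid s,a)$. The fix has three steps. First, at each prefix use the reference measure $\frac12(P_{X_i \mid x_{1:i-1}} + Q_{X_i \mid x_{1:i-1}})$. Second, note that the Bhattacharyya coefficient does not depend on the choice of dominating measure, and derive the one-step identity $\mathrm{BC}(P,Q) = \int \sqrt{p_1 q_1}\, \mathrm{BC}(P_{X_{2:n} \mid x_1}, Q_{X_{2:n} \mid x_1})\, d\mu_1$, with $\mu_1 = \frac12(P_{X_1} + Q_{X_1})$, from the disintegration of $\frac12(P+Q)$. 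Third, iterate.
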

As we will see, it is sometimes more convenient to work with the $B_i$ quantities defined above, which we refer to as the Hellinger \textit{affinities}. The second property is more similar in form to the chain rule for KL divergence, but it is an inequality rather than an equality.
\begin{proposition}[Lemma D.2 \citet{foster2024online}]\label{prop:hellinger_subadditive}
    Let $P, Q$ be two distributions over $\cX_1 \times \cdots \times \cX_n$.  Then,
    \begin{align}
        \dhel{P}{Q} \leq \sum_{h = 1}^H \ee_{P}\left[ \dhel{P_{X_h | X_{1:h-1}}}{Q_{X_h | X_{1:h-1}}} \right].
    \end{align}
\end{proposition}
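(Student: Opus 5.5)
The plan is a backward induction on the Hellinger affinities of \Cref{prop:hellinger_decomposition}. Write $p_i(\cdot \mid x_{1:i-1})$ and $q_i(\cdot \mid x_{1:i-1})$ for the conditional densities of $X_i$ under $P$ and $Q$. Let $b_i(x_{1:i-1}) = \int \sqrt{p_i q_i}\, d\mu = 1 - \dhel{P_{X_i \mid X_{1:i-1}}}{Q_{X_i \mid X_{1:i-1}}}$ be the one-step affinity. Let $B_i$ be the nested affinities of \Cref{prop:hellinger_decomposition}, so that $\dhel{P}{Q} = 1 - B_1$. The statement then reads $1 - B_1 \le \sum_i \ee_P[1 - b_i]$, and I would prove the stronger claim
\begin{align}
1 - B_i(x_{1:i-1}) \le \ee_P\Big[ \sum_{j \ge i} \big(1 - b_j(X_{1:j-1})\big) \,\Big|\, X_{1:i-1} = x_{1:i-1} \Big]
\end{align}
by downward induction on $i$. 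The base case $i = n+1$ is trivial.

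For the inductive step, I would use the identity
\begin{align}
1 - B_i = (1 - b_i) + \int \sqrt{p_i q_i}\,\big(1 - B_{i+1}\big)\, d\mu ,
\end{align}
which follows directly from the recursion defining $B_i$. The first term is exactly the $j = i$ summand. Plugging the induction hypothesis into the second term gives $\int \sqrt{p_i q_i}\, f\, d\mu$ with $f = \ee_P[\sum_{j > i}(1-b_j) \mid X_{1:i}] \ge 0$. To close the induction one would need $\int \sqrt{p_i q_i}\, f \, d\mu \le \int p_i f\, d\mu$, i.e.\ to trade the geometric-mean weight for the $P$-weight.

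This trade is the main obstacle, because $\sqrt{p_i q_i}$ exceeds $p_i$ wherever $q_i > p_i$. A concrete check shows it genuinely fails. Take $n = 2$, $P_1 = (1-\delta, \delta)$, $Q_1 = (0,1)$, and orthogonal second-step conditionals at $x_1 = 1$. Then $1 - B_1 = 1$, while the right-hand side equals $1 - \sqrt{\delta} + \delta < 1$. So the constant-free form as written cannot be proved by this route, and in fact this example seems to contradict it outright.

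What the argument does give cleanly is the following. Since $\sqrt{p_i q_i} \le (p_i + q_i)/2$, the same induction yields the bound with expectations under the mixture $(P+Q)/2$ in place of $P$. To obtain a $P$-only bound, I would follow the route of Lemma D.2 in \citet{foster2024online}. That route controls the $Q$-part of the mixture by a change of measure truncated at likelihood-ratio level $n$, and it pays an absolute constant and a $\log n$ factor. Consequently, when applying this proposition later in the paper, I would check whether the mixture-expectation version suffices, or whether the constant and $\log n$ must be carried through.
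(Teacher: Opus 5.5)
Your counterexample is correct, so the proposition as stated, with constant exactly $1$, is false. The paper gives no proof of its own, only the citation, and the multiplicative factor that any valid version must carry has been dropped. In your example $P$ and $Q$ have disjoint supports, so $\dhel{P}{Q}=1$. The right-hand side is $1-\sqrt\delta+\delta$, which equals $\nicefrac34$ at $\delta=\nicefrac14$, so the best constant is at least $\nicefrac43$.

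The example does not depend on how one picks $Q_{X_2\mid X_1=0}$ on a $Q$-null set. Take $Q_1=(\epsilon,1-\epsilon)$ with $Q_{X_2\mid X_1=0}=P_{X_2\mid X_1=0}$. Then the left side minus the right side equals $\sqrt{\delta(1-\epsilon)}-\delta>0$ whenever $\epsilon<1-\delta$.

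Unrolling your recursion gives an exact identity that makes the obstruction explicit:
\begin{align}
\dhel{P}{Q}=\sum_{k=1}^n\int \sqrt{p(x_{1:k-1})\,q(x_{1:k-1})}\;\dhel{P_{X_k\mid x_{1:k-1}}}{Q_{X_k\mid x_{1:k-1}}}\,d\mu(x_{1:k-1}),
\end{align}
where $p(x_{1:k-1})$ and $q(x_{1:k-1})$ are the prefix densities. Each one-step defect is weighted by the geometric mean of the two prefix marginals, not by the $P$-marginal. In your example that weight is $\sqrt\delta$ versus $\delta$ at $x_1=1$. Applying AM--GM to this prefix weight gives the $(P+Q)/2$ bound you state. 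Applying AM--GM to the one-step kernels inside the induction, as you describe, gives something slightly different: expectations under the law with kernels $(p_i+q_i)/2$, which is not $(P+Q)/2$.

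For the downstream uses, the mixture version does not suffice. \Cref{prop:kl_augmented_tight,prop:kl_hellinger_slow} need the expectation under the rollout policy $\pi'$ alone, since that is what the chain rule turns into $\kldinline{\pp^{\pi',\pi_\eta}}{\pp^{\pi',\pi'_\eta}}$.

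You also do not need to carry a $\log n$ factor. Take the paper's own \Cref{lem:sharp_clean_augmented_hellinger_comparison} and combine it with the pointwise bound $1-\prod_h(1-d_h)\le\sum_h d_h$. This gives directly
\begin{align}
\dhel{\pp^\pi}{\pp^{\pi'}}\le \tfrac92\,\ee^{\pi'}\Big[\textstyle\sum_h \dhel{\pi_h(\cdot\mid s_h)}{\pi'_h(\cdot\mid s_h)}\Big].
\end{align}
Encoding the prefix $x_{1:h-1}$ as the state and $x_h$ as the action, and using symmetry of Hellinger distance, gives the general statement with constant $\nicefrac92$. So only the explicit constants in those two propositions change. \Cref{cor:kl_augmented} and every result stated with $\lesssim$ are unaffected.
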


\subsection{Maximum Likelihood Estimation}\label{app:mle}

Maximum Likelihood Estimation (MLE) is a fundamental statistical estimation technique that returns the density in a given class that maximizes the likelihood of the observed data.  More precisely if $\Pi'$ is a class of conditional distributions $\cS \mapsto \Delta(\cA)$ and $(s^{(i)}, a^{(i)})_{i = 1}^n$ are samples, then the MLE $\pihat$ is defined as
\begin{align}\label{eq:mle}
    \pihat \in \argmax_{\pi \in \Pi'} \sum_{i = 1}^n \log \pi(a^{(i)} | s^{(i)}) = \argmin_{\pi \in \Pi'} \sum_{i  =1}^n -\log \pi(a^{(i)} | s^{(i)}).
\end{align}
While the following result is due to \citet{geer2000empirical,zhang2006epsilon}, we state the version from \citet{foster2024behavior} that is most relevant to our setting.
\begin{theorem}[Proposition B.1 from \cite{foster2024behavior}]\label{thm:mle}
    Let $\Pi'$ be a finite class of conditional distributions $\cS \mapsto \Delta(\cA)$ and let $\pistar \in \Pi'$.  Let $(s^{(i)}, a^{(i)})_{i = 1}^n$ be i.i.d. samples from $\pp^{\pistar}$, where $\pp^{\pistar}$ is the distribution over $\cS \times \cA$ induced by sampling $s \sim \rho$ and $a \sim \pistar(\cdot | s)$.  If $\pihat$ is the MLE in \eqref{eq:mle}, then with probability at least $1 - \delta$,
    \begin{align}
        \dhel{\pp^{\pistar}}{\pp^{\pihat}} \leq 12 \cdot \frac{\log\left( \nicefrac{\abs{\Pi'}}\delta \right)}{n}.
    \end{align}
\end{theorem}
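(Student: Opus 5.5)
The plan is the classical Chernoff-type argument of \citet{zhang2006epsilon}, run directly on the joint law of $(s,a)$. For $\pi \in \Pi'$, write $\pp^{\pi}$ for the joint law with $s \sim \rho$ and $a \sim \pi(\cdot\mid s)$. Define the half log-likelihood ratio
\[
Z_\pi(s,a) = \tfrac12 \log \frac{\pi(a\mid s)}{\pistar(a\mid s)}.
\]
Because the state marginal $\rho$ is shared, $\frac{d\pp^{\pi}}{d\pp^{\pistar}}(s,a) = \frac{\pi(a\mid s)}{\pistar(a\mid s)}$ on the support of $\pp^{\pistar}$. Hence, by the first expression in \Cref{def:hellinger_distance_app},
\[
\ee_{\pp^{\pistar}}\!\left[e^{Z_\pi}\right] = 1 - \dhel{\pp^{\pistar}}{\pp^{\pi}} \le \exp\!\left(-\dhel{\pp^{\pistar}}{\pp^{\pi}}\right).
\]
If $\pi$ vanishes somewhere that $\pistar$ does not, then $Z_\pi=-\infty$ there, which only helps.

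Next I would use independence of the $n$ samples. For a fixed $\pi$, this gives
\[
\ee\Big[\exp\Big(\textstyle\sum_{i=1}^n Z_\pi(s^{(i)},a^{(i)}) + n\, \dhel{\pp^{\pistar}}{\pp^{\pi}}\Big)\Big] \le 1.
\]
By Markov's inequality, with probability at least $1-\delta/\abs{\Pi'}$,
\[
\sum_{i=1}^n Z_\pi(s^{(i)},a^{(i)}) \le -n\, \dhel{\pp^{\pistar}}{\pp^{\pi}} + \log(\abs{\Pi'}/\delta).
\]
A union bound over the finite class $\Pi'$ then makes this hold simultaneously for all $\pi \in \Pi'$ with probability at least $1-\delta$.

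Finally I would specialize to $\pi = \pihat$. Since $\pistar \in \Pi'$ and $\pihat$ maximizes the empirical log-likelihood \eqref{eq:mle}, we have $\sum_i Z_{\pihat}(s^{(i)},a^{(i)}) \ge 0$. Rearranging the displayed bound gives $\dhel{\pp^{\pistar}}{\pp^{\pihat}} \le \log(\abs{\Pi'}/\delta)/n$, which is stronger than the stated constant $12$. The constant $12$ leaves room for the looser Hellinger normalization and the infinite-class covering arguments used in the cited sources.

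The only delicate point is the identity linking the exponential moment of the half log-likelihood ratio to the Hellinger affinity. It requires care about support mismatch and about the paper's normalization, under which $\dhel{\cdot}{\cdot} = 1 - \text{affinity}$. Both are handled by working with densities relative to $\pp^{\pistar}$ and using $\log(1-x) \le -x$.
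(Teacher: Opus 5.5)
Your proof is correct. The paper gives no proof of its own and imports the result from \citet{foster2024behavior}, crediting \citet{geer2000empirical,zhang2006epsilon}. Your argument is the classical one behind that result: bound the exponential moment of the half log-likelihood ratio by the Hellinger affinity, then apply Markov's inequality, a union bound over $\Pi'$, and the MLE inequality using $\pistar \in \Pi'$. Under the paper's halved normalization, where the Hellinger distance is one minus the affinity, it yields $\log(|\Pi'|/\delta)/n$, which implies the stated bound with constant $12$.
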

An immediate consequence of \Cref{thm:mle} is that Behavior Cloning with the logarithmic loss achieves Hellinger distance that scales in a horizon-free manner with the number of samples.
\begin{corollary}[Proposition 2.1 from \cite{foster2024behavior}]\label{cor:il_hellinger}
    Let $\Pi$ be a finite class of policies $\cS \times [H] \to \Delta(\cA)$ and let $\pistar \in \Pi$.  Let $\tau^{(i)} = (s_1^{(i)}, a_1^{(i)}, \ldots, s_H^{(i)}, a_H^{(i)})$ be i.i.d. trajectories from $\pp^{\pistar}$ and let
    \begin{align}
        \pihat \in \argmin_{\pi \in \Pi} \sum_{i = 1}^n \sum_{h = 1}^H -\log \pi(a_h^{(i)} | s_h^{(i)}).
    \end{align}
    Then with probability at least $1 - \delta$,
    \begin{align}
        \dhel{\pp^{\pihat}}{\pp^{\pistar}} \lesssim \frac{\log\left( \nicefrac{\abs{\Pi}}{\delta} \right)}{n}.
    \end{align}
    In particular,
    \begin{align}
        \ee\left[ \dhel{\pp^{\pihat}}{\pp^{\pistar}} \right] \lesssim \frac{\log\left( n \cdot \abs{\Pi} \right)}{n}.
    \end{align}
\end{corollary}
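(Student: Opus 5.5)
The plan is to reduce to \Cref{thm:mle} by viewing a whole trajectory as a single sample, i.e. taking $\cS$ trivial (or the initial state) and the ``action'' to be the full trajectory $\tau$. For each $\pi \in \Pi$, define the conditional density $p_\pi(\tau) = \rho(s_1)\prod_{h=1}^H \pi_h(a_h \mid s_h) P_h(s_{h+1}\mid s_h,a_h)$, which is exactly the density of $\pp^{\pi}$. The class $\Pi' = \{p_\pi : \pi \in \Pi\}$ is finite with $\abs{\Pi'} \le \abs{\Pi}$, and it contains $p_{\pistar}$ since $\pistar \in \Pi$.

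The key observation is that the factors $\rho(s_1)$ and $P_h(s_{h+1}\mid s_h,a_h)$ do not depend on $\pi$. Hence maximizing $\sum_i \log p_\pi(\tau^{(i)})$ over $\Pi$ is the same as minimizing $\sum_i\sum_h -\log \pi(a_h^{(i)}\mid s_h^{(i)})$. So the BC estimator $\pihat$ is precisely a trajectory-level MLE over $\Pi'$. The trajectories are i.i.d. from $\pp^{\pistar}$, so \Cref{thm:mle} gives $\dhel{\pp^{\pistar}}{\pp^{\pihat}} \le 12\log(\abs{\Pi}/\delta)/n$ with probability $1-\delta$. Symmetry of Hellinger then yields the first claim. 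Ties in the argmin and non-uniqueness of $\pi \mapsto p_\pi$ cause no problem: any maximizer of the likelihood is covered by the MLE guarantee.

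For the expectation bound, set $\delta = 1/n$ and split on the good event and its complement, using $\dhel{\cdot}{\cdot} \le 1$:
\[
\ee[\dhel{\pp^{\pihat}}{\pp^{\pistar}}] \le 12\log(n\abs{\Pi})/n + 1/n \lesssim \log(n\abs{\Pi})/n.
\]

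The only step needing care, which I expect to be the main (if mild) obstacle, is checking that \Cref{thm:mle} applies to general measurable state spaces where the transition densities are taken with respect to a common dominating measure. Since those densities cancel in the likelihood ratio, the result only requires that the likelihoods be correctly specified, and that holds here.
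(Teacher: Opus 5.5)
Your proposal is correct and is essentially the paper's argument: because $\rho$ and the kernels $P_h$ do not depend on $\pi$, the log-loss BC objective equals the trajectory-level negative log-likelihood $\sum_{i}-\log \pp^{\pi}(\tau^{(i)})$ up to a $\pi$-independent term, so \Cref{thm:mle} applies with each trajectory treated as a single sample. The expectation bound then follows exactly as you describe, by taking $\delta = 1/n$ and using $\dhel{\cdot}{\cdot} \le 1$; the degenerate case $n = \abs{\Pi} = 1$, where $\log(n\abs{\Pi}) = 0$, is trivial because then $\pihat = \pistar$.
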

Indeed, \Cref{cor:il_hellinger} follows from observing that the policy does not affect the transition densities and thus
\begin{align}
    \pihat &= \argmin_{\pi \in \Pi} \sum_{i = 1}^n \sum_{h = 1}^H - \log\left(\pi\left(a^{(i)}_h \,|\, s^{(i)}_h\right)\right) = \argmin_{\pi \in \Pi} \sum_{i = 1}^n \sum_{h = 1}^H - \log\left( \pi\left(a^{(i)}_h \,|\, s^{(i)}_h\right) \cdot P_h\left(s^{(i)}_{h+1} \,|\, a^{(i)}_h, s^{(i)}_h\right) \right) \\
    &= \argmin_{\pi \in \Pi}  \sum_{i = 1}^n -\log \pp^{\pi}(\tau^{(i)}).
\end{align}
The second statement is immediate from the first and the fact that Hellinger distance is bounded by $1$.

\subsection{Imitation Learning}\label{app:il_prelims}

Recent work has revealed the fundamental importance of the Hellinger distance in the theory of interactive decision making \citep{foster2021efficient,foster2021statistical,rohatgi2025computational}.  In this section we recall the key fact that, at least in a minimax sense, Imitation Learning is essentially equivalent to learning the trajectory distribution of the expert in Hellinger distance.  Following \citet{foster2024behavior}, we consider the deterministic and stochastic cases separately.  In the deterministic case, we have the following result.
\begin{theorem}[Theorem 2.1 from \cite{foster2024behavior}]\label{thm:il_hellinger_deterministic}
    Let $\pistar$ be a \emph{deterministic} policy and let $\pihat$ be an arbitrary (possibly stochastic) policy.  Then,
    \begin{align}
        J(\pistar) - J(\pihat) \leq 4 R \cdot \dhel{\pp^{\pihat}}{\pp^{\pistar}},
    \end{align}
    where $R$ is such that
    \begin{align}
        0 \leq \sum_{h = 1}^H r(s_h, a_h) \leq R
    \end{align}
    for all trajectories $\tau = (s_1, a_1, \ldots, s_H, a_H)$.
\end{theorem}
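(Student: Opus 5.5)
The plan is a direct change-of-measure argument that exploits the determinism of $\pistar$. Let $E$ denote the set of trajectories $\tau = (s_1, a_1, \ldots, s_H, a_H)$ with $a_h = \pistar_h(s_h)$ for every $h \in [H]$. Because $\pistar$ is deterministic, $\pp^{\pistar}(E) = 1$. For $\tau \in E$, the transition kernels and the initial distribution contribute identical factors to both $\pp^{\pihat}$ and $\pp^{\pistar}$. Hence the density ratio on $E$ is
\begin{align}
    W(\tau) \defeq \frac{d\pp^{\pihat}}{d\pp^{\pistar}}(\tau) = \prod_{h=1}^H \pihat_h\left(\pistar_h(s_h) \mid s_h\right) \in [0,1].
\end{align}
Formally, $W$ is the density of the part of $\pp^{\pihat}$ that is absolutely continuous with respect to $\pp^{\pistar}$; the remaining part of $\pp^{\pihat}$ lives on $E^c$. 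I would verify this by writing both trajectory laws against a common dominating measure $\mu$ and restricting to $E$.

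Next, I would bound the regret using only the mass of $\pp^{\pihat}$ on $E$. Write $G(\tau) = \sum_h r(s_h, a_h) \in [0, R]$. Since $G \geq 0$, the contribution of $E^c$ to $J(\pihat)$ is nonnegative and can be dropped:
\begin{align}
    J(\pistar) - J(\pihat) \leq \ee^{\pistar}[G] - \ee^{\pistar}[W \cdot G] = \ee^{\pistar}\left[(1 - W) G\right] \leq R \cdot \ee^{\pistar}[1 - W].
\end{align}

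Then I would relate $\ee^{\pistar}[1 - W]$ to the Hellinger distance. By \Cref{def:hellinger_distance_app}, the affinity is $\int \sqrt{p\, q}\, d\mu$. Since $\pp^{\pistar}$ is concentrated on $E$, this affinity equals $\ee^{\pistar}[\sqrt{W}]$. Therefore $\dhel{\pp^{\pihat}}{\pp^{\pistar}} = \ee^{\pistar}[1 - \sqrt{W}]$. Factoring $1 - W = (1 - \sqrt{W})(1 + \sqrt{W})$ and using $W \leq 1$ gives
\begin{align}
    \ee^{\pistar}[1 - W] \leq 2\, \ee^{\pistar}[1 - \sqrt{W}] = 2\, \dhel{\pp^{\pihat}}{\pp^{\pistar}}.
\end{align}
This yields $J(\pistar) - J(\pihat) \leq 2R \cdot \dhel{\pp^{\pihat}}{\pp^{\pistar}}$, which is stronger than the claimed constant $4R$.

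The only delicate step is the measure-theoretic identification in the third paragraph: showing that the Hellinger affinity reduces to $\ee^{\pistar}[\sqrt{W}]$ even though $\pp^{\pihat}$ may put mass off $E$. I would handle this with an explicit Lebesgue decomposition relative to $\mu = (\pp^{\pihat} + \pp^{\pistar})/2$, noting that the singular part of $\pp^{\pihat}$ contributes nothing to $\int \sqrt{p\,q}\, d\mu$.
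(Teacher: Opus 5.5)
Your proof is correct, and it actually gives the sharper bound $J(\pistar)-J(\pihat)\le 2R\cdot\dhel{\pp^{\pihat}}{\pp^{\pistar}}$. The paper itself gives no proof of this statement; it imports it from \citet{foster2024behavior}, so there is no in-paper argument to match.

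Your route is the direct likelihood-ratio argument. Determinism of $\pistar$ reduces the regret to $R\cdot\pp^{\pihat}(E^c)=R\cdot\ee^{\pistar}[1-W]$. Because the absolutely continuous part of $\pp^{\pihat}$ has density $W\le 1$ with respect to $\pp^{\pistar}$, the factorization $1-W=(1-\sqrt W)(1+\sqrt W)$ turns this into twice the Hellinger distance. (Equivalently: the expert density vanishes on the $\pp^{\pistar}$-null set $E^c$, so $\pp^{\pihat}(E^c)=\int_{E^c}(\sqrt{p}-\sqrt{q})^2\,d\mu\le 2\,\dhel{\pp^{\pihat}}{\pp^{\pistar}}$.)

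The usual route in that literature instead bounds the deviation probability with a generic change-of-measure inequality for $[0,R]$-valued functions, proved by Cauchy--Schwarz on $\int f(\sqrt p-\sqrt q)(\sqrt p+\sqrt q)\,d\mu$. That inequality applies more broadly but ignores $W\le 1$. With $\ee^{\pistar}[f]=0$ it yields exactly $4R\cdot\dhel{\pp^{\pihat}}{\pp^{\pistar}}$ under the paper's normalization, which is presumably where the stated constant comes from.

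Your constant $2$ is sharp. Take $H=1$, $\pistar=\delta_{a_1}$, $\pihat=(1-\epsilon)\delta_{a_1}+\epsilon\,\delta_{a_2}$, and reward $1$ on $a_1$ and $0$ otherwise. The regret is $\epsilon$, while $\dhel{\pp^{\pihat}}{\pp^{\pistar}}=1-\sqrt{1-\epsilon}\sim\epsilon/2$.

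A self-contained proof is also useful here because the paper's Hellinger distance carries a factor $1/2$. Constants taken from a source using the unnormalized convention $\int(\sqrt{dP}-\sqrt{dQ})^2$ shift by a factor of two, and your argument establishes the statement as written regardless.
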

In the stochastic case, we recall the following result, which is a consequence of the more general results in \citet{foster2024behavior}.
\begin{theorem}[Theorem 3.1 and Proposition 3.1 from \citet{foster2024behavior}]\label{thm:il_hellinger_stochastic}
    Let $\pistar$ be a (possibly stochastic) policy and let $\pihat$ be any policy.  Then,
    \begin{align}
        J(\pistar) - J(\pihat) \lesssim \sqrt{ R^2 \cdot \dhel{\pp^{\pistar}}{\pp^{\pihat}}} + R \log\left( \frac{R}{\dhel{\pp^{\pihat}}{\pp^{\pistar}}} \right) \cdot \dhel{\pp^{\pihat}}{\pp^{\pistar}}.
    \end{align}
\end{theorem}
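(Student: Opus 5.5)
The plan is to prove the stated inequality directly by a change-of-measure argument. This yields the leading $R\sqrt{\dhel{\pp^{\pistar}}{\pp^{\pihat}}}$ term, and the second term is then only slack. Write $P = \pp^{\pistar}$, $Q = \pp^{\pihat}$, and let $f(\tau) = \sum_{h=1}^H r(s_h,a_h)$, so that $0 \le f \le R$ and $J(\pistar) - J(\pihat) = \ee_P[f] - \ee_Q[f]$. Fix a dominating measure $\mu$ (e.g. $\nicefrac{(P+Q)}{2}$) with densities $p = \nicefrac{dP}{d\mu}$ and $q = \nicefrac{dQ}{d\mu}$.

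The key step is the factorization $p - q = (\sqrt p - \sqrt q)(\sqrt p + \sqrt q)$, followed by Cauchy--Schwarz:
\begin{align}
\ee_P[f] - \ee_Q[f] = \int f(\sqrt p - \sqrt q)(\sqrt p + \sqrt q)\, d\mu \le \left(\int (\sqrt p - \sqrt q)^2 d\mu\right)^{1/2}\left(\int f^2 (\sqrt p + \sqrt q)^2 d\mu\right)^{1/2}.
\end{align}
The first factor is exactly $\sqrt{2\,\dhel{P}{Q}}$ by \Cref{def:hellinger_distance_app}. For the second factor, use $(\sqrt p + \sqrt q)^2 \le 2(p+q)$, which gives $\int f^2(\sqrt p+\sqrt q)^2 \le 2(\ee_P[f^2] + \ee_Q[f^2]) \le 4R^2$. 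Combining the two factors,
\begin{align}
J(\pistar) - J(\pihat) \le 2\sqrt 2\, R\sqrt{\dhel{P}{Q}} = \sqrt{8R^2\, \dhel{\pp^{\pistar}}{\pp^{\pihat}}}.
\end{align}

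The remaining point, which I expect to be the only delicate one, is the second term $R\log(\nicefrac{R}{\dhel{\cdot}{\cdot}})\cdot \dhel{\cdot}{\cdot}$. Since Hellinger distance is at most $1$, this term is nonnegative whenever $R \ge 1$. In that case the bound above immediately implies the statement with an absolute constant. When $R < \dhel{\cdot}{\cdot}$ the logarithm is negative, so I would read it as $\log_+$, in keeping with the $\lesssim$ convention; the inequality then again follows from the Cauchy--Schwarz bound alone.

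In \citet{foster2024behavior} the logarithmic term arises from a sharper, variance-dependent refinement. There one bounds $\ee_P[f^2]+\ee_Q[f^2]$ by $\mathrm{Var}$ terms plus a truncation on the event where $\nicefrac{dQ}{dP}$ is large. That refinement is needed only to replace $R^2$ by a variance in the leading term, and it is not required for the inequality as stated here. If one wanted that refinement, I would first try splitting $\tau$ according to whether $\nicefrac{q}{p} \le \log(\nicefrac{R}{\dhel{P}{Q}})$ and bounding the tail part by $R\cdot\dhel{P}{Q}$ times a logarithm.
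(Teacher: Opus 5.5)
Your proof is correct, but it takes a different route from the paper. The paper gives no argument of its own and imports the statement from \citet{foster2024behavior}. It first invokes their Theorem 3.1, a variance-dependent bound in which $R^2$ under the square root is replaced by $\sigma_{\pistar}^2=\sum_{h}\ee^{\pistar}[(\ee[Q_h^{\pistar}(s_h,a_h)\mid s_h]-Q_h^{\pistar}(s_h,a_h))^2]$; the lower-order $R\log(\cdot)\cdot\dhel{\cdot}{\cdot}$ term comes from that refinement. It then applies their Proposition 3.1 to bound $\sigma_{\pistar}^2\lesssim R^2$.

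Your Cauchy--Schwarz computation is right, and it is essentially the Hellinger--TV comparison. In your notation, $\ee_P[f]-\ee_Q[f]\le R\cdot\tvd{P}{Q}\le R\sqrt{2\,\dhel{P}{Q}}$ by \Cref{prop:hellinger_tv_relation}, which even halves your constant.

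Your route gives a two-line, self-contained argument that uses no MDP structure. It also makes explicit that, once $R^2$ sits under the root, the logarithmic term is pure slack. The cited route gives the sharper leading term $\sigma^2_{\pistar}$, which vanishes for deterministic experts and ignores transition noise. A generic change of measure like yours, or the truncation refinement you sketch, only sees the law of the return under the two trajectory distributions. So it naturally yields a variance of the return, which includes transition-induced variance. Isolating $\sigma^2_{\pistar}$ requires using that the likelihood ratio factorizes over the action steps, i.e., the martingale structure of $Q^{\pistar}$.

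Your handling of the sign of the logarithm is not merely cosmetic. Read literally, the statement fails for small $R$. Take $H=1$ with $\pistar$ and $\pihat$ deterministic on different actions with rewards $R$ and $0$. Then the Hellinger distance is $1$ and the left-hand side is $R$, while the right-hand side is negative for all sufficiently small $R$, whatever the hidden constants. So a convention such as $\log_+$ or $R\ge 1$ is required, and under it your bound settles the claim.
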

Note that \citet{foster2024behavior} proves a tighter result, replacing the $R^2$ under the square root with
\begin{align}
    \sigma_{\pistar}^2 = \sum_{h = 1}^H \ee^{\pistar}\left[ \left( \ee\left[Q_h^{\pistar}(s_h, a_h) | s_h\right] - Q_h^{\pistar}(s_h, a_h)\right)^2 \right],
\end{align}
where $Q^{\pistar}$ is the $Q$-function of $\pistar$ (cf. e.g. \citet{sutton1998reinforcement}).  The second cited result above, \citet[Proposition 3.1]{foster2024behavior}, controls $\sigma_{\pistar}^2$ by $R^2$ leading to the version stated above.  While $\sigma_{\pistar}^2$ is a significantly more refined quantity that can be much smaller than $R^2$ in many settings of interest, such as when $\pistar$ is near-deterministic, we use the version stated above for simplicity.  We leave to future work the interesting problem of understanding the precise role of $\sigma_{\pistar}^2$ in the noisy expert setting and whether it can be used to obtain improved guarantees in certain regimes.

Note that \citet[Theorem G.3]{foster2024behavior} shows that the above reductions are tight up to constants in the sense that for any MDP and pair of policies, there exist reward functions that achieve the reverse inequalities up to constant factors.  Thus, in a minimax sense, IL is essentially equivalent to learning the trajectory distribution of the expert in Hellinger distance.  More formally, the result states the following.
\begin{theorem}[Theorem G.3 from \citet{foster2024behavior}]\label{thm:il_hellinger_lower_bound}
    Let $M$ be an MDP.  Then for any pair of policies $\pistar, \pihat$ and any $\sigma > 0$ there exists a reward function $r$ such that $\sigma_{\pistar}^2 \leq \sigma^2$ and
    \begin{align}
        J(\pistar) - J(\pihat) \gtrsim \sqrt{\sigma^2 \cdot \dhel{\pp^{\pistar}}{\pp^{\pihat}}}.
    \end{align}
    Moreover, there exists a reward function $r$ such that
    \begin{align}
        J(\pistar) - J(\pihat) \gtrsim R \cdot \dhel{\pp^{\pistar}}{\pp^{\pihat}}
    \end{align}
    and the same conclusion applies even if we assume $\pistar$ is deterministic.
\end{theorem}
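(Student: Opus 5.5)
For the second (deterministic-compatible) claim, I would use a trajectory-level reward that is an indicator. Let $p^\star$ and $\hat p$ be the densities of $\pp^{\pistar}$ and $\pp^{\pihat}$ with respect to a common dominating measure. Set $A = \{\tau : p^\star(\tau) \geq \hat p(\tau)\}$ and take the cumulative reward to be $R \cdot \ind{\tau \in A}$. Then
\[
J(\pistar) - J(\pihat) = R\,(\pp^{\pistar}(A) - \pp^{\pihat}(A)) = R \cdot \tvd{\pp^{\pistar}}{\pp^{\pihat}} \geq R \cdot \dhel{\pp^{\pistar}}{\pp^{\pihat}},
\]
where the last step is \Cref{prop:hellinger_tv_relation}. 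This argument never uses stochasticity of $\pistar$, so it applies verbatim when $\pistar$ is deterministic.

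For the first claim, I would write $D = \dhel{\pp^{\pistar}}{\pp^{\pihat}}$ (assume $D>0$, otherwise there is nothing to prove) and use the bounded ``Hellinger witness'' function
\[
f(\tau) = \lambda \cdot \frac{\sqrt{p^\star(\tau)} - \sqrt{\hat p(\tau)}}{\sqrt{p^\star(\tau)} + \sqrt{\hat p(\tau)}} \in [-\lambda, \lambda].
\]
A direct computation gives
\[
\ee^{\pistar}[f] - \ee^{\pihat}[f] = \lambda \int (p^\star - \hat p)\,\frac{\sqrt{p^\star} - \sqrt{\hat p}}{\sqrt{p^\star} + \sqrt{\hat p}} = \lambda \int \bigl(\sqrt{p^\star} - \sqrt{\hat p}\bigr)^2 = 2\lambda D.
\]
For the second moment, since $p^\star/(\sqrt{p^\star}+\sqrt{\hat p})^2 \leq 1$, we get
\[
\ee^{\pistar}[f^2] \leq \lambda^2 \int \bigl(\sqrt{p^\star} - \sqrt{\hat p}\bigr)^2 = 2\lambda^2 D.
\]
I would then shift $f$ by $+\lambda$ so the reward is nonnegative with range $2\lambda$; the shift changes neither the gap nor the variance. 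Next, I would bound $\sigma_{\pistar}^2$ by the variance of the total return under $\pistar$. The quantity $\sigma_{\pistar}^2$ is the sum of the expected squared action-step Doob martingale increments of the return, and by the law of total variance this sum is at most $\mathrm{Var}^{\pistar}(f) \leq \ee^{\pistar}[f^2] \leq 2\lambda^2 D$. Choosing $\lambda = \sigma/\sqrt{2D}$ then gives $\sigma_{\pistar}^2 \leq \sigma^2$ and $J(\pistar) - J(\pihat) = 2\lambda D = \sqrt{2\sigma^2 D}$, which is the claimed bound.

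The main obstacle is realizing a trajectory-level reward $g(\tau)$ as a Markovian reward $r(s_h,a_h)$. My plan is to follow \citet{foster2024behavior}: assume, or enforce by augmenting states with the history, that the state $s_H$ determines the prefix. Then all reward is placed at the final step, $r(s_H, a_H) = g(\tau)$, and every other step receives zero reward. Augmenting with history does not change either trajectory law, so $D$ is unaffected. The martingale-variance comparison also needs care, since $\sigma_{\pistar}^2$ only counts the action-randomness increments and not the transition increments; I expect this inequality (action increments $\leq$ total variance) to be the one step requiring a careful check.
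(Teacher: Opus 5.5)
The paper does not prove this statement; it imports it from \citet{foster2024behavior}. Your proposal is therefore a different, self-contained route, and its computations are correct. Write $D = \dhel{\pp^{\pistar}}{\pp^{\pihat}}$.

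The indicator reward on $\{p^\star \geq \hat p\}$ gives a gap of exactly $R \cdot \tvd{\pp^{\pistar}}{\pp^{\pihat}} \geq R \cdot D$. The witness $f$ gives a gap of exactly $2\lambda D$ with $\ee^{\pistar}[f^2] \leq 2\lambda^2 D$.

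The step you flagged is routine. Use history-valued states, put all reward $g$ at step $H$, and write $V_h(s_h) = \ee[Q^{\pistar}_h(s_h,a_h) \mid s_h]$. The Doob martingale of $g$ under $\pp^{\pistar}$ then has three kinds of increments:
\begin{itemize}
\item the initial term $V_1(s_1) - J(\pistar)$;
\item the action-reveal terms $Q^{\pistar}_h(s_h,a_h) - V_h(s_h)$;
\item the state-reveal terms $V_{h+1}(s_{h+1}) - Q^{\pistar}_h(s_h,a_h)$.
\end{itemize}
These increments are orthogonal, and $\sigma_{\pistar}^2$ is the sum of the second moments of the action-reveal terms alone. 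Hence $\sigma_{\pistar}^2 \leq \mathrm{Var}^{\pistar}(g)$. Compared with a citation, your route gives explicit witness rewards that work for every pair of policies simultaneously, with constants $1$ and $\sqrt{2}$.

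Two features of your construction carry real weight. You should state them as part of the result rather than as conveniences.

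\emph{First, the history augmentation.} Passing to history-valued states leaves $D$ unchanged, but it enlarges the class of admissible rewards. For a fixed memoryless MDP, the second claim is false without that enlargement. Take one state with deterministic self-transitions, actions $\{0,1\}$, $H \geq 4$, and $\pistar_h = \Bernoulli(1/2)$, $\pihat_h = \Bernoulli(1/2 + H^{-1/2})$ for every $h$.
\begin{itemize}
\item Each per-step Hellinger distance is at least $1/(2H)$, so $\dhel{\pp^{\pistar}}{\pp^{\pihat}} \geq 1 - e^{-1/2}$ by \Cref{prop:hellinger_decomposition}.
\item Any Markovian reward with $0 \leq \sum_h r_h(s_h,a_h) \leq R$ on every trajectory gives $J(\pistar) - J(\pihat) = H^{-1/2} \sum_h \bigl(r_h(s,0) - r_h(s,1)\bigr) \leq R H^{-1/2}$. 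This holds because $\sum_h \abs{r_h(s,0) - r_h(s,1)}$ is a difference of returns of two action sequences.
\end{itemize}
So what you actually prove is the trajectory-level-reward version. That is the only version that can hold. It is also the one needed when the theorem converts Hellinger lower bounds into regret lower bounds, since there the instance may be chosen to record history.

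\emph{Second, the reward range.} Your first construction has reward range $\sigma\sqrt{2/D}$. The statement allows this because it fixes no range, and the blow-up is unavoidable for a claim quantified over all pairs. Take $H = 1$, $\pistar = \delta_0$, and $\pihat = (1-u)\delta_0 + u\delta_1$. Then $\sigma_{\pistar}^2 = 0$ and $D \geq u/2$, while any reward of range $B$ yields a gap of at most $Bu \leq 2BD$. A gap of order $\sigma\sqrt{D}$ therefore forces $B \gtrsim \sigma/\sqrt{D}$. Consequently, tightness of the $\sqrt{\sigma_{\pistar}^2 D}$ term in \Cref{thm:il_hellinger_stochastic} at a fixed range $R$ can only be exhibited on a chosen instance, not for every pair.
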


By \Cref{thm:il_hellinger_lower_bound}, for any of our lower bounds on regret, it suffices to construct instances where learning the trajectory distribution of the expert in Hellinger distance is hard, which is what we do.

\subsection{Online Learning}\label{app:online_learning}

Several of our results rest on the use of online learning algorithms, and in particular the exponential weights algorithm, to learn policies in an online fashion.  In this section we recall the key definitions and results from online learning that are used throughout the paper.  For a more complete introduction to the topic, see \citet{cesa2006prediction}.

We are only concerned with online learning over finite classes of experts in this work.  For a finite class $\Pi$, the online learning problem proceeds in rounds $t \in [n]$ as follows.  The learner at the beginning is informed of a loss function $\ell: \Pi \times \cY \to \rr$ and must choose a distribution $w_t \in \Delta(\Pi)$ over the experts.  Then, an outcome $y_t \in \cY$ is revealed and the learner suffers loss $\ee_{\pi \sim w_t}[\ell(\pi, y_t)]$.  The goal of the learner is to minimize regret, defined as
\begin{align}
    \reg_n = \sum_{t = 1}^n \ee_{\pi \sim w_t}[\ell(\pi, y_t)] - \min_{\pi \in \Pi} \sum_{t = 1}^n \ell(\pi, y_t).
\end{align}
The exponential weights algorithm is a simple and classical online learning algorithm that achieves optimal regret guarantees in the adversarial setting, given in \Cref{alg:exponential_weights}.

\begin{algorithm}[t]
\caption{Exponential Weights}
\label{alg:exponential_weights}
\begin{algorithmic}[1]
\Require Number of rounds $n$, loss function $\ell: \Pi \times \cY \to \rr$, learning rate $\lambda > 0$.
\State Set  $w_1 = \Unif(\Pi)$.
\For{$t = 1$ to $n$}
    \State Observe $y_t$ and suffer loss $\ee_{\pi \sim w_t}[\ell(\pi, y_t)]$.
    \State Update $w_{t+1}(\pi) \propto w_t(\pi) \cdot e^{-\lambda \cdot \ell(\pi, y_t)}$.
\EndFor
\end{algorithmic}
\end{algorithm}

We make use of the following definition.
\begin{definition}\label{def:exp_concave}
    A loss $\ell: \Pi \times \cY \to \rr$ is $\beta$-exp-concave if for all $y \in \cY$, the function $\pi \mapsto e^{-\beta \cdot \ell(\pi, y)}$ is concave.
\end{definition}
We recall the following result about the regret of the exponential weights algorithm when the loss is exp-concave.
\begin{proposition}[Proposition 3.1 from \citet{cesa2006prediction}]\label{prop:exp_weights_regret}
    If $\ell$ is $\beta$-exp-concave, then the exponential weights algorithm with learning rate $\lambda = \beta$ achieves regret
    \begin{align}
        \reg_n \leq \frac{\log(\abs{\Pi})}{\beta}.
    \end{align}
\end{proposition}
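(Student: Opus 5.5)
The statement is the classical regret bound for exponential weights under $\beta$-exp-concave losses: with $\lambda = \beta$, $\reg_n \leq \log(\abs{\Pi})/\beta$. The plan is the standard potential-function argument. It rests on one per-round inequality that uses exp-concavity together with the fact that the learner's loss is compared through the mixture over $\Pi$.

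First I fix the interpretation needed for exp-concavity to make sense. The class $\Pi$ sits inside a convex set, namely mixtures of policies, and $\ell$ extends to mixtures. Exp-concavity means that for any $w \in \Delta(\Pi)$,
\[
e^{-\beta \ell(\bar\pi_w, y)} \geq \sum_{\pi} w(\pi) e^{-\beta \ell(\pi, y)},
\quad \text{where } \bar\pi_w = \sum_\pi w(\pi)\,\pi .
\]
For this to bound the quantity in the definition of $\reg_n$, I take the learner's suffered loss to be $\ell(\bar\pi_{w_t}, y_t)$, the loss of the mixture. This is the quantity that matters in \nail, where the log-loss is evaluated at the mixture. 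If the loss is instead read literally as $\ee_{\pi\sim w_t}\ell(\pi,y_t)$, then the bound needs $\ell$ to be linear in the mixture, or it must be read as bounding the mixture loss. I would state this convention explicitly.

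The key steps are as follows.
\begin{enumerate}
\item Define the potential $W_t = \sum_{\pi} e^{-\beta L_{t-1}(\pi)}$, where $L_{t}(\pi) = \sum_{s \le t} \ell(\pi, y_s)$. Then $W_1 = \abs{\Pi}$, and $w_t(\pi) = e^{-\beta L_{t-1}(\pi)}/W_t$, which matches the multiplicative update with $w_1$ uniform.
\item Observe that $W_{t+1}/W_t = \sum_\pi w_t(\pi) e^{-\beta \ell(\pi, y_t)} \le e^{-\beta \ell(\bar\pi_{w_t}, y_t)}$ by exp-concavity.
\item Telescope this over $t = 1,\dots,n$ to get $\log W_{n+1} - \log \abs{\Pi} \le -\beta \sum_t \ell(\bar\pi_{w_t}, y_t)$.
\item Lower bound $W_{n+1} \ge e^{-\beta L_n(\pi)}$ for every $\pi$, in particular the minimizer.
\item Rearrange to obtain $\sum_t \ell(\bar\pi_{w_t}, y_t) - \min_\pi L_n(\pi) \le \log(\abs{\Pi})/\beta$.
\end{enumerate}

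The only delicate point is the first one: reconciling the regret definition, which uses the expected loss under $w_t$, with what exp-concavity controls, which is the loss of the mixed prediction. Every other step is a routine telescoping computation. I expect no further obstacle; the argument does not even require losses to be bounded.
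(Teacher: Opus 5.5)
Your argument is correct and is exactly the standard potential-function proof in \citet{cesa2006prediction}, which the paper cites rather than reproduces: Jensen's inequality for the concave map $\pi\mapsto e^{-\beta\ell(\pi,y_t)}$ gives $W_{t+1}/W_t=\sum_{\pi} w_t(\pi)e^{-\beta\ell(\pi,y_t)}\le e^{-\beta\ell(\bar\pi_{w_t},y_t)}$, and telescoping against $W_{n+1}\ge e^{-\beta L_n(\pi)}$ finishes the proof. Your caveat about the loss convention is necessary, not pedantic: the bound controls the mixed-prediction loss $\ell(\bar\pi_{w_t},y_t)$, which is the convention under which the paper actually applies these online-learning results (e.g.\ for \nail). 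For the literal $\ee_{\pi\sim w_t}[\ell(\pi,y_t)]$ the bound is false: log loss with two Bernoulli experts of means $\nicefrac{1}{4}$ and $\nicefrac{3}{4}$ and alternating outcomes $1,0,1,0,\dots$ gives regret $\tfrac{n}{8}\log 3$ for even $n$. The ``linear in the mixture'' alternative you mention is vacuous, since a loss that is both exp-concave and affine along mixtures must be constant along them.
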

We emphasize that this regret is independent of the precise choice of $y_1, \dots, y_n$, and thus holds even if the outcomes are chosen by an adversary that observes the learner's distribution $w_t$ at each round.   Finally, we will recall a more general definition that is similar to exp-concavity but allows for a better regret bound.
\begin{definition}\label{def:mixability}
    A loss $\ell: \Pi \times \cY \to \rr$ is $\beta$-mixable if for all $y \in \cY$ and all distributions $w \in \Delta(\Pi)$, there exists $\pi_w \in \Pi$ such that
    \begin{align}
        e^{-\beta \cdot \ell(\pi_w, y)} \geq \ee_{\pi \sim w} \left[ e^{-\beta \cdot \ell(\pi, y)}\right].
    \end{align}
\end{definition}
Mixability is a classical notion in online learning and a more complete discussion of it can be found in \citet{cesa2006prediction}.  We recall the following guarantee for the exponential weights algorithm when the loss is mixable.
\begin{proposition}[Proposition 3.2 from \citet{cesa2006prediction}]\label{prop:exp_weights_mixability}
    If $\ell$ is $\beta$-mixable, then the exponential weights algorithm with learning rate $\lambda = \beta$ achieves regret
    \begin{align}
        \reg_n \leq \frac{\log(\abs{\Pi})}{\beta}.
    \end{align}
\end{proposition}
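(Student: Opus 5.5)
The plan is the classical potential-function (telescoping) argument. One point of interpretation comes first. For a $\beta$-mixable loss, the learner's prediction at round $t$ is the \emph{mixed} element $\pi_{w_t} \in \Pi$ guaranteed by \Cref{def:mixability}, and its per-round loss is $\ell(\pi_{w_t}, y_t)$. So I will prove
\begin{align}
\sum_{t=1}^n \ell(\pi_{w_t}, y_t) - \min_{\pi \in \Pi} \sum_{t=1}^n \ell(\pi, y_t) \leq \frac{\log \abs{\Pi}}{\beta}.
\end{align}
This is exactly what is used later: the trajectory-level log-loss of the mixture policy $\mu_t$ in \nail\ is mixable with $\beta = 1$, and $\pi_{w}$ is the mixture itself.

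First, I define the cumulative losses $L_t(\pi) = \sum_{s \le t} \ell(\pi, y_s)$ and the potential $W_t = \sum_{\pi \in \Pi} \abs{\Pi}^{-1} e^{-\beta L_{t-1}(\pi)}$, so that $W_1 = 1$. Unrolling the update of \Cref{alg:exponential_weights} with $\lambda = \beta$ gives the closed form $w_t(\pi) = \abs{\Pi}^{-1} e^{-\beta L_{t-1}(\pi)}/W_t$. Hence
\begin{align}
\frac{W_{t+1}}{W_t} = \ee_{\pi \sim w_t}\left[e^{-\beta \ell(\pi, y_t)}\right].
\end{align}

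Second, I apply mixability at $w = w_t$ and $y = y_t$. This gives $e^{-\beta \ell(\pi_{w_t}, y_t)} \ge W_{t+1}/W_t$, i.e. $\ell(\pi_{w_t}, y_t) \le -\beta^{-1} \log(W_{t+1}/W_t)$. Summing over $t$ telescopes to
\begin{align}
\sum_t \ell(\pi_{w_t}, y_t) \le -\beta^{-1}\log W_{n+1}.
\end{align}
Third, for any fixed comparator $\pi$, I lower bound the sum defining $W_{n+1}$ by its single $\pi$ term: $W_{n+1} \ge \abs{\Pi}^{-1} e^{-\beta L_n(\pi)}$. This yields $-\beta^{-1}\log W_{n+1} \le L_n(\pi) + \beta^{-1}\log\abs{\Pi}$. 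Minimizing over $\pi$ gives the claim.

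There is no real obstacle; the only delicate points are bookkeeping. First, the argument is a deterministic inequality valid for every sequence $y_1, \dots, y_n$. This holds even when each $y_t$ is chosen adaptively after seeing $w_t$, because $w_t$ depends only on $y_{1:t-1}$ and mixability is applied pointwise. This adaptivity is what allows the later on-policy application, where $y_t$ is an augmented trajectory generated by rolling out $\mu_t$. Second, when comparing with the expected-loss form of $\reg_n$ written above the statement, one must use the mixed prediction $\pi_{w_t}$ rather than $\ee_{\pi\sim w_t}\ell(\pi,y_t)$. In the exp-concave case of \Cref{prop:exp_weights_regret}, Jensen's inequality applied to the concave map $\pi \mapsto e^{-\beta\ell(\pi,y)}$ shows the mean element is a valid choice of $\pi_w$. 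So that case follows from the same telescoping.
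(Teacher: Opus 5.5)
Your proof is correct and is the standard potential-function argument for mixable losses (Vovk's aggregating algorithm) behind the cited result; the paper gives no proof of its own, only the citation. Your choice to bound $\sum_t \ell(\pi_{w_t},y_t)$ rather than $\sum_t \ee_{\pi\sim w_t}[\ell(\pi,y_t)]$ is the right repair of the paper's $\reg_n$: the expected-loss version fails for mixable losses such as log loss, and the mixture-loss form is what \Cref{lem:nail_kl} actually uses.

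Two small caveats, each about the paper's setup or your motivation rather than your argument:
\begin{enumerate}
\item Your argument silently, and correctly, takes $\pi_w$ to depend on $w$ alone. \Cref{def:mixability} as written lets $\pi_w$ depend on $y$, which would make every loss on a finite class trivially mixable (take $\pi_w\in\argmin_{\pi}\ell(\pi,y)$), so you must adopt the $w$-only reading.
\item Your side remark is off: the \nail\ trajectory log-loss is only shown to be $\nicefrac{1}{H}$-mixable (via H\"older), not $1$-mixable. That is exactly where the factor $H$ in \Cref{lem:nail_kl} comes from.
\end{enumerate}
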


%% file: body_clean/app_offline_proofs.tex
\section{Proofs from Section \ref{sec:offline}}

In this appendix, we prove the two results in \Cref{sec:offline} involving offline imitation learning with a noisy expert.  We begin by proving the upper bound that scales exponentially in horizon in  \Cref{app:offline_ub} before proving that this exponential dependence is necessary in the worst case in \Cref{app:offline_lb}.  We conclude in \Cref{app:offline_il_eps_lb} by showing that any offline IL algorithm must suffer from this exponential dependence, making offline IL fundamentally intractable in the noisy expert setting.

\subsection{Proof of Theorem \ref{thm:offline_bc}}\label{app:offline_ub}

We first state a slightly tighter version of the main theorem, which recovers \Cref{thm:offline_bc}.  We first conclude the proof under $\kappa$-domination and then show how to get a weaker guarantee in the absence of $\kappa$-domination.
\begin{theorem}\label{thm:offline_bc_tight}
    Let $\pi, \pi'$ be policies that $\kappa$-dominate the corruption $\nu$.  For any $0 \leq \eta < 1$, it holds that
    \begin{align}
        \dhel{\pp^{\pi}}{\pp^{\pi'}} \leq \frac{2 (1 + \eta (2 \kappa - 1))}{(1 - \eta)^{H + 2}} \cdot \dhel{\pp^{\pi_\eta}}{\pp^{\pi_\eta'}}.
    \end{align}
\end{theorem}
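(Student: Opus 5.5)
The plan is a backward induction on the Hellinger affinities of \Cref{prop:hellinger_decomposition}, applied separately to the clean laws $\pp^{\pi},\pp^{\pi'}$ and to the noisy laws $\pp^{\pi_\eta},\pp^{\pi'_\eta}$. The transition kernels are shared by both pairs, so only the action factors differ between the two computations.

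\emph{Notation.} For a state $s$ at step $h$ write $p=\pi_h(\cdot\mid s)$, $q=\pi'_h(\cdot\mid s)$, $p_\eta=(1-\eta)p+\eta\nu_h(\cdot\mid s)$, and $q_\eta$ analogously. Let $D_h(s)=1-B_h(s)$ be the Hellinger distance between the clean laws of the suffix $(a_h,s_{h+1},\dots,a_H)$ started at $s_h=s$, and let $D^\eta_h(s)$ be the noisy analogue. Set $D_{H+1}\equiv D^\eta_{H+1}\equiv 0$.

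\emph{Step 1: one-step recursion.} Expanding the affinity recursion and using $\sum_a \sqrt{p(a)q(a)} = 1-\dhelinline{p}{q}$ gives
\begin{align}
D_h(s) = \dhelinline{p}{q} + \sum_a \sqrt{p(a) q(a)}\; \ee_{s'\sim P_h(\cdot\mid s,a)}\left[D_{h+1}(s')\right].
\end{align}
The same identity holds for $D^\eta_h(s)$ with $(p,q)$ replaced by $(p_\eta,q_\eta)$ and $D_{h+1}$ by $D^\eta_{h+1}$. The whole proof amounts to comparing these two recursions term by term.

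\emph{Step 2: local comparison.} This is the only place $\kappa$-domination enters. For any action $a$ in $\supp(p)\cup\supp(q)$,
\begin{align}
\left(\sqrt{p}-\sqrt{q}\right)^2 = \frac{(p-q)^2}{(\sqrt p+\sqrt q)^2}, \qquad \left(\sqrt{p_\eta}-\sqrt{q_\eta}\right)^2 = \frac{(1-\eta)^2(p-q)^2}{(\sqrt{p_\eta}+\sqrt{q_\eta})^2}.
\end{align}
The denominator of the second expression is controlled using $\kappa$-domination, $\nu\le\kappa(p+q)$:
\begin{align}
(\sqrt{p_\eta}+\sqrt{q_\eta})^2 \le 2(p_\eta+q_\eta) \le 2(1+\eta(2\kappa-1))(p+q) \le 2(1+\eta(2\kappa-1))(\sqrt p+\sqrt q)^2.
\end{align}
Actions outside $\supp(p)\cup\supp(q)$ contribute zero to the clean distance and a nonnegative amount to the noisy one. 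Summing over actions therefore yields the local inequality
\begin{align}
\dhelinline{p}{q}\le \frac{c}{(1-\eta)^2}\,\dhelinline{p_\eta}{q_\eta}, \qquad c = 2(1+\eta(2\kappa-1)).
\end{align}

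\emph{Step 3: continuation terms.} Since $p_\eta\ge(1-\eta)p$ and $q_\eta\ge(1-\eta)q$ pointwise, we have $\sqrt{p_\eta q_\eta}\ge(1-\eta)\sqrt{pq}$. This inequality is exactly where the exponential factor arises: each future step is discounted by only $(1-\eta)$ in the noisy recursion.

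\emph{Step 4: induction.} The hypothesis is $D_{h}(s)\le C_{h}D^\eta_{h}(s)$ for all states $s$, with $C_h = c\,(1-\eta)^{-2-(H-h)}$, so that $C_h = \max\{c(1-\eta)^{-2},\,C_{h+1}/(1-\eta)\}$. The base case $h=H+1$ is trivial. For the inductive step, plug the hypothesis at $h+1$ into the recursion of Step 1, and apply Step 2 to the first term and Step 3 to the second:
\begin{align}
D_h(s)\le \frac{c}{(1-\eta)^2}\dhelinline{p_\eta}{q_\eta} + \frac{C_{h+1}}{1-\eta}\sum_a\sqrt{p_\eta(a) q_\eta(a)}\,\ee\left[D^\eta_{h+1}(s')\right] \le C_h\, D^\eta_h(s).
\end{align}
The final step uses that the noisy recursion is exactly the sum of these two nonnegative terms. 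At $h=1$ this gives $C_1=c(1-\eta)^{-(H+1)}$, which is at most the claimed constant $c(1-\eta)^{-(H+2)}$. The initial distribution $\rho$ is common to both laws, so averaging over $s_1\sim\rho$ costs nothing and the theorem follows.

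\emph{Main obstacle.} The delicate step is Step 2: the $\kappa$-domination bound on $\nu$ must be applied only on $\supp(p)\cup\supp(q)$, and actions outside this set must be argued away via their zero clean contribution. Everything else is bookkeeping in the affinity recursion.
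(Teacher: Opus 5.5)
Your proposal is correct and follows the paper's proof essentially step for step: the same one-step Hellinger recursion (\Cref{lem:dh_recursion}), the same $\kappa$-domination-based local contraction (\Cref{lem:dhel_contraction_kappa}), the bound $\sqrt{p_\eta q_\eta}\ge(1-\eta)\sqrt{pq}$ on the continuation terms, and a backward induction over $h$. The only difference is bookkeeping: you start the induction at the trivial step $h=H+1$, whereas the paper starts at $h=H$ and gives away an unnecessary factor of $1-\eta$ there, so you obtain the slightly sharper constant $2(1+\eta(2\kappa-1))(1-\eta)^{-(H+1)}$, which implies the stated bound.
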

Noting that $\eta < 1$ immediately shows that the first statement of \Cref{thm:offline_bc} follows from \Cref{thm:offline_bc_tight}. To prove the latter, we fix policies $\pi, \pi'$ and introduce some notation in order to help with the proof.  First, for any state $s \in \cS$, define
\begin{align}
    \pp^{\pi}_h(s) = \pp^{\pi}\left( a_h, \tau_{h+1:H} | s_h =s \right)
\end{align}
to be the conditional distribution of the action $a_h$ and the future trajectory of states and actions conditioned on the event that $s_h = s$.  We then define
\begin{align}
    D_h(s) = \dhel{\pp_h^\pi(s)}{\pp_h^{\pi'}(s)} \quad \text{and} \quad D_h^\eta(s) = \dhel{\pp_h^{\pi_\eta}(s)}{\pp_h^{\pi'_\eta}(s)}.
\end{align}
We have the following recursion.
\begin{lemma}\label{lem:dh_recursion}
    Let $\pi, \pi'$ be any two policies.  Then $D_{H+1}(s) = 0$ for all $s \in \cS$ and for any $s \in \cS$ and $h \leq H$, it holds that
    \begin{align}
        D_h(s) = \dhel{\pi_h(\cdot | s)}{\pi_h'(\cdot | s)} + \sum_{a \in \cA} \sqrt{\pi_h(a|s) \cdot \pi_h'(a | s)} \cdot \int D_{h+1}(s') d P_h(s' | s, a).
    \end{align}
\end{lemma}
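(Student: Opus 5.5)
We will prove the recursion by writing the Hellinger distance through the Bhattacharyya affinity and peeling off the first action and next state. Write $\pp_h^\pi(s)$ as the law of $(a_h, s_{h+1}, a_{h+1}, \dots)$ given $s_h = s$. By \Cref{def:hellinger_distance_app}, $D_h(s) = 1 - \mathrm{BC}_h(s)$, where $\mathrm{BC}_h(s)$ is the affinity $\int \sqrt{d\pp_h^\pi(s)\, d\pp_h^{\pi'}(s)}$. The base case is immediate: at $h = H+1$ there is no remaining trajectory, so both conditional laws are the same point mass, the affinity is $1$, and $D_{H+1}(s) = 0$.

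For $h \leq H$, we factor both conditional laws. Under $\pi$ the density is $\pi_h(a \mid s) \cdot P_h(s' \mid s, a) \cdot \frac{d\pp_{h+1}^\pi(s')}{d\mu}(\text{future})$, and analogously under $\pi'$. The transition kernel is common to both, so taking square roots of the product of densities gives, exactly as in \Cref{prop:hellinger_decomposition} applied to the blocks $(a_h, s_{h+1}, \text{rest})$,
\begin{align}
    \mathrm{BC}_h(s) = \sum_{a \in \cA} \sqrt{\pi_h(a \mid s)\, \pi_h'(a \mid s)} \int \mathrm{BC}_{h+1}(s')\, dP_h(s' \mid s, a).
\end{align}
Here the Markov property guarantees that the future given $(s_h, a_h, s_{h+1})$ depends only on $s_{h+1}$. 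Note that the shared transition contributes $\sqrt{P_h \cdot P_h} = P_h$, with no affinity loss.

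Next we substitute $\mathrm{BC}_{h+1} = 1 - D_{h+1}$. Using $\int dP_h(s' \mid s, a) = 1$, we get
\begin{align}
    D_h(s) = 1 - \sum_a \sqrt{\pi_h \pi_h'} + \sum_a \sqrt{\pi_h \pi_h'} \int D_{h+1}\, dP_h .
\end{align}
Since $1 - \sum_a \sqrt{\pi_h(a \mid s)\, \pi_h'(a \mid s)} = \dhel{\pi_h(\cdot \mid s)}{\pi_h'(\cdot \mid s)}$, this is the claim.

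The only mildly delicate step is the measure-theoretic justification of the factorization, namely choosing a dominating measure of product form $(\text{counting on } \cA) \times P_h(\cdot \mid s, a) \times (\text{dominating measure for the future})$. We will handle it with conditional densities relative to a common kernel, as in \Cref{prop:hellinger_decomposition}. Everything else is algebra.
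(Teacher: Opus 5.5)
Your proof is correct and takes the same route as the paper. Both write $1 - D_h(s)$ as a Hellinger affinity and peel off the current action via \Cref{prop:hellinger_decomposition}, with the shared kernel $P_h$ costing no affinity. This gives $\sum_a \sqrt{\pi_h(a\mid s)\pi_h'(a\mid s)} \int (1 - D_{h+1})\, dP_h(\cdot \mid s,a)$, which both then rearrange using $1 - \sum_a \sqrt{\pi_h(a\mid s) \pi_h'(a\mid s)} = \dhel{\pi_h(\cdot\mid s)}{\pi_h'(\cdot\mid s)}$. The only differences are that the paper phrases the first step as a change-of-measure expectation under $\pi$ rather than a direct sum over actions, and that you make the base case $D_{H+1} = 0$ and the dominating-measure choice explicit where the paper leaves them implicit.
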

\begin{proof}
    We use \Cref{prop:hellinger_decomposition}.  Indeed, we compute
    \begin{align}
        1 - D_h(s) 
        &= \ee^{\pi}\left[\sqrt{\frac{\pi_h'(a_h | s)}{\pi_h(a_h|s)}}  \left( 1- D_{h+1}(s_{h+1}) \right)  \right] \\
        &= \sum_{a \in \cA} \sqrt{\pi_h(a | s) \cdot \pi_h'(a | s)} \cdot \ee_{s_{h+1} \sim P_h(\cdot | s, a)}\left[ 1 - D_{h+1}(s_{h+1}) \right].
    \end{align}
    \dhcomment{Also I didn't follow the conditioning.}
    The result follows by rearranging and using the definition of the Hellinger squared distance.
\end{proof}

\Cref{lem:dh_recursion} reduces the problem of bounding the contraction in Hellinger distance of trajectory distributions to bounding the contraction on a per-step basis.  We thus prove that in the case of $\kappa$-dominated corruption distributions $\nu$, we can control this contraction at each time step $h$.
\begin{lemma}\label{lem:dhel_contraction_kappa}
    Let $\pi,\pi'$ be arbitrary policies and suppose that $\nu$ is $\kappa$-dominated, i.e., for all $1 \leq h \leq H$, all $s \in \cS$ and $a \in \supp(\pi_h(\cdot | s)) \cup \supp(\pi_h'(\cdot | s))$, it holds that
    \begin{align}
        \nu_h(a | s) \leq \kappa \left( \pi_h(a | s) + \pi_h'(a | s) \right).
    \end{align}
    Then for any $0 \leq \eta < 1$, any state $s \in \cS$ and time $1 \leq h \leq H$, it holds that
    \begin{align}
        \dhel{\pi_h(\cdot | s)}{\pi_h'(\cdot | s)} \leq \frac{2 \left( 1 + \eta \left( 2 \kappa - 1 \right) \right)}{(1 - \eta)^2} \cdot \dhel{\pi_{\eta,h}(\cdot | s)}{\pi_{\eta, h}'(\cdot | s)}.
    \end{align}
\end{lemma}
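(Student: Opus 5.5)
The plan is to prove the inequality action by action and then sum over actions. Fix $h$ and $s$, and for each action $a$ write $p = \pi_h(a \mid s)$, $q = \pi_h'(a \mid s)$, $n = \nu_h(a \mid s)$. Write $p_\eta = (1-\eta)p + \eta n$ and $q_\eta = (1-\eta) q + \eta n$. By \Cref{def:hellinger_distance_app}, both Hellinger distances are $\tfrac12\sum_a(\sqrt{\cdot}-\sqrt{\cdot})^2$. It therefore suffices to show, for every $a$,
\begin{align}
    \left(\sqrt p - \sqrt q\right)^2 \leq \frac{2(1+\eta(2\kappa-1))}{(1-\eta)^2}\left(\sqrt{p_\eta} - \sqrt{q_\eta}\right)^2 .
\end{align}

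There are two trivial cases. If $a \notin \supp(\pi_h(\cdot\mid s)) \cup \supp(\pi'_h(\cdot\mid s))$, then $p=q=0$, so the left side vanishes while the right side is nonnegative. If $p = q$, the left side is again zero. We are then left with $a$ in the union of the supports and $p \neq q$, where $p+q>0$. Here the key algebraic identity is $(\sqrt x - \sqrt y)^2 = (x-y)^2/(\sqrt x + \sqrt y)^2$. The important point is that the noise cancels in the difference: $p_\eta - q_\eta = (1-\eta)(p-q)$. Consequently the ratio of the left side to $(\sqrt{p_\eta}-\sqrt{q_\eta})^2$ equals
\begin{align}
    \frac{(\sqrt{p_\eta}+\sqrt{q_\eta})^2}{(1-\eta)^2(\sqrt p+\sqrt q)^2}.
\end{align}

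It remains to bound this ratio. For the denominator, $(\sqrt p + \sqrt q)^2 \geq p+q$. For the numerator, I will use $(\sqrt x+\sqrt y)^2 \leq 2(x+y)$, which gives
\begin{align}
    (\sqrt{p_\eta}+\sqrt{q_\eta})^2 \leq 2\left((1-\eta)(p+q) + 2\eta n\right).
\end{align}
Now $\kappa$-domination, valid precisely because $a$ lies in the union of the supports, gives $n \leq \kappa(p+q)$. So the numerator is at most $2(p+q)(1-\eta+2\eta\kappa) = 2(p+q)(1+\eta(2\kappa-1))$. The factors $p+q$ cancel, leaving exactly the claimed constant. Summing over $a$ then completes the proof.

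The only step requiring care is the numerator bound, i.e.\ the use of $\kappa$-domination. It is essential that the domination is applied only on the supports, with the off-support actions handled separately as above. Without domination, $n$ could be much larger than $p+q$. The ratio would then blow up, consistent with \Cref{rmk:kappa_domination_necessity}.
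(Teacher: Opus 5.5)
Your proof is correct and follows the paper's argument essentially step for step. Both use the per-action rewriting $(\sqrt x-\sqrt y)^2=(x-y)^2/(\sqrt x+\sqrt y)^2$, the cancellation $p_\eta-q_\eta=(1-\eta)(p-q)$, and the bounds $(\sqrt p+\sqrt q)^2\ge p+q$ and $(\sqrt{p_\eta}+\sqrt{q_\eta})^2\le 2(p_\eta+q_\eta)\le 2(1+\eta(2\kappa-1))(p+q)$, with $\kappa$-domination applied only on the union of the supports. Your separate handling of off-support actions and of the case $p=q$ spells out a point the paper leaves implicit.
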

\begin{proof}
    For the sake of notational simplicity, we will suppress the $h$ in this proof; in addition, because we have fixed a state $s$, we will write $\pi(a)$ for $\pi_h(a | s)$.  We observe that for any $a \in \supp(\pi) \cup \supp(\pi')$, it holds that
    \begin{align}
        \pi_\eta(a) + \pi_{\eta}'(a) &= \left( 1 - \eta\right)\left( \pi(a) + \pi'(a) \right) + 2 \eta \cdot \nu(a) \\
        &\leq \left( 1 - \eta\right)\left( \pi(a) + \pi'(a) \right) + 2 \eta \kappa \cdot \left( \pi(a) + \pi'(a) \right) \\
        &= \left( 1 + \eta (2 \kappa - 1) \right)\left( \pi(a) + \pi'(a) \right).
    \end{align}
    Thus,
    \begin{align}
        \left( \sqrt{\pi_\eta(a)} + \sqrt{\pi_\eta'(a)} \right)^2 \leq 2 \left( \pi_\eta(a) + \pi_\eta'(a) \right) \leq 2\left( 1 + \eta (2 \kappa - 1) \right) \left( \sqrt{\pi(a)} + \sqrt{\pi'(a)} \right)^2,
    \end{align}
    where we used the fact that $a + b \leq \left( \sqrt{a} + \sqrt{b} \right)^2 \leq 2 (a + b)$ for nonnegative $a,b$.  Plugging into the definition of Hellinger distance, we see that
    \begin{align}
        \dhel{\pi_\eta}{\pi_\eta'} &= \frac 12 \sum_{a \in \cA} \left( \sqrt{\pi_\eta(a)} - \sqrt{\pi_\eta'}(a) \right)^2 \\
        &= \frac 12 \sum_{a \in \cA} \frac{\left( \pi_\eta(a) - \pi_\eta'(a) \right)^2}{\left( \sqrt{\pi_\eta(a)} + \sqrt{\pi_\eta'(a)} \right)^2} \\
        &\geq \frac{\left( 1 - \eta \right)^2}{2(1 + \eta (2 \kappa - 1))} \cdot \frac{1}{2} \sum_{a \in \cA} \frac{\left( \pi(a) - \pi'(a) \right)^2}{\left( \sqrt{\pi(a)} + \sqrt{\pi'(a)} \right)^2} \\
        &= \frac{\left( 1 - \eta \right)^2}{2(1 + \eta (2 \kappa - 1))} \cdot \dhel{\pi}{\pi'}.
    \end{align}
    The result follows.
\end{proof}
\begin{remark}\label{rmk:kappa_domination_necessity}
    Note that it is precisely in \Cref{lem:dhel_contraction_kappa} that $\kappa$-domination is used and in the proof one can understand the necessity of such an assumption.  Indeed, the problem is precisely that the map $\eta \mapsto \sqrt{\eta}$ is not Lipschitz near $\eta = 0$.  For a simple example of what can go wrong, let
    \begin{align}
        \pi = \Bernoulli(0) = \delta_0, \quad \pi' = \Bernoulli(\epsilon), \quad \text{and} \quad \nu = \Bernoulli\left(\nicefrac 12\right).
    \end{align}
    An elementary computation then shows that
    \begin{align}
        \dhel{\pi}{\pi'} \asymp \epsilon \quad \text{but} \quad \dhel{\pi_\eta}{\pi'_\eta} \asymp \epsilon^2,
    \end{align}
    for $\eta > 0$.  Thus, in order to prevent such a quadratic blowup, we need to ensure that $\nu$ does not put too much mass on actions that receive small, but positive, probability under $\pi$ or $\pi'$, which is precisely what $\kappa$-domination ensures.
\end{remark}
\begin{remark}\label{rmk:eta_quadratic_dependence}
    Note that even with $\kappa$-domination, the $(1 - \eta)^2$ dependence in the comparison is real.  Indeed, suppose that $\pistar = \delta_{a_1}$ and $\pihat = \delta_{a_2}$ for some $a_1 \neq a_2$, and $\nu = \nicefrac 12\left( \pistar + \pihat \right)$.  Then we have $\kappa$-domination with $\kappa = 1$.  On the other hand, we have $\dhel{\pistar}{\pihat} = 1$, whereas
    \begin{align}
        \dhel{\pistar_\eta}{\pihat_\eta} = 1 - \sqrt{\eta(2 - \eta)} \asymp \frac{(1 - \eta)^2}{2}.
    \end{align}
    Thus the $(1 - \eta)^2$ dependence is tight up to constant factors.
\end{remark}

We are now ready to conclude the proof of the result.
\begin{proof}[Proof of \Cref{thm:offline_bc_tight}]
    We prove the following claim by reverse induction from $H, \dots, 1$: it holds for any $s \in \cS$ that
    \begin{align}\label{eq:dhel_kappa_1}
        D_h^\eta(s) \geq (1 - \eta)^{H + 1 - h} \cdot  \frac{(1 - \eta)^2}{2 \left( 1 + \eta (2 \kappa - 1) \right)} \cdot D_{h}(s).
    \end{align}
    The case $h = H$ follows immediately from \Cref{lem:dhel_contraction_kappa}.  We thus suppose that \eqref{eq:dhel_kappa_1} holds for $h+1$.  By \Cref{lem:dh_recursion}, it holds that
    \begin{align}
        D_h^\eta(s) &= \dhel{\pi_{\eta,h}(\cdot | s)}{\pi_{\eta,h}'(\cdot | s)} + \sum_{a \in \cA} \sqrt{\pi_{\eta,h}(a | s) \cdot \pi_{\eta,h}'(a | s)} \cdot \int D_{h+1}^\eta(s') d P_h(s' | s, a) \\
        &\geq \frac{(1 - \eta)^2}{2 \left( 1 + \eta (2 \kappa - 1) \right)} \cdot \dhel{\pi_h(\cdot | s)}{\pi_h'(\cdot | s)} + \sum_{a \in \cA} \sqrt{\pi_{\eta,h}(a | s) \cdot \pi_{\eta,h}'(a | s)} \cdot \int D_{h+1}^\eta(s') d P_h(s' | s, a) \\
        &\geq \frac{(1 - \eta)^2}{2 \left( 1 + \eta (2 \kappa - 1) \right)} \cdot \dhel{\pi_h(\cdot | s)}{\pi_h'(\cdot | s)} + (1 - \eta) \cdot \sum_{a \in \cA} \sqrt{\pi_h(a | s) \cdot \pi_h'(a | s)} \cdot \int  D_{h+1}^\eta(s') d P_h(s' | s, a) \\
        &\geq \frac{(1 - \eta)^2}{2 \left( 1 + \eta (2 \kappa - 1) \right)} \cdot \dhel{\pi_h(\cdot | s)}{\pi_h'(\cdot | s)} \\
        &\quad + (1 - \eta) \cdot (1 - \eta)^{H - h} \cdot  \frac{(1 - \eta)^2}{2 \left( 1 + \eta (2 \kappa - 1) \right)} \cdot  \sum_{a \in \cA} \sqrt{\pi_h(a | s) \cdot \pi_h'(a | s)} \cdot \int  D_{h+1}(s') d P_h(s' | s, a) \\
        &\geq \frac{(1 - \eta)^{H+3-h}}{2 \left( 1 + \eta (2 \kappa - 1) \right)} \cdot \left( \dhel{\pi_h(\cdot | s)}{\pi_h'(\cdot | s)} +  \sum_{a \in \cA} \sqrt{\pi_h(a | s) \cdot \pi_h'(a | s)} \cdot \int  D_{h+1}(s') d P_h(s' | s, a)\right) \\
        &=  \frac{(1 - \eta)^{H + 3 - h}}{2 \left( 1 + \eta (2 \kappa - 1) \right)} \cdot D_{h}(s) ,
    \end{align}
    where the first inequality used \Cref{lem:dhel_contraction_kappa}, the second inequality used the fact that
    \begin{align}
        \sqrt{\pi_{\eta,h}(a | s) \cdot \pi_{\eta,h}'(a | s)} &= \sqrt{\left( (1 - \eta) \pi_h(a | s) + \eta \nu_h(a | s) \right) \left( (1 - \eta) \pi_h'(a | s) + \eta \nu_h(a | s) \right)} \\
        &\geq (1-  \eta) \cdot \sqrt{\pi_h(a | s) \cdot \pi_h'(a | s)},
    \end{align}
    the third inequality used the inductive hypothesis, and the final inequality again used \Cref{lem:dh_recursion}.  The result follows immediately.
\end{proof}
Note that in the case that $\pi, \pi'$ are both \emph{deterministic} policies, then any $\nu$ is $\kappa$-dominated with $\kappa = 1$ and thus
\begin{align}
    \dhel{\pp^{\pi}}{\pp^{\pi'}} \lesssim (1 - \eta)^{- H - 2} \cdot \dhel{\pp^{\pi_\eta}}{\pp^{\pi_\eta'}} \leq  \frac{e^{\nicefrac{\eta H}{1-\eta}}}{(1 - \eta)^2} \cdot \dhel{\pp^{\pi_\eta}}{\pp^{\pi_\eta'}} .
\end{align}

We now show how to get a weaker guarantee in the absence of $\kappa$-domination.  We restate the result with constants made explicit now.  
\begin{proposition}\label{prop:hellinger_contraction_arbitrary}
    Let $M$ be a horizon $H$ MDP and let $\pi, \pi'$ be two policies.  For any choice of $\nu$ and any $0 < \eta < 1$, it holds that
    \begin{align}
        \dhel{\pp^{\pi}}{\pp^{\pi'}} \leq \sqrt{2 \cdot \frac{(1 - \eta)^{- H} - 1}{\eta ( 1- \eta)} \cdot \dhel{\pp^{\pi_\eta}}{\pp^{\pi'_\eta}}}.
    \end{align}
\end{proposition}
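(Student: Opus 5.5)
The plan is to pass through total variation and avoid any per-step multiplicative comparison of Hellinger distances. \Cref{rmk:kappa_domination_necessity} shows such a comparison can lose a square root without $\kappa$-domination, which is why the statement carries a square root. By \Cref{prop:hellinger_tv_relation}, $\dhel{\pp^{\pi}}{\pp^{\pi'}} \leq \tvd{\pp^{\pi}}{\pp^{\pi'}}$, so it suffices to show
\begin{align}
\tvd{\pp^{\pi}}{\pp^{\pi'}}^2 \leq 2\cdot\frac{(1-\eta)^{-H}-1}{\eta(1-\eta)}\cdot \dhel{\pp^{\pi_\eta}}{\pp^{\pi'_\eta}}.
\end{align}

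First, I would bound the clean TV by a sum of one-step discrepancies using a telescoping (simulation-lemma) argument. Couple the two rollouts until the first step at which the actions differ; this gives
\begin{align}
\tvd{\pp^{\pi}}{\pp^{\pi'}} \leq \sum_{h=1}^H \ee^{\pi}\left[\tvd{\pi_h(\cdot\mid s_h)}{\pi'_h(\cdot\mid s_h)}\right].
\end{align}
Two elementary facts then transfer everything to the noisy policies. The first is exact: $\tvd{\pi_{\eta,h}}{\pi'_{\eta,h}} = (1-\eta)\,\tvd{\pi_h}{\pi'_h}$, because the $\nu$ component cancels. The second is a change of measure: $\pi_{\eta,h} \geq (1-\eta)\pi_h$ pointwise, so the law of the prefix $(s_1,a_1,\dots,s_h)$ under $\pi$ has density at most $(1-\eta)^{-(h-1)}$ with respect to its law under $\pi_\eta$. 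Combining the two yields
\begin{align}
\tvd{\pp^{\pi}}{\pp^{\pi'}} \leq \sum_{h=1}^H (1-\eta)^{-h}\, t_h, \qquad t_h := \ee^{\pi_\eta}\left[\tvd{\pi_{\eta,h}(\cdot\mid s_h)}{\pi'_{\eta,h}(\cdot\mid s_h)}\right].
\end{align}
The weights already match the statement's constant: $\sum_{h=1}^H (1-\eta)^{-h} = \frac{(1-\eta)^{-H}-1}{\eta}$.

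Second, apply Cauchy--Schwarz with these weights:
\begin{align}
\Big(\sum_h w_h t_h\Big)^2 \leq \Big(\sum_h w_h\Big)\Big(\sum_h w_h t_h^2\Big), \qquad w_h = (1-\eta)^{-h}.
\end{align}
Then use $t_h^2 \leq 2\,\ee^{\pi_\eta}[\dhel{\pi_{\eta,h}}{\pi'_{\eta,h}}]$ from \Cref{prop:hellinger_tv_relation} and Jensen. What remains is to show that the weighted sum of per-step noisy Hellinger distances, evaluated along $\pi_\eta$-visited states, is at most $(1-\eta)^{-1}\cdot\dhel{\pp^{\pi_\eta}}{\pp^{\pi'_\eta}}$. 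The factor $(1-\eta)^{-1}$ there is the slack for the extra $1/(1-\eta)$ in the statement.

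This last step is the main obstacle, because Hellinger has no chain rule: \Cref{prop:hellinger_subadditive} goes in the wrong direction. I would attack it with the exact affinity recursion of \Cref{lem:dh_recursion} applied to the noisy pair $(\pi_\eta,\pi'_\eta)$. Unrolling that recursion writes $\dhel{\pp^{\pi_\eta}}{\pp^{\pi'_\eta}}$ as a sum over $h$ of per-step Hellinger terms integrated against the sub-probability "affinity" measure with weights $\prod_{k<h}\sqrt{\pi_{\eta,k}\pi'_{\eta,k}}$ times transitions. The task is then to lower bound this measure by $\pp^{\pi_\eta}$ on prefixes, up to the factor $w_h$, possibly trading $\pp^{\pi_\eta}$ for the midpoint $\tfrac12(\pp^{\pi_\eta}+\pp^{\pi'_\eta})$.

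If that comparison fails, my fallback is to redo the first step with a hybrid chain $Q_h$ (play $\pi$ for the first $h$ steps and $\pi'$ afterwards) and telescope in TV rather than in per-step expectations. This would bound each TV increment directly by a $(1-\eta)^{-h}$-reweighted noisy TV increment, and the weighted Cauchy--Schwarz above would then be applied to those increments.
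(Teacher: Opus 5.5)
Your first steps are all correct: the simulation-lemma bound, the identity $\tvd{\pi_{\eta,h}}{\pi'_{\eta,h}}=(1-\eta)\tvd{\pi_h}{\pi'_h}$, the density bound, the weighted Cauchy--Schwarz, and $t_h^2\le 2\,\ee^{\pi_\eta}[\dhelinline{\pi_{\eta,h}}{\pi'_{\eta,h}}]$. The problem is the inequality you are still left needing,
\begin{align}
\sum_{h=1}^H(1-\eta)^{-h}\,\ee^{\pi_\eta}\left[\Delta^\eta_h(s_h)\right]\le\frac{1}{1-\eta}\,\dhel{\pp^{\pi_\eta}}{\pp^{\pi'_\eta}},\qquad \Delta^\eta_h(s):=\dhel{\pi_{\eta,h}(\cdot\mid s)}{\pi'_{\eta,h}(\cdot\mid s)}.
\end{align}
It is false for every $H\ge 2$ and $\eta>0$. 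Let transitions ignore the action and use the same pair at every step (say $\pi_h=\delta_0$, $\pi'_h=\Bernoulli(\epsilon)$, $\nu_h=\Bernoulli(\nicefrac12)$), so that $\Delta^\eta_h\equiv e>0$. Then the left side is $e\sum_h(1-\eta)^{-h}>He/(1-\eta)$, while the right side is $(1-(1-e)^H)/(1-\eta)\le He/(1-\eta)$.

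The failure is not only the missing reverse chain rule. In the absorbing chain of \Cref{prop:offline_lb_kappa_1}, only step $H$ contributes, and the two sides still differ by exactly $(1-\eta)^{-(H-1)}$. You already spent the factor $(1-\eta)^{-(h-1)}$ moving from $\pp^{\pi}$ to $\pp^{\pi_\eta}$, and Cauchy--Schwarz then places the growing weights in the second factor as well. The noisy trajectory Hellinger weights step $h$ only by the noisy affinity sub-measure, which has no compensating growth. Your proposed comparison is impossible on mass grounds: that sub-measure has mass at most $1$, while $(1-\eta)w_h\,\pp^{\pi_\eta}$ (or the midpoint version) has mass $(1-\eta)^{-(h-1)}>1$.

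The fallback changes nothing. Consecutive hybrids share the prefix law and the continuation kernel, so their TV distance is exactly $t_h$ (resp.\ $\ee^{\pi}[\tvd{\pi_h}{\pi'_h}]$). Even without Jensen, $\sum_h w_h t_h^2\lesssim(1-\eta)^{-1}\dhel{\pp^{\pi_\eta}}{\pp^{\pi'_\eta}}$ fails in the action-independent example with noisy marginals $\Bernoulli(\nicefrac12)$ vs.\ $\Bernoulli(\nicefrac12+\gamma)$. For fixed $\eta$ the left side grows like $(1-\eta)^{-H}\gamma^2$ and the right side only like $H\gamma^2$.

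The missing idea is to stay in Hellinger on both sides and do the change of measure on the affinities, not on the rollout laws. Unrolling \Cref{lem:dh_recursion} gives exactly $\dhel{\pp^\pi}{\pp^{\pi'}}=\sum_h\int\Delta_h\,d\omega_h$ and $\dhel{\pp^{\pi_\eta}}{\pp^{\pi'_\eta}}=\sum_h\int\Delta^\eta_h\,d\omega^\eta_h$. Here $\Delta_h$ is the clean per-step distance, and $\omega_h,\omega^\eta_h$ are the clean and noisy affinity sub-measures of $s_h$, built from $\rho$, the kernels, and the factors $\sqrt{\pi_k\pi'_k}$ (resp.\ $\sqrt{\pi_{\eta,k}\pi'_{\eta,k}}$) for $k<h$; each has mass at most $1$. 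Since $\sqrt{\pi_{\eta,k}\pi'_{\eta,k}}\ge(1-\eta)\sqrt{\pi_k\pi'_k}$, we get $\omega_h\le(1-\eta)^{-(h-1)}\omega^\eta_h$. \Cref{lem:hellinger_contraction_arbitrary} gives $\Delta_h\le\frac{\sqrt2}{1-\eta}\sqrt{\Delta^\eta_h}$. Hence
\begin{align}
\int\Delta_h\,d\omega_h\le\frac{\sqrt2}{1-\eta}\sqrt{\int\Delta^\eta_h\,d\omega_h}\le\frac{\sqrt2}{1-\eta}(1-\eta)^{-\frac{h-1}{2}}\sqrt{\int\Delta^\eta_h\,d\omega^\eta_h}.
\end{align}
Cauchy--Schwarz over $h$ then yields the statement with exactly the constant $\frac{2}{(1-\eta)^2}\sum_{h=1}^H(1-\eta)^{-(h-1)}=\frac{2((1-\eta)^{-H}-1)}{\eta(1-\eta)}$. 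The growing weights now sit only in the first factor, and the second factor is the noisy trajectory Hellinger itself. This is the paper's argument unrolled. The paper runs the same three ingredients as a backward induction $D^\eta_h(s)\ge C_h D_h(s)^2$, with $ax^2+by^2\ge(x+y)^2/(a^{-1}+b^{-1})$ playing the role of the weighted Cauchy--Schwarz.
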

We first prove the following general result on the contraction of Hellinger distance under arbitrary corruptions.
\begin{lemma}\label{lem:hellinger_contraction_arbitrary}
    Let $P, Q$ be two distributions over a common space $\mathcal{X}$ and let $\nu$ be an arbitrary distribution over $\mathcal{X}$.  For any $0 < \eta < 1$, letting $P_\eta = (1 - \eta) P + \eta \nu$ and $Q_\eta = (1 - \eta) Q + \eta \nu$, it holds that
    \begin{align}
        \dhel{P}{Q} \leq \frac{\sqrt{2}}{1 - \eta} \cdot \sqrt{\dhel{P_\eta}{Q_\eta}}.
    \end{align}
\end{lemma}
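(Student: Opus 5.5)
The plan is to pass through total variation distance, where the mixture with $\nu$ acts exactly linearly, and to use \Cref{prop:hellinger_tv_relation} to move between Hellinger and TV on both sides. Hellinger distance interacts badly with mixtures (this is the non-Lipschitzness of $\sqrt{\cdot}$ near zero from \Cref{rmk:kappa_domination_necessity}). TV distance, by contrast, is a norm of the difference of the measures, and the common component $\eta\nu$ cancels exactly. The price is the square-root loss incurred when converting back to Hellinger, which is why the statement has $\sqrt{\dhelinline{P_\eta}{Q_\eta}}$.

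\textbf{Step 1 (lower side).} Apply the left inequality of \Cref{prop:hellinger_tv_relation} to get $\dhel{P}{Q} \leq \tvd{P}{Q}$.

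\textbf{Step 2 (the key identity).} Note that $P_\eta - Q_\eta = (1-\eta)(P - Q)$ as signed measures, since the $\eta\nu$ terms cancel. Taking the supremum over events $A$ in the definition of TV then gives
\begin{align}
\tvd{P_\eta}{Q_\eta} = \sup_A |P_\eta(A) - Q_\eta(A)| = (1-\eta)\,\tvd{P}{Q}.
\end{align}
Hence $\tvd{P}{Q} = (1-\eta)^{-1}\tvd{P_\eta}{Q_\eta}$.

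\textbf{Step 3 (upper side).} Apply the right inequality of \Cref{prop:hellinger_tv_relation} to the noisy pair: $\tvd{P_\eta}{Q_\eta} \leq \sqrt{2\,\dhel{P_\eta}{Q_\eta}}$.

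Chaining Steps 1--3 yields
\begin{align}
\dhel{P}{Q} \leq \tvd{P}{Q} = \frac{\tvd{P_\eta}{Q_\eta}}{1-\eta} \leq \frac{\sqrt 2}{1-\eta}\sqrt{\dhel{P_\eta}{Q_\eta}},
\end{align}
which is exactly the claim, constant included.

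There is no real obstacle. The only point needing care is that the cancellation in Step 2 must happen at the level of TV, not Hellinger, and that both directions of \Cref{prop:hellinger_tv_relation} are used as stated. The argument needs no assumption on $\nu$ and no domination condition, consistent with the lemma's generality. I expect the subsequent proof of \Cref{prop:hellinger_contraction_arbitrary} to apply this lemma per step, with the $\sqrt{\cdot}$ loss absorbed into the geometric sum $\sum_h (1-\eta)^{-h}$.
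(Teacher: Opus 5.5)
Your proof is correct and is the paper's argument: the clean pair satisfies $\dhelinline{P}{Q} \leq \tvd{P}{Q}$, the total variation scales exactly by $(1-\eta)$ because the common $\eta\nu$ component cancels, and the noisy pair satisfies $\tvd{P_\eta}{Q_\eta} \leq \sqrt{2\,\dhelinline{P_\eta}{Q_\eta}}$. The only difference is that the paper writes the middle step with the variational form over test functions $0 \leq f \leq 1$ instead of events, which changes nothing.
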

\begin{proof}
    By \Cref{prop:hellinger_tv_relation}, it holds that
    \begin{align}
        \sqrt{2 \dhel{P_\eta}{Q_\eta}} &\geq \tvd{P_\eta}{Q_\eta} \\
        &= \sup_{0 \leq f \leq 1} \left| \ee_{P_\eta}[f] - \ee_{Q_\eta}[f] \right| \\
        &= (1 - \eta) \cdot \sup_{0 \leq f \leq 1} \left| \ee_{P}[f] - \ee_{Q}[f] \right| \\
        &= (1 - \eta) \cdot \tvd{P}{Q} \\
        &\geq (1 - \eta) \cdot \dhel{P}{Q},
    \end{align}
    where the second equality comes from the linearity of expectation and the final inequality again comes from \Cref{prop:hellinger_tv_relation}.
\end{proof}
We can now prove the proposition.
\begin{proof}[Proof of \Cref{prop:hellinger_contraction_arbitrary}]
    We will apply backward induction and \Cref{lem:dh_recursion}.  Indeed, we use the identical notation as that used in the proof thereof.  We suppose that there are constants $C_h$ for $h = H, \dots, 1$ satisfying $C_H = 2^{-1} \cdot(1 - \eta)^2$ and
    \begin{align}
        \frac{1}{C_h} = \frac{2}{(1 - \eta)^2} + \frac{1}{(1 - \eta) \cdot C_{h+1}}
    \end{align}
    such that
    \begin{align}
        D_h^\eta(s) \geq C_h \cdot D_{h}(s)^2 \quad \text{for all } s \in \cS.
    \end{align}
    By \Cref{lem:hellinger_contraction_arbitrary}, the statement holds for $h = H$.  Now, by \Cref{lem:dh_recursion}, it holds that
    \begin{align}
        D_h^\eta(s) &= \dhel{\pi_{\eta,h}(\cdot | s)}{\pi_{\eta,h}'(\cdot | s)} + \sum_{a \in \cA} \sqrt{\pi_{\eta,h}(a|s) \cdot \pi_{\eta,h}'(a | s)} \cdot \int D_{h+1}^\eta(s') d P_h(s' | s, a) \\
        &\geq 2^{-1} \cdot(1 - \eta)^2 \cdot \dhel{\pi_h(\cdot | s)}{\pi_h'(\cdot | s)}^2 \\
        &\quad + (1 - \eta) \cdot \sum_{a \in \cA} \sqrt{\pi_{h}(a|s) \cdot \pi_{h}'(a | s)} \cdot \int D_{h+1}^\eta(s') d P_h(s' | s, a) \\
        &\geq 2^{-1} \cdot(1 - \eta)^2\cdot \dhel{\pi_h(\cdot | s)}{\pi_h'(\cdot | s)}^2 \\
        &\quad + (1 - \eta) \cdot \sum_{a \in \cA} \sqrt{\pi_{h}(a|s) \cdot \pi_{h}'(a | s)} \cdot \int C_{h+1} D_{h+1}(s')^2 d P_h(s' | s, a) \\
        &\geq  2^{-1} \cdot(1 - \eta)^2 \cdot \dhel{\pi_h(\cdot | s)}{\pi_h'(\cdot | s)}^2 \\
        &\quad + C_{h+1} (1 - \eta) \cdot \left(\sum_{a \in \cA} \sqrt{\pi_{h}(a|s) \cdot \pi_{h}'(a | s)} \cdot \int  D_{h+1}(s') d P_h(s' | s, a)\right)^2,
    \end{align}
    where the second inequality follows from the inductive hypothesis and the final inequality follows from Jensen's inequality and the fact that $\sum_a \sqrt{\pi_h(a|s) \cdot \pi_h'(a | s)} \leq 1$.  By AM-GM, it holds that for $a,b,x,y \geq 0$,
    \begin{align}
        a \cdot x^2 + b \cdot y^2 \geq \frac{1}{\frac 1a + \frac 1b} \cdot (x + y)^2.
    \end{align}
    Thus, applying this to the final expression in the above display, we see that
    \begin{align}
        D_h^\eta(s) &\geq C_h \cdot \left( \dhel{\pi_h(\cdot | s)}{\pi_h'(\cdot | s)} +  \sum_{a \in \cA} \sqrt{\pi_{h}(a|s) \cdot \pi_{h}'(a | s)} \cdot \int  D_{h+1}(s') d P_h(s' | s, a)\right)^2 \\
        &= C_h \cdot D_h(s)^2,
    \end{align}
    where the equality follows from \Cref{lem:dh_recursion}.  Thus, the inductive step is complete.  

    It remains to bound $C_h$ itself.  Letting $c_h = C_h^{-1}$, we see that
    \begin{align}
        c_H = 2(1-\eta)^{-2},
        \qquad
        c_h = 2(1 - \eta)^{-2} + (1 - \eta)^{-1} \cdot c_{h+1}
    \end{align}
    for $h < H$.  Thus by induction, it holds that
    \begin{align}
        c_1 = 2 \cdot\frac{(1 - \eta)^{- H} - 1}{\eta ( 1- \eta)}.
    \end{align}
    The result follows.
\end{proof}

Finally, we note that \Cref{thm:offline_bc} follows immediately from \Cref{thm:offline_bc_tight} and \Cref{prop:hellinger_contraction_arbitrary}, concluding the proof.

\subsection{Proof of Proposition \ref{prop:offline_lower_bound}}\label{app:offline_lb}

We will prove three lower bounds: one for arbitrary $\kappa$, one for $\kappa = 1$ with a deterministic expert, and one that holds absent $\kappa$-domination.  We begin by stating the common construction for the first two, before proving these first two results separately.

We will take an MDP that has $H+1$ states $s_1, \dots, s_H$ and an absorbing state $\perp$.  We will suppose the action space $\cA = \left\{ a_1, a_2, a_3 \right\}$.  Let the transition functions for $h < H$ be
\begin{align}
    P_h(s' | s, a) = \begin{cases}
        \delta_{s_{h+1}} & a = a_1 \text{ and }  s = s_h \\
        \delta_\perp & a \in \left\{ a_2, a_3 \right\} \text{ or } s = \perp
    \end{cases}.
\end{align}
In other words, $\perp$ is an absorbing state that is reached by taking a `wrong' action at any state, otherwise we deterministically transition to the next state. Since the only place where the two clean policies will differ is hidden behind an $H$-step survival event, and corruption before that point sends the learner to a state where the policies are identical, offline noisy trajectories almost never reveal the difference between the two policies, even though they differ with probability 1 in the clean setting.  

We now state the formal proposition for the case of $\kappa = 1$ and deterministic experts.
\begin{proposition}\label{prop:offline_lb_kappa_1}
    For any $H \geq 1$ and $0 \leq \eta < 1$, there exists an MDP, a policy class $\Pi$ of size 2, and a corruption distribution $\nu$ such that both policies in $\Pi$ are deterministic, but
    \begin{align}
        \dhel{\pp^{\pistar}}{\pp^{\pihat}} = 1 \quad \text{but} \quad \dhel{\pp^{\pistar_\eta}}{\pp^{\pihat_\eta}} \leq (1 - \eta)^{H + 1}.
    \end{align}
\end{proposition}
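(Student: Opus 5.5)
We use the common construction above: states $s_1, \dots, s_H$ plus the absorbing state $\perp$, and actions $\{a_1, a_2, a_3\}$. The idea is that the two clean policies are deterministic and agree everywhere except at $s_H$. Any corruption before step $H$ sends both noisy rollouts into $\perp$, where they coincide. So the noisy trajectory laws can differ only on the event of surviving to $s_H$, which has probability $(1-\eta)^{H-1}$, and the final step then contributes at most another $(1-\eta)^2$.

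\textbf{Construction.} For $h < H$, let $\pistar_h(\cdot \mid s_h) = \pihat_h(\cdot \mid s_h) = \delta_{a_1}$. At step $H$, let $\pistar_H(\cdot \mid s_H) = \delta_{a_2}$ and $\pihat_H(\cdot \mid s_H) = \delta_{a_3}$. At $\perp$, both policies play $\delta_{a_2}$ at every step. Set $\Pi = \{\pistar, \pihat\}$, so both policies are deterministic. For the corruption, take $\nu_h(\cdot \mid s) = \delta_{a_2}$ for $h < H$ and for $s = \perp$, and $\nu_H(\cdot \mid s_H) = \Unif(\{a_2, a_3\})$.

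\textbf{Clean distance.} Both clean policies deterministically reach $s_H$. There they play distinct actions, so $\pp^{\pistar}$ and $\pp^{\pihat}$ have disjoint supports and $\dhel{\pp^{\pistar}}{\pp^{\pihat}} = 1$.

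\textbf{Noisy distance.} I will apply \Cref{lem:dh_recursion} to $\pistar_\eta$ and $\pihat_\eta$, writing $D^\eta_h(s)$ for the step-$h$ conditional Hellinger distance at state $s$.
\begin{itemize}
\item At every state other than $s_H$ the two noisy policies coincide. Hence the per-step term $\dhel{\cdot}{\cdot}$ vanishes there, and $D^\eta_h(\perp) = 0$ for all $h$ because everything downstream is identical.
\item For $h < H$, the per-step affinity is $\sum_a \pistar_{\eta,h}(a \mid s_h) = 1$. Only the action $a_1$, which has probability $1-\eta$, leads to $s_{h+1}$; the others lead to $\perp$. The recursion therefore gives $D^\eta_h(s_h) = (1-\eta)\, D^\eta_{h+1}(s_{h+1})$, and so $D^\eta_1(s_1) = (1-\eta)^{H-1} D^\eta_H(s_H)$.
\item At $s_H$, the noisy actions are $\pistar_\eta = (1-\tfrac{\eta}{2})\delta_{a_2} + \tfrac{\eta}{2}\delta_{a_3}$ and $\pihat_\eta = \tfrac{\eta}{2}\delta_{a_2} + (1-\tfrac{\eta}{2})\delta_{a_3}$. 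Their affinity is $2\sqrt{(1-\tfrac{\eta}{2})\tfrac{\eta}{2}} = \sqrt{\eta(2-\eta)} = \sqrt{1-(1-\eta)^2}$.
\item Consequently $D^\eta_H(s_H) = 1 - \sqrt{1-(1-\eta)^2} \leq (1-\eta)^2$, using $1-\sqrt{1-x} \leq x$ for $x \in [0,1]$.
\end{itemize}
Since $\rho = \delta_{s_1}$, combining gives $\dhel{\pp^{\pistar_\eta}}{\pp^{\pihat_\eta}} = D^\eta_1(s_1) \leq (1-\eta)^{H+1}$.

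\textbf{Main obstacle.} The only delicate point is the last step. A naive total-variation bound at $s_H$ gives only $(1-\eta)$, which yields $(1-\eta)^H$. To obtain the extra factor we need symmetric corruption at $s_H$, so that the affinity is $\sqrt{1-(1-\eta)^2}$. The case $H = 1$ is covered by the same computation, with the empty product over $h < H$.
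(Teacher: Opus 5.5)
Your proposal is correct and is essentially the paper's own proof. It uses the same construction: two deterministic policies that differ only at $s_H$, corruption before step $H$ that diverts both noisy rollouts to $\perp$, and corruption at $s_H$ split evenly over the two differing actions, so that the final-step Hellinger distance is $1-\sqrt{\eta(2-\eta)} \leq (1-\eta)^2$. The only differences are a cosmetic relabeling of actions and that you make the paper's ``condition on surviving to $s_H$'' step rigorous via the backward recursion of \Cref{lem:dh_recursion}.
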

\begin{proof}
    We will consider the MDP described above with policy class $\Pi = \left\{ \pistar, \pihat \right\}$ of size 2 such that $\pihat( \cdot | s_h) = \pistar(\cdot | s_h) = \delta_{a_1}$ (an atom on $a_1$) for $h < H$, $\pistar(s_H) = \delta_{a_1}$, $\pihat(s_H) = \delta_{a_2}$, and both policies take the same action at $\perp$.  Finally, suppose that
\begin{align}
    \nu(\cdot | s_H) = \frac{\delta_{a_1} + \delta_{a_2}}{2} \quad \text{and} \quad \nu(\cdot | s_h) = \delta_{a_3} \text{ for } h < H.
\end{align}
We claim that for any $0 \leq \eta < 1$, the result of the proposition holds for this construction.  Note that the $\eta = 0$ case is trivial, so suppose that $\eta > 0$.  We now observe that $\dhel{\pp^{\pistar}}{\pp^{\pihat}} = 1$ because the policies deterministically differ in the final time step, which is reached with probability 1.  Thus we focus on upper bounding $\dhelinline{\pp^{\pistar_\eta}}{\pp^{\pihat_\eta}}$.

Note first that for $\pi \in \left\{ \pistar, \pihat \right\}$, with probability at least $1 - (1 - \eta)^{H-1}$, it holds that the trajectory under $\pi$ will transition to $\perp$ before the final time step; moreover, conditional on this event, both corrupted policies induce the same distribution.  On the complementary event of no contamination up to step $H$, it holds that
\begin{align}
    \pistar_\eta = \left( 1 - \frac{\eta}{2} \right)\cdot  \delta_{a_1} + \frac{\eta}{2} \cdot \delta_{a_2} \quad \text{and} \quad \pihat_\eta = \frac{\eta}{2} \cdot \delta_{a_1} + \left(1 - \frac \eta 2  \right) \cdot \delta_{a_2}.
\end{align}
Thus,
\begin{align}
    \dhel{\pistar_\eta(\cdot | s_H)}{\pihat_\eta(\cdot | s_H)} = 1 - \sqrt{\eta (2 - \eta)} = \frac{(1 - \eta)^2}{1 + \sqrt{\eta(2 - \eta)}} \leq (1 - \eta)^2.
\end{align}
Combining these observations, we see that
\begin{align}
    \dhel{\pp^{\pistar_\eta}}{\pp^{\pihat_\eta}} \leq (1 - \eta)^{H - 1} \cdot (1 - \eta)^2 = (1 - \eta)^{H+1}.
\end{align}
The result follows.
\end{proof}
We now state the formal proposition for the case of arbitrary $\kappa$.
\begin{proposition}\label{prop:offline_lb_kappa}
    For any $H \geq 1$, $\kappa \geq 1$, and $0 \leq \eta < 1$, there exists an MDP, a policy class $\Pi$ of size 2, and a corruption distribution $\nu$ such that both policies in $\Pi$ $\kappa$-dominate $\nu$, but
    \begin{align}
        \dhel{\pp^{\pi}}{\pp^{\pi'}} \geq 4 \eta \cdot \kappa \cdot (1 - \eta)^{-H - 1}\cdot  \dhel{\pp^{\pi_\eta}}{\pp^{\pi_\eta'}}.
    \end{align}
\end{proposition}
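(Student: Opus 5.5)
We reuse the chain MDP with absorbing state $\perp$ introduced above, but make the two policies \emph{stochastic} only at the last state $s_H$, so that $\kappa$ enters through a small-probability regime. For $h<H$, set $\pi(\cdot\mid s_h)=\pi'(\cdot\mid s_h)=\delta_{a_1}$ and $\nu(\cdot\mid s_h)=\delta_{a_3}$. Since $a_3\notin\supp$ of either policy there, $\kappa$-domination is vacuous at these steps. Let both policies act identically at $\perp$. At $s_H$, fix a small $\epsilon$ with $\kappa\epsilon\le \nicefrac12$ and take
\begin{align}
\pi(\cdot\mid s_H)=(1-\epsilon)\delta_{a_1}+\epsilon\,\delta_{a_2},\qquad \pi'(\cdot\mid s_H)=(1-\epsilon)\delta_{a_1}+\epsilon\,\delta_{a_3},
\end{align}
\begin{align}
\nu(\cdot\mid s_H)=(1-2\kappa\epsilon)\delta_{a_1}+\kappa\epsilon\,\delta_{a_2}+\kappa\epsilon\,\delta_{a_3}.
\end{align}
Then $\nu(a_2\mid s_H)=\kappa\epsilon=\kappa(\pi(a_2\mid s_H)+\pi'(a_2\mid s_H))$, the same holds for $a_3$, and $\nu(a_1\mid s_H)\le 1\le \kappa\cdot 2(1-\epsilon)$. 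Hence both policies $\kappa$-dominate $\nu$. The case $\eta=0$ is trivial, so assume $\eta>0$.

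First, the clean side. Both clean policies reach $s_H$ with probability one and agree before it. Applying \Cref{lem:dh_recursion} down the chain gives $\dhel{\pp^\pi}{\pp^{\pi'}}=\dhel{\pi(\cdot\mid s_H)}{\pi'(\cdot\mid s_H)}$. This equals $\epsilon$ exactly: the $a_1$ term vanishes, and the $a_2$ and $a_3$ terms contribute $\nicefrac{\epsilon}{2}$ each.

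Next, the noisy side. Again by \Cref{lem:dh_recursion}, $D^\eta_h$ picks up no per-step term for $h<H$, because $\pi_{\eta,h}=\pi'_{\eta,h}$ there. Moreover, $D^\eta_{h+1}(\perp)=0$, and $a_1$ is the only action leading forward, with affinity weight $\pi_{\eta,h}(a_1\mid s_h)=1-\eta$. Thus
\begin{align}
\dhel{\pp^{\pi_\eta}}{\pp^{\pi'_\eta}}=(1-\eta)^{H-1}\,\dhel{\pi_\eta(\cdot\mid s_H)}{\pi'_\eta(\cdot\mid s_H)}.
\end{align}

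The main step is the one-step computation at $s_H$, which reverses \Cref{lem:dhel_contraction_kappa}. Write the Hellinger distance as $\tfrac12\sum_a (x_a-y_a)^2/(\sqrt{x_a}+\sqrt{y_a})^2$, as in that proof. The $a_1$ term is zero. At $a_2$ we have $x=(1-\eta)\epsilon+\eta\kappa\epsilon$ and $y=\eta\kappa\epsilon$, so $x-y=(1-\eta)\epsilon$. The key bound is
\begin{align}
(\sqrt{x}+\sqrt{y})^2\ge x+y+2\sqrt{xy}\ge (1-\eta)\epsilon+4\eta\kappa\epsilon\ge 4\eta\kappa\epsilon,
\end{align}
using $\sqrt{xy}\ge \eta\kappa\epsilon$. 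The $a_3$ term is symmetric. Summing the two terms,
\begin{align}
\dhel{\pi_\eta(\cdot\mid s_H)}{\pi'_\eta(\cdot\mid s_H)}\le \frac{(1-\eta)^2\epsilon^2}{4\eta\kappa\epsilon}=\frac{(1-\eta)^2\epsilon}{4\eta\kappa}.
\end{align}

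Combining the two sides yields
\begin{align}
\dhel{\pp^{\pi_\eta}}{\pp^{\pi'_\eta}}\le \frac{(1-\eta)^{H+1}}{4\eta\kappa}\,\dhel{\pp^{\pi}}{\pp^{\pi'}},
\end{align}
which is the claim. The only delicate point is obtaining the sharp constant $4$: the cruder bound $(\sqrt x+\sqrt y)^2\ge x+y$ loses a factor of $2$, and using the cross term $2\sqrt{xy}$ recovers it.
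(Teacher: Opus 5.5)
Your proposal is correct and is essentially the paper's argument. It uses the same chain MDP with absorbing state $\perp$, the backward recursion of \Cref{lem:dh_recursion} to reduce both sides to the single step at $s_H$, and the bound $(\sqrt{x}+\sqrt{y})^2 \geq 4y$ there.

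The differences are cosmetic. You make the policies deterministic before $s_H$, so domination is vacuous there and no $(1-\epsilon)^{H-1}$ survival factor appears, and you keep $\epsilon \leq \nicefrac{1}{2\kappa}$ free with residual $\nu$-mass on $a_1$. The paper instead uses $(1-\epsilon)\delta_{a_1}+\epsilon\,\delta_{a_3}$ at every earlier step, with $\epsilon = \nicefrac{1}{2\kappa}$ and $\nu_H(\cdot\mid s_H) = \tfrac12(\delta_{a_2}+\delta_{a_3})$.
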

\begin{proof}
    We will consider the MDP described above but now let for $h < H$,
    \begin{align}
        \pi_h(\cdot | s_h) = \pi_h'(\cdot | s_h) = (1 - \epsilon) \cdot \delta_{a_1} + \epsilon \cdot \delta_{a_3}
    \end{align}
    and
    \begin{align}
        \pi_H(\cdot | s_H) = (1 - \epsilon) \cdot \delta_{a_1} + \epsilon \cdot \delta_{a_2} \quad \text{and} \quad \pi_H'(\cdot | s_H) = (1 - \epsilon) \cdot \delta_{a_1} + \epsilon \cdot \delta_{a_3}.
    \end{align}
    Let both policies take action $a_2$ on $\perp$.  Finally, let
    \begin{align}
        \nu_h(\cdot | s_h) = \delta_{a_3} \text{ for } h < H \quad \text{and} \quad \nu_H(\cdot | s_H) = \frac{\delta_{a_2} + \delta_{a_3}}{2},
    \end{align}
    and let $\nu(\cdot | \perp) = \delta_{a_2}$.  Suppose that $\epsilon = \nicefrac{1}{2 \kappa}$.  We first note that $\nu$ is $\kappa$-dominated.  Indeed, for $h < H$ this is immediate for action $a_1$, and for $a_3$,
    \begin{align}
        \nu_h(a_3 | s_h) = 1 = \kappa \cdot 2 \epsilon = \kappa \cdot \left( \pi_h(a_3 | s_h) + \pi_h'(a_3 | s_h) \right).
    \end{align}
    At $s_H$, for $a_2$ we have
    \begin{align}
        \nu_H(a_2 | s_H) = \frac 12 = \kappa \cdot  \epsilon = \kappa \cdot \left( \pi_H(a_2 | s_H) + \pi_H'(a_2 | s_H) \right),
    \end{align}
    and similarly for $a_3$.  Thus $\nu$ is $\kappa$-dominated.

    We now compute the Hellinger distance between the \emph{clean} trajectory distributions.  Let $\cE$ denote the event that the trajectory reaches $s_H$.  Note that the two laws are identical on $\cE^c$.  On the other hand,
    \begin{align}
        \pp^\pi(\cE) = \pp^{\pi'}(\cE) = (1 - \epsilon)^{H-1} \quad \text{and} \quad \dhel{\pi_H(\cdot | s_H)}{\pi_H'(\cdot | s_H)} = \epsilon
    \end{align}
    Thus,
    \begin{align}
        \dhel{\pp^\pi}{\pp^{\pi'}} = (1 - \epsilon)^{H-1} \cdot \epsilon.
    \end{align}

    We now compute the Hellinger distance between the corrupted trajectory distributions.  For $h < H$ the policies coincide and the probability of reaching $s_H$ is $(1 - \eta)^{H-1} \cdot (1 - \epsilon)^{H - 1}$.  At the final time step,
    \begin{align}
        \dhel{\pi_{\eta,H}(\cdot | s_H)}{\pi_{\eta,H}'(\cdot | s_H)} = \left( \sqrt{\nicefrac \eta 2 + (1 - \eta)\epsilon}  - \sqrt{\nicefrac \eta 2}\right)^2 \leq \frac{(1 - \eta)^2 \epsilon^2}{2 \eta}.
    \end{align}  
    Thus,
    \begin{align}
        \dhel{\pp^{\pi_\eta}}{\pp^{\pi'_\eta}} &\leq \left( (1 - \eta)(1 - \epsilon) \right)^{H - 1} \cdot \frac{(1 - \eta)^2 \cdot \epsilon^2}{2 \eta} \\
        &\leq \frac{(1 - \eta)^{H + 1} \epsilon}{2 \eta} \cdot \dhel{\pp^{\pi}}{\pp^{\pi'}} \\
        &= \frac{(1 - \eta)^{H + 1}}{4 \eta \cdot \kappa} \cdot \dhel{\pp^{\pi}}{\pp^{\pi'}}.
    \end{align}
    The result follows.
\end{proof}

Above, we see that while $\kappa$-domination prevents the corruption from hiding policy differences on actions that the policies never take, it still allows hiding proportional to how little probability the policies put on the differing actions. 

We now provide a lower bound that holds absent $\kappa$-domination.
\begin{proposition}\label{prop:offline_lb_no_kappa}
    For any $H \geq 1$ and $0 < \eta < 1$, there exists a horizon $H$ MDP with $3$ actions, policies $\pistar, \pihat$, and a corruption $\nu$ such that
    \begin{align}
        \dhel{\pp^{\pistar}}{\pp^{\pihat}} \gtrsim \sqrt{\frac{(1 - \eta)^{-H}  - 1}{1 - \eta} \cdot \dhel{\pp^{\pistar_\eta}}{\pp^{\pihat_\eta}}}.
    \end{align}
\end{proposition}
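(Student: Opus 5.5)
I would build a chain MDP in the spirit of the construction before \Cref{prop:offline_lb_kappa_1}, but spread the difference between the two policies over all $H$ steps rather than hiding it at step $H$. The per-step mechanism is \Cref{rmk:kappa_domination_necessity}: $\pistar$ never plays a ``deviation'' action, $\pihat$ plays it with a small probability $\epsilon_h$, and $\nu$ puts constant mass on that action. Then the clean per-step Hellinger distance is of order $\epsilon_h$, while the noisy one is only of order $(1-\eta)^2\epsilon_h^2/\eta$.

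\textbf{Construction.} Use states $s_1,\dots,s_H$ on a survival path, a ``corruption'' absorbing state $\perp$ and a distinct ``deviation'' absorbing state $\top$. The action $a_1$ moves $s_h\to s_{h+1}$, the action $a_3$ moves to $\perp$, and the action $a_2$ moves to $\top$. Take $\pistar_h(\cdot\mid s_h)=\delta_{a_1}$, $\pihat_h(\cdot\mid s_h)=(1-\epsilon_h)\delta_{a_1}+\epsilon_h\delta_{a_2}$, and $\nu_h(\cdot\mid s_h)=\tfrac12(\delta_{a_2}+\delta_{a_3})$, with all policies agreeing on $\perp$ and $\top$. This $\nu$ violates $\kappa$-domination for every finite $\kappa$ as $\epsilon_h\to 0$, as it must. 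The weights are to be tuned.

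\textbf{Clean distance.} Under the clean laws, $\pistar$ deterministically follows the path. So by \Cref{prop:hellinger_decomposition} the affinity is $\prod_h\sqrt{1-\epsilon_h}$, and $\dhel{\pp^{\pistar}}{\pp^{\pihat}} = 1-\prod_h\sqrt{1-\epsilon_h}\asymp \sum_h \epsilon_h =: D$ whenever $D\le 1$.

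\textbf{Noisy distance.} I would run the recursion of \Cref{lem:dh_recursion}. Once either $\perp$ or $\top$ is reached, the two noisy laws coincide, so they only differ through visits to $s_h$. The probability of reaching $s_h$ is at most $(1-\eta)^{h-1}$. At $s_h$ the two noisy action distributions differ only by moving mass $(1-\eta)\epsilon_h$ from $a_1$ (which carries mass $\ge 1-\eta$) to $a_2$ (which carries mass $\ge\eta/2$). An elementary computation then bounds the per-step noisy Hellinger distance by $\lesssim (1-\eta)^2\epsilon_h^2/\eta + (1-\eta)\epsilon_h^2 \lesssim (1-\eta)^2\epsilon_h^2/\eta$ for $\eta$ bounded away from $1$; the case of $\eta$ near $1$ needs a slightly more careful constant. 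Using subadditivity (\Cref{prop:hellinger_subadditive}), with the expectation over noisy-path visits, gives
\begin{align}
\dhel{\pp^{\pistar_\eta}}{\pp^{\pihat_\eta}} \lesssim \sum_{h=1}^H \frac{(1-\eta)^{h+1}}{\eta}\,\epsilon_h^2 .
\end{align}

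\textbf{Optimizing the weights.} Minimize $\sum_h w_h\epsilon_h^2$ subject to $\sum_h\epsilon_h=D$, where $w_h=(1-\eta)^{h+1}/\eta$. By Cauchy--Schwarz the minimum is $D^2/\sum_h w_h^{-1}$, attained at $\epsilon_h\propto w_h^{-1}$. Here
\begin{align}
\sum_h w_h^{-1}=\eta\sum_{h=1}^H(1-\eta)^{-h-1}=\frac{(1-\eta)^{-H}-1}{1-\eta}.
\end{align}
Choosing $D$ a small constant (which also keeps every $\epsilon_h\le 1$) gives $\dhel{\pp^{\pistar_\eta}}{\pp^{\pihat_\eta}}\lesssim D^2\cdot\frac{1-\eta}{(1-\eta)^{-H}-1}$. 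Rearranging yields the claim.

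The main obstacle is the per-step noisy Hellinger estimate and its aggregation. I need the ``$\epsilon^2/\eta$'' behaviour to hold uniformly, including the contribution from the $a_1$ coordinate and in the regime where $\eta$ is close to $1$. I also need to check that subadditivity is applied under the correct (noisy) visitation law, so that the factor $(1-\eta)^{h-1}$ genuinely appears. If the subadditivity route is lossy, I would instead compute the affinity recursion of \Cref{lem:dh_recursion} exactly, since each state has only one nontrivial branch.
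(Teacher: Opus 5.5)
There is a genuine gap. It sits exactly at the step you flagged, but it is not a matter of constants.

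\textbf{Where the argument fails.} You correctly find a per-step noisy bound of order $(1-\eta)^2\epsilon_h^2/\eta+(1-\eta)\epsilon_h^2$. The second term cannot be absorbed into the first. It comes from the survival coordinate $a_1$: there $\pistar_{\eta,h}(a_1\mid s_h)=1-\eta$ while $\pihat_{\eta,h}(a_1\mid s_h)=(1-\eta)(1-\epsilon_h)$, and $\nu$ puts no mass on $a_1$ to mask the difference. That coordinate alone contributes at least $\tfrac12(1-\eta)\bigl(1-\sqrt{1-\epsilon_h}\bigr)^2\ge(1-\eta)\epsilon_h^2/8$. This exceeds $(1-\eta)^2\epsilon_h^2/\eta$ by a factor of order $\eta/(1-\eta)$, which is unbounded as $\eta\to1$.

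Re-tuning the $\epsilon_h$ does not help. Write $x_h=1-\sqrt{1-\epsilon_h}$ and $P_h=\prod_{j<h}(1-x_j)$. The exact recursion of \Cref{lem:dh_recursion} gives $\dhel{\pp^{\pistar}}{\pp^{\pihat}}=\sum_h x_hP_h$ and $\dhel{\pp^{\pistar_\eta}}{\pp^{\pihat_\eta}}\ge\tfrac12\sum_h(1-\eta)^h x_h^2P_h$. Cauchy--Schwarz then yields, for every choice of the $\epsilon_h$,
\begin{align}
\dhel{\pp^{\pistar}}{\pp^{\pihat}}^2\le 2\sum_{h=1}^H(1-\eta)^{-h}\cdot\dhel{\pp^{\pistar_\eta}}{\pp^{\pihat_\eta}}=\frac{2\bigl((1-\eta)^{-H}-1\bigr)}{\eta}\cdot\dhel{\pp^{\pistar_\eta}}{\pp^{\pihat_\eta}}.
\end{align}
This falls short of the required factor $\frac{(1-\eta)^{-H}-1}{1-\eta}$ by $\frac{\eta}{2(1-\eta)}$. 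So your argument proves the proposition only for $\eta$ bounded away from $1$. For example, it is fine for $\eta\le\nicefrac12$, where $(1-\eta)\epsilon_h^2\le(1-\eta)^2\epsilon_h^2/\eta$. It does not give a universal constant on all of $(0,1)$.

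\textbf{The missing idea.} Make the survival coordinate invisible to both distances, as the paper's construction does. Let both policies put the same mass $1-u_h$ on the survival action. Let them differ only in \emph{which} of two absorbing exit actions gets the remaining mass $u_h$. Let $\nu$ split its mass evenly between those two exits and put none on the survival action. Concretely, the paper takes $a_2$ as the survival action with
\begin{align}
\pistar_h(\cdot\mid s_h)=u_h\delta_{a_1}+(1-u_h)\delta_{a_2},\quad \pihat_h(\cdot\mid s_h)=u_h\delta_{a_3}+(1-u_h)\delta_{a_2},\quad \nu_h(\cdot\mid s_h)=\tfrac12(\delta_{a_1}+\delta_{a_3}).
\end{align}
Then:
\begin{itemize}
\item the clean per-step distance is exactly $u_h$;
\item the noisy per-step distance is $\bigl(\sqrt{\eta/2+(1-\eta)u_h}-\sqrt{\eta/2}\bigr)^2\le\frac{(1-\eta)^2u_h^2}{2\eta}$, with no survival-coordinate term;
\item the suffix distances satisfy $D_h=u_h+(1-u_h)D_{h+1}$ and $D_h^\eta\le\frac{(1-\eta)^2}{2\eta}u_h^2+(1-\eta)(1-u_h)D_{h+1}^\eta$.
\end{itemize}
Your Cauchy--Schwarz weighting, $u_h=\lambda(1-\eta)^{-h+1}$ with $\sum_h u_h$ a small constant, then gives the stated bound uniformly in $\eta$. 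The cost is that $\pistar$ is no longer deterministic, which the statement does not require.
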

\begin{proof}
    Let $M$ have $H + 1$ states $s_1, \dots, s_H$ and $\perp$, where
    \begin{align}
        P_h(\cdot | s_h, a_2) = \delta_{s_{h+1}} \quad \text{and} \quad P_h(\cdot | s_h, a_1) = P_h(\cdot | s_h, a_3) = P_h(\cdot | \perp, a) = \delta_{\perp}.
    \end{align}
    Let $t < \nicefrac 14$, let
    \begin{align}
        S = \sum_{h = 0}^{H - 1} (1 - \eta)^{- h} = \frac{(1 - \eta) \left( (1 - \eta)^{-H} - 1 \right)}{\eta} \quad \text{and} \quad \lambda = \nicefrac tS.
    \end{align}
    Let
    \begin{align}
        u_h = \lambda ( 1 - \eta)^{- h + 1}.
    \end{align}
    Note that $u_h \leq \nicefrac 14$ for every $h$.  Now, let
    \begin{align}
        \pistar_h(\cdot | s_h) = u_h \cdot \delta_{a_1} + (1 - u_h) \cdot \delta_{a_2} \quad \text{and} \quad \pihat_h(\cdot | s_h) = u_h \cdot \delta_{a_3} + (1 - u_h) \cdot \delta_{a_2};
    \end{align}
    let $\pistar, \pihat$ agree on $\perp$ and define
    \begin{align}
        \nu_h(\cdot | s_h) = \frac 12 \cdot \delta_{a_1} + \frac 12 \cdot \delta_{a_3} \quad \text{and} \quad \nu_h(\cdot | \perp) = \delta_{a_1}.
    \end{align}
    We will let
    \begin{align}
        D_h = \dhel{\pp_h^{\pistar}(s_h)}{\pp_h^{\pihat}(s_h)} \quad \text{and} \quad D_h^\eta = \dhel{\pp_h^{\pistar_\eta}(s_h)}{\pp_h^{\pihat_\eta}(s_h)}.
    \end{align}
    Applying \Cref{lem:dh_recursion}, we see that
    \begin{align}
        D_h = u_h + (1 - u_h) \cdot D_{h + 1}.
    \end{align}
    Thus, by induction, we have that
    \begin{align}
        1 - e^{-t} \leq D_1 = 1 - \prod_{h = 1}^H (1 - u_h) \leq t.
    \end{align}
    Because $t \leq \nicefrac 14$, it thus holds that $\nicefrac t2 \leq D_1 \leq t$.  On the other hand, a direct computation shows that
    \begin{align}
        \dhel{\pistar_{\eta,h}(\cdot | s_h)}{\pihat_{\eta,h}(\cdot | s_h)} = 1 - (1 - \eta)(1 - u_h) - \sqrt{\eta^2 + 2 \eta (1 - \eta) u_h} \leq \frac{(1- \eta)^2}{2 \eta} \cdot u_h^2.
    \end{align}
    By \Cref{lem:dh_recursion} again we see that
    \begin{align}
        D_h^\eta &= 1 - (1 - \eta)(1 - u_h) - \sqrt{\eta^2 + 2 \eta (1 - \eta) u_h} + (1-  \eta)(1 - u_h) \cdot D_{h + 1}^\eta \\
        &\leq \frac{(1 - \eta)^2}{2 \eta} \cdot u_h^2 + (1 - \eta)(1 - u_h) \cdot D_{h + 1}^\eta.
    \end{align}
    Thus, by induction, we have that
    \begin{align}
        D_1^\eta \leq \frac{(1 - \eta)^2}{2 \eta} \cdot \sum_{h  =1}^H (1 - \eta)^{h - 1} \cdot u_h^2 = \frac{(1 - \eta)^2}{2 \eta} \cdot \lambda^2 S.
    \end{align}
    Substituting in the definition of $\lambda$ and $S$, we have that
    \begin{align}
        D_1^\eta \leq \frac{1 - \eta}{2 \left( (1 - \eta)^{- H} - 1 \right)} \cdot t^2 \leq \frac{2 ( 1 - \eta)}{(1 - \eta)^{-H} - 1} \cdot D_1^2.
     \end{align}
     The result follows immediately.
\end{proof}

We now prove the main lower bound.
\begin{proof}[Proof of \Cref{prop:offline_lower_bound}]
    This follows from combining \Cref{prop:offline_lb_kappa_1,prop:offline_lb_kappa,prop:offline_lb_no_kappa}.
\end{proof}

\subsection{Implications for Offline Imitation Learning}\label{app:offline_il_eps_lb}

We conclude this appendix by demonstrating that the exponential dependence on horizon in \Cref{thm:offline_bc} must appear in any offline IL algorithm through a similar construction to the one used in \Cref{app:offline_lb}.  

\begin{proposition}\label{prop:offline_il_eps_lb}
    Let $\kappa \geq 1$ and $0 \leq \eta < 1$.  For any $H \geq 2$ and
    \begin{align}
        \epsilon < \frac{\left( 1 - \nicefrac 1{2 \kappa} \right)^{H-1}}{128\kappa},
    \end{align}
    there exists an MDP, a policy class $\Pi$ of size 2, and a corruption distribution $\nu$ such that both policies in $\Pi$ $\kappa$-dominate $\nu$, but any offline IL algorithm achieving $\dhel{\pp^{\pistar}}{\pp^{\pihat}} \leq \epsilon$ requires
    \begin{align}
        n \gtrsim \frac{\eta \cdot \kappa}{(1 - \eta)^{H + 1} \cdot \epsilon}
    \end{align}
    samples from the corrupted expert.
\end{proposition}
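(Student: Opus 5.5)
Our plan is a standard two-point (Le Cam) argument built on the construction in the proof of \Cref{prop:offline_lb_kappa}, with the parameter $\epsilon$ there renamed $\epsilon_0$ to avoid a clash. We take the chain MDP with absorbing state $\perp$ and two policies $\pi, \pi'$ that agree before step $H$ and differ only at $s_H$. Both are $\kappa$-dominating and $\Pi = \{\pi, \pi'\}$. Now we no longer fix $\epsilon_0 = \nicefrac{1}{2\kappa}$ for the survival probability. Instead we decouple survival from the final-step gap. For $h < H$ both policies play $(1 - \nicefrac{1}{2\kappa})\delta_{a_1} + \nicefrac{1}{2\kappa}\,\delta_{a_3}$ with $\nu_h(\cdot \mid s_h) = \delta_{a_3}$, which is $\kappa$-dominated exactly as before. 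At $s_H$ we introduce a tunable gap $\gamma \in (0, \nicefrac{1}{2\kappa}]$:
\begin{align}
\pi_H(\cdot \mid s_H) = (1-\gamma)\delta_{a_1} + \gamma\delta_{a_2}, \quad \pi'_H(\cdot \mid s_H) = (1-\gamma)\delta_{a_1} + \gamma\delta_{a_3}.
\end{align}
Here $\nu_H(\cdot \mid s_H)$ puts mass $\kappa\gamma$ on each of $a_2, a_3$ and the remaining mass on $a_1$, which keeps $\kappa$-domination. This step needs a small check that the leftover mass on $a_1$ is dominated, which holds since $\pi_H(a_1), \pi'_H(a_1) \geq \nicefrac12$.

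\textbf{Clean and noisy distances.} On the clean side, exactly as in \Cref{prop:offline_lb_kappa}, we get $\dhel{\pp^\pi}{\pp^{\pi'}} = (1-\nicefrac{1}{2\kappa})^{H-1}\gamma =: q\gamma$. On the noisy side, the per-step distance at $s_H$ is at most $(\sqrt{\eta\kappa\gamma + (1-\eta)\gamma} - \sqrt{\eta\kappa\gamma})^2 \leq \frac{(1-\eta)^2\gamma}{4\eta\kappa}$. The probability of reaching $s_H$ under the noisy policies is at most $(1-\eta)^{H-1}q$. Since the trajectory laws coincide off this event, we obtain
\begin{align}
\dhel{\pp^{\pi_\eta}}{\pp^{\pi'_\eta}} \lesssim \frac{(1-\eta)^{H+1}}{\eta\kappa}\, q\gamma.
\end{align}

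\textbf{Le Cam argument.} We choose $\gamma$ so that $q\gamma = 8\epsilon$, say. The upper bound on $\epsilon$ in the hypothesis is exactly what guarantees $\gamma \leq \nicefrac{1}{2\kappa}$, up to the constant $128$. The two clean laws are then at Hellinger distance $8\epsilon$. By the triangle inequality for the Hellinger metric $\sqrt{\mathrm{D}_{\mathrm{H}^2}}$, no single $\pihat$ can be within $\epsilon$ of both, so an algorithm succeeding with probability at least $\nicefrac23$ under both hypotheses yields a test between them. For $n$ i.i.d. samples, the tensorization in \Cref{prop:hellinger_decomposition} gives
\begin{align}
\dhel{(\pp^{\pi_\eta})^{\otimes n}}{(\pp^{\pi'_\eta})^{\otimes n}} \leq n \cdot \dhel{\pp^{\pi_\eta}}{\pp^{\pi'_\eta}} \lesssim \frac{n(1-\eta)^{H+1}\epsilon}{\eta\kappa}.
\end{align}
If this quantity is below a small constant, \Cref{prop:hellinger_tv_relation} bounds the total variation distance by a small constant, so testing fails. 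This forces $n \gtrsim \eta\kappa/((1-\eta)^{H+1}\epsilon)$.

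\textbf{Main obstacle.} The delicate part is the constant bookkeeping. We need to choose the $\nu_H$ masses so that $\kappa$-domination holds while the noisy per-step bound retains the factor $1/(\eta\kappa)$. We also need the clean separation to be a large enough multiple of $\epsilon$, which is where the factor $128$ enters. Finally, the case $\eta = 0$ is degenerate: the stated bound is then trivially satisfied, so the argument only needs to cover $\eta > 0$.
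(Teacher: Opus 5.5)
Your proposal is correct and is essentially the paper's proof. It uses the same chain MDP with survival probability $(1-\nicefrac{1}{2\kappa})^{H-1}$ and a tunable final-step gap $u \le \nicefrac{1}{2\kappa}$, with $\nu_H$ placing mass $\kappa u$ on each distinguishing action; putting the bulk mass on $a_1$ rather than $a_2$ is only a relabeling, and taking separation $8\epsilon$ instead of the paper's $64\epsilon$ is harmless. Your explicit Hellinger triangle-inequality, tensorization and Le Cam steps fill in what the paper calls ``a standard two point argument.''
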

\begin{proof}
    We use the identical construction as the proof in \Cref{prop:offline_lb_kappa}, except we suppose that
    \begin{align}
        \pi_H(\cdot | s_H) = \left( 1 - u \right) \cdot \delta_{a_2} + u \cdot \delta_{a_1}, \quad \pi_H'(\cdot | s_H) = (1 - u) \cdot \delta_{a_2} + u \cdot \delta_{a_3},
    \end{align}
    and
    \begin{align}
         \nu_H(\cdot | s_H) = \kappa \cdot u \cdot \delta_{a_1} + (1 - 2 \kappa \cdot u) \cdot \delta_{a_2} + \kappa \cdot u \cdot \delta_{a_3},
    \end{align}
    for some $u \leq \nicefrac{1}{2 \kappa}$.  This construction is clearly $\kappa$-dominated.  Moreover,
    \begin{align}
        \dhel{\pp^{\pi}}{\pp^{\pi'}} = \left( 1 - \nicefrac 1{2 \kappa} \right)^{H - 1} \cdot u \quad \text{and} \quad \dhel{\pp^{\pi_\eta}}{\pp^{\pi'_\eta}} \leq (1 - \eta)^{H - 1} \cdot \left( 1 - \nicefrac 1{2 \kappa} \right)^{H - 1} \cdot u \cdot \frac{(1 - \eta)^2}{4 \eta \kappa}.
    \end{align}
    Choosing
    \begin{align}
        u = \frac{64 \cdot \epsilon}{\left(1 - \nicefrac 1{2 \kappa}\right)^{H - 1}},
    \end{align}
    which allows $u \leq \nicefrac{1}{2 \kappa}$ by the assumption on $\epsilon$, we see that
    \begin{align}
        \dhel{\pp^\pi}{\pp^{\pi'}} = 64 \cdot \epsilon \quad \text{and} \quad \dhel{\pp^{\pi_\eta}}{\pp^{\pi'_\eta}} \leq (1 - \eta)^{H + 1} \cdot \frac{16 \cdot \epsilon}{\eta \cdot \kappa}.
    \end{align}
    A standard two point argument concludes the proof.
\end{proof}

%% file: body_clean/app_online_proofs.tex
\section{Proofs from Section \ref{sec:online}}\label{app:online_proofs}

In this appendix, we prove the results related to online Imitation Learning stated in the main body. We begin with a warm-up result that controls the trajectory Hellinger distance by the KL divergence of the noisy augmented trajectory distributions, which is sufficient to recover the horizon-free guarantee in \Cref{cor:kl_augmented}. We then show how to get a stronger guarantee in \Cref{thm:augmented_hellinger}, which also recovers \Cref{cor:kl_augmented} via Pinsker's inequality, and then show that the guarantee in \Cref{thm:augmented_hellinger} is tight up to factors in $H$ and $\eta$ in the absence of $\kappa$-domination. We continue on to prove the upper bound on \nail\ in \Cref{app:nail_proof}, concluding with a proof of the lower bound in \Cref{app:online_lb_pf}.

\subsection{Augmented Trajectory Comparisons}
\label{app:augmented_comparisons}

In this section, we prove the augmented trajectory comparisons stated in \Cref{thm:augmented_hellinger} and \Cref{cor:kl_augmented}.  We first show a warm-up result that demonstrates how to control the trajectory Hellinger distance by the KL divergence of the noisy augmented trajectory distributions, which leads to \Cref{cor:kl_augmented}. Then, we show how to get a stronger guarantee in \Cref{thm:augmented_hellinger}, which also recovers \Cref{cor:kl_augmented} via Pinsker's inequality, and finally we show that the guarantee in \Cref{thm:augmented_hellinger} is tight up to factors in $H$ and $\eta$ in the absence of $\kappa$-domination.

\subsubsection{Warm-up: the augmented KL comparison}
\label{app:warmup_augmented_kl}

We first prove a warm-up result that controls the trajectory Hellinger distance by the KL divergence of the noisy augmented trajectory distributions. This result is not tight, but it is sufficient to recover \Cref{cor:kl_augmented} and is much simpler to prove than the stronger guarantee in \Cref{thm:augmented_hellinger}. 

The key idea is to use the \emph{subadditivity} of the Hellinger distance from \citet{foster2021statistical,foster2024online} to control the trajectory Hellinger distance by the sum of the per-step Hellinger distances, and then demonstrate that these per-step Hellinger distances can be controlled by the KL divergence of the noisy augmented trajectory distributions.  Critically, we then have to use KL divergence instead of Hellinger distance in the upper bound in order to stitch the per-step distances back into a divergence over the \emph{trajectory distributions} because KL divergence satisfies the \emph{chain rule} (cf. \Cref{prop:kl_properties}) while Hellinger distance does not.

We first state a slightly tighter version of \Cref{cor:kl_augmented}, albeit with a more complicated expression in the upper bound.  We break this into two separate results, one using forward KL and one using reverse KL, which together imply \Cref{cor:kl_augmented}.  First, under $\kappa$-domination, we have the following result.
\begin{proposition}\label{prop:kl_augmented_tight}
    Let $\pi, \pi'$ be any two policies that $\kappa$-dominate the corruption $\nu$.  Then, for any $0 \leq \eta < 1$ it holds that
    \begin{align}
        \dhel{\pp^{\pi}}{\pp^{\pi'}} \leq \frac{2 (1 + \eta (2 \kappa - 1))}{(1 - \eta)^2} \cdot \left(\kld{\pp^{\pi', \pi_\eta}}{\pp^{\pi', \pi_\eta'}} \wedge  \kld{\pp^{\pi', \pi_\eta'}}{\pp^{\pi', \pi_\eta}}\right).
    \end{align}
\end{proposition}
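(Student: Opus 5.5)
We follow the warm-up strategy described above: subadditivity on the left, the per-step contraction of \Cref{lem:dhel_contraction_kappa} in the middle, and the KL chain rule on the right. The state marginals are taken under the rollout policy $\pi'$.

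\textbf{Step 1 (subadditivity).} View $\pp^{\pi'}$ and $\pp^{\pi}$ as laws of $(s_1,a_1,\dots,s_H,a_H)$. Take $\pp^{\pi'}$ as the reference measure $P$ in \Cref{prop:hellinger_subadditive}, using symmetry of Hellinger. The transition conditionals coincide, so their Hellinger terms vanish. This leaves
\begin{align}
    \dhel{\pp^{\pi}}{\pp^{\pi'}} \leq \sum_{h=1}^H \ee^{\pi'}\left[ \dhel{\pi_h(\cdot\mid s_h)}{\pi'_h(\cdot\mid s_h)}\right].
\end{align}
The expectation is over states visited by $\pi'$. The conditional law of $a_h$ given the past depends only on $s_h$, by the Markov structure of the policies.

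\textbf{Step 2 (per-step contraction).} At each $(h,s_h)$, apply \Cref{lem:dhel_contraction_kappa}. This uses $\kappa$-domination and gives
\begin{align}
    \dhel{\pi_h}{\pi'_h} \leq \frac{2(1+\eta(2\kappa-1))}{(1-\eta)^2}\,\dhel{\pi_{\eta,h}}{\pi'_{\eta,h}}.
\end{align}
Then bound $\dhel{\pi_{\eta,h}}{\pi'_{\eta,h}}$ by either $\kld{\pi_{\eta,h}}{\pi'_{\eta,h}}$ or $\kld{\pi'_{\eta,h}}{\pi_{\eta,h}}$ via \Cref{prop:pinsker}; Hellinger is symmetric, so both directions are available.

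\textbf{Step 3 (chain rule on augmented laws).} Under $\pp^{\pi',\pi_\eta}$ versus $\pp^{\pi',\pi'_\eta}$, the states $(s_1,\dots,s_H)$ are generated identically by rolling out $\pi'$. Only the auxiliary actions differ, and given the states they are conditionally independent with laws $\pi_{\eta,h}(\cdot\mid s_h)$ versus $\pi'_{\eta,h}(\cdot\mid s_h)$. Order the variables as all states first, then the auxiliary actions. The chain rule (\Cref{prop:kl_properties}) then gives
\begin{align}
    \kld{\pp^{\pi',\pi_\eta}}{\pp^{\pi',\pi'_\eta}} = \sum_{h=1}^H \ee^{\pi'}\left[\kld{\pi_{\eta,h}(\cdot\mid s_h)}{\pi'_{\eta,h}(\cdot\mid s_h)}\right],
\end{align}
and symmetrically for the reverse direction. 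Summing the Step 2 bounds and choosing the better KL direction gives the minimum in the statement.

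\textbf{Main obstacle.} The delicate point is Step 1. The rollout policy must be $\pi'$, the one used in the augmented laws, rather than $\pi$. One must also check that \Cref{prop:hellinger_subadditive} may take the expectation under either argument; this holds by symmetry of $\dhel{\cdot}{\cdot}$, applied with $P=\pp^{\pi'}$. A second check concerns the augmented law: it is not a trajectory law in the usual sense, since the states are driven by the unobserved rollout actions. This is handled by the ordering in Step 3, where the state marginal is identical under both laws and so contributes zero KL.
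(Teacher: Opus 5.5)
Your proposal is correct relative to the paper's stated tools and reproduces the paper's proof essentially verbatim: subadditivity (\Cref{prop:hellinger_subadditive}) with the $\pi'$-rollout as reference, then the per-step contraction of \Cref{lem:dhel_contraction_kappa}, then Pinsker (\Cref{prop:pinsker}), then the KL chain rule (\Cref{prop:kl_properties}) to reassemble the augmented-trajectory KL, with the reverse direction coming from symmetry of the per-step Hellinger distance. One caveat you inherit from the paper: \Cref{prop:hellinger_subadditive} is false as stated with constant exactly $1$ (take $H=2$ with deterministic transitions $a_1\mapsto s_2$, $a_2\mapsto s_2'$ from $s_1$; let $\pi'_1(\cdot\mid s_1)=(1-p)\,\delta_{a_1}+p\,\delta_{a_2}$ and $\pi_1(\cdot\mid s_1)=\delta_{a_2}$; let both policies play $a_1$ at $s_2$, while at $s_2'$ the policy $\pi'$ plays $a_1$ and $\pi$ plays $a_2$; the trajectory laws then have disjoint supports, so $\dhel{\pp^{\pi}}{\pp^{\pi'}}=1$, yet $\ee^{\pi'}\bigl[\sum_h \dhel{\pi_h(\cdot\mid s_h)}{\pi'_h(\cdot\mid s_h)}\bigr]=1-\sqrt{p}+p=\nicefrac34$ at $p=\nicefrac14$), so your Step 1, and with it the explicit constant $2(1+\eta(2\kappa-1))/(1-\eta)^2$ in the statement, is only justified up to the additional multiplicative factor that any valid subadditivity bound must carry.
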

Note that the first conclusion of \Cref{cor:kl_augmented} can be recovered immediately.
\begin{proof}[Proof of \Cref{prop:kl_augmented_tight}]
    We make use of the subadditivity of Hellinger squared divergence  (\Cref{prop:hellinger_subadditive}).  Indeed, it holds that
    \begin{align}
        \dhel{\pp^{\pi}}{\pp^{\pi'}} &\leq \ee^{\pi'}\left[ \sum_{h  = 1}^H \dhel{\pi(\cdot | s_h)}{\pi'(\cdot | s_h)} \right].
    \end{align}
    Now, applying \Cref{lem:dhel_contraction_kappa}, we have that for any $s_h$,
    \begin{align}
        \dhel{\pi(\cdot | s_h)}{\pi'(\cdot | s_h)} \leq \frac{2(1 + \eta (2 \kappa - 1))}{(1 - \eta)^2} \cdot \dhel{\pi_\eta(\cdot | s_h)}{\pi_\eta'(\cdot | s_h)}.
    \end{align}
    Combining this fact with the preceding display and applying \Cref{prop:pinsker}, we have
    \begin{align}
        \dhel{\pp^{\pi}}{\pp^{\pi'}} &\leq \frac{2(1 + \eta (2 \kappa - 1))}{(1 - \eta)^2} \cdot \ee^{\pi'}\left[ \sum_{h  =1}^H \dhel{\pi_{\eta,h}(\cdot | s_h)}{\pi_{\eta,h}'(\cdot | s_h)} \right] \label{eq:dhel_subadditive} \\
        &\leq \frac{2(1 + \eta (2 \kappa - 1))}{(1 - \eta)^2} \cdot \ee^{\pi'}\left[ \sum_{h  =1}^H \kld{\pi_{\eta,h}(\cdot | s_h)}{\pi_{\eta,h}'(\cdot | s_h)} \right].
    \end{align} 
    We may now apply the Chain rule for KL divergence (\Cref{prop:kl_properties}) to observe that
    \begin{align}
        \ee^{\pi'}\left[ \sum_{h  =1}^H \kld{\pi_{\eta,h}(\cdot | s_h)}{\pi_{\eta,h}'(\cdot | s_h)} \right] = \kld{\pp^{\pi', \pi_\eta}}{\pp^{\pi',\pi_\eta'}}.
    \end{align}
    The first result follows.  The second follows by the same argument by observing that the Hellinger distance is symmetric in \eqref{eq:dhel_subadditive}.
\end{proof}

We now provide a similar comparison between the clean and noisy trajectory distributions absent $\kappa$-domination.

\begin{proposition}\label{prop:kl_hellinger_slow}
    Let $\pi, \pi'$ be two policies and let $\nu$ be an arbitrary corruption distribution.  Then for any $0 \leq \eta < 1$ it holds that
    \begin{align}
        \dhel{\pp^{\pi}}{\pp^{\pi'}} \leq \frac 1{1 - \eta} \cdot \sqrt{2 H \cdot \kld{\pp^{\pi', \pi_\eta}}{\pp^{\pi', \pi'_\eta}} \wedge \kld{\pp^{\pi', \pi'_\eta}}{\pp^{\pi', \pi_\eta}}}.
    \end{align}
\end{proposition}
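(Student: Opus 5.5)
We mirror the proof of \Cref{prop:kl_augmented_tight}, replacing the $\kappa$-dominated per-step comparison (\Cref{lem:dhel_contraction_kappa}) with the arbitrary-corruption comparison of \Cref{lem:hellinger_contraction_arbitrary}. Because that lemma loses a square root, we need one extra Cauchy--Schwarz step to move the square root outside the sum over time steps; this step is where the factor $\sqrt H$ enters.

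\textbf{Step 1 (per-step decomposition).} By subadditivity (\Cref{prop:hellinger_subadditive}), applied with the trajectory under $\pi'$ as the reference law,
\begin{align}
    \dhel{\pp^{\pi}}{\pp^{\pi'}} \leq \ee^{\pi'}\left[ \sum_{h=1}^H \dhel{\pi_h(\cdot \mid s_h)}{\pi_h'(\cdot \mid s_h)} \right].
\end{align}
This is the same display used in the proof of \Cref{prop:kl_augmented_tight}. The transition terms cancel because the two laws share the kernels $P_h$, so only the action conditionals contribute.

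\textbf{Step 2 (per-step contraction).} Applying \Cref{lem:hellinger_contraction_arbitrary} at each fixed $(h, s_h)$ gives
\begin{align}
    \dhel{\pi_h(\cdot\mid s_h)}{\pi'_h(\cdot\mid s_h)} \leq \frac{\sqrt 2}{1-\eta}\sqrt{\dhel{\pi_{\eta,h}(\cdot\mid s_h)}{\pi'_{\eta,h}(\cdot\mid s_h)}}.
\end{align}
For $\eta = 0$ we instead use $\mathrm{D}_{\mathrm{H}^2} \leq \mathrm{TV} \leq \sqrt{2\,\mathrm{D}_{\mathrm{H}^2}}$ from \Cref{prop:hellinger_tv_relation}, which gives the same inequality.

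\textbf{Step 3 (Cauchy--Schwarz and Jensen).} Summing over $h$ and taking expectations, we apply Cauchy--Schwarz over $h$ and then Jensen to the concave map $x \mapsto \sqrt x$:
\begin{align}
    \ee^{\pi'}\left[\sum_{h=1}^H \sqrt{d_h}\right] \leq \sqrt{H \cdot \ee^{\pi'}\left[\sum_{h=1}^H d_h\right]}, \qquad d_h = \dhel{\pi_{\eta,h}(\cdot\mid s_h)}{\pi'_{\eta,h}(\cdot\mid s_h)}.
\end{align}

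\textbf{Step 4 (Pinsker and the chain rule).} Pinsker's inequality (\Cref{prop:pinsker}) bounds each $d_h$ by the per-step KL divergence in either direction. By the chain rule (\Cref{prop:kl_properties}), the expected sum of per-step KLs equals $\kld{\pp^{\pi',\pi_\eta}}{\pp^{\pi',\pi'_\eta}}$. This holds because in the augmented laws the states are generated by the common rollout of $\pi'$, independently of the auxiliary actions, so the state conditionals cancel and only the auxiliary-action conditionals remain. The reverse direction follows in the same way by the symmetry of Hellinger in Step 3, and taking the minimum of the two gives the stated $\wedge$. Collecting constants yields $\frac{\sqrt2}{1-\eta}\sqrt{H\cdot\mathrm{KL}} = \frac{1}{1-\eta}\sqrt{2H\cdot \mathrm{KL}}$, as claimed.

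The only delicate point is Step 4: we must check that the chain rule applies to the augmented laws even though the auxiliary actions do not drive the transitions. The states are a common prefix-measurable component with identical conditionals under both laws, so their contribution to the KL is zero, and the remaining terms are exactly the per-step auxiliary-action KLs. The rest of the argument is routine.
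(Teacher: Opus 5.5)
Your proposal follows the paper's proof essentially line for line: subadditivity along the $\pi'$-rollout, \Cref{lem:hellinger_contraction_arbitrary} at each step, Cauchy--Schwarz and Jensen to extract $\sqrt H$, then Pinsker and the chain rule on the augmented laws, with the reverse KL by symmetry; your explicit treatment of $\eta=0$ and of the cancelling state conditionals fills in details the paper leaves implicit. One caveat comes from the paper rather than from you: \Cref{prop:hellinger_subadditive} with constant $1$ and expectation under $\pp^{\pi'}$ is false in general (for $H=2$, let $\pi'$ and $\pi$ enter a branch with probabilities $0<\delta<\gamma$, play disjoint actions there, and agree elsewhere; then the left side of your Step~1 is $1-\sqrt{(1-\delta)(1-\gamma)}$ while the right side is smaller by $\sqrt{\delta\gamma}-\delta$), so Step~1 holds only up to a universal constant---e.g.\ via \Cref{lem:sharp_clean_augmented_hellinger_comparison} together with $\dhel{\pp^{\pi',\pi}}{\pp^{\pi',\pi'}}\le\ee^{\pi'}\bigl[\sum_h\dhel{\pi_h(\cdot\mid s_h)}{\pi'_h(\cdot\mid s_h)}\bigr]$---and the explicit constant in the statement should be read as holding only up to that factor.
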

\begin{proof}
    As in the proof of \Cref{cor:kl_augmented}, we use the subadditivity of the Hellinger distance (\Cref{prop:hellinger_subadditive}) to get that
    \begin{align}
        \dhel{\pp^{\pi}}{\pp^{\pi'}} 
        &\leq \ee^{\pi'}\left[ \sum_{h  =1}^H \dhel{\pi_h(\cdot | s_h)}{\pi'_h(\cdot | s_h)} \right] \\
        &\leq \frac {\sqrt{2}}{1 - \eta} \cdot \ee^{\pi'}\left[ \sum_{h  =1}^H \sqrt{\dhel{\pi_{\eta,h}(\cdot | s_h)}{\pi'_{\eta,h}(\cdot | s_h)}} \right] \\
        &\leq \frac{\sqrt{2H}}{1 - \eta} \cdot \sqrt{\ee^{\pi'}\left[ \sum_{h  =1}^H \dhel{\pi_{\eta,h}(\cdot | s_h)}{\pi'_{\eta,h}(\cdot | s_h)} \right]} \\
        &\leq \frac{\sqrt{2H}}{1 - \eta} \cdot \sqrt{\kld{\pp^{\pi', \pi_\eta}}{\pp^{\pi', \pi'_\eta}}},
    \end{align}
    where the second inequality comes from \Cref{lem:hellinger_contraction_arbitrary}, the third inequality comes from Cauchy-Schwarz, and the final inequality comes from applying Pinsker's inequality (\Cref{prop:pinsker}) and the chain rule for KL divergence (\Cref{prop:kl_properties}).  The first claimed bound follows. The second comes from the identical argument but using the symmetry of the Hellinger distance to apply Pinsker's inequality in the reverse direction. The result follows.
\end{proof}
Finally, we note that \Cref{cor:kl_augmented} follows immediately from \Cref{prop:kl_augmented_tight} and \Cref{prop:kl_hellinger_slow}, concluding the proof of this result.

\subsubsection{A stronger augmented Hellinger comparison}

We now demonstrate how to achieve tighter control on the trajectory Hellinger distance by the augmented Hellinger distance, which will allow us to recover \Cref{thm:augmented_hellinger} and \Cref{cor:kl_augmented} via Pinsker's inequality. 

Looking closely at the proofs of these previous results, the main source of looseness is the use of the subadditivity of Hellinger distance to control the trajectory Hellinger distance by the sum of the per-step Hellinger distances, which then forces us to use KL divergence to stitch these per-step distances back into a divergence over the trajectory distributions. In this section, we show how to avoid this looseness by directly comparing the clean and noisy augmented Hellinger distances. We present an algorithmic procedure that leverages this result to achieve a horizon-free guarantee (under $\kappa$-domination) in \Cref{app:horizon_free_testing}. 

The proof splits into two main parts. We start with \Cref{lem:sharp_clean_augmented_hellinger_comparison}, which shows that even without corruption, the clean Hellinger distance between two policies is bounded by the Hellinger distance between their augmented counterparts, with a universal constant. We then compare the clean and noisy augmented Hellinger distances; chaining these two together gives the desired result.

At first glance, the bound in \Cref{lem:sharp_clean_augmented_hellinger_comparison} below may seem a bit surprising: the left-hand side compares the clean rollout laws of $\pi$ and $\pi'$, while the right-hand side only compares the labels produced by $\pi$ and $\pi'$ on states visited by rolling out $\pi'$. The reason such a comparison is possible is that any change in the clean trajectory law must be caused by action-distribution mismatch at some earlier state. The augmented process measures exactly this mismatch, but in a decoupled way: the rollout action determines the next state, while an independent auxiliary action is used to compare the two policies at the current state.

\begin{lemma}
\label{lem:sharp_clean_augmented_hellinger_comparison}
For any two policies $\pi,\pi'$, it holds that
\begin{align}
\dhel{\pp^\pi}{\pp^{\pi'}}
\le
\nicefrac92 \cdot 
\dhel{\pp^{\pi',\pi}}{\pp^{\pi',\pi'}}.
\end{align}
\end{lemma}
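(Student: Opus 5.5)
The plan is to exploit the product structure of the augmented law, reduce the right-hand side to a truncated sum of per-step Hellinger distances, and then bound the clean trajectory distance by that truncated sum via a stopping-time (hybrid policy) argument.

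\textbf{Step 1: compute the right-hand side.} Under both $\pp^{\pi',\pi}$ and $\pp^{\pi',\pi'}$ the state sequence $(s_1,\dots,s_H)$ has the same law, namely its law under $\pp^{\pi'}$. This is because the next state is driven by the (marginalized) rollout action of $\pi'$, not by the auxiliary action. Conditionally on the states, the auxiliary actions are independent with laws $\pi_h(\cdot\mid s_h)$, respectively $\pi'_h(\cdot\mid s_h)$. Write $d_h(s_h)=\dhel{\pi_h(\cdot\mid s_h)}{\pi'_h(\cdot\mid s_h)}$. The affinity computation of \Cref{prop:hellinger_decomposition}, applied conditionally on the states (whose conditional laws coincide), then gives
\begin{align}
\dhel{\pp^{\pi',\pi}}{\pp^{\pi',\pi'}} = \ee^{\pi'}\Big[1-\prod_{h=1}^H (1-d_h(s_h))\Big] \ge (1-e^{-1})\,\ee^{\pi'}\Big[\min\Big(1,\sum_{h} d_h(s_h)\Big)\Big].
\end{align}
The inequality uses $1-\prod_h(1-x_h)\ge 1-e^{-\sum_h x_h}$. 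It therefore suffices to show $\dhel{\pp^\pi}{\pp^{\pi'}}\lesssim \ee^{\pi'}[\min(1,\sum_h d_h)]$. Subadditivity (\Cref{prop:hellinger_subadditive}) only gives the untruncated sum, so truncation is the real content of the lemma.

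\textbf{Step 2: hybrid policy and triangle inequality.} Let $\sigma$ be the first step $h$ at which $\sum_{k\le h} d_k(s_k)>1$ (with $\sigma=H+1$ if this never happens); $\sigma$ is a stopping time of the state history. Define the history-dependent policy $\tilde\pi$ that plays $\pi'$ at steps $h<\sigma$ and $\pi$ at steps $h\ge\sigma$. Because $\sqrt{\mathrm{D}_{\mathrm{H}^2}}$ is a metric,
\begin{align}
\dhel{\pp^\pi}{\pp^{\pi'}}\le 2\,\dhel{\pp^{\pi}}{\pp^{\tilde\pi}}+2\,\dhel{\pp^{\tilde\pi}}{\pp^{\pi'}}.
\end{align}
\emph{First term.} $\pp^{\tilde\pi}$ and $\pp^{\pi'}$ coincide on the event $\{\sigma=H+1\}$, and $\sigma$ is determined before the two laws diverge. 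Using the Hellinger affinity recursion of \Cref{lem:dh_recursion}, I expect $\dhel{\pp^{\tilde\pi}}{\pp^{\pi'}}\le \pp^{\pi'}(\sigma\le H)\le \ee^{\pi'}[\min(1,\sum_h d_h)]$.

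\textbf{Second term.} $\pp^\pi$ and $\pp^{\tilde\pi}$ can differ only at steps before $\sigma$, where $\tilde\pi=\pi'$. Apply subadditivity (\Cref{prop:hellinger_subadditive}) with expectation taken under $\pp^{\tilde\pi}$, which agrees with $\pp^{\pi'}$ up to time $\sigma$. This gives
\begin{align}
\dhel{\pp^\pi}{\pp^{\tilde\pi}}\le \ee^{\pi'}\Big[\sum_{h<\sigma} d_h\Big]\le \ee^{\pi'}\Big[\min\Big(1,\sum_h d_h\Big)\Big],
\end{align}
since by construction the partial sum before $\sigma$ is at most $1$ and at most the full sum.

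\textbf{Main obstacle.} The delicate points are twofold. First, the hybrid $\tilde\pi$ is history-dependent, so I must check that \Cref{prop:hellinger_subadditive} and \Cref{lem:dh_recursion} apply to non-Markov policies; they are stated for general sequential laws, so this should be fine. Second, the event comparison in the first term must be handled carefully so that $\pp^{\pi'}(\sigma\le H)$ does not pick up a square-root loss. The fallback is $\pp^{\tilde\pi}(E)\le 2\pp^{\pi'}(E)+4\dhel{\pp^{\tilde\pi}}{\pp^{\pi'}}$, which keeps everything linear.

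Collecting the two terms with Step 1 yields a universal constant of order $4/(1-e^{-1})\approx 6.3$, which is larger than $9/2$. To reach exactly $\nicefrac92$, I would try to replace the triangle step by a direct backward induction on $D_h(s)$ via \Cref{lem:dh_recursion}, using the potential $1-\prod_{k\ge h}(1-d_k)$. That constant tracking is the part I am least sure of.
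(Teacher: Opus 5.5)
Up to a universal constant, your argument is correct, and it takes a genuinely different route from the paper's. Your Step~1 identity shows the lemma is exactly a \emph{truncated, one-sided} subadditivity bound, \(\dhel{\pp^\pi}{\pp^{\pi'}}\lesssim\ee^{\pi'}[1-\prod_h(1-d_h(s_h))]\). Your stopping-time hybrid then derives it from the untruncated bound of \Cref{prop:hellinger_subadditive}.

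Both concerns you flag are harmless:
\begin{itemize}
\item \(\pp^{\tilde\pi}\) and \(\pp^{\pi'}\) have identical densities on \(\{\sigma=H+1\}\). Their affinity is therefore at least \(\pp^{\pi'}(\sigma=H+1)\), and no fallback is needed.
\item On \(\{h<\sigma\}\), the summand \(d_h(s_h)\) depends only on a prefix whose law is the same under \(\tilde\pi\) and \(\pi'\). Moving the expectation to \(\pp^{\pi'}\) is legitimate.
\end{itemize}

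The paper never passes through per-step sums. It runs a backward induction directly on the clean and augmented suffix affinities \(A_h(s)=\ee_{\pi'}[R_hA_{h+1}]\) and \(B_h(s)=\ee_{\pi'}[R_h]\,\ee_{\pi'}[B_{h+1}]\). It proves \(A_h\ge F(B_h)\) with \(F(x)=(3x-2)_+^{3/2}\), using the scalar inequality \((\ee[RW^3])^{2/3}\ge\ee[R]\ee[W^2]-2(1-\ee[R])\), and then uses \(1-(1-3d)_+^{3/2}\le\frac92 d\). Your proof is much shorter and makes clear that truncation is the only real content. The paper's proof is self-contained and gives a small explicit constant.

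You do inherit the constant of the subadditivity bound, and its constant-\(1\) form is too optimistic when the expectation is under one law only. For example, take \(X_1\sim\Bernoulli(0.01)\) under \(P\) and \(X_1\sim\Bernoulli(\nicefrac12)\) under \(Q\). On \(\{X_1=0\}\) let \(X_2=0\) under both laws; on \(\{X_1=1\}\) let \(X_2=0\) under \(P\) and \(X_2=1\) under \(Q\). Then \(\dhel{P}{Q}\approx 0.296\), while the right-hand side is \(\approx 0.236\). Your constant is therefore \(2(C+1)/(1-e^{-1})\) for the universal \(C\) of a correct version, not \(6.3\).

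The genuine gap is the stated constant \(\nicefrac92\), and your proposed repair cannot supply it. Set \(\Phi_{h+1}=1-B_{h+1}\). A backward induction proving \(D_h\le K(1-B_h)\) through \Cref{lem:dh_recursion} needs, at every step,
\(\ee_{\pi'}[R_h\Phi_{h+1}]-\ee_{\pi'}[R_h]\,\ee_{\pi'}[\Phi_{h+1}]\le\frac{K-1}{K}(1-\ee_{\pi'}[R_h])\).
This fails under anticorrelation. Take \(m\) close to \(1\), two equally likely rollout actions with \(R_h=m\pm\sqrt{1-m^2}\) (so \(\ee R_h^2=1\)), and \(\Phi_{h+1}=\epsilon\le\nicefrac1K\) after the high-\(R_h\) action and \(0\) after the other. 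The left side is \(\frac{\epsilon}{2}\sqrt{1-m^2}\), which exceeds \(1-m\) once \(1-m<\nicefrac{\epsilon^2}{4}\). This is the obstruction described in the remark after the paper's proof, and it is why the paper uses a thresholded, curved envelope rather than a linear potential.

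Every downstream use of the lemma is only up to universal constants, so your argument suffices for the paper's purposes. As written, however, it proves the inequality with a larger constant, not with \(\nicefrac92\).
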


\begin{proof}[Proof of \Cref{lem:sharp_clean_augmented_hellinger_comparison}]

For a state $s$ at time $h$, define the standard and augmented suffix affinities (cf. \Cref{prop:hellinger_decomposition})
\begin{align} 
    A_h(s) = 1 - \dhel{\pp_h^{\pi}(s)}{\pp_h^{\pi'}(s)}, \qquad B_h(s) = 1 - \dhel{\pp_h^{\pi',\pi}(s)}{\pp_h^{\pi',\pi'}(s)}. 
\end{align} 
The proof is a backward induction comparing these two affinities through a nonlinear envelope. More precisely, we claim that for every $h$ and $s$, 
\begin{align} 
    \label{eq:clean_augmented_inductive_claim} 
    A_h(s)\ge F(B_h(s)), \end{align} 
for a carefully chosen function $F$. Using properties of $F$, we will then convert this affinity comparison back into the desired Hellinger bound. At the terminal time $H+1$, both affinities are
equal to $1$, so the claim is immediate. 

Assume the claim holds at time $h+1$. Condition on $s_h=s$, and let the
rollout action and next state be sampled according to $a_h\sim \pi'_h(\cdot\mid s)$ and $s_{h+1}\sim\pp_h(\cdot\mid s,a_h)$. Define the likelihood ratio $R_h
=
\sqrt{
\nicefrac{\pi_h(a_h\mid s)}
{\pi'_h(a_h\mid s)}}$,
with an arbitrary value on actions outside the support of $\pi'_h(\cdot\mid s)$.

We first observe that through a simple change of measure, the Hellinger affinity between the suffix distributions generated by $\pi_h$
and $\pi'_{h}$ from state $s$ can be written as
\begin{align}
\label{eq:correlated_product}
A_h(s)
&=
\sum_{a}\pi'_{h}(a\mid s)\,
\sqrt{\frac{\pi_{h}(a\mid s)}{\pi'_{h}(a\mid s)}}\;
\ee_{s'\sim\pp_h(\cdot\mid s,a)}\left[1-\dhel{\pp_{h}^{\pi}(s')}{\pp_{h}^{\pi'}(s')}\right]
\\
&=
\ee_{\pi'_{h}}
\left[
R_h A_{h+1}(s_{h+1})
\mid s_h=s
\right].
\end{align}
Second, we observe that in the augmented process, the learner rolls out $\pi'$ in the
environment, and the auxiliary labels are drawn independently from $\pi$
(resp.\ $\pi'$) at the visited states. Conditional on $s_{h}=s$, their pointwise affinity is
\begin{align}
\ee_{\pi'_{h}}[R_{h}\mid s]=\sum_{a}\sqrt{\pi_{h}(a\mid s)\pi'_{h}(a\mid s)}.
\end{align}
The future state $s_{h+1}$ and the subsequent labels are generated
independently of the current label, because the rollout action that determines
$s_{h+1}$ is drawn from $\pi'_{h}(\cdot\mid s)$ and the label is an
independent draw from the teacher policy.  Hence the affinity of the suffix is $\ee_{\pi'_{h}}[B_{h+1}(s_{h+1})\mid s]$. Because these two parts are independent, the overall suffix affinity multiplies:
\begin{align}
\label{eq:uncorrelated_product}
B_h(s)
=
\ee_{\pi'_{h}}[R_h\mid s_h=s] \cdot 
\ee_{\pi'_{h}}[B_{h+1}(s_{h+1})\mid s_h=s].
\end{align}
From the above analysis, we see that the clean and augmented affinities obey different one-step recursions. The clean recursion contains a correlated product: the same action both contributes the square-root likelihood ratio for changing from $\pi'_{h}$ to $\pi_{h}$, and determines the next state. The augmented recursion decouples these roles: the rollout action determines the next state, while an independent auxiliary label contributes the current action affinity.

So, in order to prove the inductive step, we must lower bound the correlated product from the standard affinity by a function of an uncorrelated product from the augmented affinity. Invoking the inductive hypothesis would then yield the chain
\begin{align}
A_h &= \ee_{\pi'_{h}} \left[ R_h A_{h+1}(s_{h+1}) \mid s_h=s \right]\\
    &\geq 
    \ee_{\pi'_{h}}
\left[
R_h F(B_{h+1}(s_{h+1}))
\mid s_h=s
\right] \\
\label{eq:rhs_of_nonlinear_result}
&\geq
F\left(
\ee_{\pi'_{h}}[R_h\mid s_h=s]\,
\ee_{\pi'_{h}}[B_{h+1}(s_{h+1})\mid s_h=s]
\right) \\
&= F(B_h(s)),
\end{align}
closing the induction. It remains to choose an $F$ such that the second inequality holds.

Write \(m=\ee_{\pi'_{h}}[R]\) and let $T$ be a thresholded function of $X$. First, define the quantity
\begin{align}
1-m
=
1-\sum_a \sqrt{\pi_h(a\mid s)\pi'_h(a\mid s)}
=
\dhel{\pi_h(\cdot\mid s)}{\pi'_h(\cdot\mid s)},
\end{align}
which is exactly the current one-step Hellinger defect. It also controls the size of
the current change-of-measure fluctuation:
\begin{align}
\ee_{\pi'_{h}}[(R-1)^2]
=
\ee_{\pi'_{h}}[R^2]-2\ee_{\pi'_{h}}[R]+1
\le
2(1-m).
\end{align}
Thus, it is easy to see that any possible anticorrelation between the likelihood-ratio factor \(R\) and
the future similarity must be paid for by the current one-step defect \(1-m\).\footnote{A concrete example demonstrating the necessity of a defect is given in the remark following the proof.}

However, a linear transformed moment of $T$ is still too weak. Even if we allow an
additive defect, an inequality of the form
\begin{align}
\ee_{\pi'_{h}}[RT]\ge m\ee_{\pi'_{h}}[T]-c(1-m)
\end{align}
cannot hold uniformly for a constant \(c>0\), which motivates using a nonlinear moment.\footnote{To see this, take \(m=1-\delta\),
let \(R=m-s\) and \(R=m+s\) with equal probability, where
\(s=\sqrt{1-m^2}\), and let \(T=1\) on the lower value of \(R\) and \(T=0\) on
the higher value. Then \(\ee_{\pi'_{h}}[R]=m\), \(\ee_{\pi'_{h}}[R^2]=1\), and
\(
m\ee_{\pi'_{h}}[T]-\ee_{\pi'_{h}}[RT]=\nicefrac{s}{2}\asymp \sqrt{1-m},
\)
which is much larger than \(1-m\). Thus a linear moment would lose a
\(\sqrt{1-m}\) term, not the one-step Hellinger defect \(1-m\).} \Cref{lem:sharp_scalar_defective_holder} shows that
after replacing \(T\) by the more conservative moment \(T^{3/2}\), the loss from
arbitrary anticorrelation is only linear in the one-step defect:
\begin{align}
\left(\ee_{\pi'_{h}}[R T^{\nicefrac{3}{2}}]\right)^{2/3}
\ge
m\ee_{\pi'_{h}}[T]-2(1-m).
\end{align}
The \(\nicefrac{3}{2}\) power is the moment that makes the worst-case
anti-correlation problem tractable; while linear $T$ is too sensitive to anti-correlation, $T^{\nicefrac{3}{2}}$ discounts moderate similarity enough that the worst anti-correlated signal can be controlled by the one-step Hellinger defect $1-m$. 

We must now choose the threshold of $F$ such that the additive defect
\(2(1-m)\) is absorbed by the product form in the augmented recursion. Let
\begin{align}
T(x)=(3x-2)_+.
\end{align}
Then \(T(x)\geq 3x-2\), and hence
\begin{align}
m\ee_{\pi'_{h}}[T(X)]-2(1-m) \geq 3m\ee_{\pi'_{h}}[X]-2.
\end{align}
The right-hand side is exactly the affine part of \(T(m\ee_{\pi'_{h}}[X])\). Therefore \Cref{lem:sharp_scalar_defective_holder} implies
\begin{align}
\ee_{\pi'_{h}}[R T(X)^{3/2}]
\ge
\left(3m\ee_{\pi'_{h}}[X]-2\right)_+^{3/2}
=
T(m\ee_{\pi'_{h}}[X])^{3/2}.
\end{align}
Thus, with
\begin{align}
F(x)=T(x)^{3/2}=(3x-2)_+^{3/2},
\end{align}
we obtain the desired nonlinear closure property
\begin{align}
\ee_{\pi'_{h}}[R F(X)]\ge F(\ee_{\pi'_{h}}[R]\ee_{\pi'_{h}}[X]).
\end{align}
Thus, the threshold has two roles: it is qualitatively necessary because no
positive low-affinity certificate can hold uniformly, and its precise location
is chosen so that the one-step Hellinger defect from the scalar lemma is
absorbed algebraically rather than accumulating over the horizon. The
\(\nicefrac32\) power is the nonlinear moment that makes this absorption possible
with a loss of order \(1-m\), rather than \(\sqrt{1-m}\).

Under \(a_h\sim\pi'_h(\cdot\mid s)\), we have
\(\ee_{\pi'_{h}}[R_h^2\mid s_h=s]\le 1\) and
\(B_{h+1}(s_{h+1})\in[0,1]\). Applying
\Cref{cor:scalar_nonlinear_envelope} conditionally with
\(R=R_h\), \(X=B_{h+1}(s_{h+1})\), and \(F(x)=(3x-2)_+^{\nicefrac32}\), we obtain
\begin{align}
\ee_{\pi'_{h}}
\left[
R_h F(B_{h+1}(s_{h+1}))
\mid s_h=s
\right]
\ge
F\left(
\ee_{\pi'_{h}}[R_h\mid s_h=s]\,
\ee_{\pi'_{h}}[B_{h+1}(s_{h+1})\mid s_h=s]
\right).
\end{align}
Combining this with \eqref{eq:uncorrelated_product} gives
\(A_h(s)\ge F(B_h(s))\), completing the induction.

Finally, we can convert the pointwise affinity comparison into the desired
trajectory-level Hellinger comparison. Let \(s_1\sim\rho\). Since the clean
trajectory laws share the same initial state distribution,
\begin{align}
1-\dhel{\pp^\pi}{\pp^{\pi'}}
=
\ee[A_1(s_1)]
\qquad \text{and}\qquad
1-\dhel{\pp^{\pi',\pi}}{\pp^{\pi',\pi'}}
=
\ee_{\pi'_{1}}[B_1(s_1)]
.
\end{align}
By the induction and convexity of \(F\), we can use Jensen's inequality to write
\begin{align}
1-\dhel{\pp^\pi}{\pp^{\pi'}}
=
\ee[A_1(s_1)]
\ge
\ee_{\pi'_{1}}[F(B_1(s_1))]
\ge
F(\ee_{\pi'_{1}}[B_1(s_1)])
=
F\left(1-\dhel{\pp^{\pi',\pi}}{\pp^{\pi',\pi'}}\right).
\end{align}
Writing $d=\dhelinline{\pp^{\pi',\pi}}{\pp^{\pi',\pi'}}$,
this gives
\begin{align}
\dhel{\pp^\pi}{\pp^{\pi'}}
\le
1-F(1-d)
=
1-(1-3d)_+^{3/2}.
\end{align}
If \(d\ge \nicefrac13\), this bound is already trivial since the right-hand side equals
\(1\). If \(d<\nicefrac13\), then
\begin{align}
1-(1-3d)^{\nicefrac32}
\le
\frac92 d,
\end{align}
because the derivative of \(d\mapsto 1-(1-3d)^{\nicefrac32}\) is
\(\nicefrac92\sqrt{1-3d}\le \nicefrac92\). Therefore, for all \(d\in[0,1]\),
\begin{align}
\dhel{\pp^\pi}{\pp^{\pi'}}
\le
\frac92
\dhel{\pp^{\pi',\pi}}{\pp^{\pi',\pi'}}.
\end{align}

To summarize, we design \(F\) so that two critical properties hold. First, \(F\)
satisfies the one-step nonlinear closure property
\begin{align}
\ee_{\pi'_{1}}[R F(X)]\ge F(\ee_{\pi'_{1}}[R]\ee_{\pi'_{1}}[X])
\end{align}
for every \(R\ge0\) with \(\ee_{\pi'_{1}}[R^2]\le1\) and every \(X\in[0,1]\). This is the
property that closes the backward induction, replacing the false linear
positive-correlation inequality. The threshold in \(F\) is necessary because
low augmented product affinity can be generated entirely by anticorrelation, and
the higher moment is what allows the scalar comparison to lose only the
one-step Hellinger defect. Second, \(F\) converts the resulting affinity
comparison into the desired Hellinger comparison at the initial distribution. 
Thus the same envelope both propagates the induction locally and turns the final
augmented affinity bound into a linear bound on Hellinger distance.

It is important to note that the choice of \(F\) is not intended to be optimal.
Any function satisfying similar properties would
yield an analogous comparison. Improving the constants amounts to
sharpening 
\Cref{lem:sharp_scalar_defective_holder} and then choosing the corresponding
envelope.
\end{proof}

\begin{remark}
If we naively tried to take a linear envelope, then the induction would require
\begin{align}
\ee_{\pi'_{h}}[RX]\gtrsim \ee_{\pi'_{h}}[R]\ee_{\pi'_{h}}[X]
\end{align}
for every \(R\ge0\) with \(\ee_{\pi'_{h}}[R^2]\le1\) and every \(X\in[0,1]\),
which is false under worst-case anticorrelation. In the rollout interpretation,
this failure corresponds to the case where the actions favored by the change of
measure are precisely those that lead to states with low future similarity.

Moreover, this failure cannot be repaired by simply choosing an envelope $F$ that is positive at small values of the augmented product affinity.\footnote{Indeed, let
\(\cE\) be an event of probability \(p\), set
\(R=p^{-1/2}\ind{\cE}\), and set \(X=\ind{\cE^c}\). Then
\(\ee_{\pi'_{h}}[R^2]=1\) and
\(
\ee_{\pi'_{h}}[R]\ee_{\pi'_{h}}[X]=\sqrt p(1-p)>0,
\)
but \(R\) is supported entirely on the region where \(X=0\). Since any admissible
envelope must have \(F(0)=0\), we get
\(
\ee_{\pi'_{h}}[R F(X)]=0\). Thus, the desired scalar closure property
$
\ee_{\pi'_{h}}[R F(X)]\ge F(\ee_{\pi'_{h}}[R]\ee_{\pi'_{h}}[X])
$
forces \(F(\sqrt p(1-p))=0\).}
In particular, any uniformly valid envelope must
vanish on a nontrivial low-affinity interval. This motivates using a thresholded function: below the threshold, augmented product affinity can
be generated entirely by anticorrelation and therefore cannot certify any clean
affinity.
\end{remark}

As we will see next, under $\kappa$-domination, we can compare the standard and augmented Hellinger distances at each step via \Cref{lem:dhel_contraction_kappa}, which we then lift to the whole trajectory using a product comparison argument and careful algebraic manipulation. The key is that the per-step Hellinger distances are small, which allows us to leverage the product structure of the trajectory distributions to obtain a tighter comparison than the naive union bound suggested by \Cref{prop:hellinger_decomposition}, which would yield a factor of $H$ rather than a constant. 

\begin{lemma}
\label{lem:product_comparison}
Let $L\ge1$. If $d_1,\ldots,d_H,e_1,\ldots,e_H\in[0,1]$ satisfy
$d_h\le L e_h$ for every $h$, then
\begin{align}
1-\prod_{h=1}^H(1-d_h)
\le
L\left(1-\prod_{h=1}^H(1-e_h)\right).
\end{align}
\end{lemma}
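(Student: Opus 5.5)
The plan is to first reduce to the extremal case $d_h = L e_h$, and then prove a lower bound on a product of the form $\prod_h (1 - L e_h)$ by induction on $H$.

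\textbf{Reduction.} The left-hand side $1-\prod_{h}(1-d_h)$ is nondecreasing in each $d_h$, so it suffices to prove the claim with $d_h$ replaced by $\min(1, L e_h)$.

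\textbf{Trivial case.} Suppose some $h$ has $L e_h \ge 1$. Then
\begin{align}
L\left(1-\prod_{h'}(1-e_{h'})\right) \ge L e_h \ge 1,
\end{align}
because $1-\prod_{h'}(1-e_{h'}) \ge e_h$. The left-hand side is at most $1$, so there is nothing to prove. We may therefore assume $L e_h < 1$ for every $h$, and the goal becomes
\begin{align}
\prod_{h=1}^H (1 - L e_h) \ge 1 - L + L\prod_{h=1}^H (1-e_h).
\end{align}

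\textbf{Induction on $H$.} The case $H=0$ reads $1 \ge 1$. For the step, set $u = \prod_{h<H}(1-e_h)$ and $v = 1 - L + Lu$. The inductive hypothesis gives $\prod_{h<H}(1-Le_h) \ge \max(0,v)$, using that the product is nonnegative. Multiplying by $1 - L e_H \ge 0$ gives $\prod_{h\le H}(1-Le_h) \ge \max(0,v)(1-Le_H)$. The target is $1 - L + Lu(1-e_H)$, and we compare against it in two cases.
\begin{itemize}
\item If $v < 0$, the target is at most $1-L+Lu = v < 0$. The product is nonnegative, so the bound holds.
\item If $v \ge 0$, expand $v(1-Le_H)$ and subtract the target. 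The difference is
\begin{align}
(1-L+Lu)(1-Le_H) - \left(1-L+Lu(1-e_H)\right) = L e_H (L-1)(1-u),
\end{align}
which is nonnegative since $L \ge 1$ and $u \le 1$.
\end{itemize}
This closes the induction and proves the lemma.

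\textbf{Main obstacle.} The only delicate point is that $d_h \le L e_h$ does not give any termwise comparison between the two products. This is why the argument works with the affine-in-$L$ lower bound above rather than inducting directly on $1-\prod_h(1-d_h)$. The clamping to $\min(1,Le_h)$ and the $\max(0,v)$ device handle the sign issues in that bound.
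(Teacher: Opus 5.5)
Your proof is correct and is the paper's argument: the same reduction to $d_h=\min\{Le_h,1\}$, the same trivial case when some $Le_h\ge 1$, and an induction driven by the paper's identity $T(x)T(y)-T(xy)=L(L-1)(1-x)(1-y)$ with $T(x)=Lx-(L-1)$, where your $v$ is $T(u)$ and $1-Le_H=T(1-e_H)$. The $\max(0,v)$ case split is harmless but unnecessary: once you multiply the inductive hypothesis by $1-Le_H\ge 0$, that identity shows $v(1-Le_H)$ is at least the target whatever the sign of $v$.
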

\begin{proof}
It suffices to consider the largest possible $d_h$, namely
$d_h=\min\{Le_h,1\}$. If $e_h\ge 1/L$ for some $h$, then the left-hand side
is at most $1$, while the right-hand side is at least
\begin{align}
L(1-(1-e_h))=Le_h\ge1.
\end{align}
Thus assume $e_h<1/L$ for all $h$. Let $x_h=1-e_h$. Then
\begin{align}
1-d_h=1-Le_h=Lx_h-(L-1).
\end{align}
Define $T(x)=Lx-(L-1)$. Observe that for $x,y\in[1-1/L,1]$, we have the property that
\begin{align}
T(x)T(y)-T(xy)
=
L(L-1)(1-x)(1-y)\ge0.
\end{align}
By induction on $H$, this implies that for every $x_1,\ldots,x_H\in[1-1/L,1]$,
\begin{align}
\prod_{h=1}^H T(x_h)
\ge
T\left(\prod_{h=1}^H x_h\right).
\end{align}
Applying this with $x_h=1-e_h$ gives
\begin{align}
\prod_{h=1}^H(1-d_h)
\ge
L\prod_{h=1}^H(1-e_h)-(L-1).
\end{align}
Rearranging gives the desired inequality.
\end{proof}

Combining the above results, we are now ready to state the main comparison between the standard and augmented Hellinger distances under $\kappa$-domination.

\begin{proposition}
\label{prop:augmented_hellinger_testing_comparison}
Let $\pi, \pi'$ be two policies that $\kappa$-dominate the corruption $\nu$. Then for any $0 \leq \eta < 1$ it holds that
\begin{align}
\dhel{\pp^{\pi}}{\pp^{\pi'}}
\lesssim
\frac{1+\eta(2\kappa-1)}{(1-\eta)^2}
\dhel{\pp^{\pi',\pi_\eta}}{\pp^{\pi',\pi'_\eta}}.
\end{align}
\end{proposition}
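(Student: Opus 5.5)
Chain two comparisons: first \Cref{lem:sharp_clean_augmented_hellinger_comparison}, which gives
\begin{align}
\dhel{\pp^{\pi}}{\pp^{\pi'}} \le \nicefrac92 \cdot \dhel{\pp^{\pi',\pi}}{\pp^{\pi',\pi'}},
\end{align}
and then a comparison of the clean augmented distance with the noisy augmented distance $\dhel{\pp^{\pi',\pi_\eta}}{\pp^{\pi',\pi'_\eta}}$, losing only the one-step factor $L = \nicefrac{2(1+\eta(2\kappa-1))}{(1-\eta)^2}$ from \Cref{lem:dhel_contraction_kappa}. The point of working with augmented laws is that the rollout policy $\pi'$ is common to both sides, so the state process is identical and only the auxiliary labels differ. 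This is exactly what should prevent the exponential compounding seen in \Cref{thm:offline_bc_tight}.

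For the second comparison, I write both augmented Hellinger distances via the affinity recursion of \Cref{prop:hellinger_decomposition}. Fix a rollout and condition on it. The auxiliary labels are then independent across $h$, with laws $\pi_h(\cdot\mid s_h)$ versus $\pi'_h(\cdot\mid s_h)$ in the clean case and their $\eta$-corrupted versions in the noisy case. Integrating out the rollout, I get
\begin{align}
1-\dhel{\pp^{\pi',\pi}}{\pp^{\pi',\pi'}} = \ee^{\pi'}\Bigl[\prod_{h}(1-d_h)\Bigr], \qquad 1-\dhel{\pp^{\pi',\pi_\eta}}{\pp^{\pi',\pi'_\eta}} = \ee^{\pi'}\Bigl[\prod_h (1-e_h)\Bigr],
\end{align}
where $d_h = \dhel{\pi_h(\cdot\mid s_h)}{\pi'_h(\cdot\mid s_h)}$ and $e_h$ is the corresponding noisy quantity. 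I will double-check this identity against the recursion \eqref{eq:uncorrelated_product}: $B_h(s) = \ee[R_h\mid s]\cdot \ee[B_{h+1}(s_{h+1})\mid s]$, and $\ee[R_h\mid s] = 1-d_h(s)$ depends only on $s$, so unrolling gives exactly the expectation of the product along the $\pi'$-rollout. The same holds in the noisy case because the rollout is still by $\pi'$.

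Next, \Cref{lem:dhel_contraction_kappa} gives $d_h \le L e_h$ pointwise in $s_h$, with $L \ge 1$. Applying \Cref{lem:product_comparison} pathwise yields
\begin{align}
1-\prod_h(1-d_h) \le L\Bigl(1-\prod_h(1-e_h)\Bigr)
\end{align}
for every realized trajectory. Taking $\ee^{\pi'}$ then gives
\begin{align}
\dhel{\pp^{\pi',\pi}}{\pp^{\pi',\pi'}} \le L\cdot \dhel{\pp^{\pi',\pi_\eta}}{\pp^{\pi',\pi'_\eta}},
\end{align}
with no horizon factor. Combining with the first step gives the claim with constant $9(1+\eta(2\kappa-1))/(1-\eta)^2$.

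The main obstacle is the first step, the clean-to-augmented comparison. It is already handled by \Cref{lem:sharp_clean_augmented_hellinger_comparison}, whose envelope argument deals with the anticorrelation between likelihood ratio and future similarity. The remaining care is in justifying the product representation of the augmented affinity, in particular that conditioning on the whole rollout makes the labels independent. I would verify this directly from the definition of $\pp^{\pi',\cdot}$: labels are drawn independently given the states, and the states' law does not depend on the label policy.
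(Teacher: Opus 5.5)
Your proposal is correct and follows the paper's proof essentially step for step. The paper also chains \Cref{lem:sharp_clean_augmented_hellinger_comparison} with a pathwise application of \Cref{lem:dhel_contraction_kappa} and \Cref{lem:product_comparison} along a fixed $\pi'$-rollout state sequence, and then averages over the rollout using the product form of the augmented affinity. Your explicit justification of that product form (the labels are conditionally independent given the states, and the state law does not depend on the label policy) is exactly what the paper's appeal to \Cref{prop:hellinger_decomposition} uses implicitly. Your constant $9(1+\eta(2\kappa-1))/(1-\eta)^2$ is consistent with the lemma as stated; the paper's proof quotes $\nicefrac{31}{2}$ in place of $\nicefrac92$, which only changes the universal constant.
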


\begin{proof}
\Cref{lem:sharp_clean_augmented_hellinger_comparison} gives the initial comparison between the standard and augmented Hellinger distances:
\begin{align}
\dhel{\pp^{\pi}}{\pp^{\pi'}}
\le
\frac{31}{2}
\dhel{\pp^{\pi',\pi}}{\pp^{\pi',\pi'}}.
\end{align}
It remains to compare the standard and noisy augmented Hellinger distances. Fix a realized state sequence from a $\pi'$-rollout. For each $h$, let $d_h = \dhel{\pi_h(\cdot\mid s_h)}{\pi'_h(\cdot\mid s_h)}$ and let $e_h = 
\dhel{\pi_{\eta,h}(\cdot\mid s_h)}{\pi'_{\eta,h}(\cdot\mid s_h)}$.
\Cref{lem:dhel_contraction_kappa} provides the pointwise bound
\begin{align}
d_h
\le
\frac{2(1+\eta(2\kappa-1))}{(1-\eta)^2} e_h.
\end{align}
Since $\frac{2(1+\eta(2\kappa-1))}{(1-\eta)^2}\ge1$, \Cref{lem:product_comparison} gives the product comparison
\begin{align}
1-\prod_{h=1}^H(1-d_h)
\le
\frac{2(1+\eta(2\kappa-1))}{(1-\eta)^2}
\left(1-\prod_{h=1}^H(1-e_h)\right).
\end{align}
By \Cref{prop:hellinger_decomposition}, the left-hand side corresponds to the standard trajectory-level augmented Hellinger and the right-hand side corresponds to the noisy version. Averaging over the $\pi'$-rollout state sequence yields
\begin{align}
\dhel{\pp^{\pi',\pi}}{\pp^{\pi',\pi'}}
\le
\frac{2(1+\eta(2\kappa-1))}{(1-\eta)^2}
\dhel{\pp^{\pi',\pi_\eta}}{\pp^{\pi',\pi'_\eta}}.
\end{align}
Combining the two displays concludes the proof.
\end{proof}

We now provide a similar comparison between the standard and augmented Hellinger distances absent $\kappa$-domination. First, we give the following product comparison lemma, which allows us to compare the standard and noisy augmented Hellinger distances without $\kappa$-domination. We then use this lemma to give an analogous version of \Cref{prop:augmented_hellinger_testing_comparison}, albeit with a worse dependence on $H$ and $\eta$.

\begin{lemma}
\label{lem:sqrt_product_comparison}
Let $L\ge1$. If $d_1,\ldots,d_H,e_1,\ldots,e_H\in[0,1]$ satisfy $d_h\le L\sqrt{e_h}$ for every $h$, then
\begin{align}
1-\prod_{h=1}^H(1-d_h)
\le
L\sqrt{
2H\left(1-\prod_{h=1}^H(1-e_h)\right)
}.
\end{align}
\end{lemma}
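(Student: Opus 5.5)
The plan is a union bound on the left, Cauchy--Schwarz in the middle, and a product-to-sum lower bound on the right. First, the left-hand side is at most $\sum_h d_h$: the elementary inequality $1-\prod_h(1-d_h)\le \sum_h d_h$ for $d_h\in[0,1]$ follows by induction, or equivalently from the product case of \Cref{prop:hellinger_decomposition}. Next, the hypothesis gives $\sum_h d_h\le L\sum_h\sqrt{e_h}$, and Cauchy--Schwarz gives $\sum_h\sqrt{e_h}\le \sqrt{H\sum_h e_h}$. It therefore suffices to show
\begin{align}
\min\Big\{1,\; L\sqrt{H\textstyle\sum_h e_h}\Big\} \le L\sqrt{2H\Big(1-\textstyle\prod_h(1-e_h)\Big)},
\end{align}
where we may cap at $1$ because the left-hand side of the lemma never exceeds $1$.

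The main step is to compare $\sum_h e_h$ with $E:=1-\prod_h(1-e_h)$. This is the reverse of the union bound, and it holds only up to a factor of $2$ while $\sum_h e_h$ stays at most $1$. Write $S=\sum_h e_h$. Since $\prod_h(1-e_h)\le e^{-S}$, we get $E\ge 1-e^{-S}$.
\begin{itemize}
\item If $S\le 1$, then $1-e^{-S}\ge S(1-e^{-1})\ge S/2$ by concavity of $x\mapsto 1-e^{-x}$. Hence $S\le 2E$ and $L\sqrt{HS}\le L\sqrt{2HE}$.
\item If $S>1$, then $E\ge 1-e^{-1}\ge 1/2$. Since $L\ge 1$ and $H\ge 1$, the right-hand side is at least $L\sqrt{H}\ge 1$, which dominates the left-hand side of the lemma, trivially at most $1$.
\end{itemize}
Combining the two cases with the first paragraph gives the claim.

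The only subtle point is this second step: the product-versus-sum comparison is not uniform, and the case split on $S\le 1$ is exactly what the factor $2$ under the square root absorbs. The remaining steps are routine. A finer argument in the style of \Cref{lem:product_comparison} could avoid the split, but it is unnecessary for the stated constant.
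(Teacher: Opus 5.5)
Your proof is correct and follows essentially the same route as the paper. Both arguments use the union bound $1-\prod_h(1-d_h)\le\sum_h d_h$, then the pointwise hypothesis with Cauchy--Schwarz, then the lower bound $1-\prod_h(1-e_h)\ge 1-e^{-S}$ with the same case split on $S=\sum_h e_h\le 1$ versus $S>1$; your concavity argument for $1-e^{-S}\ge S/2$ on $[0,1]$ simply spells out a step the paper calls elementary.
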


\begin{proof}
By the Weierstrass product inequality and the assumed pointwise bound, we have
\begin{align}
1-\prod_{h=1}^H(1-d_h)
\le
\sum_{h=1}^H d_h
\le
L\sum_{h=1}^H\sqrt{e_h}
\le
L\sqrt{H \cdot \left(\sum_{h=1}^H e_h\right)},
\end{align}
where we used Cauchy-Schwarz in the last step. We now have an upper bound of $L\sqrt{H \cdot \left(\sum_{h=1}^H e_h\right)}$ on the left-hand side, and need to compare it to $L\sqrt{2H \cdot \left(1-\prod_{h=1}^H(1-e_h)\right)}$.

Observe that from an elementary bound, we have $1-\prod_{h=1}^H(1-e_h)\ge 1-e^{-\sum_{h=1}^H e_h}$. We split into two cases.
If $\sum_{h=1}^H e_h\le1$, then $1-e^{-\sum_{h=1}^H e_h}\ge \nicefrac{\sum_{h=1}^H e_h}{2}$, and therefore
\begin{align}
L\sqrt{H \cdot \left(\sum_{h=1}^H e_h\right)}
\le
L\sqrt{2H \cdot \left(1-\prod_{h=1}^H(1-e_h)\right)}.
\end{align}
If $\sum_{h=1}^H e_h>1$, then $1-\prod_{h=1}^H(1-e_h)\ge 1-e^{-1}\ge \nicefrac12$. Hence
\begin{align}
L\sqrt{2H \cdot \left(1-\prod_{h=1}^H(1-e_h)\right)}\ge L\sqrt{H}\ge1,
\end{align}
while the left-hand side is always at most $1$. Combining the two cases proves
the claim.
\end{proof}

Next, we use the above product comparison to give a version of \Cref{prop:augmented_hellinger_testing_comparison} without $\kappa$-domination. The key is that, even without $\kappa$-domination, we can still control the per-step clean Hellinger distances by the square root of the noisy augmented Hellinger distances via \Cref{lem:hellinger_contraction_arbitrary}, which allows us to apply \Cref{lem:sqrt_product_comparison} to compare the clean and noisy augmented trajectory Hellinger distances.

\begin{proposition}
\label{prop:augmented_hellinger_testing_comparison_no_domination}
Let $\pi, \pi'$ be two policies and let $\nu$ be an arbitrary corruption distribution.  Then for any $0 \leq \eta < 1$ it holds that
\begin{align}
\dhel{\pp^{\pi}}{\pp^{\pi'}}
\lesssim
\frac{1}{1-\eta}
\sqrt{H \cdot
\dhel{\pp^{\pi',\pi_\eta}}{\pp^{\pi',\pi'_\eta}}
}.
\end{align}
\end{proposition}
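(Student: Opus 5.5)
The argument mirrors the proof of \Cref{prop:augmented_hellinger_testing_comparison}, with \Cref{lem:dhel_contraction_kappa} replaced by \Cref{lem:hellinger_contraction_arbitrary} and \Cref{lem:product_comparison} replaced by \Cref{lem:sqrt_product_comparison}. It has two steps, and neither needs $\kappa$-domination.

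\textbf{Step 1 (clean comparison).} By \Cref{lem:sharp_clean_augmented_hellinger_comparison}, which holds for arbitrary policies, we have
\begin{align}
\dhel{\pp^{\pi}}{\pp^{\pi'}} \le \nicefrac92 \cdot \dhel{\pp^{\pi',\pi}}{\pp^{\pi',\pi'}}.
\end{align}
So it suffices to compare the clean augmented distance with the noisy one.

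\textbf{Step 2 (clean augmented versus noisy augmented).} Fix a state sequence $s_1,\dots,s_H$ realized by a $\pi'$-rollout. Conditionally on this sequence, both augmented laws are products over $h$ of the label distributions. Set
\begin{align}
d_h = \dhel{\pi_h(\cdot\mid s_h)}{\pi'_h(\cdot\mid s_h)}, \qquad e_h = \dhel{\pi_{\eta,h}(\cdot\mid s_h)}{\pi'_{\eta,h}(\cdot\mid s_h)}.
\end{align}
By the product case of \Cref{prop:hellinger_decomposition}, the conditional augmented Hellinger distances are $1-\prod_h(1-d_h)$ and $1-\prod_h(1-e_h)$. \Cref{lem:hellinger_contraction_arbitrary} gives $d_h \le L\sqrt{e_h}$ with $L=\sqrt2/(1-\eta)\ge1$. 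Then \Cref{lem:sqrt_product_comparison} yields, pointwise in the state sequence,
\begin{align}
1-\prod_h(1-d_h) \le L\sqrt{2H\left(1-\prod_h(1-e_h)\right)}.
\end{align}

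Next we average over the $\pi'$-rollout. Since the rollout (state) component is shared between the two laws, the augmented Hellinger distance is exactly the expectation of these conditional quantities. Applying Jensen's inequality to the concave map $x\mapsto\sqrt x$ on the right-hand side gives
\begin{align}
\dhel{\pp^{\pi',\pi}}{\pp^{\pi',\pi'}} \le \frac{2\sqrt H}{1-\eta}\sqrt{\dhel{\pp^{\pi',\pi_\eta}}{\pp^{\pi',\pi'_\eta}}}.
\end{align}
Combining this with Step 1 proves the claim, with constant $9/(1-\eta)\cdot\sqrt{H}$ up to absolute factors.

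\textbf{Main obstacle.} The step needing care is the identification of the augmented trajectory Hellinger distance with the average over rollouts of the conditional product-form distances. This identification requires that the state marginals be identical under both augmented laws. That holds because in both laws the states come from rolling out $\pi'$, and the labels are conditionally independent given the states.

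Once this is in place, the affinity satisfies
\begin{align}
1-\dhel{\cdot}{\cdot} = \ee\left[\prod_h \sum_a\sqrt{\pi_h(a\mid s_h)\,\pi'_h(a\mid s_h)}\right],
\end{align}
and the same expression holds with $\pi_\eta,\pi'_\eta$ in place of $\pi,\pi'$. From there the argument is immediate.
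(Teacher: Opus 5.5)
Your proposal is correct and follows essentially the same route as the paper: first the clean-to-augmented comparison of \Cref{lem:sharp_clean_augmented_hellinger_comparison}, then a pointwise-in-states application of \Cref{lem:hellinger_contraction_arbitrary} and \Cref{lem:sqrt_product_comparison}, then averaging over the $\pi'$-rollout with Jensen's inequality. You also spell out why the augmented laws share the same state marginal and factor as a product over the labels given the states, which the paper leaves implicit.
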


\begin{proof}
We proceed in a similar manner to the proof of \Cref{prop:augmented_hellinger_testing_comparison}, but without using \Cref{lem:dhel_contraction_kappa} in the absence of $\kappa$-domination. By \Cref{lem:sharp_clean_augmented_hellinger_comparison},
\begin{align}
\dhel{\pp^{\pi}}{\pp^{\pi'}}
\le
\frac{31}{2}
\dhel{\pp^{\pi',\pi}}{\pp^{\pi',\pi'}}.
\end{align}
It remains to compare the clean and noisy augmented Hellinger distances. Fix a realized state sequence from a $\pi'$-rollout. For each $h$, let $d_h = \dhel{\pi_h(\cdot\mid s_h)}{\pi'_h(\cdot\mid s_h)}$ and let $e_h = 
\dhel{\pi_{\eta,h}(\cdot\mid s_h)}{\pi'_{\eta,h}(\cdot\mid s_h)}$. \Cref{lem:hellinger_contraction_arbitrary} provides the pointwise bound 
\begin{align}
d_h
\le
\frac{\sqrt2}{1-\eta}\sqrt{e_h}.
\end{align} 
Applying \Cref{lem:sqrt_product_comparison} gives, for this fixed state sequence,
\begin{align}
1-\prod_{h=1}^H(1-d_h)
\le
\frac{2}{1-\eta}
\sqrt{ H \cdot \left(
1-\prod_{h=1}^H(1-e_h)
\right)
}.
\end{align}
Averaging over the $\pi'$-rollout state sequence yields, by Jensen's inequality,
\begin{align}
\dhel{\pp^{\pi',\pi}}{\pp^{\pi',\pi'}}
\le
\frac{2}{1-\eta}
\sqrt{ H \cdot 
\dhel{\pp^{\pi',\pi_\eta}}{\pp^{\pi',\pi'_\eta}}
}.
\end{align}
Combining the two displays concludes the proof.
\end{proof}

The following lemma is the scalar inequality that drives the horizon-free comparison in \Cref{lem:sharp_clean_augmented_hellinger_comparison}.

\begin{lemma}
\label{lem:sharp_scalar_defective_holder}
Let $R\ge 0$ be a random variable with $\ee[R^2]\le 1$. Then for every random variable
$W\in[0,1]$, 
\begin{align}
\left(\ee[R W^3]\right)^{2/3}
\ge
\ee[R]\ee[W^2]-2(1-\ee[R]).
\end{align}
\end{lemma}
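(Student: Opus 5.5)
The plan is to avoid correlation arguments altogether. The footnote in the proof of \Cref{lem:sharp_clean_augmented_hellinger_comparison} shows that any bound routed through $\ee[RW^2]$ and a covariance estimate loses a $\sqrt{1-\ee[R]}$ term. Instead, I will prove a \emph{pointwise} minorant of the integrand $R\,W^3$ by a function that is affine in $R$ and in $V:=W^2$, plus a negative multiple of $R^2$. Taking expectations and using $\ee[R^2]\le 1$ then removes the only nonlinear moment.

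Write $m=\ee[R]$ and $w=\ee[W^2]$, and set $y=(mw-2(1-m))_+$. If $y=0$ there is nothing to prove, since the left-hand side is nonnegative. So assume $y>0$; in particular $m>\nicefrac23$ and $w>0$.

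The key step is to find, for each such $y$, coefficients $\alpha,\beta,\gamma\ge0$ and $\delta$ with
\begin{align}
r\,v^{3/2}\;\ge\;\alpha r+\beta v-\gamma r^2+\delta \qquad\text{for all } r\ge0,\ v\in[0,1].
\end{align}
Taking expectations with $r=R$, $v=W^2$ and using $\ee[R^2]\le1$ gives
\begin{align}
\ee[RW^3]\ge \alpha m+\beta w-\gamma+\delta.
\end{align}
The coefficients are then tuned so that the right-hand side is at least $y^{3/2}$.

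To guess the coefficients, I first try separating the two sources of curvature. The first is the tangent line of the convex map $v\mapsto v^{3/2}$ at $v=y$: $v^{3/2}\ge \tfrac32\sqrt y\,v-\tfrac12 y^{3/2}$. Multiplying by $r\ge 0$ gives
\begin{align}
r v^{3/2}\ge \tfrac32\sqrt y\, rv-\tfrac12 y^{3/2} r.
\end{align}
The remaining bilinear term $rv$ must then be lower bounded by something affine in $r$ and $v$ minus a multiple of $r^2$. I will use $rv\ge v-(1-r)_+v\ge v-(1-r)_+$ together with the elementary bound
\begin{align}
(1-r)_+\le (1-r)+\tfrac{\lambda}{2}(r-1)^2\cdot\ind{r\ge1}\le (1-r)+\tfrac{\lambda}{2}(r^2-1)+\lambda(1-r),
\end{align}
or a variant of it, choosing the free parameter $\lambda$ so that after taking expectations the $r^2$ term contributes $\le 0$ via $\ee[R^2]\le1$. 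Everything else becomes linear in $1-m$.

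The resulting inequality has the shape
\begin{align}
\ee[RW^3]\ge \tfrac32\sqrt y\,\bigl(w-c(1-m)\bigr)-\tfrac12 y^{3/2}m .
\end{align}
Because $w\ge y/m+2(1-m)/m$, it reduces to a one-variable check in $y$. The check is that the right-hand side is at least $y^{3/2}$ whenever $c\le 2$, which matches the constant $2$ in the statement. If the naive split only gives $c=3$, the fix is to keep the exact expression $\ee[(m-R)V]$ rather than bounding it by $\ee[(1-R)_+]$. One then optimizes $\lambda$ jointly with the tangent point, taking the tangent at $v_0=y/m$ rather than at $v_0=y$. This uses the slack factor $m\le 1$ from the $-\tfrac12 y^{3/2}m$ term, which is exactly where the constant improves.

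The main obstacle is this last calibration: choosing the tangent point and $\lambda$ so that the pointwise minorant is valid for \emph{all} $r\ge0$ and $v\in[0,1]$ while the defect coefficient comes out to exactly $2$. I would first verify the minorant by minimizing over $r$ for fixed $v$. The minorant is concave quadratic in $r$ on each side of $r=1$, so the minimum is at $r=0$, at $r=1$, or at the vertex. That leaves a finite check in $v$, which I expect to close at the threshold $m>\nicefrac23$ forced by $y>0$.
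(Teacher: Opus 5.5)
Your framework (a separable pointwise minorant $\alpha r+\beta v-\gamma r^2+\delta\le rv^{3/2}$, then $\ee[R^2]\le1$) is a legitimate dual certificate for this moment problem. However, the way you propose to build the certificate cannot work, and the obstruction is structural rather than a matter of calibration. Once you replace $v^{3/2}$ by a tangent line $\ell(v)=\tfrac32\sqrt{v_0}\,v-\tfrac12 v_0^{3/2}$, at any $v_0$ including $y$ or $y/m$, every later step only lower bounds $\ee[R\,\ell(V)]=\tfrac32\sqrt{v_0}\,\ee[RV]-\tfrac12 v_0^{3/2}m$. That is the linear moment $\ee[RV]$, which the footnote you cite shows can be $mw-\Theta(\sqrt{1-m})$.

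Concretely, take $m=0.99$, $R\in\{m-s,m+s\}$ equiprobable with $s=\sqrt{1-m^2}$ (so $\ee[R^2]=1$), and $W=\ind{R=m-s}$. Then $w=\nicefrac12$, $y=0.475$, $y^{3/2}\approx0.327$ and $\ee[RV]=(m-s)/2\approx0.424$, so
\[
\max_{v_0>0}\Bigl\{\tfrac32\sqrt{v_0}\,\ee[RV]-\tfrac12 v_0^{3/2}m\Bigr\}=\frac{\ee[RV]^{3/2}}{\sqrt m}\approx0.278<y^{3/2}.
\]
Any separable minorant of $r\,\ell(v)$, evaluated at these moments with $\ee[R^2]=1$, is at most $\ee[R\,\ell(V)]$ for this law. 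So no choice of tangent point, of $\lambda$, or of variant of the $rv$ bound reaches $y^{3/2}$, even though the lemma holds here ($\ee[RW^3]^{2/3}\approx0.565\ge0.475$).

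The same issue appears in your auxiliary step. The bound $(1-r)_+\le(1-r)+\tfrac\lambda2(r-1)^2\ind{r\ge1}$ is false for $r=1+\delta$ with $0<\delta<2/\lambda$. No quadratic majorant can repair it: under $\ee[R]=m$ and $\ee[R^2]\le1$, the two-point law on $1\pm\sqrt{2(1-m)}$ with mean $m$ gives $\ee[(1-R)_+]=\tfrac{1-m}2+\sqrt{\tfrac{1-m}2}$. So ``everything else becomes linear in $1-m$'' fails. Keeping $\ee[(m-R)V]$ exact does not help either, since that term equals $s/2\asymp\sqrt{1-m}$ in the example.

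The missing idea is that the curvature of $v\mapsto v^{3/2}$ must be used exactly on the spread-out, anticorrelated configurations (above, $V\in\{0,1\}$ with $V=1$ where $R$ is small). So you cannot linearize in $v$ before decoupling $r$ from $v$. Inside your duality framework, the right order is:
\begin{enumerate}
\item fix $(\alpha,\gamma)$;
\item minimize over $r$ first, obtaining $\min_{r\ge0}\{rv^{3/2}-\alpha r+\gamma r^2\}=-\tfrac{(\alpha-v^{3/2})_+^2}{4\gamma}$;
\item take $\beta v+\delta$ to be a supporting line at $w$ of the lower convex envelope of this function on $[0,1]$.
\end{enumerate}
Establishing the constant $2$ from there is the real work, and your plan does not address it.

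The paper instead works on the primal side. For a fixed law of $R$ it reduces to the worst anticorrelated choice $W_c(R)=\min\{1,c/R\}$. It then shows $\Phi(c)=B(c)^{2/3}-mA(c)+2(1-m)$ has at most one interior critical point, where $B(c)=c^3/m^3$. At that point it proves $\tfrac mc\ee[(c-R)_+]\le2(1-m)$:
\begin{itemize}
\item for $c\le\nicefrac12$, via $\tfrac{c-r}{c}\le(1-r)^2$;
\item for $c>\nicefrac12$, via the criticality consequence $m\ge c^{3/4}$, a quadratic majorant using $\ee[R^2]\le1$, and polynomial sign checks.
\end{itemize}
This restriction to small thresholds or critical points is what turns the $\sqrt{1-m}$-size tail into one linear in $1-m$.
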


\begin{proof}
Let $m=\ee[R]$. The case $m=0$ is immediate, so assume $m>0$. We first reduce to threshold
functions. For fixed law of $R$, consider minimizing
\begin{align}
\left(\ee[R W^3]\right)^{2/3}-m\ee[W^2]
\end{align}
over measurable $W\in[0,1]$. We will see that the defect $2(1-\ee[R])$ is naturally controlled by the second moment bound on $R$. Solving the KKT condition suggests that it suffices
to prove the claim for
\begin{align}
W_c(r)=\min\left\{1,\frac{c}{r}\right\},
\end{align}
with the convention $W_c(0)=1$. If $R$ has atoms, the minimizer may randomize
on the threshold set $\{R=c\}$; this is immaterial by continuity.

For $c\ge0$, define
\begin{align}
A(c)=\ee[W_c(R)^2], \qquad \text{and} \qquad 
B(c)=\ee[R W_c(R)^3].
\end{align}
It is enough to prove
\begin{align}
\Phi(c):=B(c)^{2/3}-mA(c)+2(1-m)\ge0
\end{align}
for all $c\ge0$. At the boundaries this is immediate. Indeed, as
$c\to\infty$, $W_c\to1$, so
\begin{align}
\Phi(c)\to m^{2/3}-m+2(1-m)\ge0.
\end{align}
At $c=0$, we have
\begin{align}
\Phi(0)\ge -m\pp(R=0)+2(1-m)\ge 2(1-m)^2\ge0,
\end{align}
because on the event $\{R=0\}$, we have $(1-R)^2=1$, and thus recalling $m \le 1$,
\begin{align}
\pp(R=0)\le \ee[(1-R)^2]\le 2(1-m).
\end{align}

It remains to consider interior minima. Away from atoms of $R$, let $I(c)=\ee[R^{-2}\mathbf 1\{R>c\}]$. Then
\begin{align}
A'(c)=2cI(c),
\qquad \text{and} \qquad 
B'(c)=3c^2I(c),
\end{align}
and hence, by the chain rule,
\begin{align}
\Phi'(c)
=
2cI(c)\left(cB(c)^{-1/3}-m\right).
\end{align}
Moreover, $cB(c)^{-1/3}$ is nondecreasing, since
\begin{align}
\frac{d}{dc}\log\left(cB(c)^{-1/3}\right)
=
\frac{B(c)-c^3I(c)}{cB(c)}
=
\frac{\ee[R\mathbf 1\{R\le c\}]}{cB(c)}
\ge0.
\end{align}
Therefore $\Phi$ has at most one interior minimum, and the criticality condition for an interior minimum is
\begin{align}
B(c)=\frac{c^3}{m^3}.
\label{eq:sharp_scalar_criticality}
\end{align}
At such a point,
\begin{align}
B(c)=cA(c)-\ee[(c-R)_+],
\end{align}
so, substituting into the definition of $\Phi$ and using \eqref{eq:sharp_scalar_criticality}, we have
\begin{align}
\Phi(c)
=
2(1-m)-\frac{m}{c}\ee[(c-R)_+].
\end{align}
Thus, at the critical point, $\Phi(c)\ge0$ is equivalent to the tail bound
\begin{align}
\frac{m}{c}\ee[(c-R)_+]\le 2(1-m).
\label{eq:sharp_critical_tail}
\end{align}
It remains to verify \eqref{eq:sharp_critical_tail}. We split into two cases. First suppose $c\le 1/2$. For every $r\le c$,
\begin{align}
\frac{m}{c}(c-r)\le \frac{c-r}{c}\le (1-r)^2.
\end{align}
Therefore
\begin{align}
\frac{m}{c}\ee[(c-R)_+]
\le
\ee[(1-R)^2]
=
\ee[R^2] - 2\ee[R] + 1
\le
2(1-m),
\end{align}
where the final inequality follows from the second moment bound $\ee[R^2]\le1$,
proving \eqref{eq:sharp_critical_tail} in this case.

Now suppose $c>1/2$. At a critical point, since $B(c)\le \ee[R]=m$, we have
\begin{align}
\frac{c^3}{m^3}\le m,
\qquad\text{hence}\qquad
m\ge c^{3/4}.
\end{align}
Also $m\le1$, so $c<1$ unless $m=c=1$, in which case the claim is trivial.

We will now define a function that will help us bound the tail of the distribution. Let
\begin{align}
h_c(r)=
\begin{cases}
0, & r\le c,\\
\frac{2r-3c+c^3/r^2}{3}, & r>c.
\end{cases}
\end{align}
Using the criticality condition \eqref{eq:sharp_scalar_criticality}, we have
\begin{align}
\ee[(c-R)_+]
=
\ee[h_c(R)]
+
c-\frac{2m}{3}-\frac{c^3}{3m^3}.
\label{eq:sharp_tail_identity}
\end{align}
Indeed, expanding the right-hand side gives
\begin{align}
&\frac{2}{3}\ee[R\mathbf 1\{R>c\}]
-c\pp(R>c)
+\frac{c^3}{3}\ee[R^{-2}\mathbf 1\{R>c\}]
+c-\frac{2m}{3}
-\frac{1}{3}
\left(
\ee[R\mathbf 1\{R\le c\}]
+c^3\ee[R^{-2}\mathbf 1\{R>c\}]
\right)
\\
&\qquad
=
c\pp(R\le c)-\ee[R\mathbf 1\{R\le c\}]
=
\ee[(c-R)_+].
\end{align}

Next, we will upper bound $h_c$ by a quadratic. Let
\begin{align}
a_c=\frac{2c+2c^2-1}{1+c+c^2},
\qquad \text{and} \qquad
\gamma_c=\frac{(1+c+c^2)^2}{3(2+c)},
\qquad \text{and} \qquad
q_c(r)=\gamma_c(r-a_c)^2.
\end{align}
We claim that
\begin{align}
h_c(r)\le q_c(r)\qquad\forall r\ge0.
\label{eq:sharp_quadratic_majorizer}
\end{align}
For $r\le c$, this is immediate because $h_c(r)=0$. For $r>c$, direct
factorization gives
\begin{align}
q_c(r)-h_c(r)
=
\frac{(r-1)^2}{3r^2(2+c)}P_c(r),
\end{align}
where
\begin{align}
P_c(r)
=
r^2(1+2c+3c^2+2c^3+c^4)
-r(4c^3+2c^4)
-(2c^3+c^4).
\end{align}
Moreover,
\begin{align}
P_c(c)=c^2(1-c)^2(1+c)^2\ge0,
\qquad
P_c'(c)=2c(1+c)(1+c+c^3)>0,
\end{align}
and
\begin{align}
P_c''(r)=2(1+c+c^2)^2>0.
\end{align}
Thus $P_c(r)\ge0$ for all $r\ge c$, proving
\eqref{eq:sharp_quadratic_majorizer}. Taking expectations in \eqref{eq:sharp_quadratic_majorizer},
\begin{align}
\ee[h_c(R)]
\le
\gamma_c\ee[(R-a_c)^2]
\le
\gamma_c(1-2a_cm+a_c^2),
\end{align}
where we used $\ee[R^2]\le1$. Combining this with
\eqref{eq:sharp_tail_identity}, it suffices to show
\begin{align}
\gamma_c(1-2a_cm+a_c^2)
+
c-\frac{2m}{3}
-\frac{c^3}{3m^3}
\le
\frac{2c(1-m)}{m}.
\label{eq:sharp_residual_goal}
\end{align}
A direct simplification gives
\begin{align}
&\gamma_c(1-2a_cm+a_c^2)
+
c-\frac{2m}{3}
-\frac{c^3}{3m^3}
-
\frac{2c(1-m)}{m}
\\
&\qquad
=
\frac{(1-m)G_c(m)}{3m^3(c+2)},
\label{eq:sharp_residual_factorization}
\end{align}
where
\begin{align}
G_c(m)
={}&
4c^4m^3-c^4m^2-c^4m-c^4
+8c^3m^3-2c^3m^2-2c^3m-2c^3
\\
&\quad
+6c^2m^3-6c^2m^2
+4cm^3-12cm^2
+2m^3.
\end{align}
Since $m\in[c^{3/4},1]$, it remains to prove $G_c(m)\le0$ on this interval.

We first show that $G_c$ is convex in $m$. We have
\begin{align}
\frac12G_c''(m)
=
m(12c^4+24c^3+18c^2+12c+6)
-(c^4+2c^3+6c^2+12c),
\end{align}
which is increasing in $m$. Hence it suffices to check positivity at
$m=c^{3/4}$. Let $x=c^{1/4}$, so $x\in[2^{-1/4},1]$. Then
\begin{align}
\frac12G_c''(c^{3/4})
=
x^3\Big[
12x^{16}-x^{13}
+24x^{12}-2x^9
+18x^8-6x^5
+12x^4-12x+6
\Big].
\end{align}
Each bracketed group is positive:
\begin{align}
12x^{16}-x^{13}>0,\qquad
24x^{12}-2x^9>0,\qquad
18x^8-6x^5>0,
\end{align}
and $12x^4-12x+6>0$ on $x\in[2^{-1/4},1]$. Therefore
$G_c''(m)>0$ on $m\in[c^{3/4},1]$, so $G_c$ is convex and its maximum is
attained at an endpoint.

At $m=1$, we have that
\begin{align}
G_c(1)=c^4+2c^3-8c+2.
\end{align}
The function $p(c)=c^4+2c^3-8c+2$ is convex on $[1/2,1]$, and
\begin{align}
p(1/2)=-\frac{27}{16}<0,\qquad p(1)=-3<0.
\end{align}
Thus $G_c(1)<0$.

At $m=c^{3/4}$, again write $x=c^{1/4}$. Then
\begin{align}
G_c(c^{3/4})=x^9Q(x),
\end{align}
where
\begin{align}
Q(x)
={}&
4x^{16}-x^{13}+8x^{12}-x^{10}-2x^9+6x^8-x^7-2x^6
-6x^5+4x^4-2x^3-12x+2.
\end{align}
This polynomial factors as
\begin{align}
Q(x)=-3+(x-1)S(x),
\end{align}
where
\begin{align}
S(x)
={}&
4x^{15}+4x^{14}+4x^{13}+3x^{12}
+11x^{11}+11x^{10}+10x^9+8x^8
\\
&\quad
+14x^7+13x^6+11x^5+5x^4
+9x^3+7x^2+7x-5.
\end{align}
Since $x\in[2^{-1/4},1]$, we have $7x-5>0$, and all other terms in
$S(x)$ are nonnegative. Hence $S(x)>0$. Since $x-1\le0$, it follows that
$Q(x)\le -3$, and therefore $G_c(c^{3/4})<0$.

Thus $G_c(m)\le0$ for every $m\in[c^{3/4},1]$. By
\eqref{eq:sharp_residual_factorization}, the residual in
\eqref{eq:sharp_residual_goal} is nonpositive. This proves
\eqref{eq:sharp_critical_tail} for $c>1/2$. Combining the two cases proves the
critical-tail bound, hence $\Phi(c)\ge0$ for every threshold $c$, and
therefore the lemma.
\end{proof}

The preceding lemma gives a defective product rule. If $T=W^2$, it says that for any $K \ge 2$,
\begin{align}
\left(\ee[R T^{3/2}]\right)^{2/3}
\ge
\ee[R]\ee[T]-2(1-\ee[R]).
\end{align}
We identify the defect on the right hand side as the one-step Hellinger loss in \Cref{lem:sharp_clean_augmented_hellinger_comparison}, and the nonlinear envelope chosen below is designed to be stable under this defect, in that if the augmented affinity is large, the defect is small enough relative to the signal, and the nonlinear potential propagates a positive lower bound. If augmented affinity is not large, the potential gives zero; but then the final distance comparison is already trivial because the augmented Hellinger distance is bounded away from zero. This allows the induction in \Cref{lem:sharp_clean_augmented_hellinger_comparison} to close with a constant loss.

We next prove a more general version of the result invoked in the proof of \Cref{lem:sharp_clean_augmented_hellinger_comparison} that allows the nonlinear envelope to be stable under the defect in \Cref{lem:sharp_scalar_defective_holder}.

\begin{corollary}
\label{cor:sharp_scalar_nonlinear_envelope}
\label{cor:scalar_nonlinear_envelope}
Define the nonlinear function
\begin{align}
F_K(x)=((K+1)x-K)_+^{\nicefrac32}.
\end{align}
If $R\ge0$, $X\in[0,1]$, and $\ee[R^2]\le1$, then
\begin{align}
\ee[R F_K(X)]
\ge
F_K(\ee[R]\ee[X]).
\end{align}
\end{corollary}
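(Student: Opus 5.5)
The plan is to reduce the statement to \Cref{lem:sharp_scalar_defective_holder} applied with $W = T_K(X)^{1/2}$, where $T_K(x) = ((K+1)x - K)_+$. Write $m = \ee[R]$. Because $\ee[R^2]\le 1$, Cauchy--Schwarz gives $m\le 1$. With this choice $W\in[0,1]$, $W^2 = T_K(X)$ and $W^3 = F_K(X)$. The lemma therefore yields
\begin{align}
\left(\ee[R F_K(X)]\right)^{2/3} \ge m\,\ee[T_K(X)] - 2(1-m).
\end{align}
It then suffices to show that the right-hand side is at least $T_K(m\ee[X])$. Raising both sides to the power $3/2$ is monotone on $[0,\infty)$, and if $T_K(m\ee[X])=0$ the claim is trivial since $\ee[R F_K(X)]\ge 0$. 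So we may assume $(K+1)m\ee[X] > K$.

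For the lower bound, use $T_K(x)\ge (K+1)x - K$ pointwise. This gives
\begin{align}
m\,\ee[T_K(X)] - 2(1-m) \ge (K+1)m\ee[X] - Km - 2(1-m) = (K+1)m\ee[X] - K + (K-2)(1-m).
\end{align}
For $K\ge 2$ the last term is nonnegative, so the expression is at least $(K+1)m\ee[X]-K = T_K(m\ee[X])$ in the active regime. This closes the argument and recovers the case $K=2$ used in \Cref{lem:sharp_clean_augmented_hellinger_comparison}. It matches the remark preceding the corollary, which states the defective product rule for $K\ge 2$. I would therefore state and prove the corollary for $K\ge 2$, the implicit hypothesis.

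The only point needing care is this range of $K$. For $K<2$ the defect $2(1-m)$ is not absorbed by $Km$, and the inequality can fail. For example, take $K=0$, $X\equiv 1$, and $R$ anticorrelated with nothing: $F_0(X)=1$, so the claim reads $m \ge m^{3/2}$, which holds. More relevant failures arise with $K$ small and $R$ concentrated where $X$ is small. If the intended statement is for all $K\ge0$, I would first test the two-point example from the footnote of \Cref{lem:sharp_clean_augmented_hellinger_comparison} to see whether the threshold $K\ge 2$ is genuinely needed.

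The remaining details are routine. Measurability of $W$ is immediate. The case $m=0$ forces $R=0$ almost surely, so both sides vanish.
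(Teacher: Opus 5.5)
Your argument is correct and is essentially the paper's proof. Both apply \Cref{lem:sharp_scalar_defective_holder} with $W=((K+1)X-K)_+^{1/2}$, so that $W^3=F_K(X)$, and then absorb the defect using $((K+1)X-K)_+\ge (K+1)X-K$. The paper's proof also silently needs $K\ge 2$, since it replaces the lemma's defect $2(1-m)$ by $K(1-m)$.

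Your restriction is genuinely necessary, not an artifact of the proof. For $K<2$, take an event $E$ with $\pp(E)=p=\frac{K+1}{3}$, and set $R=p^{-1/2}$, $X=\frac{K}{K+1}$ on $E$ and $R=0$, $X=1$ off $E$. Then $\ee[R^2]=1$ and $\ee[RF_K(X)]=0$, while $\ee[R]\ee[X]=\frac23\sqrt{\frac{K+1}{3}}>\frac{K}{K+1}$, so $F_K(\ee[R]\ee[X])>0$ and the claimed inequality fails.
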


\begin{proof}
Let $T=((K+1)X-K)_+$ and $W=T^{\nicefrac12}$. 
Note that $W\in[0,1]$ and $F_K(X)=T^{\nicefrac32}=W^3$.

Writing $m=\ee[R]$, \Cref{lem:sharp_scalar_defective_holder} gives
\begin{align}
\left(\ee[R T^{\nicefrac32}]\right)^{\nicefrac23}
=
\left(\ee[R W^3]\right)^{\nicefrac23}
\ge
m\ee[W^2]-K(1-m)
=
m\ee[T]-K(1-m).
\end{align}
Moreover, since $X\le \nicefrac{K+T}{K+1}$, we have that
\begin{align}
(K+1)m\ee[X]-K
\le
m\ee[T]-K(1-m).
\end{align}
The result follows from chaining together the two displays.
\end{proof}

\subsection{Necessity of Horizon Dependence in the Absence of \texorpdfstring{$\kappa$}{kappa}-Domination}\label{app:online_nokappa}

We now show that the polynomial dependence in horizon that appears in \Cref{prop:augmented_hellinger_testing_comparison_no_domination} is necessary with the following lower bound.  In particular, \Cref{prop:augmented_hellinger_testing_comparison_no_domination} is tight up to a polynomial dependence on $(1 - \eta)$.

\begin{proposition}
\label{prop:no_kappa_augmented_hellinger_lower_bound}
For any $H\ge 1$ and $0<\eta<1$, there exists a horizon $H$ MDP with two
actions, policies $\pistar,\pihat$, and a corruption distribution
$\nu$ such that
\begin{align}
\dhel{\pp^{\pistar}}{\pp^{\pihat}}
\ge 4^{-1} \cdot
\sqrt{
\frac{H\cdot\eta}{1-\eta}
\cdot
\dhel{\pp^{\pihat,\pistar_\eta}}{\pp^{\pihat,\pihat_\eta}}
}.
\end{align}
\end{proposition}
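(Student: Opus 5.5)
We will use a construction in which the transitions do not depend on the action. Then both the clean and the augmented trajectory laws factor as products over time steps, and the comparison reduces to a one-step computation amplified by $H$ via \Cref{prop:hellinger_decomposition}. The mechanism is the one in \Cref{rmk:kappa_domination_necessity}: a clean per-step gap of order $u$ shrinks to order $u^2$ after corruption.

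\textbf{Construction.} Take states $s_1,\dots,s_H$ and two actions $a_1,a_2$, with $P_h(\cdot\mid s_h,a)=\delta_{s_{h+1}}$ for both actions, so the state sequence is deterministic. Set
\begin{align}
\pistar_h(\cdot\mid s_h)=\delta_{a_1},\qquad \pihat_h(\cdot\mid s_h)=(1-u)\,\delta_{a_1}+u\,\delta_{a_2},\qquad \nu_h(\cdot\mid s_h)=\delta_{a_2},
\end{align}
with $u=\nicefrac1H$. The corruption $\nu$ puts all its mass on an action that $\pistar$ never plays and $\pihat$ plays rarely. This is exactly the failure of $\kappa$-domination for bounded $\kappa$.

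\textbf{Step 1: the clean trajectory laws.} Since states are deterministic, $\pp^{\pistar}$ and $\pp^{\pihat}$ are product measures over the actions. The per-step distance is $d=1-\sqrt{1-u}\ge \nicefrac u2$. By \Cref{prop:hellinger_decomposition},
\begin{align}
\dhel{\pp^{\pistar}}{\pp^{\pihat}}=1-(1-d)^H\ge 1-e^{-Hu/2}\ge \nicefrac{Hu}{4}=\nicefrac14 ,
\end{align}
using $Hu\le 1$.

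\textbf{Step 2: the noisy augmented laws.} The rollout under $\pihat$ visits $s_1,\dots,s_H$ deterministically, and the auxiliary labels are drawn independently. Hence $\pp^{\pihat,\pistar_\eta}$ and $\pp^{\pihat,\pihat_\eta}$ are products of the per-step label laws
\begin{align}
\pistar_\eta=(1-\eta)\,\delta_{a_1}+\eta\,\delta_{a_2},\qquad \pihat_\eta=(1-\eta)(1-u)\,\delta_{a_1}+\big(\eta+(1-\eta)u\big)\,\delta_{a_2}.
\end{align}
We bound the per-step distance $e$ term by term using $(\sqrt x-\sqrt y)^2=\nicefrac{(x-y)^2}{(\sqrt x+\sqrt y)^2}$:
\begin{itemize}
\item The $a_2$ term is at most $\nicefrac{(1-\eta)^2u^2}{4\eta}$.
\item The $a_1$ term is $(1-\eta)\big(1-\sqrt{1-u}\big)^2\le (1-\eta)u^2$.
\end{itemize}
Therefore
\begin{align}
e\le \tfrac12(1-\eta)u^2\Big(\tfrac{1-\eta}{4\eta}+1\Big)\le \frac{(1-\eta)u^2}{2\eta}.
\end{align}
Since $1-(1-e)^H\le He$, \Cref{prop:hellinger_decomposition} gives
\begin{align}
\dhel{\pp^{\pihat,\pistar_\eta}}{\pp^{\pihat,\pihat_\eta}}\le \frac{H(1-\eta)u^2}{2\eta}.
\end{align}

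\textbf{Step 3: combine.} Substituting the bound from Step 2,
\begin{align}
\frac{H\eta}{1-\eta}\cdot \dhel{\pp^{\pihat,\pistar_\eta}}{\pp^{\pihat,\pihat_\eta}}\le \frac{H^2u^2}{2}=\frac12 .
\end{align}
Hence the right-hand side of the proposition is at most $\nicefrac14\cdot\sqrt{\nicefrac12}<\nicefrac14$. By Step 1 the left-hand side is at least $\nicefrac14$, which gives the claim.

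The only delicate point is bookkeeping the constants in the one-step Hellinger estimate of Step 2. In particular, the $a_1$ term is of order $(1-\eta)u^2$ rather than $\nicefrac{(1-\eta)u^2}{\eta}$, and it must be absorbed using $\eta\le 1$. Choosing $u=\nicefrac1H$, which keeps $Hu\le1$, leaves enough slack for the constant $\nicefrac14$.
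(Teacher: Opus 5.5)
Your proof is correct and is essentially the paper's argument. It uses the same action-independent chain with $\pistar=\delta_{a_1}$, $\pihat$ placing mass $u$ on $a_2$, and $\nu=\delta_{a_2}$, the same product-form Hellinger computations, and the same per-step bound $\nicefrac{(1-\eta)u^2}{2\eta}$ for the noisy labels. The differences are minor: you fix $u=\nicefrac1H$, although Steps 1--3 go through unchanged for any $u\le\nicefrac1H$. You also derive the per-step bound from $(\sqrt x-\sqrt y)^2=\nicefrac{(x-y)^2}{(\sqrt x+\sqrt y)^2}$ rather than from Taylor-type square-root bounds, which removes the need for the paper's auxiliary condition $\epsilon\le\nicefrac{\eta}{1-\eta}$.
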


\begin{proof}
Let the MDP $M$ have two actions, $\cA = \left\{ 0,1 \right\}$ and $H$ steps such that $s_h$ transitions to $s_{h+1}$ deterministically, independent of the action. Fix a parameter $\epsilon>0$ satisfying $H\epsilon\le \nicefrac12$ and $\epsilon\le \nicefrac{\eta}{1-\eta}$. Let
\begin{align}
\pi^\star_h(0\mid s_h)=1,
\qquad
\pihat_h(0\mid s_h)=1-\epsilon,
\qquad
\nu_h(0\mid s_h)=0
\end{align}
for every $h$. Thus actions are independent across time.  We thus compute
\begin{align}
    \dhel{\pp^{\pistar}}{\pp^{\pihat}} &= 1 - (1 - \epsilon)^{\nicefrac H2}
\end{align}
and thus, since $\epsilon\le \nicefrac{1}{2H}$,
\begin{align}
\dhel{\pp^{\pi^\star}}{\pp^{\pihat}}
\ge
\frac{H\epsilon}{4}.
\label{eq:clean_hellinger_lower_no_kappa}
\end{align}

We now compute the augmented noisy Hellinger distance. Because the transitions
are independent of the action, the rollout distribution of $\pihat$ fixes
the same deterministic state sequence $s_1,\ldots,s_H$. Therefore the augmented rollout distributions
$\pp^{\pihat,\pi^\star_\eta}$ and $\pp^{\pihat,\pihat_\eta}$ differ only in their auxiliary labels, and
these labels are independent across time. We calculate at each state, 
\begin{align}
\pi^\star_{\eta,h}
=
(1-\eta)\delta_0+\eta\delta_1,
\qquad \text{and}\qquad
\pihat_{\eta,h}
=
(1-\eta)(1-\epsilon)\delta_0
+
\bigl(\eta+(1-\eta)\epsilon\bigr)\delta_1.
\end{align}
Then the per-step Hellinger distance is
\begin{align}
d_\eta
\defeq
\dhel{\pi^\star_{\eta,h}}{\pihat_{\eta,h}} = 
1
-
(1-\eta)\sqrt{1-\epsilon}
-
\sqrt{\eta\bigl(\eta+(1-\eta)\epsilon\bigr)}.
\end{align}
Let $a=\nicefrac{(1-\eta)\epsilon}{\eta}$. 
By our choice of $\epsilon$, $a\le1$. Using $\sqrt{1-x}\ge 1-\nicefrac{x}{2}-\nicefrac{x^2}{2}$ for $x\in[0,1]$ and $\sqrt{1+x}\ge 1+\nicefrac{x}{2}-\nicefrac{x^2}{2}$ for $x\in[0,1]$, we obtain
\begin{align}
(1-\eta)\sqrt{1-\epsilon}
+
\sqrt{\eta\bigl(\eta+(1-\eta)\epsilon\bigr)}
&=
(1-\eta)\sqrt{1-\epsilon}
+
\eta\sqrt{1+a}
\\
&\ge
(1-\eta)\left(1-\frac{\epsilon}{2}-\frac{\epsilon^2}{2}\right)
+
\eta\left(1+\frac{a}{2}-\frac{a^2}{2}\right)
\\
&=
1
-
\frac{1-\eta}{2\eta}\epsilon^2.
\end{align}
Since the augmented labels are independent across time, we have
\begin{align}
\dhel{\pp^{\pihat,\pi^\star_\eta}}
{\pp^{\pihat,\pihat_\eta}}
=
1-(1-d_\eta)^H
\le
H d_\eta
\le
\frac{H(1-\eta)}{2\eta}\epsilon^2.
\label{eq:augmented_noisy_hellinger_upper_no_kappa}
\end{align}
Combining \eqref{eq:clean_hellinger_lower_no_kappa} and
\eqref{eq:augmented_noisy_hellinger_upper_no_kappa}, we get
\begin{align}
\dhel{\pp^{\pistar}}{\pp^{\pihat}}
\ge
\frac{H\epsilon}{4}
\ge
\frac{1}{2\sqrt 2}
\sqrt{
\frac{H\eta}{1-\eta}
\cdot
\dhel{\pp^{\pihat,\pi^\star_\eta}}
{\pp^{\pihat,\pihat_\eta}}
}.
\end{align}
This concludes the proof. 
\end{proof}

\subsection{Proof of Theorem \ref{thm:nail}}\label{app:nail_proof}

The main content of the proof is to show the following result, which is that the on-policy KL divergence $\ee\left[ \kld{\pp^{\pihat, \pistar_\eta}}{\pp^{\pihat,\pihat_\eta}} \right]$ for $\pihat$ returned by \Cref{alg:nail} is small.  The result will then follow from \Cref{cor:kl_augmented}.  Indeed, we have the following result.
\begin{lemma}\label{lem:nail_kl}
    Let $\Pi$ be a finite policy class, $\nu$ be an arbitrary corruption distribution, and $0 \leq \eta < 1$ be a corruption level.  Let $\pihat$ denote the policy returned by \Cref{alg:nail}.  Then it holds that
    \begin{align}
        \ee\left[ \kld{\pp^{\pihat, \pistar_\eta}}{\pp^{\pihat, \pihat_\eta}} \right] \leq \frac{H \cdot \log(\abs{\Pi})}{n}.
    \end{align}
\end{lemma}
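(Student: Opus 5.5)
The plan is to view \nail\ as exponential weights with learning rate $1$ on a trajectory-level log-loss, use a mixability-type inequality to get a pathwise regret bound of $\log\abs{\Pi}$, and then identify the expected per-round excess loss with a normalized augmented KL divergence. For an augmented trajectory $\tau' = (s_1, a_1', \dots, s_H, a_H')$ define
\begin{align}
    \ell(\pi, \tau') = -\frac{1}{H} \sum_{h=1}^H \log \pi_{\eta,h}(a_h' \mid s_h).
\end{align}
The update in \Cref{alg:nail} is exactly $w_{t+1}(\pi) \propto w_t(\pi) e^{-\ell(\pi, \tau^{(t)'})}$. The learner's prediction at round $t$ is $\mu_t$. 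Its noisy version is $\mu_{t,\eta} = (1-\eta)\mu_t + \eta\nu = \sum_\pi w_t(\pi)\pi_\eta$, since the corruption is affine in the policy.

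\textbf{Step 1 (mixability).} For every $\tau'$ I will show
\begin{align}
    e^{-\ell(\mu_t, \tau')} \geq \ee_{\pi \sim w_t}\left[ e^{-\ell(\pi, \tau')}\right],
\end{align}
which is the condition of \Cref{def:mixability}. Write $x_{\pi,h} = \pi_{\eta,h}(a_h' \mid s_h)$. The right-hand side is $\sum_\pi w_t(\pi) \prod_h x_{\pi,h}^{1/H}$ and the left-hand side is $\prod_h \big(\sum_\pi w_t(\pi) x_{\pi,h}\big)^{1/H}$. The inequality is then the generalized H\"older inequality with $H$ exponents each equal to $H$, taken with respect to the measure $w_t$. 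I expect this to be the main obstacle: the per-state mixture is not a trajectory-level mixture of products, so one cannot simply appeal to mixability of the ordinary log-loss. The $1/H$ exponent in the update is precisely what makes H\"older applicable.

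\textbf{Step 2 (pathwise regret).} Telescoping the potential $\Phi_t = -\log \sum_\pi w_1(\pi) e^{-\sum_{s<t}\ell(\pi,\tau^{(s)'})}$ and using Step 1 at each round gives
\begin{align}
    \sum_{t=1}^n \ell(\mu_t, \tau^{(t)'}) \leq \log\abs{\Pi} + \sum_{t=1}^n \ell(\pistar, \tau^{(t)'}).
\end{align}
This is the argument behind \Cref{prop:exp_weights_mixability}, except that the comparator prediction is the mixture $\mu_t$ rather than an element of $\Pi$; the derivation is unchanged. It holds pathwise, so it does not matter that $\tau^{(t)}$ depends on $w_t$.

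\textbf{Step 3 (identify with KL).} Condition on the history $\cF_{t-1}$, so that $w_t$ and $\mu_t$ are fixed, and draw $\tau^{(t)'} \sim \pp^{\mu_t, \pistar_\eta}$. Then
\begin{align}
    \ee\left[\ell(\mu_t,\tau^{(t)'}) - \ell(\pistar,\tau^{(t)'}) \,\middle|\, \cF_{t-1}\right] = \frac{1}{H}\,\ee^{\mu_t}\left[\sum_h \kldinline{\pistar_{\eta,h}(\cdot\mid s_h)}{\mu_{t,\eta,h}(\cdot\mid s_h)}\right].
\end{align}
By the chain rule (\Cref{prop:kl_properties}), the right-hand side equals $\frac{1}{H}\kldinline{\pp^{\mu_t,\pistar_\eta}}{\pp^{\mu_t,\mu_{t,\eta}}}$. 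The state transitions and rollout actions are shared between the two augmented laws, so they cancel in the likelihood ratio.

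\textbf{Step 4 (conclude).} Take total expectations in Step 2, substitute Step 3, and multiply by $H$ to get
\begin{align}
    \sum_{t=1}^n \ee\left[\kldinline{\pp^{\mu_t,\pistar_\eta}}{\pp^{\mu_t,\mu_{t,\eta}}}\right] \leq H\log\abs{\Pi}.
\end{align}
Since $\pihat = \mu_T$ with $T \sim \Unif([n])$ drawn independently, dividing by $n$ yields the claimed bound $\ee\big[\kldinline{\pp^{\pihat, \pistar_\eta}}{\pp^{\pihat, \pihat_\eta}}\big] \leq H\log\abs{\Pi}/n$.
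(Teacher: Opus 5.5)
Your proposal is correct and follows the paper's proof essentially step for step. The paper uses the unnormalized trajectory log-loss with learning rate $1/H$, which is equivalent to your $1/H$-scaled loss with rate $1$. It proves mixability by the same H\"older inequality, applies the exponential-weights regret bound, identifies the expected per-round excess loss with the augmented KL via the chain rule, and then averages over $T$.

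Your explicit potential-function derivation in Step 2 adds a bit of rigor the paper skips. The paper's \Cref{def:mixability} asks for a predictor in $\Pi$, but the argument actually uses the mixture $\mu_t$, which need not lie in $\Pi$.
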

\begin{proof}
    Let $\mu_t = \sum_{\pi \in \Pi} w_t(\pi) \cdot \pi$ denote the mixture policy defined in \Cref{alg:nail}.
    We first claim that for \emph{any} sequence of trajectories $\tau^{(t)}, \tau^{(t)'}$, it holds that
    \begin{align}\label{eq:regret_guarantee}
        \sum_{t = 1}^n \sum_{h = 1}^H \log\left( \frac{\pistar_{\eta,h}(a_h^{(t)'} | s_h^{(t)})}{\mu_{\eta,h}^{(t)}(a_h^{(t)'} | s_h^{(t)}) } \right) \leq H \cdot \log(\abs{\Pi}).
    \end{align}
    To see this, let
    \begin{align}
        \ell(\pi, \tau') = - \sum_{h  =1}^H \log\left( \pi_{\eta,h}(a_h' | s_h) \right)
    \end{align}
    denote a loss function.  
    We claim that $\ell$ is $\nicefrac 1H$-mixable (\Cref{def:mixability}) and that \Cref{alg:nail} amounts to running the exponential weights algorithm (\Cref{alg:exponential_weights}) with respect to $\ell$ over the policy class
    \begin{align}
        \Pi_\eta = \left\{ \pi_\eta | \pi \in \Pi \right\}.
    \end{align}
    Indeed, fix $\tau'$ and let $w \in \Delta(\Pi)$ and
    \begin{align}
        \mu_w = \sum_{\pi \in \Pi} w(\pi) \cdot \pi.
    \end{align}
    Then, we note that $(\mu_w)_\eta = \sum_{\pi \in \Pi} w(\pi) \cdot \pi_\eta$ by linearity and thus
    \begin{align}
        e^{- H^{-1} \cdot \ell(\mu_w, \tau')} &= \left(\prod_{h = 1}^H (\mu_w)_{\eta,h}(a_h' | s_h)\right)^{\nicefrac 1H} \\
        &= \prod_{h = 1}^H \left( \sum_{\pi \in \Pi} w(\pi) \cdot \pi_{\eta,h}(a_h' | s_h) \right)^{\nicefrac 1H} \\
        &\geq \sum_{\pi \in \Pi} w(\pi) \cdot \left( \prod_{h = 1}^H \pi_{\eta,h}(a_h' | s_h) \right)^{\nicefrac 1H} \\
        &= \sum_{\pi \in \Pi} w(\pi) \cdot e^{- H^{-1} \cdot \ell(\pi, \tau')} \\
        &= \ee_{\pi \sim w}\left[ e^{- H^{-1} \cdot \ell(\pi, \tau')} \right],
    \end{align}
    where the inequality follows from Hölder's inequality.  Thus $\ell$ is $\nicefrac 1H$-mixable.  Moreover, it is immediate that the update in \Cref{alg:nail} corresponds to the exponential weights update with respect to $\ell$ over the policy class $\Pi_\eta$ with learning rate $\lambda = H^{-1}$.  Thus it holds by \Cref{prop:exp_weights_mixability} that \eqref{eq:regret_guarantee} holds for any sequence $\tau^{(t)}$.

    Now, for fixed $1 \leq t \leq n$ and temporarily suppressing the notational dependence on $t$, note that
    \begin{align}
        \ee\left[\sum_{h = 1}^H \log\left( \frac{\pistar_{\eta,h}(a_h' | s_h)}{\mu_{\eta,h}(a_h' | s_h)} \right) \right] &= \ee^{\mu}\ee_{a_h' \sim \pistar_{\eta, h}(\cdot | s_h)}\left[\sum_{h = 1}^H \log\left( \frac{\pistar_{\eta,h}(a_h' | s_h) \cdot \pp_h(s_{h+1} | s_h, a_h)}{\mu_{\eta,h}(a_h' | s_h) \cdot \pp_h(s_{h+1} | s_h, a_h)} \right)  \right] \\
        &= \kld{\pp^{\mu, \pistar_\eta}}{\pp^{\mu, \mu_\eta}}, \label{eq:kl_is_regret}
    \end{align}
    where the last equality follows from the chain rule for KL divergence (\Cref{prop:kl_properties}).  Combining \eqref{eq:regret_guarantee} and \eqref{eq:kl_is_regret} and renormalizing, we have that
    \begin{align}
        \ee\left[ \frac 1n \sum_{t = 1}^n \kld{\pp^{\mu^{(t)}, \pistar_\eta}}{\pp^{\mu^{(t)}, \mu_\eta^{(t)}}}\right] \leq \frac{H \cdot \log\left( \abs{\Pi} \right)}{n}.
    \end{align}
    Since $\pihat=\mu_T$ for $T\sim\mathrm{Unif}([n])$, independent of the training randomness, the left-hand side is exactly the average over $t$. The desired bound follows.
\end{proof}

We can now prove the main result.
\begin{proof}[Proof of \Cref{thm:nail}]
    The result follows immediately by combining \Cref{lem:nail_kl,cor:kl_augmented}. 
\end{proof}

\subsection{Proof of Proposition \ref{prop:onlin_lb}}\label{app:online_lb_pf}

In this section we prove the lower bound \Cref{prop:onlin_lb} demonstrating that even with online access, noisy, stochastic experts necessitate the linear in horizon dependence in sample complexity that is present in \Cref{thm:nail}.  
We first state a more formal version of the lower bound, which is stated in the main body as \Cref{prop:onlin_lb}.

\begin{proposition}\label{prop:onlin_lb_formal}
    For any $H \geq 2$, corruption level $0 < \eta < 1$, and $\epsilon < \nicefrac 1{32}$, there exists a horizon $H$ MDP with three actions, deterministic transitions, a known corruption distribution $\nu$, and a policy class $\Pi$ of size $\abs{\Pi} = 2$ such that $\nu$ is $\kappa$-dominated by $\Pi$ with $\kappa \asymp \nicefrac H \epsilon$ and any online IL algorithm must observe at least
    \begin{align}
        n \geq \frac{\eta \cdot H}{512 (1 - \eta)^2 \cdot \epsilon^2}
    \end{align}
    trajectories of interaction with the noisy expert $\pistar_\eta$ in order to achieve regret $J(\pistar) - J(\pihat) \leq \epsilon$.
\end{proposition}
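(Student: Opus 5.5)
The proof is a two-point (Le Cam) argument with the same chain structure as the constructions in \Cref{app:offline_lb}. The design goal is a pair of policies that differ by only a tiny amount $u \asymp \nicefrac{\epsilon}{H}$ at each of the $H$ steps. These per-step differences add up in clean trajectory Hellinger distance to $\asymp Hu \asymp \epsilon$. In the noisy labels, however, each step reveals only $\asymp (1-\eta)^2u^2/\eta$ information, because of the $\sqrt{\cdot}$ singularity discussed in \Cref{rmk:kappa_domination_necessity}.

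\textbf{Construction.} Take states $s_1,\dots,s_H$ with deterministic transitions $s_h\mapsto s_{h+1}$ that do not depend on the action, and actions $\cA=\{a_1,a_2,a_3\}$. Let $\Pi=\{\pi,\pi'\}$ with
\begin{align}
\pi_h(\cdot\mid s_h)=(1-u)\delta_{a_1}+u\,\delta_{a_2},\qquad \pi'_h(\cdot\mid s_h)=(1-u)\delta_{a_1}+u\,\delta_{a_3},\qquad \nu_h(\cdot\mid s_h)=\tfrac12(\delta_{a_2}+\delta_{a_3}),
\end{align}
and set $u=c\,\epsilon/H$ for a small constant $c$. Domination on $a_2,a_3$ requires $\tfrac12\le\kappa u$, so $\kappa\asymp 1/u\asymp H/\epsilon$, as claimed in the statement. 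Since actions are independent across steps, \Cref{prop:hellinger_decomposition} gives
\begin{align}
\dhel{\pp^{\pi}}{\pp^{\pi'}}=1-(1-u)^H\asymp Hu\asymp\epsilon,
\end{align}
using $\epsilon<\nicefrac1{32}$ to keep $Hu\le\nicefrac12$.

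\textbf{Reduction from regret to testing.} By \Cref{thm:il_hellinger_lower_bound}, we can choose the reward (with $R\asymp1$) so that regret $\le\epsilon$ forces $\dhel{\pp^{\pistar}}{\pp^{\pihat}}\le C\epsilon$. Since $\sqrt{\mathrm{D}_{\mathrm{H}^2}}$ is a metric, no single output $\pihat$ can be within $C\epsilon$ of both $\pp^\pi$ and $\pp^{\pi'}$ once $c$ is tuned appropriately. Any algorithm achieving regret $\le\epsilon$ under both hypotheses therefore yields a test distinguishing $\pistar=\pi$ from $\pistar=\pi'$ with error at most $\nicefrac13$. Le Cam's method then requires the KL divergence between the laws of the full interaction transcript to be $\gtrsim1$.

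\textbf{Information bound.} Transitions ignore actions, so every rollout the learner chooses visits the same states $s_1,\dots,s_H$. The only information in a round is the $H$ independent noisy labels $a_h'\sim\pistar_{\eta,h}(\cdot\mid s_h)$. The learner's adaptivity is irrelevant: by the chain rule (\Cref{prop:kl_properties}), the transcript KL equals $nH\cdot\kld{\pi_{\eta,h}}{\pi'_{\eta,h}}$. Both noisy label distributions put mass $(1-\eta)(1-u)$ on $a_1$ and differ only in whether $a_2$ or $a_3$ receives $\eta/2+(1-\eta)u$ versus $\eta/2$. Hence
\begin{align}
\kld{\pi_{\eta,h}}{\pi'_{\eta,h}}\le\chi^2\lesssim\frac{(1-\eta)^2u^2}{\eta},
\end{align}
because $(1-\eta)u\le\eta/2$ when $u$ is small relative to $\eta/(1-\eta)$. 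If instead $u>\eta/(2(1-\eta))$, the bound is $\lesssim(1-\eta)u$, and the claimed $n$ is then $\lesssim1$, so that case is trivial. The transcript KL is therefore $\lesssim nH(1-\eta)^2u^2/\eta=n(1-\eta)^2c^2\epsilon^2/(\eta H)$. Requiring this to be $\gtrsim1$ gives $n\gtrsim\eta H/((1-\eta)^2\epsilon^2)$.

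The main obstacle is the regret-to-Hellinger reduction: it must be carried out with a single reward function valid for both hypotheses and with explicit constants (the $512$). I would first try the direct reward $r_h=\tfrac1{2H}(1+\ind{a_h=a_2}-\ind{a_h=a_3})$ and compute regrets linearly in expected action counts. If the resulting two regrets do not sum to a positive quantity for every $\pihat$, I would fall back on the metric argument with \Cref{thm:il_hellinger_lower_bound}.
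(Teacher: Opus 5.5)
Your route matches the paper's proof. It uses the same action-independent deterministic chain, with two policies that differ only through a rare action of mass $u\asymp\epsilon/H$ per step. The paper's $\nu$ also puts mass on the common action, which is immaterial, and in both versions $\kappa\asymp 1/u$. The per-step label KL is $\lesssim(1-\eta)^2u^2/\eta$, summed over the $nH$ i.i.d.\ labels by the chain rule, and Le Cam finishes. The paper takes $u=8\epsilon/H$, which is where $512=8\cdot 64$ comes from, so your $c$ must be a moderately large constant, not a small one.

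Your case split is unnecessary, since $\kld{\pi_{\eta,h}}{\pi'_{\eta,h}}=(1-\eta)u\log(1+2(1-\eta)u/\eta)\le 2(1-\eta)^2u^2/\eta$ for every $u$. The justification you give for the other branch is also wrong. There the KL is not $\lesssim(1-\eta)u$ because of the logarithm, and the claimed $n$ can be of order $1/\epsilon$, not $O(1)$.

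The step that does not go through as written is the regret-to-testing reduction.

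\emph{Your direct reward fails.} It is common to both hypotheses, so outputting $\pihat=\pi$ has regret $0$ under $\pi$ and $-u$ under $\pi'$.

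\emph{The fallback misreads \Cref{thm:il_hellinger_lower_bound}.} That theorem's reward depends on $\pihat$, so it does not give a single reward per hypothesis. Worse, take any Markov reward with cumulative range $R$ in your MDP, and consider the policy that always plays $a_1$. Under $\pi$ its regret is $u\sum_h(r_h(a_2)-r_h(a_1))\le u\sum_h r_h(a_2)\le uR\asymp\epsilon R/H$, and symmetrically under $\pi'$. So with $R\asymp1$ the zero-sample learner already has regret $\le\epsilon$, and no such reduction can exist.

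\emph{The fix.} Let the unobserved reward depend on the hypothesis and take $R=H$. Use $r^{(1)}_h=\ind{a_h\neq a_3}$ when $\pistar=\pi$ and $r^{(2)}_h=\ind{a_h=a_3}$ when $\pistar=\pi'$. The two regrets are then $\sum_h\pihat_h(a_3\mid s_h)$ and $Hu-\sum_h\pihat_h(a_3\mid s_h)$, which sum to exactly $Hu$ for every $\pihat$. With $Hu=8\epsilon$, regret $\le\epsilon$ under the true hypothesis means the test $\ind{\sum_h\pihat_h(a_3\mid s_h)\le Hu/2}$ identifies $\pistar$, and your Le Cam bound applies.

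If you want $R\asymp1$ instead, enlarge the state to record whether $a_3$ has been played, and put a terminal reward on that flag. The regrets then sum to $1-(1-u)^H\ge Hu/2$, and the label information is unchanged.

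The paper itself only cites \citet{foster2024behavior} at this step, so apart from the $R\asymp1$ claim your proposal is on par with it.
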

\begin{proof}
    By \citet{foster2024behavior}, it suffices to show a lower bound against learning in trajectory-wise Hellinger distance to order $\epsilon$.
    Let an MDP have states $s_1, \dots, s_H$ with transitions $s_h \to s_{h+1}$ deterministically independent of the action.  Let the action space be $\cA = \left\{ a_1, a_2, a_3 \right\}$ and the policy space be $\Pi = \left\{ \pi_+, \pi_- \right\}$.  For some $u$ to be determined such that $H u \leq \nicefrac 12$ and $u \leq \nicefrac 14$, let
    \begin{align}
        \pi_{+,h}(\cdot | s_h) = u \cdot \delta_{a_1} + (1 - u) \cdot \delta_{a_2}, \quad \pi_{-,h}(\cdot | s_h) = u \cdot \delta_{a_3} + (1 - u) \cdot \delta_{a_2}, \quad \text{and} \quad \nu_h = \frac 14 \cdot \left( \delta_{a_1} + \delta_{a_3} \right) + \frac 12 \cdot \delta_{a_2}.
    \end{align}
    Note that $\nu_h$ is $\kappa$-dominated by $\Pi$ with $\kappa = \nicefrac 1{4u}$.

    Now observe that
    \begin{align}
        \dhel{\pi_{+,h}(\cdot | s_h)}{\pi_{-,h}(\cdot | s_h)} = 1 - \sqrt{(1 - u)^2} = u.
    \end{align}
    Moreover, because the action distributions are independent,
    \begin{align}
        \dhel{\pp^{\pi_+}}{\pp^{\pi_-}} = 1 - (1 - u)^H \geq \frac{H u}{2},
    \end{align}
    where the inequality used the assumption that $2 H u \leq 1$.  On the other hand, an elementary computation reveals that
    \begin{align}
        \pi_{+,\eta,h} &= \left( \frac{\eta}{4} + (1 - \eta)u \right) \cdot \delta_{a_1} + \left( \frac{\eta}{2} + (1 - \eta)(1 - u) \right)\cdot \delta_{a_2} + \frac{\eta}{4} \cdot \delta_{a_3} \\
        \pi_{-,\eta,h} &= \frac{\eta}{4} \cdot \delta_{a_1} + \left( \frac{\eta}{2} + (1 - \eta)(1 - u) \right)\cdot \delta_{a_2} + \left( \frac{\eta}{4} + (1 - \eta)u \right) \cdot \delta_{a_3}.
    \end{align}
    Thus,
    \begin{align}
        \kld{\pi_{+,\eta,h}}{\pi_{-,\eta,h}} &= \left( \frac{\eta}{4} + (1 - \eta)u \right) \cdot \log\left( 1 + \frac{4(1 - \eta)u}{\eta} \right) + (1 - \eta) u \cdot \log\left( \frac{\eta}{4(1 - \eta)u + \eta} \right) \\
        &= (1 - \eta) u \cdot \log\left( 1 + \frac{4(1 - \eta)u}{\eta} \right) \\
        &\leq \frac{4 ( 1 - \eta)^2 u^2}{\eta}.
    \end{align}
    Thus, after $n$ rounds of interaction, the chain rule for KL divergence (\Cref{prop:kl_properties}) ensures that the KL divergence between the distributions over trajectories induced by $\pi_+$ and $\pi_-$ is at most
    \begin{align}
        n H \cdot \frac{4 (1 - \eta)^2 u^2}{\eta}
    \end{align}
    and thus if
    \begin{align}
        n \leq \frac{\eta}{8 H (1 - \eta)^2 u^2},
    \end{align}
    then by Le Cam's inequality no algorithm can identify $\pistar$ with probability greater than $\nicefrac 34$, which is required in order to achieve regret at most $\epsilon$ if $\nicefrac{H u}{2} \geq 4 \epsilon$.  Thus, setting $u = \nicefrac{8 \epsilon}{H}$, we see that as long as $\epsilon \leq \nicefrac 1{32}$, then $H u \leq \nicefrac 12$ and $u \leq \nicefrac 14$.  Plugging in concludes the proof.
\end{proof}

We now show the necessity of $\kappa$-domination in order to shave off the quadratic dependence on $\epsilon$ in the sample complexity.  We have the following result.
\begin{proposition}\label{prop:kappa_domination_necessary_online}
    For any $\kappa \geq 1$, any corruption level $0 < \eta < 1$, and any $\epsilon < 2^{-4} \cdot \kappa^{-1}$, there exists a horizon $H = 1$ MDP with three actions, a policy class $\Pi$ of size $\abs{\Pi} = 2$, and a known corruption distribution $\nu$ such that:
    \begin{enumerate}
        \item[(a)] $\nu$ is $\kappa$-dominated by $\Pi$ and $\pistar \in \Pi$, and
        \item[(b)] in order for an algorithm to achieve regret $J(\pistar) - J(\pihat) \leq \epsilon$, the learner must observe at least $n \gtrsim \nicefrac{\eta \cdot \kappa}{\epsilon(1-\eta)^2}$ trajectories of interaction with the noisy expert $\pistar_\eta$.
    \end{enumerate}
\end{proposition}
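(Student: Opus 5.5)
The plan is a two-point argument at horizon $H=1$, modeled on the construction in \Cref{prop:offline_il_eps_lb} with the survival prefix removed. Since $H=1$ there is a single state $s$ and no transitions, so online and offline access coincide: each round just returns an i.i.d. sample $a'\sim\pistar_\eta(\cdot\mid s)$. By \Cref{thm:il_hellinger_lower_bound} it suffices to show that learning $\pistar$ to Hellinger accuracy of order $\epsilon$ requires the stated number of samples. Take $\cA=\{a_1,a_2,a_3\}$ and $\Pi=\{\pi_+,\pi_-\}$ with
\begin{align}
\pi_+ = (1-u)\,\delta_{a_2} + u\,\delta_{a_1}, \qquad \pi_- = (1-u)\,\delta_{a_2} + u\,\delta_{a_3},
\end{align}
together with the known corruption
\begin{align}
\nu = \kappa u\,\delta_{a_1} + (1-2\kappa u)\,\delta_{a_2} + \kappa u\,\delta_{a_3}.
\end{align}
We will choose $u$ of order $\epsilon$, for instance $u = 8\epsilon$; the assumption $\epsilon<2^{-4}\kappa^{-1}$ then gives $u\le \nicefrac{1}{2\kappa}$, so $\nu$ is a valid distribution.

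The first step is to verify (a). On $a_1$ we have $\nu(a_1)=\kappa u=\kappa(\pi_+(a_1)+\pi_-(a_1))$, and $a_3$ is symmetric. On $a_2$ we have $\nu(a_2)\le 1\le \kappa\cdot 2(1-u)$, using $u\le\nicefrac12$. So $\nu$ is $\kappa$-dominated, and $\pistar\in\Pi$ can be either element.

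The second step is the clean separation. Directly, $\dhel{\pi_+}{\pi_-}=1-(1-u)=u$, which equals $8\epsilon$. Any learner achieving Hellinger error below roughly $4\epsilon$ against both hypotheses would therefore, by the triangle inequality for the Hellinger metric $\sqrt{\mathrm{D}_{\mathrm{H}^2}}$, distinguish $\pi_+$ from $\pi_-$ with probability at least $\nicefrac34$.

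The third step is the noisy indistinguishability bound. The noisy policies are
\begin{align}
\pi_{\pm,\eta} = \bigl(\eta(1-2\kappa u)+(1-\eta)(1-u)\bigr)\,\delta_{a_2} + (\eta\kappa u + (1-\eta)u)\,\delta_{a_{1/3}} + \eta\kappa u\,\delta_{a_{3/1}}.
\end{align}
They share the same mass on $a_2$ and differ only by swapping the masses $p=\eta\kappa u+(1-\eta)u$ and $q=\eta\kappa u$ on $a_1,a_3$. Hence
\begin{align}
\kldinline{\pi_{+,\eta}}{\pi_{-,\eta}} = (p-q)\log\tfrac pq \le \frac{(p-q)^2}{q} = \frac{(1-\eta)^2u}{\eta\kappa}.
\end{align}
With $n$ i.i.d. samples the KL divergence tensorizes to $n(1-\eta)^2u/(\eta\kappa)$. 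By Le Cam's inequality together with Pinsker, identification with probability $\nicefrac34$ requires this to be at least a constant, giving $n\gtrsim \eta\kappa/((1-\eta)^2u)$. Substituting $u=8\epsilon$ yields the claimed $\nicefrac{\eta\kappa}{\epsilon(1-\eta)^2}$.

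The only delicate point is tuning the constants. They must simultaneously keep $\nu$ a valid distribution, which needs $2\kappa u\le1$, and preserve a constant-factor gap between the error $\epsilon$ and the separation $u$. The bound $\epsilon<2^{-4}\kappa^{-1}$ in the hypothesis is exactly what accommodates this choice. The rest is routine.
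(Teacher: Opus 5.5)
Your proposal is correct and is essentially the paper's proof. It uses the same single-state instance with $\pi_\pm$, the same corruption $\nu = \kappa u\,\delta_{a_1} + (1-2\kappa u)\,\delta_{a_2} + \kappa u\,\delta_{a_3}$, the choice $u = 8\epsilon$, the per-sample bound $(1-\eta)^2 u/(\eta\kappa)$ on the noisy KL, and Le Cam's two-point method. One harmless constant slip: the triangle inequality for $\sqrt{\mathrm{D}_{\mathrm{H}^2}}$ forces identification only when the Hellinger error is below $u/4 = 2\epsilon$, not $4\epsilon$, which still suffices because the target accuracy is $\epsilon$.
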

\begin{proof}
    Suppose there is a single state $s$ and three actions $a_1, a_2, a_3$.  Fix some $0 < u \leq \nicefrac{1}{2 \kappa}$ and suppose that $\Pi = \left\{ \pi_+, \pi_- \right\}$ where
    \begin{align}
        \pi_+ = u \cdot \delta_{a_1} + (1 - u) \cdot \delta_{a_2}, \quad \pi_- = u \cdot \delta_{a_3} + (1 - u) \cdot \delta_{a_2}, \quad \text{and} \quad \nu = \kappa u \cdot \delta_{a_1} + \kappa u \cdot \delta_{a_3} + (1 - 2 \kappa u) \cdot \delta_{a_2}.
    \end{align}
    It is immediate that $\nu$ is $\kappa$-dominated by $\Pi$ from the construction.  Moreover, observe that
    \begin{align}
        \dhel{\pp^{\pi_+}}{\pp^{\pi_-}} = 1 - \sqrt{(1 - u)^2} = u.
    \end{align}
    Thus, if $u > 4\epsilon$, then any algorithm returning a policy $\pihat$ such that $\dhel{\pp^{\pihat}}{\pp^{\pistar}} \leq \epsilon$ must identify $\pistar$.  We now observe that an elementary computation reveals that
    \begin{align}
        \kld{\pp^{\pi_{+,\eta}}}{\pp^{\pi_{-,\eta}}} = u (1 - \eta) \cdot \log\left( 1 + \frac{1 - \eta}{\eta \kappa} \right) \leq u \cdot \frac{(1 - \eta)^2}{\eta \kappa}.
    \end{align}
    Thus, if we were to set $u = 8 \epsilon$, then we would have by Le Cam's inequality that if
    \begin{align}
        n \cdot \frac{8 \epsilon \cdot (1 - \eta)^2}{\eta \kappa} \leq \frac{1}{8}
    \end{align}
    then with constant probability, $\pihat \neq \pistar$.  If $\epsilon \leq \nicefrac{1}{16 \kappa}$, then we can find such a $u$ and thus the stated sample complexity is required to achieve regret at most $\epsilon$.  The result follows.
\end{proof}

%% file: body_clean/app_greedy_online_proofs.tex
\section{Proofs from Section \ref{sec:generalizations}}\label{app:greedy_online}

In this appendix, we provide the proofs of the results stated in \Cref{sec:generalizations}.  We begin by describing the algorithm \gnail\ and providing intuition for its design in \Cref{app:gnail_description}.  We then provide the proof of \Cref{thm:gnail} in \Cref{app:gnail_proof}.  Finally, we provide the proof of the lower bound \Cref{prop:gnail_lb} in \Cref{app:gnail_lb_proof}.

\begin{algorithm}[t]
\caption{\gnail: Noise-robust Aggregation for Imitation Learning with Greedy UNcertainty}
\label{alg:gnail}
\begin{algorithmic}[1]

\Require Number of rounds $n$, deterministic policy class $\Pi$, noisy expert $\pistar_\eta$, noise ceiling $\alpha < 1$, contamination ceiling $\rho > 0$.
\State Set $r = \sqrt{\nicefrac{(1 - \alpha)}{\alpha \rho}}$ and $w_1 = \Unif(\Pi)$.
\For{$t = 1$ to $n$}
    \State Define $\mu_t = \sum_{\pi \in \Pi} w_t(\pi) \cdot \pi$ and $\mubar_t(\cdot | s) = \argmax_{a} \mu_t(a | s)$.
    \State Deploy $\mubar_t$ to get trajectory $\tau^{(t)} \sim \pp^{\mubar_t}$.
    \State Query noisy expert $\pistar_\eta$ on $\tau^{(t)}$ to obtain augmented trajectory $\tau^{'(t)}$.
    \State Update $w_{t+1}(\pi) \propto w_t(\pi) \cdot r^{\sum_{h = 1}^H \ind{\pi(s_h^{(t)}) = a_h^{(t)'}}}$.
\EndFor

\State \Return $\pihat = \mubar_T$ for $T \sim \Unif([n])$.

\end{algorithmic}
\end{algorithm}

\subsection{The Problem of Identifiability}\label{app:identifiability}

In the unknown corruption setting, identifiability can be a major concern, i.e., it may be the case that there are multiple policies $\pi$ that are consistent with the observed noisy expert $\pistar_\eta$ and thus it is impossible to identify the clean expert $\pistar$.  We now provide a simple example of this phenomenon. 

Let $M$ be a horizon $H = 1$ MDP with a single state and action space $\cA$.  Let $\pistar = \delta_{\astar}$ be a deterministic expert and let $\nu$ be an arbitrary policy.  For $\eta \leq \alpha$, let
\begin{align}
    \pi = \frac{1 - \alpha}{1 - \eta} \cdot \delta_{\astar} + \frac{\alpha - \eta}{1 - \eta} \cdot \nu.
\end{align}
Then it holds that $\pistar_\alpha = \pi_{\eta}$, so the noisy expert $\pistar_\alpha$ is consistent with both $\pistar$ and $\pi$ as the underlying expert policy, and thus it is impossible to distinguish between $\pistar$ and $\pi$.  In particular, if $J(\pistar) \gg J(\pi)$, then it is impossible to learn a policy with good performance without additional assumptions on the corruption or the feedback.

\subsection{Description of \Cref{alg:gnail}}\label{app:gnail_description}

In \Cref{thm:nail}, we saw that we could get regret bounds that scale polynomially in the horizon $H$ by appealing to exponential weights and in particular the mixability of the trajectory-level log-loss.  Unfortunately, this strategy fundamentally relies on the learner having access to the loss at each round of interaction, which depends on knowing both $\eta$ and $\nu$.  In the unknown corruption setting, we do not have access to this information and thus we cannot directly apply this strategy.  Instead we use a surrogate loss function specifically adapted to the assumption that the clean expert $\pistar$ is \emph{deterministic}.  In particular, we use the following loss function:
\begin{align}\label{eq:margin_loss}
    \ell(\pi, \tau') = \log(r) \cdot \sum_{h = 1}^H \ind{\pi(s_h) \neq a_h'}.
\end{align}
While this loss function itself can grow linearly in horizon, we demonstrate that its induced behavior on the probability of error is substantially nicer.
We eliminate this concern by rolling out our estimated policies \emph{greedily}.  This leads naturally to \Cref{alg:gnail}, which we call \gnail.  At each step, we roll out our current policy greedily, collect noisy expert feedback, and then run \Cref{alg:exponential_weights} with the loss function defined in \eqref{eq:margin_loss}.  The key step in the analysis, provided below, is the observation that the loss in \eqref{eq:margin_loss} has the property that when we make a mistake at any point along the trajectory, the weight we place on a policy that is not the expert goes down by a constant factor in expectation, where critically this constant factor is independent of horizon.  This analysis is thus substantially different from the standard analysis of exponential weights, which relies on the mixability of the loss function to control the sum of losses across rounds, and is more similar to the analysis of the \emph{halving algorithm} for online learning with expert advice \citep{littlestone1988learning}.

\subsection{Proof of Theorem \ref{thm:gnail}}\label{app:gnail_proof}

We begin by restating the theorem with a slightly tighter dependence on the parameters in question.
\begin{theorem}\label{thm:gnail_formal}
    Let $\Pi$ be a class of \emph{deterministic} policies and suppose that $\pistar \in \Pi$.  Let $\nu$ denote a corruption distribution such that for all $h \in [H]$, $a \in \cA$, and $s \in \cS$ it holds that $\nu_h(a | s) \leq \rho$ for some $0 < \rho < 1$.  Let $\alpha > 0$ such that $(1 + \rho) \alpha < 1$ and suppose that the corruption noise in the noisy expert $\eta$ satisfies $\eta \leq \alpha$.  If $\pihat$ is the policy returned by \Cref{alg:gnail}, then it holds that
    \begin{align}
        \ee\left[ \dhel{\pp^{\pihat}}{\pp^{\pistar}} \right] \leq \frac{4 \cdot \log(\abs{\Pi})}{n \left( \sqrt{1 - \alpha} - \sqrt{\alpha \rho} \right)^2}.
    \end{align}
\end{theorem}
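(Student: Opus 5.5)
The plan is a potential-function (halving-style) analysis of the exponential-weights update in \Cref{alg:gnail}, not a mixability argument. The first step is to identify the per-round regret. Since $\pistar$ and $\mubar_t$ are both deterministic, the clean rollouts of the two policies coincide until the first step at which $\mubar_t(s_h)\neq\pistar(s_h)$. Hence $\dhel{\pp^{\mubar_t}}{\pp^{\pistar}}\le \tvd{\pp^{\mubar_t}}{\pp^{\pistar}} = p_t$ by \Cref{prop:hellinger_tv_relation}, where $p_t := \pp^{\mubar_t}(\exists h:\ \mubar_t(s_h)\neq \pistar(s_h))$. Since $\pihat=\mubar_T$ with $T\sim\Unif([n])$, it then suffices to show
\[
\sum_{t=1}^n \ee[p_t] \le \frac{4\log\abs{\Pi}}{(\sqrt{1-\alpha}-\sqrt{\alpha\rho})^2}.
\]

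For this I track $\Psi_t = \sum_{\pi\in\Pi} w_t(\pi)/w_t(\pistar)$. It satisfies $\Psi_1=\abs{\Pi}$ and $\Psi_t\ge 1$ always. Writing $N_\pi=\sum_h \ind{\pi(s_h^{(t)})=a_h^{(t)'}}$, the update gives $\Psi_{t+1}=\sum_\pi \frac{w_t(\pi)}{w_t(\pistar)}\, r^{N_\pi-N_{\pistar}}$.

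Fix the rollout states, which depend only on $\mubar_t$ and the environment. The labels $a_h'$ are then independent across $h$, so $\ee[r^{N_\pi-N_{\pistar}}]$ factors over steps. At a step where $\pi(s_h)=\pistar(s_h)$ the factor is $1$. At a step where they disagree, $a_h'=\pistar(s_h)$ with probability $q\ge 1-\eta$, and $a_h'=\pi(s_h)$ with probability $p'\le \eta\rho$ by $\rho$-smoothness. The factor is therefore $q/r+p'r+(1-q-p')$. With $r=\sqrt{(1-\alpha)/(\alpha\rho)}$ I expect to show this is at most
\[
\gamma := 1-(\sqrt{1-\alpha}-\sqrt{\alpha\rho})^2 .
\]
The expression is linear in $(q,p')$ and monotone in the right directions, so the worst case is $q=1-\alpha$, $p'=\alpha\rho$ using $\eta\le\alpha$. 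That worst case equals $1-(1-\alpha)-\alpha\rho+2\sqrt{(1-\alpha)\alpha\rho}=\gamma$. The margin $\alpha(1+\rho)<1$ is what makes $r>1$ and $\gamma<1$. This per-step computation, and checking the monotonicity claims when $\eta<\alpha$ and the $\nu$-masses vary, is the step I expect to need the most care.

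Next comes the greedy argument. On the mistake event, let $h$ be the first step with $\mubar_t(s_h)\neq\pistar(s_h)$. The $\mu_t$-weight on $\pistar(s_h)$ is at most the weight on the greedy action, which is at most one minus itself. So policies disagreeing with $\pistar$ at $s_h$ carry $w_t$-mass at least $1/2$, and each of their ratios is multiplied in expectation by at most $\gamma$ (further steps only contribute factors $\le 1$ or $\le\gamma$). This yields
\[
\ee[\Psi_{t+1}\mid \tau^{(t)}] \le \Psi_t\Big(1-\tfrac{1-\gamma}{2}\cdot\frac{\sum_{\pi\text{ disagree}}w_t(\pi)}{w_t(\pistar)\Psi_t}\cdot\ldots\Big).
\]
To keep this clean I would instead note that the disagreeing mass satisfies $\ge\tfrac12\sum_\pi w_t(\pi) = \tfrac12 w_t(\pistar)\Psi_t$. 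That gives $\ee[\Psi_{t+1}\mid \tau^{(t)}]\le \Psi_t\big(1-\tfrac{1-\gamma}{2}\ind{\text{mistake}}\big)$. Averaging over the rollout gives $\ee[\Psi_{t+1}\mid \mathcal F_t]\le \Psi_t(1-\tfrac{1-\gamma}{2}p_t)$.

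Finally, take logarithms and apply Jensen, using $\log(1-x)\le -x$:
\[
\ee[\log\Psi_{t+1}-\log\Psi_t\mid\mathcal F_t]\le -\tfrac{1-\gamma}{2}p_t .
\]
Telescoping and using $\log\Psi_{n+1}\ge 0$ together with $\log\Psi_1=\log\abs{\Pi}$ gives $\sum_t\ee[p_t]\le 2\log\abs{\Pi}/(1-\gamma)$. Since $1-\gamma=(\sqrt{1-\alpha}-\sqrt{\alpha\rho})^2$, dividing by $n$ yields the stated bound, with a constant of $2$ that is within the claimed $4$.
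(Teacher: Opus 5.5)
Your proposal is correct and follows the paper's proof essentially step for step: the same potential $1/w_t(\pistar)$, the same per-disagreement-step factor $1-(\sqrt{1-\alpha}-\sqrt{\alpha\rho})^2$ from plugging $r=\sqrt{(1-\alpha)/(\alpha\rho)}$ into the worst case $q=1-\alpha$, $p'=\alpha\rho$, the same greedy argument that the policies disagreeing with $\pistar$ at the first mistake carry weight at least $\nicefrac12$, and the same Jensen/telescoping finish. The only difference is the first reduction: the paper cites \citet[Lemma D.3]{foster2024behavior} to get $\dhel{\pp^{\mubar_t}}{\pp^{\pistar}}\le 2\,\pp(M_t)$, whereas you use that $\mubar_t$ and $\pistar$ are both deterministic, so the Hellinger distance is at most (indeed equal to) the mistake probability; this is why you end with the constant $2$ rather than $4$.
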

We now derive \Cref{thm:gnail} directly from this result.
\begin{proof}[Proof of \Cref{thm:gnail}]
    We compute
    \begin{align}
        \left( \sqrt{1 - \alpha} - \sqrt{\alpha \rho} \right)^2 = \frac{\left( 1 - \alpha (1 + \rho) \right)^2}{\left( \sqrt{1 - \alpha} + \sqrt{\alpha \rho} \right)^2} \geq \frac{\left( 1 - \alpha(1 + \rho) \right)^2}{2}.
    \end{align}
    Plugging this into \Cref{thm:gnail_formal} concludes the proof.
\end{proof}

In order to prove \Cref{thm:gnail_formal}, we must introduce some notation.  First, for any $1 \leq t \leq n$, let $\cF_t$ denote the $\sigma$-algebra generated by all randomness before time $t$. Thus \(w_t\), \(W_t\),
\(\mu_t\), and \(\mubar_t\) are \(\cF_{t-1}\)-measurable.  We recall that
\begin{align}
    r = \sqrt{\frac{1 - \alpha}{\alpha \rho}} \geq 1, \quad \mu_t = \sum_{\pi \in \Pi} w_t(\pi) \cdot \pi, \quad \text{and} \quad w_{t+1}(\pi) \propto w_t(\pi) r^{\sum_{h = 1}^H \ind{\pi\left( s_h^{(t)}  \right) = a_h^{(t)'}}}.
\end{align}
Moreover, $\mubar_t$ denotes the greedy policy associated with $\mu_t$.  We introduce the following notation:
\begin{align}\label{eq:rt_def}
    R_t(\pi) = \frac{w_t(\pi)}{w_{t}(\pistar)} \quad \text{and} \quad W_t = \frac 1{w_t(\pistar)} = 1 + \sum_{\pi \neq \pistar} R_t(\pi).
\end{align}
Note that if $W_t$ is small, then $\mu_t$ places a lot of weight on the correct policy $\pistar$ and, moreover, if $W_t$ is very small, then we expect $\mubar_t$ to agree with $\pistar$ on a trajectory.  We let
\begin{align}\label{eq:mt_def}
    M_t = \left\{ \text{there exists } h \leq H \text{ such that } \mubar_{t,h}(s_h^{(t)}) \neq \pistar_h(s_h^{(t)}) \right\}
\end{align}
denote the event that the rolled out policy $\mubar_t$ disagrees with the \emph{clean} expert $\pistar$ on at least one action.  The key lemma that we will show is that, in expectation, any time a mistake is made, $W_t$ goes down by a constant factor.
\begin{lemma}\label{lem:gnail}
    Let $W_t$ be as in \eqref{eq:rt_def} and let $M_t$ be as in \eqref{eq:mt_def}.  Then it holds for any $1 \leq t \leq n$ that
    \begin{align}
        \ee\left[ W_{t+1} | \cF_{t-1} \right] \leq W_t \left( 1 - \frac{\left( \sqrt{1 - \alpha} - \sqrt{\alpha \rho} \right)^2}{2} \cdot \ee\left[ M_t | \cF_{t-1} \right] \right).
    \end{align}
\end{lemma}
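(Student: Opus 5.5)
The plan is to track each likelihood ratio $R_{t+1}(\pi)$ through one round, conditionally on $\cF_{t-1}$. Then I convert per-policy decay into decay of $W_t$ using a property of the greedy policy $\mubar_t$. Write $\gamma = (\sqrt{1-\alpha} - \sqrt{\alpha\rho})^2$.

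\textbf{Step 1 (per-policy update).} From the update rule and \eqref{eq:rt_def},
\begin{align}
R_{t+1}(\pi) = R_t(\pi) \cdot \prod_{h=1}^H r^{\ind{\pi(s_h^{(t)}) = a_h^{(t)'}} - \ind{\pistar(s_h^{(t)}) = a_h^{(t)'}}}.
\end{align}
The rollout $\tau^{(t)} \sim \pp^{\mubar_t}$ determines the states $s_h^{(t)}$. Conditionally on these states, the labels $a_h^{(t)'} \sim \pistar_{\eta,h}(\cdot \mid s_h^{(t)})$ are independent across $h$, and they do not influence the states. So I condition on the state sequence and factor the expectation over $h$.

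At any step where $\pi(s_h) = \pistar(s_h)$ the factor is exactly $1$. At a step where they differ:
\begin{itemize}
\item the exponent is $-1$ with probability $p_- = (1-\eta) + \eta \nu_h(\pistar(s_h) \mid s_h) \geq 1-\eta$;
\item it is $+1$ with probability $p_+ = \eta \nu_h(\pi(s_h) \mid s_h) \leq \eta\rho$;
\item it is $0$ otherwise.
\end{itemize}
The expected factor is $1 - p_-(1 - r^{-1}) + p_+(r-1)$. This is nonincreasing in $p_-$ and nondecreasing in $p_+$, since $r \geq 1$ because $\alpha(1+\rho) < 1$. Hence it is at most $1 - (1-\eta)(1-r^{-1}) + \eta\rho(r-1)$. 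This bound is affine and increasing in $\eta$, so it is maximized at $\eta = \alpha$.

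Substituting $r = \sqrt{(1-\alpha)/(\alpha\rho)}$ makes $(1-\alpha)/r = \alpha\rho r = \sqrt{\alpha\rho(1-\alpha)}$. The factor then simplifies to $\alpha(1-\rho) + 2\sqrt{\alpha\rho(1-\alpha)} = 1 - \gamma$. Let $D_\pi$ be the event that $\pi$ disagrees with $\pistar$ at some visited state $s_h^{(t)}$. Since every factor is at most $1$, the product is at most $1 - \gamma \ind{D_\pi}$. Hence
\begin{align}
\ee\left[ R_{t+1}(\pi) \mid \cF_{t-1} \right] \leq R_t(\pi)\left( 1 - \gamma \cdot \pp(D_\pi \mid \cF_{t-1}) \right).
\end{align}

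\textbf{Step 2 (summing over $\pi$).} Summing over $\pi \neq \pistar$ and recalling $W_{t+1} = 1 + \sum_{\pi \neq \pistar} R_{t+1}(\pi)$ gives
\begin{align}
\ee[W_{t+1} \mid \cF_{t-1}] \leq W_t - \gamma \cdot \ee\Big[ \sum_{\pi \neq \pistar} R_t(\pi) \ind{D_\pi} \,\Big|\, \cF_{t-1} \Big].
\end{align}
It therefore suffices to show, pointwise on each trajectory,
\begin{align}
\sum_{\pi \neq \pistar} R_t(\pi) \ind{D_\pi} \geq \tfrac{W_t}{2} \ind{M_t}.
\end{align}
Once this holds, taking conditional expectations yields the lemma with constant $\gamma/2$.

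\textbf{Step 3 (greedy argument; expected main obstacle).} On $M_t$, fix an $h$ with $a := \mubar_{t,h}(s_h) \neq \pistar_h(s_h)$. Since $a$ is the argmax, $\mu_t(a \mid s_h) \geq \mu_t(\pistar(s_h) \mid s_h)$. The total $w_t$-mass of policies disagreeing with $\pistar$ at $s_h$ equals $1 - \mu_t(\pistar(s_h) \mid s_h)$. This is at least $\mu_t(a \mid s_h) \geq \mu_t(\pistar(s_h) \mid s_h)$, so that mass is at least $1/2$.

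Dividing by $w_t(\pistar)$ converts mass into ratios: $\sum_{\pi:\, \pi(s_h) \neq \pistar(s_h)} R_t(\pi) \geq W_t/2$. Every such $\pi$ lies in $D_\pi$, which gives the pointwise inequality.

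The step I expect to need most care is this one. The naive bound $\sum_{\pi \neq \pistar} R_t(\pi) \ind{D_\pi} \geq 1$ only gives additive decrease. Obtaining the multiplicative factor $W_t/2$ requires using that a greedy mistake forces at least half of the current mass to disagree with $\pistar$. A secondary point to check is that conditioning on the greedy rollout states is legitimate. This holds because the labels are drawn independently of the rollout actions and do not affect future states.
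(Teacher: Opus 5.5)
Your proposal is correct and follows essentially the same argument as the paper. The shared steps are: bound the expected multiplicative update of $R_t(\pi)$ at each disagreement step by $1-(\sqrt{1-\alpha}-\sqrt{\alpha\rho})^2$ (using the choice of $r$), sum over $\pi\neq\pistar$, and use the greedy observation that a mistake forces at least half of the $w_t$-mass to disagree with $\pistar$. Your explicit monotonicity argument in $p_\pm$ and $\eta$ simply spells out what the paper invokes directly as the margin bounds $\Pr(a_h'=\pistar_h(s_h))\ge 1-\alpha$ and $\Pr(a_h'=\pi_h(s_h))\le\alpha\rho$.
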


We will now prove \Cref{thm:gnail_formal} assuming \Cref{lem:gnail} and return to the proof of this key result below.
\begin{proof}[Proof of \Cref{thm:gnail_formal}]
    Let $M_t$ be as in \eqref{eq:mt_def}. Applying the fact that \(\hat\pi=\mu_T\) for \(T\sim\mathrm{Unif}([n])\) as well as \citet[Lemma D.3]{foster2024behavior}, we see that
    \begin{align}
        \ee\left[ \dhel{\pp^{\pihat}}{\pp^{\pistar}} \right] &\leq \frac 1n \sum_{t = 1}^n \ee\left[ \dhel{\pp^{\mubar_t}}{\pp^{\pistar}} \right] \leq \frac 2n \sum_{t = 1}^n \pp\left( M_t \right).
    \end{align}
    We now apply Jensen's inequality and \Cref{lem:gnail} to observe that
    \begin{align}
        \ee\left[ \log(W_{t+1}) | \cF_{t-1}\right] &\leq \log \left( \ee\left[ W_{t+1} |\cF_{t-1}\right] \right) \\
        &\leq \log\left(W_t \left( 1 - \frac{\left( \sqrt{1 - \alpha} - \sqrt{\alpha \rho} \right)^2}{2} \cdot \ee\left[ M_{t} | \cF_{t-1} \right] \right)  \right) \\
        &= \log (W_{t}) + \log\left( 1 - \frac{\left( \sqrt{1 - \alpha} - \sqrt{\alpha \rho} \right)^2}{2} \cdot \pp\left( M_{t} | \cF_{t-1} \right) \right) \\
        &\leq \log( W_{t}) - \frac{\left( \sqrt{1 - \alpha} - \sqrt{\alpha \rho} \right)^2}{2} \cdot \pp\left( M_{t} | \cF_{t-1} \right).
    \end{align}
    The last inequality follows due to the fact that
    \begin{align}
        0 \leq \frac{\left( \sqrt{1 - \alpha} - \sqrt{\alpha \rho} \right)^2}{2} \pp\left( M_{t} | \cF_{t-1} \right) \leq 1
    \end{align}
    almost surely and the numerical inequality $\log(1 - x) \leq -x$ for $0 < x < 1$.  Rearranging, taking expectations, and applying the tower property of conditional expectation, we see that
    \begin{align}
        \frac{\left( \sqrt{1 - \alpha} - \sqrt{\alpha \rho} \right)^2}{2} \cdot \sum_{t = 1}^n \pp\left( M_t \right) &\leq \sum_{t = 1}^n \ee\left[ \log(W_t)  - \log(W_{t+1})\right] \\
        &\leq \log(W_1) \\
        &= \log\left( \abs{\Pi} \right),
    \end{align}
    where the second inequality follows because $W_t \geq 1$ by construction in \eqref{eq:rt_def} and the equality follows by assuming a uniform prior $w_1$.  Rearranging concludes the proof.
\end{proof}

Thus it remains to prove the key intermediate result, \Cref{lem:gnail}.
\begin{proof}[Proof of \Cref{lem:gnail}]
    We fix a time $t$ and omit the $t$ from the notation of the trajectories $\tau$ and $\tau'$ in order to simplify the presentation. 
    Let \(\cG_t\) denote the sigma-algebra generated by \(\cF_{t-1}\)
and the states \(s_1^{(t)},\ldots,s_H^{(t)}\), but not the noisy labels
\(a_1^{(t)\prime},\ldots,a_H^{(t)\prime}\). That is, $\cG_t=\cF_{t-1} \vee \sigma(s^{(t)}_{1:H})$.
    Observe that for $\pi \neq \pistar$, letting $\astar_h = \pistar_h(s_h)$ and $\abar_h = \pi_h(s_h)$, we have 
    \begin{align}
        R_{t+1}(\pi) = R_t(\pi) \cdot \exp\left( \log(r) \cdot \sum_{h = 1}^H \left(\ind{\abar_h = a_h'} - \ind{\astar_h = a_h'} \right)\right).
    \end{align}
    In the event that $\abar_h = \astar_h$, clearly $\ind{\abar_h = a_h'} - \ind{\astar_h = a_h'} = 0$.  On the other hand, if $\abar_h \neq \astar_h$, then by the margin assumption,
    \begin{align}
        \pp\left( a_h' = \astar_h | s_h \right) \geq 1 - \alpha \quad \text{and} \quad \pp\left( a_h' = \abar_h | s_h \right) \leq \alpha \rho.
    \end{align}
    Thus,
    \begin{align}
        \ee\left[ r^{\ind{\pi_h(s_h) = a_h'} - \ind{\pistar_h(s_h) = a_h'}} | s_h\right] &= 1 - \pp\left( a_h' = \astar_h | s_h \right) - \pp\left( a_h' = \abar_h | s_h \right) \\
        &\quad + \frac{\pp\left( a_h' = \astar_h | s_h \right)}{r} + r \cdot \pp\left( a_h' = \abar_h | s_h \right) \\
        &= 1 - \left( 1 - \frac 1r \right) \cdot \pp\left( a_h' = \astar_h | s_h \right) + (r - 1) \cdot \pp\left( a_h' = \abar_h | s_h \right).
    \end{align}
    By the assumptions on $\alpha, \rho$, it holds that $r > 1$ and thus combining the preceding two displays, we have
    \begin{align}
        \ee\left[ r^{\ind{\pi_h(s_h) = a_h'} - \ind{\pistar_h(s_h) = a_h'}} | s_h\right] &\leq 1 - \left( 1 - \frac 1r \right) \cdot (1 - \alpha) + (r - 1) \cdot \alpha \rho \\
        &= 1 - \left( \sqrt{1 - \alpha} - \sqrt{\alpha \rho} \right)^2,
    \end{align}
    where we used the fact that $r = \sqrt{\nicefrac{(1 - \alpha)}{\alpha \rho}}$ in the last equality.

    Letting
    \begin{align}
        N_t(\pi) = \sum_{h =1}^H \ind{\pi_h(s^{(t)}_h) \neq \pistar_h(s^{(t)}_h)},
    \end{align}
    we thus conclude that
    \begin{align}
        \ee\left[ R_{t+1}(\pi) | \cG_t \right] &\leq R_t(\pi) \cdot \left( 1 -  \left( \sqrt{1 - \alpha} - \sqrt{\alpha \rho} \right)^2\right)^{N_t(\pi)} \\
        &\leq R_t(\pi) \left( 1 - \left( \sqrt{1 - \alpha} - \sqrt{\alpha \rho} \right)^2 \cdot \ind{N_t(\pi) \geq 1} \right).
    \end{align}
    Summing over $\pi \neq \pistar$, we have that
    \begin{align}
        \ee\left[ W_{t+1} - 1 | \cG_t \right] &= \ee\left[ \sum_{\pi \neq \pistar} R_{t+1}(\pi) | \cG_t \right] \\
        &\leq \sum_{\pi \neq \pistar} R_{t}(\pi) \left( 1 - \left( \sqrt{1 - \alpha} - \sqrt{\alpha \rho} \right)^2 \cdot \ind{N_t(\pi) \geq 1} \right) \\
        &= W_t - 1 - \left( \sqrt{1 - \alpha} - \sqrt{\alpha \rho} \right)^2 \cdot \sum_{N_t(\pi) \geq 1} R_t(\pi). \label{eq:lem_gnail_key}
    \end{align}
    Now let $a_h = \mubar_{t,h}(s_h)$ be the rolled out action and suppose that $M_t$ occurs; then there is some minimal $h \leq H$ such that $a_h \neq \astar_h$.  Because $\mubar_t$ is the greedy policy, it must then hold that
    \begin{align}
        \sum_{\substack{\pi \in \Pi \\ \pi(s_h) = a_h}} w_t(\pi) \geq \sum_{\substack{\pi \in \Pi \\ \pi(s_h) = \astar_h}} w_t(\pi)
    \end{align}
    and thus $\mu_{t,h}(\astar_h | s_h) \leq \nicefrac 12$ and, in particular,
    \begin{align}
        \sum_{N_t(\pi) \geq 1} w_t(\pi) \geq \frac 12.
    \end{align}
    Dividing by $w_t(\pistar)$, we see that
    \begin{align}
        \sum_{N_t(\pi) \geq 1} R_t(\pi) \geq \frac{1}{2 \cdot w_t(\pistar)} = \frac{W_t}{2}.
    \end{align}
    By \eqref{eq:lem_gnail_key}, it clearly holds that $W_{t+1}$ is a supermartingale.  Moreover, combining that equation with the preceding display, we see that the result holds.
\end{proof}

\subsection{Proof of Proposition \ref{prop:gnail_lb}}\label{app:gnail_lb_proof}

We now prove that \gnail\ is essentially optimal up to constants, the content of \Cref{prop:gnail_lb}.  We first state a more formal version of the result before giving the construction.
\begin{proposition}\label{prop:gnail_lb_formal}
    For any $\epsilon < \nicefrac 18$, any $0 < \alpha < 1$ and any $0 < \rho < 1$ such that $\alpha (1 + \rho) < 1$, there exists a horizon $H = 2$ MDP and a \emph{deterministic} policy class $\Pi$ of size $\abs{\Pi} = M$ with $M \leq 1 + \nicefrac 1\rho$ such that there exists a family of $\rho$-smooth measures $\left\{ \nu_i \right\}_{i=1}^{M}$ such that for some $\eta \leq \alpha$, any online IL algorithm with access to a noisy expert $\pistar_\eta$ must observe
    \begin{align}
        n \gtrsim \frac{1}{\epsilon \left( 1 - \alpha(1 + \rho) \right)\cdot \log\left(1 + \frac{1 - \alpha (1 + \rho)}{\alpha \rho}\right)}
    \end{align}
    trajectories in order to guarantee expected regret $J(\pistar) - J(\pihat) \leq \epsilon$.
\end{proposition}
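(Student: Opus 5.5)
The plan is a two-point (Le Cam) argument. The instance hides the only informative state behind a probability-$\Theta(\epsilon)$ event, and sets the noise so that each observation of that state carries only $(1-\alpha(1+\rho))\log(1+\nicefrac{1-\alpha(1+\rho)}{\alpha\rho})$ nats of information. By \Cref{thm:il_hellinger_lower_bound} (the deterministic-expert part), it suffices to show that achieving $\dhel{\pp^{\pihat}}{\pp^{\pistar}} \lesssim \epsilon$ requires the stated $n$. So we only need a Hellinger lower bound for every learner that fails to identify $\pistar$.

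\textbf{Construction.} Take $H=2$ and the following initial distribution:
\begin{itemize}
\item with probability $p = c\epsilon$ (for a constant $c$, say $c=16$), start at a special state $s^\sharp$;
\item otherwise, start at a dummy state $s^\flat$.
\end{itemize}
All transitions go deterministically to a terminal dummy state at $h=2$, independent of the action. Let the actions include $a_1,\dots,a_M$ with $M = 1 + \lfloor 1/\rho \rfloor$, plus enough junk actions $b_1,b_2,\dots$ (at least $\lceil 1/\rho\rceil$ of them).

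Policy $\pi_i$ plays $a_i$ at $s^\sharp$, and all policies agree at every other state. Then $\Pi = \{\pi_1,\dots,\pi_M\}$ is deterministic and $\dhel{\pp^{\pi_i}}{\pp^{\pi_j}} = p$ for $i \neq j$.

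Under hypothesis $i$, set $\pistar=\pi_i$ and $\eta=\alpha$. Define $\nu_i(\cdot\mid s^\sharp)$ to put mass exactly $\rho$ on each $a_j$ with $j\neq i$, and to spread the remaining $1-(M-1)\rho \ge 0$ over junk actions with at most $\rho$ on each. At all other states, let $\nu_i$ be a fixed $\rho$-smooth law, the same for every $i$. This makes $\nu_i$ $\rho$-smooth. At $s^\sharp$ it gives
\[
\pistar_\eta(a_i)=1-\alpha, \qquad \pistar_\eta(a_j)=\alpha\rho \text{ for } j\ne i,
\]
and hypothesis-independent junk mass. The junk mass is identical across hypotheses because $(M-1)\rho$ does not depend on $i$.

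\textbf{Information bound.} Compare hypotheses $i$ and $j$. The two noisy-expert laws at $s^\sharp$ differ only on $\{a_i,a_j\}$, where $(1-\alpha,\alpha\rho)$ is swapped. Their KL divergence is therefore
\[
(1-\alpha-\alpha\rho)\log\frac{1-\alpha}{\alpha\rho} = (1-\alpha(1+\rho))\log\Big(1+\frac{1-\alpha(1+\rho)}{\alpha\rho}\Big) =: \kappa_0 .
\]
At every other state the label laws coincide.

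The key structural point is that the visited states are drawn from the initial distribution independently of the learner's actions. Hence the learner, however adaptive, observes $s^\sharp$ on each round only with probability $p$. By the chain rule (\Cref{prop:kl_properties}), applied round by round conditioned on the past, the KL divergence between the full interaction transcripts under $i$ and $j$ is at most $n p \kappa_0$.

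\textbf{Conclusion.} Suppose $n p\kappa_0 \le 1/8$. Le Cam's inequality then says that no learner can output, with probability above $3/4$ under both hypotheses, a policy whose action at $s^\sharp$ is the correct one with conditional probability above $1/2$. If the output $\pihat$ puts mass at most $1/2$ on $a_i$ at $s^\sharp$, then $\dhel{\pp^{\pihat}}{\pp^{\pi_i}} \gtrsim p$. Taking the expectation and choosing $c$ large enough yields regret exceeding $\epsilon$ (the reverse reduction is \Cref{thm:il_hellinger_lower_bound}). Solving the condition $n p\kappa_0 \le 1/8$ gives $n \gtrsim \nicefrac{1}{\epsilon\kappa_0}$, which is exactly the claimed bound.

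\textbf{Main obstacle.} The step needing the most care is the bookkeeping that keeps the learner's adaptivity from helping: the state process must be action-independent, and the labels must be uninformative off $s^\sharp$. A second point to check is that $\rho$-smoothness and $M \le 1+1/\rho$ are simultaneously compatible with the swap structure; the junk actions handle this.
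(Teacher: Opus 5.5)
Your proposal is correct and takes essentially the same route as the paper's proof. Both use a Le Cam two-point argument in which the only informative state is reached with probability $\Theta(\epsilon)$ regardless of the learner's actions. Both set $\eta=\alpha$ and use hypothesis-dependent $\rho$-smooth corruptions $\nu_i$ that put no mass on $a_i$ and mass $\rho$ on each competitor, so each visit carries KL exactly $\kappa_0$ and the chain rule bounds the transcript KL by $np\kappa_0$.

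The remaining differences are cosmetic: you place the special state at $h=1$ through the initial distribution rather than at $h=2$ through a transition, and you use junk actions instead of a single $\perp$. The one thing to fix is that $p=c\epsilon$ must stay at most $1$ for every $\epsilon<1/8$, so your $c=16$ and ``$c$ large enough'' fail near $\epsilon=1/8$. Take $c\le 8$ instead (the paper uses $p=4\epsilon$) and absorb the constant. This is easiest with the explicit reward $r(s^\sharp,a)=\ind{a=a_i}$, which makes the regret exactly $p\,(1-\pihat(a_i\mid s^\sharp))$.
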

\begin{proof}
    Let $m = \left\lceil \nicefrac 1\rho \right\rceil$ and let $\Pi = \left\{ \pi^1, \ldots, \pi^m \right\}$ be a class of $m$ deterministic policies. 
    Let $\cA = \left[ m \right] \cup \left\{ \perp \right\}$ and take the corruption level to be $\eta = \alpha$.  We suppose there is a deterministic initial state $s_1$ and all policies map to the same action, which transitions to state $s_2$ with probability $q$ and $s_2'$ with probability $1 - q$ for some $q$ to be determined.  For $1 \leq i \leq m$, let
    \begin{align}
        \pi^i(s) = \begin{cases}
            \perp & s = s_1 \\
            i & s = s_2 \\
            \perp & s = s_2'
        \end{cases},
    \end{align}
    and let
    \begin{align}
        \nu_i(i) = 0, \quad \nu_i(j) = \rho \text{ for } j \in [m] \setminus \left\{ i \right\}, \quad \text{and} \quad \nu_i(\perp) = 1 - (m - 1) \rho.
    \end{align}
    
    In other words, $\pi^i$ is informative only on state $s_2$ and reveals the `correct' action, whereas $\nu_i$ places no mass on $i$ and distributes mass evenly otherwise.  Note that by construction $\nu_i$ is always $\rho$-smooth.

    We first note that $\pi_i$ and $\pi_j$ differ only upon transitioning to $s_2$ and thus
    \begin{align}
        \dhel{\pp^{\pi_i}}{\pp^{\pi_j}} = q
    \end{align}
    and so for any possibly randomized $\pihat$,
    \begin{align}
        \ee\left[ \dhel{\pp^{\pi_i}}{\pp^{\pihat}} \right] = q \cdot \pp\left( \pihat(s_2) \neq i \right).
    \end{align}
    Thus if $q = 4 \epsilon$, it suffices to show that for insufficiently many rounds of interaction, $\pp\left( \pihat(s_2) \neq i \right) \geq \nicefrac 14$.

    Now, we will let $Q_i$ denote the trajectory distribution obtained by rolling out $(1 - \alpha)\cdot \pi_i + \alpha \cdot \nu_i$ and we will denote by $p_i = (1 - \alpha) \delta_i + \alpha \cdot \nu_i$ and note that
    \begin{align}
        p_i(j) = \begin{cases}
            1 - \alpha & j = i \\
            \alpha \rho & j \in [m] \setminus \left\{ i \right\} \\
            \alpha(1-(m-1)\rho) & j = \bot \\
            0 & \text{otherwise}.
        \end{cases}
    \end{align}
    Thus,
    \begin{align}
        \kld{p_i}{p_j} = (1 - \alpha) \cdot \log\left( \frac{1 - \alpha}{\alpha \rho} \right) + \alpha \rho \cdot \log\left( \frac{\alpha \rho}{1 - \alpha} \right) = (1 - \alpha(1 + \rho)) \cdot \log\left( 1 + \frac{1 - \alpha(1 + \rho)}{\alpha \rho} \right).
    \end{align}
    Moreover, because $p_i$ and $p_j$ differ from each other only upon transitioning to $s_2$, it holds by the chain rule (\Cref{prop:kl_properties}) that
    \begin{align}
        \kld{Q_i}{Q_j} = q \cdot (1 - \alpha(1 + \rho)) \cdot \log\left( 1 + \frac{1 - \alpha(1 + \rho)}{\alpha \rho} \right) = 4 \epsilon \cdot (1 - \alpha(1 + \rho)) \cdot \log\left( 1 + \frac{1 - \alpha(1 + \rho)}{\alpha \rho} \right).
    \end{align}
    Applying Le Cam's two-point method to any pair $i\neq j$ and the chain rule again, it holds that in order for $\pihat$ to have a better than $\nicefrac 34$ probability of selecting the correct index, it must hold that
    \begin{align}
        n \gtrsim \frac{1}{4 \epsilon \cdot (1 - \alpha(1 + \rho)) \cdot \log\left( 1 + \frac{1 - \alpha(1 + \rho)}{\alpha \rho} \right)}.
    \end{align}
    The result follows.

\end{proof}

Finally, we can prove the main result.
\begin{proof}[Proof of \Cref{prop:gnail_lb}]
    We observe that
    \begin{align}
        \log\left( 1 + \frac{1 - \alpha(1 + \rho)}{\alpha \rho} \right) \leq \frac{1 - \alpha(1 + \rho)}{\alpha \rho}
    \end{align}
    and apply \Cref{prop:gnail_lb_formal}.
\end{proof}

%% file: body_clean/app_testing_online_proofs.tex
\section{Achieving Horizon-Free Guarantees under \texorpdfstring{$\kappa$}{kappa}-Domination}
\label{app:horizon_free_testing}

In this section, we show that under $\kappa$-domination, it is possible to remove the horizon dependence present in the analysis of \nail\ (cf. \Cref{thm:nail}) via a different procedure that pays linearly in the size of the policy class. We also demonstrate how to shave off a factor of $H$ from \Cref{thm:nail} even without $\kappa$-domination using a similar approach. 

The analysis of \nail\ controls an augmented KL objective through an exponential-weights argument over trajectory log-losses. Since the loss is accumulated over an entire trajectory, the update is effectively based on a $1/H$-mixable loss, which leads to an $H\cdot \nicefrac{\log|\Pi|}{n}$ term in the guarantee, raising the question of whether this horizon dependence is necessary.

One plausible strategy is to perform exponential-weights updates within an episode, after each queried expert label. Such an update would operate on per-step log-losses, which are $1$-mixable, and therefore would avoid the $1/H$-mixability loss. However, this procedure does not directly define a deployable rollout policy. Indeed, the policy used at step $h$ would depend on the noisy expert labels observed at earlier states in the same episode. Thus, rolling out this policy requires access to the expert during the rollout, not merely during training. 

Indeed, one could try to recover deployability by running standard (unnoised) behavior cloning on this feedback-adaptive rollout policy. Unfortunately, this procedure would then become misspecified and we would be forced to learn a time-dependent mixture of policy classes which would naturally incur both a horizon factor and a linear dependence on $|\Pi|$ in the sample complexity.

In this subsection, we explore the alternative strategy of controlling the augmented Hellinger distance directly through a testing procedure, by directly appealing to \Cref{thm:augmented_hellinger}, as opposed to controlling the augmented KL divergence like \nail\ does using \Cref{cor:kl_augmented}. More specifically, we give a different procedure that tests candidate rollout policies one at a time on their own induced state distributions, using noisy expert labels to decide whether the candidate's noisy predictions are likelihood-plausible. The resulting algorithm outputs a deployable policy and, under $\kappa$-domination, removes the explicit horizon dependence in the sample complexity. The price we pay for this improvement is that the guarantee scales linearly in $|\Pi|$, rather than logarithmically in $|\Pi|$. Absent $\kappa$-domination, the same algorithm again shaves off a factor of $H$ from \Cref{thm:nail} but is still inefficient. We leave open whether a horizon-free algorithm with worst-case logarithmic dependence on $|\Pi|$ is possible under $\kappa$-domination alone. 

We want to emphasize that this theoretical result should be viewed as complementary to the OPD/\nail\ analysis, rather than as a practical replacement for the augmented KL objectives studied in OPD. Indeed, the key practical challenge in bounding the augmented Hellinger distance is that it does not decompose additively over time steps, which makes it difficult to optimize directly. 

\subsection{Algorithm and Guarantees.}

\begin{algorithm}[t]
\caption{\neil: Noise-robust Elimination for Imitation Learning}
\label{alg:candidate_rollout_testing}
\begin{algorithmic}[1]
\Require Policy class $\Pi$, noisy expert $\pistar_\eta$, known corruption
level $\eta$, known corruption distribution $\nu$, number of rollouts per
candidate $m$, threshold $\beta>0$.
\State Initialize the surviving set $V\gets \Pi$.
\For{$\pi\in\Pi$}
    \State Roll out $\pi$ for $m$ episodes. Denote the visited states by
    \begin{align}
    \{s^{\pi,i}_h: i\in[m],\, h\in[H]\}.
    \end{align}
    \State Query the noisy expert at each visited state:
    \begin{align}
    a^{\prime,\pi,i}_h \sim \pistar_{\eta,h}(\cdot\mid s^{\pi,i}_h).
    \end{align}
    \State For each $\pi\in\Pi$, compute the noisy-label log-likelihood
    \begin{align}
    L_{\pi}(\pi)
    =
    \sum_{i=1}^m\sum_{h=1}^H
    \log \pi_{\eta,h}(a^{\prime,\pi,i}_h\mid s^{\pi,i}_h).
    \end{align}
    \State Let $L_{\pi}^{\max}=\max_{\pi\in\Pi}L_\pi(\pi)$.
    \If{$L_{\pi}^{\max}-L_{\pi}(\pi)>\beta$}
        \State Remove $\pi$ from $V$.
    \EndIf
\EndFor
\State \Return any $\widehat\pi\in V$.
\end{algorithmic}
\end{algorithm}

The algorithm functions through successive elimination of candidate policies based on their performance in a noisy-label likelihood test. For each candidate rollout policy $\pi \in \Pi$, the algorithm performs $m$ rollouts of $\pi$, collecting the states visited during these rollouts. It then queries the noisy expert for labels at these visited states, and computes the log-likelihood of these labels under each candidate policy's noisy version $\pi_\eta$. If the log-likelihood of $\pi$ is significantly worse than the best-performing candidate (by more than a threshold $\beta$), then $\pi$ is eliminated from consideration. After testing all candidates, the algorithm returns any surviving policy. The following theorem shows that the resulting algorithm, whose pseudocode is given in \Cref{alg:candidate_rollout_testing},
achieves a horizon-free guarantee under $\kappa$-domination. 

\begin{theorem}
\label{thm:kappa_candidate_testing}
Let $\pistar\in\Pi$ and suppose that $\nu$ is $\kappa$-dominated by $\Pi$. Assume that $\eta$ and $\nu$ are known. If $\pihat$ is the policy returned by \Cref{alg:candidate_rollout_testing} after a total of $n$ episodes of interaction with the noisy expert $\pistar_\eta$, then there exist universal constants $c,C>0$ and a threshold $\beta$ such that with probability at least $1-\delta$, 
\begin{align}
    \dhel{\pp^{\pistar}}{\pp^{\widehat\pi}}\lesssim
\frac{1+\eta\kappa}{(1-\eta)^2}
\cdot
\frac{|\Pi|\log(|\Pi|/\delta)}{n}.
\end{align}
\end{theorem}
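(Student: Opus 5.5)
The plan is to analyze \neil\ as a collection of $|\Pi|$ separate MLE-style likelihood tests, one per candidate rollout policy. With $m = \lfloor n/|\Pi|\rfloor$ rollouts per candidate, the error rate obtained from $m$ trajectories already carries the factor $|\Pi|/n$. The last step then invokes \Cref{thm:augmented_hellinger}, in the sharpened form \Cref{prop:augmented_hellinger_testing_comparison}, to pass from noisy augmented Hellinger distance to clean trajectory Hellinger distance at cost $(1+\eta\kappa)/(1-\eta)^2$.

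Fix a candidate $\pi$. Its $m$ episodes produce i.i.d.\ augmented trajectories $\tau'^{\pi,i}\sim \pp^{\pi,\pistar_\eta}$. For any $\pi'\in\Pi$, the likelihood of such a trajectory under the model $\pp^{\pi,\pi'_\eta}$ equals $\prod_h \pi'_{\eta,h}(a'_h\mid s_h)$ times factors that do not depend on $\pi'$: the rollout actions, the transitions, and the initial distribution. Consequently $L_\pi(\pi')$ is the log-likelihood of $\tau'^{\pi,1:m}$ under $\pp^{\pi,\pi'_\eta}$, up to a $\pi'$-independent shift. The model class $\{\pp^{\pi,\pi'_\eta}:\pi'\in\Pi\}$ is realizable, since $\pistar$ gives the truth.

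I will then apply two standard MLE facts, derived from the Chernoff-type bound underlying \Cref{thm:mle}, together with a union bound over $\pi,\pi'$ at confidence $\delta/|\Pi|^2$.
\begin{itemize}
\item[(i)] Since $\pistar$ is the true model, with high probability $L_\pi^{\max}-L_\pi(\pistar)\le \log(|\Pi|^2/\delta)$.
\item[(ii)] For every $\pi'$ with $\dhel{\pp^{\pi,\pistar_\eta}}{\pp^{\pi,\pi'_\eta}} > C\log(|\Pi|/\delta)/m$, we have $L_\pi(\pistar)-L_\pi(\pi') > 2\log(|\Pi|^2/\delta)$. This follows from $\ee[\sqrt{p'/p^\star}]=1-\dhel{\cdot}{\cdot}$ over $m$ independent samples, which gives $\pp(L(\pi')-L(\pistar)\ge -t)\le e^{t/2}(1-D)^m$.
\end{itemize}
Set $\beta = \log(|\Pi|^2/\delta)$. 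By (i), $\pistar$ is never eliminated when it is tested as a candidate, so $V\neq\emptyset$. Now let $\pihat$ be any survivor, and apply the test in which $\pihat$ itself is the rollout candidate. Survival means $L_{\pihat}(\pihat)\ge L^{\max}_{\pihat}-\beta\ge L_{\pihat}(\pistar)-\beta$. By the contrapositive of (ii), this forces $\dhel{\pp^{\pihat,\pistar_\eta}}{\pp^{\pihat,\pihat_\eta}}\lesssim \log(|\Pi|/\delta)/m$.

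The hypothesis that $\nu$ is $\kappa$-dominated by $\Pi$ means that $\pistar,\pihat$ $\kappa$-dominate $\nu$. Applying \Cref{prop:augmented_hellinger_testing_comparison} with $\pi=\pistar$ and $\pi'=\pihat$ gives
\begin{align}
\dhel{\pp^{\pistar}}{\pp^{\pihat}}\lesssim \frac{1+\eta\kappa}{(1-\eta)^2}\cdot\frac{\log(|\Pi|/\delta)}{m}\asymp \frac{1+\eta\kappa}{(1-\eta)^2}\cdot\frac{|\Pi|\log(|\Pi|/\delta)}{n}.
\end{align}

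The step I expect to be the main obstacle is (ii). It requires a sharp, horizon-free concentration statement for the likelihood ratio of the whole augmented trajectory, not of per-step terms. It is essential that the test statistic is the full trajectory-level likelihood, so that the relevant divergence is the augmented Hellinger distance itself rather than a sum over $h$. The rollout-action factors cancel exactly, which makes this go through. A secondary point is to keep the threshold $\beta$ small enough that survival still yields a $\log/m$ Hellinger bound, while keeping it large enough that $\pistar$ survives.
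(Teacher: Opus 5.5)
Your proposal is correct and follows essentially the same route as the paper. You use $m \approx n/|\Pi|$ rollouts per candidate, a likelihood-ratio bound to show $\pistar$ survives, and the Hellinger-affinity Chernoff bound (the two parts of \Cref{lem:finite_class_likelihood_testing}) together with \Cref{prop:augmented_hellinger_testing_comparison} applied to $(\pistar,\pihat)$; the only difference is that the paper shows every policy with $\dhel{\pp^{\pistar}}{\pp^{\pi}}>\epsilon$ is eliminated, whereas you show the equivalent contrapositive that every survivor has small augmented Hellinger distance under its own rollouts.
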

We also give a version of \Cref{thm:kappa_candidate_testing} without the assumption of $\kappa$-domination.
\begin{theorem}
\label{thm:candidate_testing_no_domination}
Let \(\pistar\in\Pi\). Assume that \(\eta\) and \(\nu\) are known, but no
domination assumption on \(\nu\) is made. If \(\widehat\pi\) is the policy returned by
\Cref{alg:candidate_rollout_testing} after a total of \(n\) episodes of
interaction with the noisy expert \(\pistar_\eta\), then there exist universal
constants \(c,C>0\) and a threshold \(\beta\) such that with probability at
least \(1-\delta\),
\begin{align}
    \dhel{\pp^{\pistar}}{\pp^{\widehat\pi}}
    \lesssim
    \frac{\sqrt{H}}{1-\eta}
    \sqrt{
    \frac{|\Pi|\log(|\Pi|/\delta)}{n}
    }.
\end{align}
\end{theorem}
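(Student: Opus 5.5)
The argument is a per-candidate likelihood-ratio test followed by the comparison inequality of \Cref{prop:augmented_hellinger_testing_comparison_no_domination} in place of the $\kappa$-dominated one. Set $m = \lfloor n/|\Pi| \rfloor$ rollouts per candidate. For each candidate rollout policy $\pi$, the $m$ augmented trajectories $(s^{\pi,i}_h, a^{\prime,\pi,i}_h)$ are i.i.d.\ draws from $\pp^{\pi,\pistar_\eta}$. Moreover, for any $\pi' \in \Pi$, the quantity $\sum_h \log \pi'_{\eta,h}(a'_h\mid s_h)$ equals $\log \pp^{\pi,\pi'_\eta}(\tau')$ up to terms that do not depend on $\pi'$: the rollout actions and transitions are shared. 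So $L_\pi(\cdot)$ is exactly the log-likelihood of the well-specified family $\{\pp^{\pi,\pi'_\eta}\}_{\pi'\in\Pi}$, and the truth $\pp^{\pi,\pistar_\eta}$ belongs to this family.

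The key steps, in order, are as follows.

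\textbf{(i) The expert survives.} For each fixed $\pi$ and $\pi'$, Markov's inequality applied to the likelihood ratio $\prod_i \pp^{\pi,\pi'_\eta}/\pp^{\pi,\pistar_\eta}$, which has expectation at most $1$, gives $L_\pi(\pi') - L_\pi(\pistar) \le \log(|\Pi|^2/\delta)$ with probability at least $1-\delta/|\Pi|^2$. A union bound over all pairs shows that with $\beta = \log(|\Pi|^2/\delta)$ the expert $\pistar$ is never eliminated.

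\textbf{(ii) Survivors are accurate in Hellinger.} Use the standard MLE/Chernoff concentration behind \Cref{thm:mle}. For fixed $\pi$, $\pi'$, the bound $\ee\exp\big(\tfrac12(L_\pi(\pi)-L_\pi(\pistar))\big) = (1-\dhel{\pp^{\pi,\pistar_\eta}}{\pp^{\pi,\pi_\eta}})^m$ gives, with high probability and after a union bound,
\begin{align}
L_\pi(\pistar) - L_\pi(\pi) \ge 2m\cdot \dhel{\pp^{\pi,\pistar_\eta}}{\pp^{\pi,\pi_\eta}} - 2\log(|\Pi|/\delta).
\end{align}
If $\pi$ survives, then $L_\pi(\pistar)-L_\pi(\pi) \le L^{\max}_\pi - L_\pi(\pi) \le \beta$. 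Together these give $\dhel{\pp^{\pi,\pistar_\eta}}{\pp^{\pi,\pi_\eta}} \lesssim \log(|\Pi|/\delta)/m \asymp |\Pi|\log(|\Pi|/\delta)/n$ for every surviving $\pi$, in particular for $\pihat$.

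\textbf{(iii) Transfer to clean Hellinger.} Apply \Cref{prop:augmented_hellinger_testing_comparison_no_domination} with $\pi' = \pihat$ and $\pi = \pistar$ (equivalently, \eqref{eq:augmented_hellinger_no_kappa}, using the symmetry of Hellinger):
\begin{align}
\dhel{\pp^{\pistar}}{\pp^{\pihat}} \lesssim \frac{1}{1-\eta}\sqrt{H\cdot \dhel{\pp^{\pihat,\pistar_\eta}}{\pp^{\pihat,\pihat_\eta}}} \lesssim \frac{\sqrt H}{1-\eta}\sqrt{\frac{|\Pi|\log(|\Pi|/\delta)}{n}}.
\end{align}
The final bound holds on the intersection of the good events, which has probability at least $1-\delta$ after rescaling $\delta$.

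The main obstacle is step (ii), specifically the claim that survival implies a bound on the augmented Hellinger distance at the candidate's \emph{own} rollout distribution. The point that makes it work is that the test for $\pi$ uses data collected under $\pi$'s own rollout, which is exactly the distribution appearing in \Cref{thm:augmented_hellinger}. The Chernoff step needs the likelihoods to be products of independent per-trajectory terms, which holds because distinct episodes are independent and the shared rollout factors cancel in each ratio. Beyond this, only constants need tracking; the $\sqrt{H}$ enters solely through the comparison inequality.
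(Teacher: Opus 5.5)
Your proposal is correct and follows essentially the same route as the paper. Both arguments use a per-candidate likelihood test in which a likelihood-ratio Markov bound keeps $\pistar$ alive in its own test, a Hellinger-affinity Chernoff bound makes survival of a candidate $\pi$ unlikely unless $\dhel{\pp^{\pi,\pistar_\eta}}{\pp^{\pi,\pi_\eta}} \lesssim \log(|\Pi|/\delta)/m$, and \Cref{prop:augmented_hellinger_testing_comparison_no_domination} then transfers this to the clean trajectory Hellinger distance. The differences are cosmetic: the paper fixes $\epsilon$, applies the comparison first, and shows every $\epsilon$-bad candidate is eliminated, whereas you show uniformly that every survivor has small augmented Hellinger distance at its own rollout and apply the comparison to $\pihat$ afterward, and your unsquared Markov bound and union over all pairs in step (i) only change constants inside the logarithm.
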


\subsection{Proofs of \Cref{thm:kappa_candidate_testing} and \Cref{thm:candidate_testing_no_domination}.}

The heart of the analysis of \Cref{thm:kappa_candidate_testing} is the horizon‑free comparison between the clean trajectory Hellinger distance and an observable “augmented” Hellinger distance from \Cref{thm:augmented_hellinger}, which roughly states that, under $\kappa$-domination, for any candidate rollout policy $\pi$,  
\begin{align}
\dhel{\pp^{\pistar}}{\pp^{\pi}}
\lesssim
\frac{1+\eta\kappa}{(1-\eta)^2} \cdot 
\dhel{\pp^{\pi,\pistar_\eta}}{\pp^{\pi,{\pi}_\eta}}.
\end{align}
In light of this result, the testing procedure in \Cref{alg:candidate_rollout_testing} can be viewed as an attempt to find a policy that minimizes the augmented Hellinger distance on the right-hand side, which is observable through the noisy expert labels.

To prove the final result, we combine this comparison with the following standard finite-class MLE/Hellinger testing bound applied to the class
$\{\pp^{\pi,\pistar_\eta}:\pi\in\Pi\}$ for each fixed rollout policy $\pi$.

\begin{lemma}
\label{lem:finite_class_likelihood_testing}
Let $\mathcal Q$ be a finite class of distributions and consider a distribution
$P\in\mathcal Q$. Let $X_1,\ldots,X_m\sim P$ independently. For
$Q\in\mathcal Q$, let
\begin{align}
L_m(Q)=\sum_{i=1}^m \log Q(X_i).
\end{align}
First, for any $\beta>0$,
\begin{align}
\pp\left(
\exists Q\in\mathcal Q:
L_m(Q)-L_m(P)>\beta
\right)
\le
|\mathcal Q|e^{-\beta/2}.
\end{align}
Second, for any fixed $Q\in\mathcal Q$,
\begin{align}
\pp\left(
L_m(P)-L_m(Q)\le \beta
\right)
\le
\exp\left(\frac{\beta}{2}-m\dhel{P}{Q}\right).
\end{align}
\end{lemma}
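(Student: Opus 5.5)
Both claims follow from Chernoff-type exponential moment bounds on the log-likelihood ratio $\log\frac{Q(X_i)}{P(X_i)}$, using Markov's inequality applied to $e^{\lambda(\cdot)}$ with $\lambda=\nicefrac12$. The key identity is that for $X\sim P$,
\begin{align}
\ee_{X\sim P}\left[\sqrt{\frac{Q(X)}{P(X)}}\right] = 1-\dhel{P}{Q},
\end{align}
which is \Cref{def:hellinger_distance_app}. Here we interpret $\sqrt{Q/P}$ on the support of $P$; points where $Q=0$ contribute zero, which only helps.

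For the first claim, fix $Q\in\mathcal Q$. By independence,
\begin{align}
\ee\left[e^{\frac12(L_m(Q)-L_m(P))}\right]=\prod_{i=1}^m \ee_P\left[\sqrt{\tfrac{Q(X_i)}{P(X_i)}}\right]=(1-\dhel{P}{Q})^m\le 1.
\end{align}
Markov's inequality then gives
\begin{align}
\pp(L_m(Q)-L_m(P)>\beta)\le e^{-\beta/2}\,\ee\left[e^{\frac12(L_m(Q)-L_m(P))}\right]\le e^{-\beta/2}.
\end{align}
A union bound over the $|\mathcal Q|$ elements yields $|\mathcal Q|e^{-\beta/2}$.

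For the second claim, the event $L_m(P)-L_m(Q)\le\beta$ is the same as $\tfrac12(L_m(Q)-L_m(P))\ge-\beta/2$. Applying Markov's inequality to $e^{\frac12(L_m(Q)-L_m(P))}$ with the same computation gives
\begin{align}
\pp(L_m(P)-L_m(Q)\le \beta)\le e^{\beta/2}(1-\dhel{P}{Q})^m\le \exp\left(\frac\beta2-m\,\dhel{P}{Q}\right),
\end{align}
where the last step uses $1-x\le e^{-x}$.

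No step is a real obstacle. The only care needed is measure-theoretic: if $P$ puts mass where $Q=0$, then $L_m(Q)=-\infty$, and both bounds remain valid under the convention that $\log 0=-\infty$. If the $X_i$ are trajectories, as in the intended application to $\pp^{\pi,\pi'_\eta}$, then $Q(X)$ should be read as a density with respect to a common dominating measure. The transition factors cancel in the ratio, so only the products of noisy label probabilities remain, exactly as in the likelihood $L_\pi$ computed in \Cref{alg:candidate_rollout_testing}.
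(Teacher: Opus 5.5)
Your proposal is correct and is essentially the paper's proof: the paper bounds $\ind{L_m(Q)-L_m(P)>\beta}$ by $e^{-\beta/2}\prod_{i}\sqrt{Q(X_i)/P(X_i)}$, which is exactly your Markov step on the exponential moment with $\lambda=\nicefrac12$, and then uses the same affinity computation and union bound. For the second claim the paper says only that it follows ``from a similar argument,'' and your Markov bound at level $e^{-\beta/2}$ combined with $(1-\dhel{P}{Q})^m\le e^{-m\dhel{P}{Q}}$ is that argument written out.
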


\begin{proof}
For the first claim, fix $Q\in\mathcal Q$. The event
$L_m(Q)-L_m(P)>\beta$ is equivalent to
$\prod_{i=1}^m\nicefrac{Q(X_i)}{P(X_i)}>e^\beta.
$
Therefore
\begin{align}
\ind{L_m(Q)-L_m(P)>\beta}
\le
e^{-\beta/2}
\prod_{i=1}^m
\sqrt{\frac{Q(X_i)}{P(X_i)}}.
\end{align}
Taking an expectation under $P$, we get
\begin{align}
\pp\left(L_m(Q)-L_m(P)>\beta\right)
\le
e^{-\beta/2}
\left(
\sum_x \sqrt{P(x)Q(x)}
\right)^m
\le
e^{-\beta/2}.
\end{align}
A union bound proves the first claim. The second claim follows from a similar argument and recalling the definition of Hellinger distance.
\end{proof}
We are now ready to prove \Cref{thm:kappa_candidate_testing}.
\begin{proof}[Proof of \Cref{thm:kappa_candidate_testing}]
We refer to a policy $\pi\in\Pi$ as $\epsilon$-bad if $\dhel{\pp^{\pistar}}{\pp^{\pi}}>\epsilon$. We will prove that, with high probability, every $\epsilon$-bad candidate policy $\pi$ is eliminated
and $\pistar$ is not eliminated.

First consider the test for $\pi=\pistar$, in which case the data are drawn
from $\pp^{\pistar,\pistar_\eta}$. By the first part of
\Cref{lem:finite_class_likelihood_testing}, applied to the finite class $\{\pp^{\pistar,\pistar_\eta}:\pi\in\Pi\}$, we have
\begin{align}
\pp\left(
\exists \pi\in\Pi:
L_{\pistar}(\pi)-L_{\pistar}(\pistar)>\beta
\right)
\le
|\Pi|e^{-\beta/2}.
\end{align}
The complement of this event is the event that $\pistar$ is not eliminated, because if $\pistar$ were eliminated, then there would exist some $\pi\in\Pi$ such that $L_{\pistar}(\pi)-L_{\pistar}(\pistar)>\beta$. Now fix an $\epsilon$-bad $\pi\in\Pi$. If $\pi$ is not eliminated, then 
\begin{align}
L_{\pi}^{\max} - L_{\pi}(\pi)\le\beta, 
\qquad 
\text{so}
\qquad 
L_{\pi}(\pistar) - L_{\pi}(\pi)\le\beta.
\end{align}
Since the data collected in this test are distributed according to
$\pp^{\pi,\pistar_\eta}$, applying the second part of
\Cref{lem:finite_class_likelihood_testing} yields
\begin{align}
\pp(\pi\text{ is not eliminated})
\le
\exp\left(
\frac{\beta}{2}
-
m\dhel{\pp^{\pi,\pistar_\eta}}{\pp^{\pi,\pi_\eta}}
\right).
\end{align}
Letting $\alpha = \nicefrac{\epsilon \cdot (1-\eta)^2}{31(1+\eta(2\kappa-1))}$, we can use \Cref{prop:augmented_hellinger_testing_comparison} and the fact that $\pi$ is $\epsilon$-bad to lower bound the Hellinger distance in the exponent:
\begin{align}
\exp\left(\frac{\beta}{2}-m\dhel{\pp^{\pi,\pistar_\eta}}{\pp^{\pi,\pi_\eta}}\right)
\le
\exp\left(\frac{\beta}{2}-m \alpha\right),
\end{align}
Taking a union bound over all $\pi\in\Pi$, the probability that either
$\pistar$ is eliminated or some $\epsilon$-bad policy survives is at most
\begin{align}
|\Pi|e^{-\beta/2}
+
|\Pi|\exp\left(\frac{\beta}{2}-m\alpha\right).
\end{align}
In order to make the above failure probability at most $\delta$, it suffices to have
\begin{align}
\beta=2\log\frac{4|\Pi|}{\delta} \qquad
\text{and } \qquad
m
\ge
\frac{2\beta+2\log(4|\Pi|/\delta)}{\alpha}.
\end{align} 
Thus, it suffices to take
\begin{align}
m
\gtrsim
\frac{1+\eta\kappa}{(1-\eta)^2}
\cdot
\frac{\log(|\Pi|/\delta)}{\epsilon}.
\end{align}
Then, any returned policy
$\widehat\pi\in V$ satisfies $\dhel{\pp^{\pistar}}{\pp^{\widehat\pi}}\le \epsilon$. The total number of interaction episodes is $n=m|\Pi|$, which yields the stated result. 
\end{proof}

\Cref{thm:kappa_candidate_testing} gives a horizon-free guarantee under
$\kappa$-domination, but it does so by testing policies one at a time. Thus it
removes the explicit horizon factor present in the exponential-weights analysis
of \Cref{alg:nail}, at the price of replacing the logarithmic
$\log|\Pi|$ dependence by a linear $|\Pi|$ dependence. 

\begin{proof}[Proof of \Cref{thm:candidate_testing_no_domination}]
We proceed using a similar argument to the proof of \Cref{thm:kappa_candidate_testing}, but using \Cref{prop:augmented_hellinger_testing_comparison_no_domination} instead of \Cref{prop:augmented_hellinger_testing_comparison}. We refer to a policy \(\pi\in\Pi\) as \(\epsilon\)-bad if $\dhel{\pp^{\pistar}}{\pp^\pi}>\epsilon$. We will prove that, with high probability, every \(\epsilon\)-bad candidate policy \(\pi\) is eliminated and \(\pistar\) is not eliminated.

First consider the test for \(\pi=\pistar\), in which case the data are drawn
from \(\pp^{\pistar,\pistar_\eta}\). By the first part of
\Cref{lem:finite_class_likelihood_testing}, applied to the finite class
\(\{\pp^{\pistar,\pi_\eta}:\pi\in\Pi\}\), we have
\begin{align}
\pp\left(
\exists \pi\in\Pi:
L_{\pistar}(\pi)-L_{\pistar}(\pistar)>\beta
\right)
\le
|\Pi|e^{-\beta/2}.
\end{align}
On the complement of this event, \(\pistar\) is not eliminated. Now fix an \(\epsilon\)-bad \(\pi\in\Pi\). If \(\pi\) is not eliminated, then
\begin{align}
L_\pi^{\max}-L_\pi(\pi)\le\beta,
\qquad
\text{so}
\qquad
L_\pi(\pistar)-L_\pi(\pi)\le\beta.
\end{align}
Since the data collected in this test are distributed according to
\(\pp^{\pi,\pistar_\eta}\), applying the second part of
\Cref{lem:finite_class_likelihood_testing} yields
\begin{align}
\pp(\pi\text{ is not eliminated})
\le
\exp\left(
\frac{\beta}{2}
-
m\dhel{\pp^{\pi,\pistar_\eta}}{\pp^{\pi,\pi_\eta}}
\right),
\end{align}
where \(m\) denotes the number of rollouts per candidate. Letting
$\alpha
=
\nicefrac{(1-\eta)^2\epsilon^2}{31^2H}$, we can use  \Cref{prop:augmented_hellinger_testing_comparison_no_domination} and the fact that \(\pi\) is \(\epsilon\)-bad to lower bound the Hellinger distance in the exponent as
\begin{align}
\pp(\pi\text{ is not eliminated})
\le
\exp\left(\frac{\beta}{2}-m\alpha\right).
\end{align}
Taking a union bound over all \(\pi\in\Pi\), the probability that either
\(\pistar\) is eliminated or some \(\epsilon\)-bad policy survives is at most
\begin{align}
|\Pi|e^{-\beta/2}
+
|\Pi|\exp\left(\frac{\beta}{2}-m\alpha\right).
\end{align}
In order to make the above failure probability at most \(\delta\), it suffices
to take
\begin{align}
\beta=2\log\frac{4|\Pi|}{\delta}
\qquad
\text{and}
\qquad
m
\ge
\frac{2\beta+2\log(4|\Pi|/\delta)}{\alpha}.
\end{align}
Thus, it suffices to take
\begin{align}
m
\gtrsim
\frac{H}{(1-\eta)^2}
\cdot
\frac{\log(|\Pi|/\delta)}{\epsilon^2}.
\end{align}
Then, any returned policy \(\widehat\pi\in V\) satisfies $\dhel{\pp^{\pistar}}{\pp^{\widehat\pi}}\le\epsilon$. The total number of interaction episodes is \(n=m|\Pi|\), which yields the stated result.
\end{proof}

%% file: refs.bib
@article{foster2024behavior,
  title={Is behavior cloning all you need? understanding horizon in imitation learning},
  author={Foster, Dylan J and Block, Adam and Misra, Dipendra},
  journal={Advances in Neural Information Processing Systems},
  volume={37},
  pages={120602--120666},
  year={2024}
}

@article{foster2024online,
  title={Online estimation via offline estimation: An information-theoretic framework},
  author={Foster, Dylan J and Han, Yanjun and Qian, Jian and Rakhlin, Alexander},
  journal={Advances in Neural Information Processing Systems},
  volume={37},
  pages={42840--42898},
  year={2024}
}

@inproceedings{block2024butterfly,
  title={Butterfly Effects of SGD Noise: Error Amplification in Behavior Cloning and Autoregression},
  author={Block, Adam and Foster, Dylan J and Krishnamurthy, Akshay and Simchowitz, Max and Zhang, Cyril},
  booktitle={The Twelfth International Conference on Learning Representations},
  year={2024},
}

@inproceedings{chang2023learning,
  title={Learning to Generate Better Than Your LLM},
  author={Chang, Jonathan and Brantley, Kiant{\'e} and Ramamurthy, Rajkumar and Misra, Dipendra and Sun, Wen},
  booktitle={NeurIPS 2023 Workshop on Instruction Tuning and Instruction Following},
  year={2023}
}

@inproceedings{rohatgi2025computational,
  title={Computational-Statistical Tradeoffs at the Next-Token Prediction Barrier: Autoregressive and Imitation Learning under Misspecification},
  author={Rohatgi, Dhruv and Block, Adam and Huang, Audrey and Krishnamurthy, Akshay and Foster, Dylan J},
  booktitle={The Thirty Eighth Annual Conference on Learning Theory},
  pages={4831--4837},
  year={2025},
  organization={PMLR}
}

@inproceedings{ross2011reduction,
  title={A reduction of imitation learning and structured prediction to no-regret online learning},
  author={Ross, St{\'e}phane and Gordon, Geoffrey and Bagnell, Drew},
  booktitle={Proceedings of the fourteenth international conference on artificial intelligence and statistics},
  pages={627--635},
  year={2011},
  organization={JMLR Workshop and Conference Proceedings}
}

@inproceedings{ross2010efficient,
  title={Efficient reductions for imitation learning},
  author={Ross, St{\'e}phane and Bagnell, Drew},
  booktitle={Proceedings of the thirteenth international conference on artificial intelligence and statistics},
  pages={661--668},
  year={2010},
  organization={JMLR Workshop and Conference Proceedings}
}

@article{pomerleau1988alvinn,
  title={{Alvinn}: An autonomous land vehicle in a neural network},
  author={Pomerleau, Dean A},
  journal={Advances in neural information processing systems},
  volume={1},
  year={1988}
}

@article{ho2016generative,
  title={Generative adversarial imitation learning},
  author={Ho, Jonathan and Ermon, Stefano},
  journal={Advances in neural information processing systems},
  volume={29},
  year={2016}
}

@article{block2023provable,
  title={Provable guarantees for generative behavior cloning: Bridging low-level stability and high-level behavior},
  author={Block, Adam and Jadbabaie, Ali and Pfrommer, Daniel and Simchowitz, Max and Tedrake, Russ},
  journal={Advances in Neural Information Processing Systems},
  volume={36},
  pages={48534--48547},
  year={2023}
}

@article{chi2025diffusion,
  title={Diffusion policy: Visuomotor policy learning via action diffusion},
  author={Chi, Cheng and Xu, Zhenjia and Feng, Siyuan and Cousineau, Eric and Du, Yilun and Burchfiel, Benjamin and Tedrake, Russ and Song, Shuran},
  journal={The International Journal of Robotics Research},
  volume={44},
  number={10-11},
  pages={1684--1704},
  year={2025},
  publisher={Sage Publications Sage UK: London, England}
}

@article{zhang2026embarrassinglysimpleselfdistillationimproves,
  title={Embarrassingly Simple Self-Distillation Improves Code Generation},
  author={Zhang, Ruixiang and Bai, Richard He and Zheng, Huangjie and Jaitly, Navdeep and Collobert, Ronan and Zhang, Yizhe},
  journal={arXiv preprint arXiv:2604.01193},
  year={2026}
}

@article{gou2021knowledge,
  title={Knowledge distillation: A survey},
  author={Gou, Jianping and Yu, Baosheng and Maybank, Stephen J and Tao, Dacheng},
  journal={International journal of computer vision},
  volume={129},
  number={6},
  pages={1789--1819},
  year={2021},
  publisher={Springer}
}

@article{wang2020minilm,
  title={{Minilm}: Deep self-attention distillation for task-agnostic compression of pre-trained transformers},
  author={Wang, Wenhui and Wei, Furu and Dong, Li and Bao, Hangbo and Yang, Nan and Zhou, Ming},
  journal={Advances in neural information processing systems},
  volume={33},
  pages={5776--5788},
  year={2020}
}

@inproceedings{jiao2020tinybert,
  title={{Tinybert}: Distilling bert for natural language understanding},
  author={Jiao, Xiaoqi and Yin, Yichun and Shang, Lifeng and Jiang, Xin and Chen, Xiao and Li, Linlin and Wang, Fang and Liu, Qun},
  booktitle={Findings of the association for computational linguistics: EMNLP 2020},
  pages={4163--4174},
  year={2020}
}

@article{sanh2019distilbert,
  title={{DistilBERT}, a distilled version of BERT: smaller, faster, cheaper and lighter},
  author={Sanh, Victor and Debut, Lysandre and Chaumond, Julien and Wolf, Thomas},
  journal={arXiv preprint arXiv:1910.01108},
  year={2019}
}

@inproceedings{kim2016sequence,
  title={Sequence-level knowledge distillation},
  author={Kim, Yoon and Rush, Alexander M},
  booktitle={Proceedings of the 2016 conference on empirical methods in natural language processing},
  pages={1317--1327},
  year={2016}
}

@article{rusu2015policy,
  title={Policy distillation},
  author={Rusu, Andrei A and Colmenarejo, Sergio Gomez and Gulcehre, Caglar and Desjardins, Guillaume and Kirkpatrick, James and Pascanu, Razvan and Mnih, Volodymyr and Kavukcuoglu, Koray and Hadsell, Raia},
  journal={arXiv preprint arXiv:1511.06295},
  year={2015}
}

@article{hinton2015distilling,
  title={Distilling the knowledge in a neural network},
  author={Hinton, Geoffrey and Vinyals, Oriol and Dean, Jeff},
  journal={arXiv preprint arXiv:1503.02531},
  year={2015}
}

@inproceedings{ko2024distillm,
  title={{DISTILLM}: towards streamlined distillation for large language models},
  author={Ko, Jongwoo and Kim, Sungnyun and Chen, Tianyi and Yun, Se-Young},
  booktitle={Proceedings of the 41st International Conference on Machine Learning},
  pages={24872--24895},
  year={2024}
}

@inproceedings{guminillm,
  title={{MiniLLM}: Knowledge Distillation of Large Language Models},
  author={Gu, Yuxian and Dong, Li and Wei, Furu and Huang, Minlie},
  booktitle={The Twelfth International Conference on Learning Representations},
  year={2024}
}

@inproceedings{chen2024self,
  title={Self-Play Fine-Tuning Converts Weak Language Models to Strong Language Models},
  author={Chen, Zixiang and Deng, Yihe and Yuan, Huizhuo and Ji, Kaixuan and Gu, Quanquan},
  booktitle={International Conference on Machine Learning},
  pages={6621--6642},
  year={2024},
  organization={PMLR}
}

@article{zhao2026self,
  title={Self-Distilled Reasoner: On-Policy Self-Distillation for Large Language Models},
  author={Zhao, Siyan and Xie, Zhihui and Liu, Mengchen and Huang, Jing and Pang, Guan and Chen, Feiyu and Grover, Aditya},
  journal={arXiv preprint arXiv:2601.18734},
  year={2026}
}

@article{yang2026self,
  title={Self-Distilled RLVR},
  author={Yang, Chenxu and Qin, Chuanyu and Si, Qingyi and Chen, Minghui and Gu, Naibin and Yao, Dingyu and Lin, Zheng and Wang, Weiping and Wang, Jiaqi and Duan, Nan},
  journal={arXiv preprint arXiv:2604.03128},
  year={2026}
}

@article{hubotter2026reinforcement,
  title={Reinforcement Learning via Self-Distillation},
  author={H{\"u}botter, Jonas and L{\"u}beck, Frederike and Behric, Lejs and Baumann, Anton and Bagatella, Marco and Marta, Daniel and Hakimi, Ido and Shenfeld, Idan and Buening, Thomas Kleine and Guestrin, Carlos and others},
  journal={arXiv preprint arXiv:2601.20802},
  year={2026}
}

@article{song2026survey,
  title={A Survey of On-Policy Distillation for Large Language Models},
  author={Song, Mingyang and Zheng, Mao},
  journal={arXiv preprint arXiv:2604.00626},
  year={2026}
}

@inproceedings{laskey2017dart,
  title={{Dart}: Noise injection for robust imitation learning},
  author={Laskey, Michael and Lee, Jonathan and Fox, Roy and Dragan, Anca and Goldberg, Ken},
  booktitle={Conference on robot learning},
  pages={143--156},
  year={2017},
  organization={PMLR}
}

@inproceedings{agarwal2024policy,
  title={On-policy distillation of language models: Learning from self-generated mistakes},
  author={Agarwal, Rishabh and Vieillard, Nino and Zhou, Yongchao and Stanczyk, Piotr and Garea, Sabela Ramos and Geist, Matthieu and Bachem, Olivier},
  booktitle={The twelfth international conference on learning representations},
  year={2024}
}

@article{lu2025onpolicydistillation,
  author = {Kevin Lu and Thinking Machines Lab},
  title = {On-Policy Distillation},
  journal = {Thinking Machines Lab: Connectionism},
  year = {2025},
  note = {https://thinkingmachines.ai/blog/on-policy-distillation},
  doi = {10.64434/tml.20251026},
}

@inproceedings{joshi2025theory,
  title={A Theory of Learning with Autoregressive Chain of Thought},
  author={Joshi, Nirmit and Vardi, Gal and Block, Adam and Goel, Surbhi and Li, Zhiyuan and Misiakiewicz, Theodor and Srebro, Nathan},
  booktitle={The Thirty Eighth Annual Conference on Learning Theory},
  pages={3161--3212},
  year={2025},
  organization={PMLR}
}

@inproceedings{
altabaa2025cot,
title={CoT Information: Improved Sample Complexity under Chain-of-Thought Supervision},
author={Awni Altabaa and Omar Montasser and John Lafferty},
booktitle={The Thirty-ninth Annual Conference on Neural Information Processing Systems},
year={2026}
}

@article{li2024interactive,
  title={Interactive and Hybrid Imitation Learning: Provably Beating Behavior Cloning},
  author={Li, Yichen and Zhang, Chicheng},
  journal={arXiv preprint arXiv:2412.07057},
  year={2024}
}

@article{li2023agnostic,
  title={Agnostic Interactive Imitation Learning: New Theory and Practical Algorithms},
  author={Li, Yichen and Zhang, Chicheng},
  journal={arXiv preprint arXiv:2312.16860},
  year={2023}
}

@book{geer2000empirical,
  title={Empirical Processes in M-estimation},
  author={Geer, Sara A},
  volume={6},
  year={2000},
  publisher={Cambridge university press}
}

@article{zhang2006epsilon,
  title={From $\varepsilon$-Entropy to KL-Entropy: Analysis of Minimum Information Complexity Density Estimation},
  author={Zhang, Tong},
  journal={The Annals of Statistics},
  pages={2180--2210},
  year={2006},
  publisher={JSTOR}
}

@article{sekhari2023selective,
  title={Selective sampling and imitation learning via online regression},
  author={Sekhari, Ayush and Sridharan, Karthik and Sun, Wen and Wu, Runzhe},
  journal={Advances in Neural Information Processing Systems},
  volume={36},
  pages={67213--67268},
  year={2023}
}

@article{bai2023qwen,
  title={{Qwen} technical report},
  author={Bai, Jinze and Bai, Shuai and Chu, Yunfei and Cui, Zeyu and Dang, Kai and Deng, Xiaodong and Fan, Yang and Ge, Wenbin and Han, Yu and Huang, Fei and others},
  journal={arXiv preprint arXiv:2309.16609},
  year={2023}
}

@article{foster2021statistical,
  title={The statistical complexity of interactive decision making},
  author={Foster, Dylan J and Kakade, Sham M and Qian, Jian and Rakhlin, Alexander},
  journal={arXiv preprint arXiv:2112.13487},
  year={2021}
}

@book{polyanskiy2025information,
  title={Information theory: From coding to learning},
  author={Polyanskiy, Yury and Wu, Yihong},
  year={2025},
  publisher={Cambridge university press}
}

@article{foster2021efficient,
  title={Efficient first-order contextual bandits: Prediction, allocation, and triangular discrimination},
  author={Foster, Dylan J and Krishnamurthy, Akshay},
  journal={Advances in Neural Information Processing Systems},
  volume={34},
  pages={18907--18919},
  year={2021}
}

@book{sutton1998reinforcement,
  title={Reinforcement learning: An introduction},
  author={Sutton, Richard S and Barto, Andrew G and others},
  volume={1},
  year={1998},
  publisher={MIT press Cambridge}
}

@book{cesa2006prediction,
  title={Prediction, learning, and games},
  author={Cesa-Bianchi, Nicolo and Lugosi, G{\'a}bor},
  year={2006},
  publisher={Cambridge university press}
}

@article{sekhari2023contextual,
  title={Contextual bandits and imitation learning with preference-based active queries},
  author={Sekhari, Ayush and Sridharan, Karthik and Sun, Wen and Wu, Runzhe},
  journal={Advances in Neural Information Processing Systems},
  volume={36},
  pages={11261--11295},
  year={2023}
}

@inproceedings{li2024chain,
  title={Chain of thought empowers transformers to solve inherently serial problems},
  author={Li, Zhiyuan and Liu, Hong and Zhou, Denny and Ma, Tengyu},
  booktitle={The Twelfth International Conference on Learning Representations},
  year={2024}
}

@article{littlestone1988learning,
  title={Learning quickly when irrelevant attributes abound: A new linear-threshold algorithm},
  author={Littlestone, Nick},
  journal={Machine learning},
  volume={2},
  number={4},
  pages={285--318},
  year={1988},
  publisher={Springer}
}

@inproceedings{
liu2023tinygsm,
title={Tiny{GSM}: achieving 80\% on {GSM}8k with one billion parameters},
author={Bingbin Liu and Sebastien Bubeck and Ronen Eldan and Janardhan Kulkarni and Yuanzhi Li and Anh Nguyen and Rachel Ward and Yi Zhang},
booktitle={The 3rd Workshop on Mathematical Reasoning and AI at NeurIPS'23},
year={2023}
}

@article{cobbe2021training,
  title={Training verifiers to solve math word problems},
  author={Cobbe, Karl and Kosaraju, Vineet and Bavarian, Mohammad and Chen, Mark and Jun, Heewoo and Kaiser, Lukasz and Plappert, Matthias and Tworek, Jerry and Hilton, Jacob and Nakano, Reiichiro and others},
  journal={arXiv preprint arXiv:2110.14168},
  year={2021}
}

@misc{Karpathy2022,
  author = {Andrej Karpathy},
  title = {\text{NanoGPT}},
  year = {2022},
  publisher = {GitHub},
  journal = {GitHub repository},
  howpublished = {\url{https://github.com/karpathy/nanoGPT}},
  commit = {325be85d9be8c81b436728a420e85796c57dba7e}
}

@inproceedings{kingma-2015,
  author       = {Diederik P. Kingma and
                  Jimmy Ba},
  title        = {{Adam}: {A} Method for Stochastic Optimization},
  booktitle    = {3rd International Conference on Learning Representations, {ICLR} 2015},
  year={2015}
}

@article{fu2026revisiting,
  title={Revisiting On-Policy Distillation: Empirical Failure Modes and Simple Fixes},
  author={Fu, Yuqian and Huang, Haohuan and Jiang, Kaiwen and Liu, Jiacai and Jiang, Zhuo and Zhu, Yuanheng and Zhao, Dongbin},
  journal={arXiv preprint arXiv:2603.25562},
  year={2026}
}

@article{barreiros2026careful,
  title={A careful examination of large behavior models for multitask dexterous manipulation},
  author={Barreiros, Jose and Beaulieu, Andrew and Bhat, Aditya and Cory, Rick and Cousineau, Eric and Dai, Hongkai and Fang, Ching-Hsin and Hashimoto, Kunimatsu and Irshad, Muhammad Zubair and Itkina, Masha and others},
  journal={Science Robotics},
  volume={11},
  number={113},
  pages={eaea6201},
  year={2026},
  publisher={American Association for the Advancement of Science}
}

@inproceedings{chen2019deep,
  title={Deep imitation learning for autonomous driving in generic urban scenarios with enhanced safety},
  author={Chen, Jianyu and Yuan, Bodi and Tomizuka, Masayoshi},
  booktitle={2019 IEEE/RSJ international conference on intelligent robots and systems (IROS)},
  pages={2884--2890},
  year={2019},
  organization={IEEE}
}

@article{yang2025qwen3,
  title={{Qwen3} technical report},
  author={Yang, An and Li, Anfeng and Yang, Baosong and Zhang, Beichen and Hui, Binyuan and Zheng, Bo and Yu, Bowen and Gao, Chang and Huang, Chengen and Lv, Chenxu and others},
  journal={arXiv preprint arXiv:2505.09388},
  year={2025}
}

@article{olmo20242,
  title={2 {OLMo} 2 Furious},
  author={OLMo, Team and Walsh, Pete and Soldaini, Luca and Groeneveld, Dirk and Lo, Kyle and Arora, Shane and Bhagia, Akshita and Gu, Yuling and Huang, Shengyi and Jordan, Matt and others},
  journal={arXiv preprint arXiv:2501.00656},
  year={2024}
}

@article{li2024datacomp,
  title={{Datacomp-lm}: In search of the next generation of training sets for language models},
  author={Li, Jeffrey and Fang, Alex and Smyrnis, Georgios and Ivgi, Maor and Jordan, Matt and Gadre, Samir and Bansal, Hritik and Guha, Etash and Keh, Sedrick and Arora, Kushal and others},
  journal={Advances in Neural Information Processing Systems},
  volume={37},
  pages={14200--14282},
  year={2024}
}

@article{weber2024redpajama,
	title   = {{RedPajama}: an Open Dataset for Training Large Language Models},
	author  = {Maurice Weber and Daniel Y. Fu and Quentin Anthony and Yonatan Oren and Shane Adams and Anton Alexandrov and Xiaozhong Lyu and Huu Nguyen and Xiaozhe Yao and Virginia Adams and Ben Athiwaratkun and Rahul Chalamala and Kezhen Chen and Max Ryabinin and Tri Dao and Percy Liang and Christopher Ré and Irina Rish and Ce Zhang},
	journal = {NeurIPS Datasets and Benchmarks Track},
	year    = 2024,
}

@article{shao2024deepseekmath,
  title={{Deepseekmath}: Pushing the limits of mathematical reasoning in open language models},
  author={Shao, Zhihong and Wang, Peiyi and Zhu, Qihao and Xu, Runxin and Song, Junxiao and Bi, Xiao and Zhang, Haowei and Zhang, Mingchuan and Li, YK and Wu, Yang and others},
  journal={arXiv preprint arXiv:2402.03300},
  year={2024}
}

@article{guo2025deepseek,
  title={{DeepSeek-R1}: Incentivizing reasoning capability in LLMs via reinforcement learning},
  author={Guo, Daya and Yang, Dejian and Zhang, Haowei and Song, Junxiao and Wang, Peiyi and Zhu, Qihao and Xu, Runxin and Zhang, Ruoyu and Ma, Shirong and Bi, Xiao and others},
  journal={arXiv preprint arXiv:2501.12948},
  year={2025}
}

@inproceedings{swamy2022causal,
  title={Causal imitation learning under temporally correlated noise},
  author={Swamy, Gokul and Choudhury, Sanjiban and Bagnell, Drew and Wu, Steven},
  booktitle={International Conference on Machine Learning},
  pages={20877--20890},
  year={2022},
  organization={PMLR}
}

@article{abdin2024phi,
  title={{Phi-4} technical report},
  author={Abdin, Marah and Aneja, Jyoti and Behl, Harkirat and Bubeck, S{\'e}bastien and Eldan, Ronen and Gunasekar, Suriya and Harrison, Michael and Hewett, Russell J and Javaheripi, Mojan and Kauffmann, Piero and others},
  journal={arXiv preprint arXiv:2412.08905},
  year={2024}
}

@inproceedings{
chen2025coverage,
title={The Coverage Principle: How Pre-Training Enables Post-Training},
author={Fan Chen and Audrey Huang and Noah Golowich and Sadhika Malladi and Adam Block and Jordan T. Ash and Akshay Krishnamurthy and Dylan J Foster},
booktitle={The Fourteenth International Conference on Learning Representations},
year={2026}
}

@inproceedings{
wang2025beyond,
title={Beyond the 80/20 Rule: High-Entropy Minority Tokens Drive Effective Reinforcement Learning for {LLM} Reasoning},
author={Shenzhi Wang and Le Yu and Chang Gao and Chujie Zheng and Shixuan Liu and Rui Lu and Kai Dang and Xiong-Hui Chen and Jianxin Yang and Zhenru Zhang and Yuqiong Liu and An Yang and Andrew Zhao and Yang Yue and Shiji Song and Bowen Yu and Gao Huang and Junyang Lin},
booktitle={The Thirty-ninth Annual Conference on Neural Information Processing Systems},
year={2026}
}

@article{zheng2026scope,
  title={{SCOPE}: Signal-Calibrated On-Policy Distillation Enhancement with Dual-Path Adaptive Weighting},
  author={Zheng, Binbin and Ma, Xing and Liang, Yiheng and Ruan, Jingqing and Fu, Xiaoliang and Lin, Kepeng and Zhu, Benchang and Zeng, Ke and Cai, Xunliang},
  journal={arXiv preprint arXiv:2604.10688},
  year={2026}
}

@article{ross2014reinforcement,
  title={Reinforcement and imitation learning via interactive no-regret learning},
  author={Ross, Stephane and Bagnell, J Andrew},
  journal={arXiv preprint arXiv:1406.5979},
  year={2014}
}

@book{ronchetti2009robust,
  title={Robust statistics},
  author={Ronchetti, Elvezio M and Huber, Peter J},
  year={2009},
  publisher={John Wiley \& Sons Hoboken, NJ, USA}
}

@article{team2024gemma,
  title={Gemma 3 technical report},
  author={Kamath, Aishwarya and Ferret, Johan and Pathak, Shreya and Vieillard, Nino and Merhej, Ramona and Perrin, Sarah and Matejovicova, Tatiana and Ram{\'e}, Alexandre and Rivi{\`e}re, Morgane and Rouillard, Louis and others},
  journal={arXiv preprint arXiv:2503.19786},
  year={2025},
  publisher={ArXiv}
}
